\documentclass{article}
\usepackage[numbers,sort&compress]{natbib}
\usepackage[preprint]{neurips_2026}

\usepackage[utf8]{inputenc}
\usepackage[T1]{fontenc}
\usepackage{pdfrender}
\usepackage{url}
\usepackage{booktabs}
\usepackage{amsfonts}
\usepackage{nicefrac}
\usepackage{microtype}
\usepackage{makecell}
\usepackage{subfiles}
\usepackage{subcaption}
\usepackage{comment}
\usepackage{mathtools}
\usepackage{mdframed}
\usepackage{amsthm, amsfonts, amssymb, mathrsfs}
\usepackage{thmtools}
\usepackage{algorithm}
\usepackage{algpseudocode}
\usepackage{enumitem}
\usepackage{aliascnt}
\usepackage{etoc}
\usepackage{wrapfig}
\usepackage{caption}
\usepackage{tabularx}

\usepackage[dvipsnames,table]{xcolor}
\usepackage{tikz}
\usetikzlibrary{arrows.meta, positioning, shapes.geometric}

\usepackage[final,colorlinks,linktoc=all,pagebackref=true]{hyperref}
\usepackage[all]{hypcap}
\usepackage[nameinlink,capitalise, noabbrev]{cleveref}

\makeatletter
\@ifundefined{theHALG@line}
  {\newcommand{\theHALG@line}{\thealgorithm.\arabic{ALG@line}}}
  {\renewcommand{\theHALG@line}{\thealgorithm.\arabic{ALG@line}}}
\makeatother

\definecolor{darkred}{RGB}{100,0,0}
\definecolor{darkblue}{RGB}{0,0,100}
\definecolor{darkgreen}{RGB}{0,75,0}
\definecolor{remarkgreen}{RGB}{238,248,240}

\hypersetup{citecolor=darkgreen}
\hypersetup{linkcolor=darkgreen}
\hypersetup{urlcolor=darkgreen}

\declaretheoremstyle[
    spaceabove    = \parsep,
    spacebelow    = \parsep,
    bodyfont      = \normalfont\itshape,
]{theoremsty}

\declaretheoremstyle[
    spaceabove    = \parsep,
    spacebelow    = \parsep,
    bodyfont      = \normalfont,
]{normalsty}

\mdfdefinestyle{coloredstyle}{%
    leftline          = false,%
    rightline         = false,%
    topline           = false,%
    bottomline        = false,%
    backgroundcolor   = gray!10,%
    innertopmargin    = 1.5ex,%
    innerbottommargin = 0.7ex,%
    innerleftmargin   = 1.ex,%
    innerrightmargin  = 1.ex,%
    skipabove         = \topskip,%
    skipbelow         = \parskip%
}

\mdfdefinestyle{remarkstyle}{%
    leftline          = false,%
    rightline         = false,%
    topline           = false,%
    bottomline        = false,%
    backgroundcolor   = remarkgreen,%
    innertopmargin    = 1.5ex,%
    innerbottommargin = 0.7ex,%
    innerleftmargin   = 1.ex,%
    innerrightmargin  = 1.ex,%
    skipabove         = \topskip,%
    skipbelow         = \parskip%
}

\declaretheorem[name=Theorem, style=theoremsty, mdframed={style = coloredstyle}]{theorem}
\declaretheorem[name=Proposition, style=theoremsty, mdframed={style = coloredstyle}]{proposition}
\declaretheorem[name=Corollary, style=theoremsty, mdframed={style = coloredstyle}]{corollary}
\declaretheorem[name=Lemma, style=theoremsty, mdframed={style = coloredstyle}]{lemma}

\declaretheorem[name=Theorem, style=theoremsty, mdframed={style = coloredstyle}]{theorem*}
\declaretheorem[name=Proposition, style=theoremsty, mdframed={style = coloredstyle}]{proposition*}
\declaretheorem[name=Corollary, style=theoremsty, mdframed={style = coloredstyle}]{corollary*}
\declaretheorem[name=Lemma, style=theoremsty, mdframed={style = coloredstyle}]{lemma*}
\declaretheorem[name=Conjecture, style=theoremsty, mdframed={style = coloredstyle}]{conjecture*}

\declaretheorem[name=Remark, style=normalsty, mdframed={style = remarkstyle}]{remark}
\declaretheorem[name=Definition, style=normalsty, mdframed={style = coloredstyle}]{definition}
\declaretheorem[name=Assumption, style=normalsty, mdframed={style = coloredstyle}]{assumption}

\declaretheorem[name=Remark, style=normalsty, mdframed={style = remarkstyle}, numbered=no]{remark*}
\declaretheorem[name=Definition, style=normalsty, mdframed={style = coloredstyle}, numbered=no]{definition*}
\declaretheorem[name=Assumption, style=normalsty, mdframed={style = coloredstyle}, numbered=no]{assumption*}

\numberwithin{theorem}{section}
\numberwithin{proposition}{section}
\numberwithin{corollary}{section}
\numberwithin{lemma}{section}
\numberwithin{conjecture}{section}
\numberwithin{remark}{section}
\numberwithin{definition}{section}
\numberwithin{assumption}{section}
\numberwithin{example}{section}

\etocframedstyle[1]{\textbf{\textsc{Table of Contents}}}
\etocsettocdepth{3}

\usepackage{bbm}

\newcommand{\calA}{{\mathcal{A}}}
\newcommand{\calB}{{\mathcal{B}}}
\newcommand{\calC}{{\mathcal{C}}}
\newcommand{\calD}{{\mathcal{D}}}

\newcommand{\calH}{{\mathcal{H}}}

\newcommand{\calL}{{\mathcal{L}}}
\newcommand{\calM}{{\mathcal{M}}}

\newcommand{\calV}{{\mathcal{V}}}

\newcommand{\calY}{{\mathcal{Y}}}
\newcommand{\calZ}{{\mathcal{Z}}}

\newcommand{\bbE}{\mathbb{E}}

\newcommand{\bbP}{\mathbb{P}}

\newcommand{\lbra}[1]{\left[#1\right]}
\newcommand{\mbra}[1]{\left\{#1\right\}}
\newcommand{\sbra}[1]{\left(#1\right)}
\newcommand{\norm}[1]{\lVert#1\rVert}
\newcommand{\abs}[1]{\left| #1\right|}

\providecommand{\mask}{\texttt{[mask]}}
\providecommand{\base}{\pi_{\mathrm{ref}}}

\DeclareMathOperator*{\argmax}{argmax}
\DeclareMathOperator*{\argmin}{argmin}

\newcommand{\tabbest}[1]{\textpdfrender{TextRenderingMode=FillStroke,LineWidth=0.35pt}{#1}}

\definecolor{blendedblue}{rgb}{0.2,0.2,0.7}

\newif\ifshowbibliography
\showbibliographytrue

\title{VGB for Masked Diffusion Model: Efficient Test-time Scaling for Reward Satisfaction and Sample Editing}

\author{
Kijung Jeon\\
Georgia Tech\\
\texttt{kjeon@gatech.edu}
\And
Thuy-Duong Vuong$^{*}$\\
UCSD\\
\texttt{thvuong@ucsd.edu}
\And
Molei Tao$^{*}$\\
Georgia Tech\\
\texttt{mtao@gatech.edu}
}

\begin{document}

\maketitle
\begingroup
\renewcommand{\thefootnote}{\fnsymbol{footnote}}
\footnotetext[1]{Joint mentorship on this project.}
\endgroup
\begin{abstract}
    Inference-time scaling is a promising paradigm to improve generative models, especially when outputs must satisfy structural constraints or optimize downstream rewards. We consider Masked Diffusion Model (MDM) and introduce MDM-VGB, a discrete diffusion sampler that augments unmasking generation with theoretically principled reward-guided remasking. Inspired by the recent success of the classical Jerrum-Sinclair backtracking Markov chain \cite{sinclair1989approximate} in reward-tilted generation \cite{rohatgi2025taming}, 
    MDM-VGB extends the backtracking random walk from a fixed prefix tree to a masked-state graph,  allowing tokens to be unmasked and remasked at arbitrary positions. 
    The resulting sampler favors unmasking and remasking moves that lead to higher-value partial configurations, enabling both effective high-reward generation and efficient repair of low-reward samples. We prove that MDM-VGB is robust to process-verifier noise and achieves quadratic complexity, while popular test-time heuristics such as best-of-$N$ can incur exponential complexity due to error accumulation. Our theoretical findings are corroborated by strong empirical performance, particularly on popular constraint-satisfaction and scientific benchmarks such as Sudoku and QM9 \cite{ramakrishnan2014quantum}.\footnote{All code is provided in the following repository: \url{https://github.com/KraitGit/MDM-VGB}.}
\end{abstract}

\section{Introduction}
\label{sec:introduction}

\vspace{-8pt}
Many sequence-generation tasks require outputting configurations that satisfy global structural properties. Such properties are often easy to verify for complete sequences, but difficult to reliably extrapolate from partially generated configurations. Examples include mathematical proofs, computer programs, Sudoku puzzles, protein sequences for designing drug candidates: a completed configuration can often be accurately evaluated by a theorem prover, compiler, (Sudoku) constraint checker, or drug-property oracle, while evaluating whether a partial configuration can be extended to a valid full configuration is much harder.

Verifier-guided inference-time algorithms are a promising direction for improving structured generation \citep{brown2024large,wang2023selfconsistency,wang2024mathshepherd,rohatgi2025taming,wang2025remasking,Lee2025EffectiveTS,wang2025valueguided,misaki2026unmaskfork}.
In many realistic settings, one has access to a reliable, possibly perfect, \emph{outcome verifier} for complete configurations, together with weaker and noisier \emph{process} verifiers that estimate the \emph{potential value} of partial configurations \citep{yang2021fudge,lightman2024lets,wang2024mathshepherd}. Empirically, non-trivial gains have been shown even with simple methods that only use an outcome verifier. A canonical example is Best-of-$N$, which selects the highest-reward full configuration among $n$ sampled candidates \citep{brown2024large}. Access to a process verifier may unlock additional gains \citep{lightman2024lets,wang2024mathshepherd,wang2025valueguided};
however, a recent work by \citet{rohatgi2025taming} points out that these methods suffer from the curse of horizon
as even mild errors in process verifiers can lead to outcome-level errors that grow exponentially with sequence length.

They further observed that the classical Jerrum--Sinclair backtracking Markov chain \citep{sinclair1989approximate}
can avoid this issue, yielding a theoretically grounded inference-time algorithm with promising performance.  However, this chain is \emph{fixed-order}: it reveals tokens from left to right and can backtrack only by deleting the most recently generated token. 
This restriction limits its ability to revise earlier decisions, which is especially problematic in editing, and in general constraint-generation tasks, where early mistakes can often be fatal (see \cref{fig:prefix_vs_aoar_repair}).

\begin{figure}[t]
    \centering
    \begin{subfigure}[t]{0.49\linewidth}
        \centering
        \includegraphics[width=\linewidth]{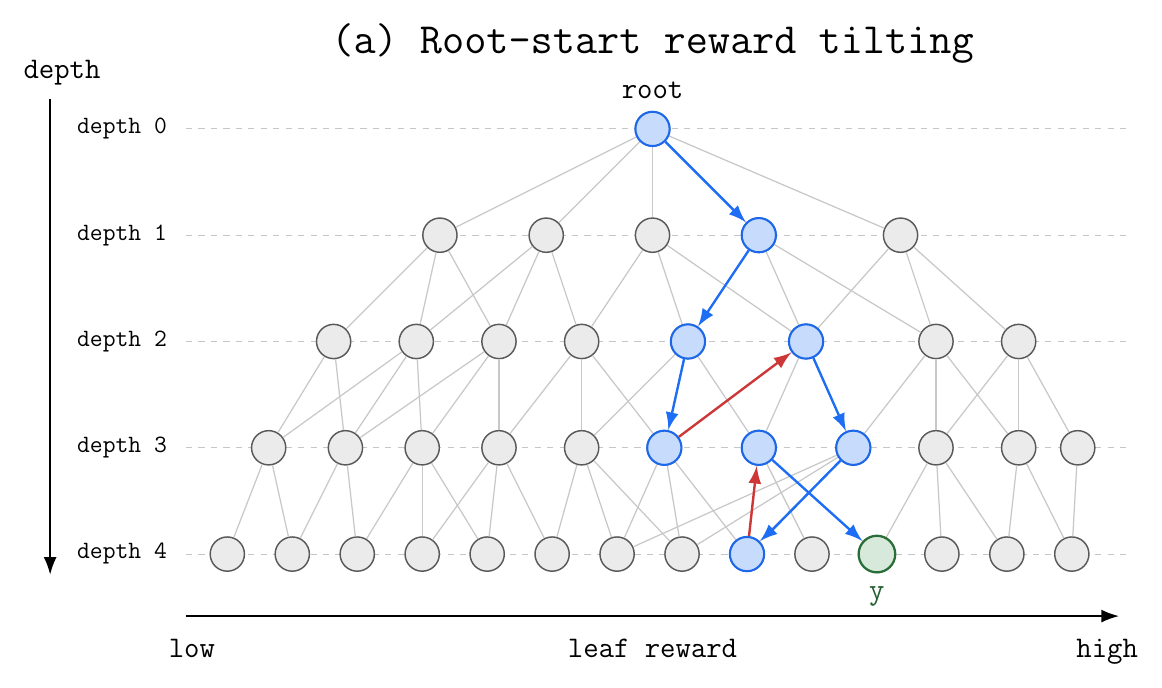}
        \caption{Generation (Root-start)}
        \label{fig:vgb_generating}
    \end{subfigure}
    \hfill
    \begin{subfigure}[t]{0.49\linewidth}
        \centering
        \includegraphics[width=\linewidth]{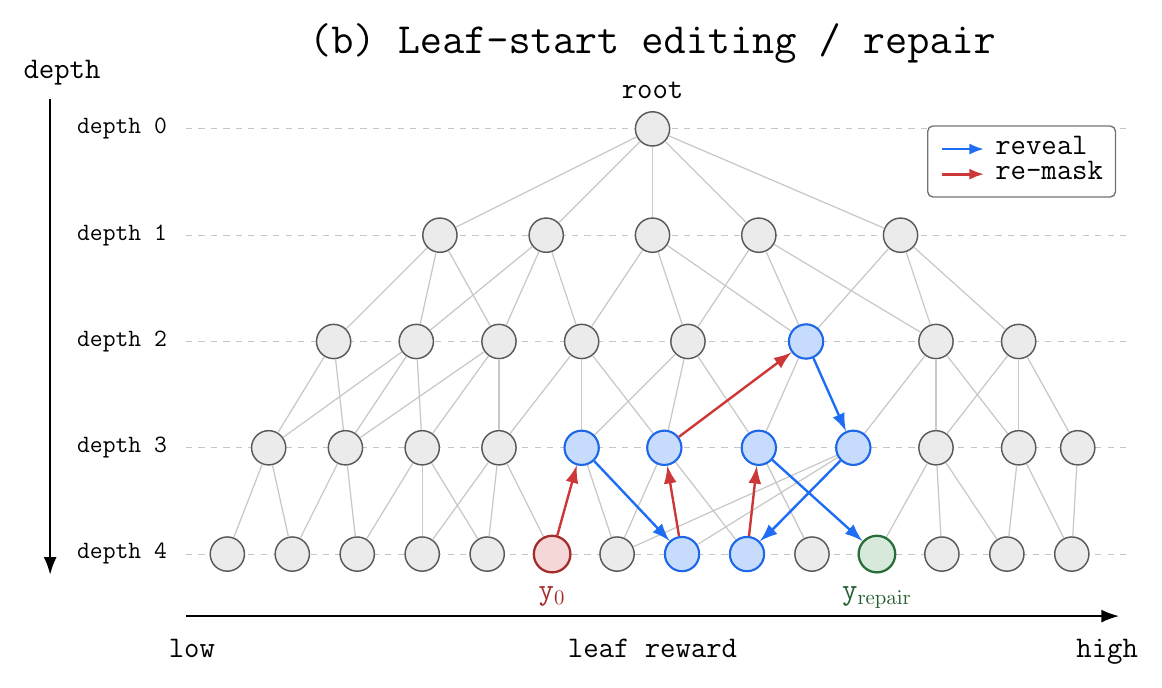}
        \caption{Editing (Leaf-start)}
        \label{fig:vgb_editing}
    \end{subfigure}
    \caption{Visualization of MDM-VGB for generation and editing. The depth of a state is the number of unmasked tokens, and same-depth states are ordered from left to right by estimated value. MDM-VGB is a random walk on the masked state graph: root-start generation begins at the fully masked state, while leaf-start editing begins at a fully revealed configuration to repair. Since value estimates are noisy, greedy unmasking can be suboptimal; re-masking can correct earlier mistakes.
    }
    \label{fig:vgb_generation_editing}
    \vspace{-10pt}
\end{figure}

In this work, we introduce \emph{Masked-Diffusion-Model Value-Guided Backtracking} (MDM-VGB), a verifier-guided discrete diffusion sampler with remasking. MDM-VGB generalizes the classical Jerrum-Sinclair backtracking Markov chain \citep{sinclair1989approximate} from fixed-order generation to any-order generation.
The Markov chain underlying MDM-VGB operates on partial configurations, i.e., masked states, where some coordinates are revealed, and the remaining coordinates are masked. Given an outcome reward on fully revealed configurations, we define the value of a partial configuration as the expected reward of compatible full configurations. 
At each step, the Markov chain either reveals masked token(s) or remasks already revealed token(s), with transition probabilities biased toward configurations with larger estimated value. This rule gives a principled randomized-greedy way to reveal and remask tokens: tokens whose removal increases the value of the partial configuration are treated as unreliable and are more likely to be revised; similarly, unmasking decisions that lead to higher-value configurations are prioritized (see \cref{def:main_geometric_balanced_mdm}).
 
This Markov chain supports both generation and editing. For generation, the chain starts at the fully masked root state (see \cref{fig:vgb_generating}); for editing, it starts from a low-reward fully revealed configuration (see \cref{fig:vgb_editing}). Unlike the fixed-order autoregressive backtracking from \cite{rohatgi2025taming},
MDM-VGB can unmask and remask tokens at arbitrary locations, allowing it to efficiently fix fatal early mistakes (see \cref{fig:prefix_vs_aoar_repair}); this flexibility yields significant performance gains in practice.

While MDM-VGB also supports flexible block updates in the sense that each transition may reveal or re-mask
an arbitrary block of coordinates, we will place
particular emphasis on the
singleton-block case as in the original masked diffusion model, which we call Any-Order-Auto-Regressive VGB (AOAR-VGB). This simplest
variant already yields strong empirical gains while admitting the cleanest theoretical guarantees. This Markov chain provably targets the reward-tilted target distribution in the asymptotic time limit. In particular, in hard-constraint settings with a perfect outcome verifier, where invalid full configurations are certifiably rewardless, the sampler is guaranteed to never produce an invalid full configuration. Moreover, as in the fixed-order case, the chain mixes rapidly, even when subjected to substantial multiplicative error in the intermediate value estimates provided by the process verifiers. In particular, we show that AOAR-VGB converges to the target reward-tilted distribution within a number of Markov steps\footnote{In our implementation, the cost per step matches a standard forward generation step, up to a small constant factor} that is quadratic in the sequence length $n$, whereas best-of-$N$ incurs an exponential cost.
In addition, motivated by non-reversible liftings of Markov chains
\citep{hayes2010liftings} and the momentum lift of the AR-VGB chain considered by
\citet{rohatgi2025taming}, we also study MDM-VGB-Momentum, the momentum lift of MDM-VGB. By augmenting each state with a direction variable, the momentum lift mitigates the diffusive and oscillatory behavior of MDM-VGB and improves first-leaf hitting time, a practical quantity of interest under finite-budget sampling.

Empirically, MDM-VGB and MDM-VGB-Momentum deliver consistent gains on scientific-design benchmarks and standard combinatorial constraint generation tasks such as QM9 and Sudoku.
For generation tasks, MDM-VGB-Momentum achieves higher reward at matched
inference cost than best-of-$N$, and reward-guided MDM  (MDM-VGR, see \cref{main:sec:preliminaries}), with especially impressive gains for harder tasks where unguided MDM has lower accuracy. For editing tasks, MDM-VGB and MDM-VGB-Momentum outperform fixed-order backtracking algorithms from \cite{sinclair1989approximate,rohatgi2025taming}: on Dyck grammar editing, they reach near-perfect accuracy using substantially fewer moves compared to AR-VGB and AR-VGB-Momentum. Moreover, MDM-VGB and MDM-VGB-Momentum improve over non-editing baselines in both output quality and cost across scientific-design benchmarks, e.g., DNA and Protein synthesis.

\vspace{-10pt}
\paragraph{Contributions.} 
\begin{itemize}[leftmargin=30pt]
\item 
We introduce MDM-VGB, a reward-guided masked discrete diffusion sampler with a \emph{principled} remasking strategy that supports both generation and editing, with especially strong gains in editing.
\item 
Theoretically, we prove that MDM-VGB efficiently targets the reward-tilted distribution over complete configurations, even in the presence of substantial process-verifier error.
\item
Across structured-generation benchmarks, MDM-VGB outperforms standard baselines under matched inference budgets. We also observe substantial gains for editing compared to the fixed-order backtracking baselines (AR-VGB and AR-VGB-Momentum) from \citep{rohatgi2025taming}.
\end{itemize}

\section{Other related works}
\vspace{-8pt}

Recent works, such as \citet{wang2025remasking,Lee2025EffectiveTS},
have considered alternative formulations of masked diffusion models (MDM) with remasking. Even
without an additional reward signal, remasking can empirically improve sample
quality by correcting errors introduced by MDM-style parallel decoding \citep{wang2025remasking}. However, it is less clear how to remask in a principled way with quantitatively significant benefits. In fact,
the Markov-chain formulation of
\citet{Lee2025EffectiveTS} also targets a reward-tilted stationary distribution, but unlike our method, it does not provide mixing-time or error-tolerance guarantees. Moreover, \citet{Lee2025EffectiveTS} remasks coordinates uniformly at random rather than by confidence, which reduces its ability to correct earlier mistakes. Remasking diffusion (ReMDM)
\citep{wang2025remasking} can use confidence scores to guide remasking decisions, but its updates follow a
factorized parallel decoding structure: all coordinates are resampled in
parallel from coordinate-wise conditionals. This can
be limiting for structured outputs with strong cross-coordinate dependencies.
For example, changing \(1+2=4\) to \(1+2=3\) gives a valid expression, whereas
simultaneously changing \(4\to3\) and \(2\to1\) does not. ReMDM does not target the reward-tilted distribution, and as with \citet{Lee2025EffectiveTS}, does not provide mixing-time or robustness guarantees.

Fixed-order backtracking (AR-VGB) \citep{sinclair1989approximate,rohatgi2025taming} comes with strong theoretical guarantees, as MDM-VGB does, but it does not allow earlier mistakes to be edited without undoing a long suffix. Likewise, sampling-and-search methods like Sequential Monte Carlo \citep{ou2025inference} and Monte-Carlo Tree-Search
\citep{ou2025inference,misaki2026unmaskfork}, as well as standard masked diffusion \citep{austin2021structured,sahoo2024simple,arriola2025block,lee2025test}, do not allow backtracking, and thus cannot repair earlier mistakes in the generative process. This is particularly problematic for structured languages, where an early error can make every subsequent extension invalid (see \cref{fig:prefix_vs_aoar_repair}).

\section{Preliminaries}

\label{main:sec:preliminaries}
\vspace{-10pt}

Let $\calY=\calV^n$ be the set of fully
revealed sequences of length $n$ over vocabulary $\calV.$ Let \(\calZ=(\calV\cup\{\mask\})^n\) be the set of masked states, where $\varnothing = \sbra{(\mathtt{[mask]},\ldots,\mathtt{[mask]})}$ denotes the fully masked state. 
For \(z\in\calZ\), let \(R(z)=\{i:z_i\neq\mask\}\) be the revealed set and \(k(z)=|R(z)|\) be its depth.
Let \(\calC(z)=\{y\in\calY: y_i=z_i \text{ for all } i\in R(z)\}\) denote the compatible completion set.

Throughout, we fix a conditioning context $x$. 
Given a reference distribution $\base(\cdot | x)$ over $\calY,$ we define the marginal at $z\in \calZ$ as $\base(z\mid x) := \sum_{y\in\calC(z) } \base(y\mid x).$ For \(B\subseteq[n]\setminus R(z)\) and \(a_B\in\calV^B\), let
\(z^{B\rightarrow a_B}\) be the state obtained by setting the tokens in \(B\) to $a_B.$ We define the conditional marginal at $B$ conditioned on $z$, i.e., $\base(Y_B=a_B\mid x,z)$, by: 
\begin{align*}
   \base(Y_B=a_B\mid x,z)
    &:= \frac{\base(z^{B\to a_B} \mid x)}{\base(z\mid x)} =
    \mathbb{P}_{\base}(Y_B=a_B\mid Y\in\calC(z)).
\end{align*}

\paragraph{Masked diffusion model (MDM).}
Following \cite{sahoo2024simple}, if a pretrained MDM is unavailable for a task, we train a discrete masked diffusion model $\base(\cdot | x)\equiv \pi_\theta$ over $\calY $ using a continuous masking process by minimizing the following objective function: \begin{equation*}
    \mathcal L_{\mathrm{MDM}}(\theta) =\mathbb E_{\substack{(x,y)\sim\mathcal D,  t\sim\mathrm{Unif}[0,1],  z_t\sim q_t(\cdot\mid y)}}\lbra{\frac{\alpha_t'}{1-\alpha_t}\sum_{i:\,z_{t,i}=\mathtt{[mask]}}\log \pi_\theta(Y_i=y_i\mid x,z_t)}.
\end{equation*}
where $\mathcal{D}$ is the training data, $\alpha_t \in [0,1]$ is a decreasing masking schedule, $\alpha'_t$ is its time derivative, and $z_t$ is sampled from the distribution $q_t(\cdot | y)$ over $\mathcal{Z}$ defined by:
\begin{equation*}
    q_t(z_t\mid y) = \prod_{i=1}^n \sbra{\alpha_t \mathbf 1\{z_{t,i}=y_i\} + (1-\alpha_t)\mathbf 1\{z_{t,i}=\mathtt{[mask]}\}}.
\end{equation*}

At inference time, MDM samples from $\base(\cdot |x)$ by starting from $\varnothing$ and iteratively revealing a new block of tokens according to the marginal of $\base(\cdot |x)$ conditioned on the revealed tokens. In other words, at a  current masked configuration
$z$,  the process reveals a new block $B \subset [n]\setminus R(z)$ by sampling from  $\base(Y_B=\cdot \mid x,z)$.

\paragraph{Reward-tilted sampling.}

Given a base reference model $\base(\cdot\mid x)$ and a nonnegative terminal score, i.e., reward function, $\tau(x, \cdot): \calY\to \mathbb{R}_{\geq 0}, $ 
define the
reward-tilted target distribution $ \pi^\star( \cdot\mid x) $ over $\calY$ where:
\begin{equation*}
    \pi^\star(y\mid x)
    :=
    \frac{\base(y\mid x)\tau(x,y)}
    {Z(x)},
    \qquad
    Z(x):=\sum_{y\in\calY}\base(y\mid x)\tau(x,y).
\end{equation*}
We also define the marginal of $\pi^\star$ at $z\in \calZ$ by $\pi^\star(z\mid x) := \sum_{y\in\calC(z) } \pi^\star(y\mid x).$ In particular, binary reward $\tau $ (i.e., $\tau(x,y) \in \{0,1\}, \ \forall y \in \calY$) corresponds to the hard-constraint case.

\textbf{Our goal} \emph{is to sample from $\pi^*$ using a given base model $\base$ and terminal score $\tau$ at inference time, without accessing or changing the base model's parameters, i.e., via test-time scaling rather than RL fine-tuning.}

\paragraph{Value of partial configurations.}
For a given terminal score $\tau$, the ideal value of a partial configuration or masked state $z\in (\calV\cup\{\mask\})^n$ is
\begin{equation*}\label{eq:ideal value masked state}
     V^\star(x,z)
    :=
    \mathbb E_{Y\sim \pi_{\mathrm{ref}}(\cdot \mid x)}\!\left[\tau(x,Y)\mid Y\in\calC(z)\right].
\end{equation*}
In practice, $V^*$ is not available, and therefore we train a process verifier $\widehat V$ to approximate $V^*$ by fitting $V_\theta$ with \textit{mean squared error}, as in \citet{rohatgi2025taming}:
\begin{equation*}
     \calL_{\mathrm{reg}}(\theta) := \bbE_{x,z,y}\lbra{\sbra{V_\theta(x,z)-\tau(x,y)}^2},
\end{equation*}
where $z$ is a partial configuration sampled from the reference model $\base$ and $y$ is a completion of $z$. After training, we use $\widehat V=V_\theta$. In some settings, we instead use a lightweight heuristic to reduce computational cost. For example, in constraint-satisfaction tasks such as Sudoku, where the terminal reward is the indicator that a full configuration satisfies a certain constraint, we take $\widehat V(x,z)$ to be the indicator that the partial assignment $z$ has not yet violated the constraint. We note that this approximation can be far from the true value $V^\star(x,z)$, since there exist partial assignments that do not violate the constraint but cannot be extended to valid full configurations. Nevertheless, such heuristic verifiers can still perform strongly on some constraint-satisfaction tasks; see \cref{main:sec:Experiments}.

\paragraph{Value-guided Masked Diffusion Model (MDM-VGR).} Given a base MDM $\base$ and  the value function $ V^\star$ with respect to a terminal reward $\tau$, MDM-VGR samples from $\pi^\star(\cdot |x)$ by starting from $\varnothing$ and iteratively revealing a new block of tokens according to the reward-tilted marginal conditioned on the current state, i.e., $\pi^\star(Y_B=a_B\mid x,z)
    =
    \base(Y_B=a_B\mid x,z)\cdot
    \frac{V^\star(x,z^{B\rightarrow a_B})}{V^\star(x,z)}.$ In practice, MDM-VGR uses a  process verifier $\hat{V}$ that approximates $ V^\star.$ 

\paragraph{Masked states graph.}

Consider a masked state $z\in \calZ$. For \(B\subseteq[n]\setminus R(z)\) and \(a_B\in\calV^B\), 
we call \(z^{B\rightarrow a_B}\) a \emph{forward child} of $z$; let $C(z)$ be the set of children of $z$. For
\(B\subseteq R(z)\), write \(z^{-B}\) for the state obtained by re-masking
\(B\), which we call a \emph{backward parent} of $z;$ we note that a state can have multiple parents and let $P(z)$ be the set of parents of $z$.
The parent-child relation between states defines the \emph{masked states graph}, an undirected graph whose vertices are masked states and edges are $ \{z,w\}$, where $w$ is a backward parent of $z.$ We write $z\sim w$ and say $z$ and $w$ are neighbors when  $\{z,w\}$ is an edge in the graph. We call the fully masked state $\varnothing$ the root, and the set of full configurations $\calY$ the leaves of this graph.

\begin{figure}[t]
    \centering
    \includegraphics[width=0.91\linewidth]{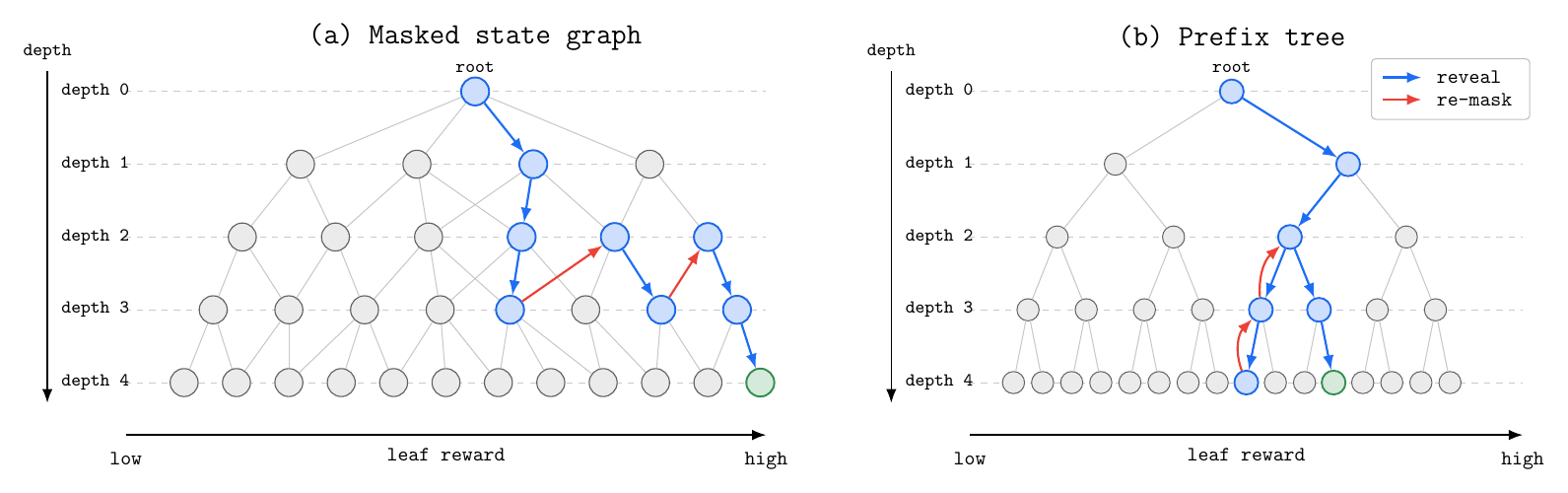}
    \caption{
    \textbf{Masked-state graph vs. prefix tree.}
    MDM-VGB operates on the any-order masked-state graph, which allows arbitrary-coordinate re-masking. AR-VGB operates on a fixed-order prefix tree, where backtracking only removes the latest revealed token.
    }
    \label{fig:masked_state_vs_prefix}
    \vspace{-5pt}
\end{figure}

\paragraph{Prefix tree.} The AR-VGB Markov chain from \cite{rohatgi2025taming,sinclair1989approximate} is a random walk on the following prefix tree graph. Fix an ordering $\sigma$ of $\{1, 2,\cdots, n\}.$ Let $\mathcal Z_{\sigma}$ denote the set of masked states in which the revealed coordinates form a prefix in this order. The prefix tree on $\mathcal Z_{\sigma}$ is the graph where $\{x,y\}$ is an edge whenever $x$ is the back-parent of $y$ obtained by remasking its last unmasked coordinate in the order $\sigma$. See \cref{fig:masked_state_vs_prefix}  for an illustration.

\section{Methods and Theoretical Guarantees}
\label{main:sec:main_results}

\subsection{Formulation of MDM-VGB} 
To address the goal of \cref{main:sec:preliminaries}, we now describe the MDM-VGB algorithm. At a high level, MDM-VGB is a random walk on the weighted masked state graph with appropriately chosen edge weights.
The edge weights depend geometrically on the estimated values of endpoint states, so that remasking or unmasking moves that lead to higher-value states are prioritized. We also rescale each edge weight by a scalar depending on the depths of
its endpoint states. This depth-dependent normalization is designed to balance forward moves that unmask new coordinates and backward moves that remask coordinates at each state and is crucial for ensuring the chain's fast convergence.

The MDM-VGB Markov chain is defined by either of the following equivalent definitions. 
\begin{definition}
\label{def:bg_aoar_edge_weights main}
Fix $\lambda\ge0$ and depth coefficients $s_{k,r}:=\binom{n-r}{k}^{-1}$, which satisfy $\binom{n-k}{r}s_{k,r} =    \binom{k}{r}s_{k-r,r}$.
The MDM-VGB Markov chain is the random walk on the weighted masked state graph, where the edge weights are as follows: for neighboring states $u \sim v$, let $p$ denote the lower-depth endpoint and $c$ denote the higher-depth endpoint among $u,v$. Let the symmetric edge weight between $u,v$ be 
\begin{equation*}
    f(u,v)=f(v,u):=\begin{cases}
    s_{k(p), k(c)-k(p)} \base(c\mid x) \widehat V(x,c)\widehat V(x,p)^\lambda  & \text{ if } c \notin\calY \\s_{k(p), k(c)-k(p)}\base(c\mid x)\widehat V(x,c) & \text{ if } c \in\calY  \end{cases}.
\end{equation*}

\end{definition}

Equivalently, the same chain can be specified through its local forward and backward move weights. In both views, \(s_{k,r}\) is the depth-dependent normalization factor that balances forward moves revealing \(r\) new coordinates with backward moves re-masking \(r\) revealed coordinates.

\begin{definition}\label{def:main_geometric_balanced_mdm}
For a state $z$ at depth $k,$ a child \(c=z^{B\rightarrow a_B}\in C(z)\) where \(B \subseteq [n] \setminus R(z)\) with $|B|= r,$ and a backward parent $p = z^{-B'}\in P(z)$ where $B'\subseteq R(z)$ and $|B'|=r',$ set:
\begin{equation*}
\begin{aligned}
    w_{\rm fwd}(z\to c)
    &:=
    \begin{cases}
    s_{k,r}\widehat V(x,z)^\lambda
    \widehat V(x,c)\base(Y_B=a_B\mid x,z), & c\notin\calY,\\
    s_{k,r}\base(Y_B=a_B\mid x,z)\widehat{V}(x,c), & c\in\calY,
    \end{cases}\\
    w_{\rm bwd}(z\to p)
    &:=
    \begin{cases}
    s_{k-r',r'}\widehat V(x,z)\widehat V(x,p)^\lambda, & z\notin\calY,\\
    s_{k-r',r'}\widehat{V}(x,z), & z\in\calY,
    \end{cases}
\end{aligned}
\end{equation*}
Let $
    W(z):= \sum_{c\in C(z)} w_{\rm fwd}(z\to c) + \sum_{p\in P(z)} w_{\rm bwd} (z\to p) $  be the normalization factor at $z$. The MDM-VGB Markov chain's transition probabilities at $z$ are given by:
\begin{align*}
  \forall c\in C(z): \mathbb{P}[z \to c] = \frac{ w_{\rm fwd}(z\to c)}{W(z)} \text{ and  } \forall p\in P(z): \quad \mathbb{P}[z \to p] = \frac{ w_{\rm bwd}(z\to p)}{W(z)}
\end{align*}
\end{definition}

The first definition is more helpful for proving the theoretical properties of the Markov chain, while the second is more helpful for implementation.
\begin{remark}[Role of $\lambda$] \label{remark:parameter lambda}
The parameter \(\lambda\ge 0\) controls how strongly re-masking decisions depend on the value of the resulting parent state: larger \(\lambda\) makes the chain more likely to erase a block that yields a higher-value parent state. Our ablation study found that a moderate $\lambda$ yields the best performance, as the chain can effectively edit problematic tokens while still retaining sufficient flexibility to explore alternative edits (see \cref{sec:exp_ablation}).
\end{remark}

In the appendix, we further distinguish between the case $\lambda = 0,$ which we call \emph{Balanced MDM-VGB} (see \cref{app:subsec:balanced_aoar_vgb}), and the general case $\lambda \geq 0,$ which we call \emph{Geometric Balanced MDM-VGB} (see \cref{app:subsec:balanced_geometric_aoar_vgb,app:subsec:balanced_geometric_mdm_vgb}). The $\lambda =0$ chain is closer to existing backtracking walks \cite{sinclair1989approximate,rohatgi2025taming}, but it selects a uniformly random block for re-masking and thus is less effective at editing problematic tokens. In \cref{app:subsec:primitive_aoar_vgb}, we also show that the direct generalization of AR-VGB to the diffusion setting, which corresponds to setting $\lambda=0$ and having no depth-normalization factor $s_{k,r}$, has exponentially slow convergence.

\begin{remark}[Implementation details: shortlisting rule]
To improve efficiency, instead of exploring the entire neighborhood of a masked state, we follow the practical shortlisting convention of \citet{rohatgi2025taming}: each MDM-VGB (or MDM-VGB-Momentum) step considers at most \(L_f\)
choices of blocks to unmask, \(L_b\) choices of blocks to re-mask, and \(K\) token vocabulary candidates for each coordinate to unmask. Unless otherwise stated, we use the
singleton block size \(|B|=1\). 
\end{remark}

\vspace{-8pt}
\subsection{Asymptotic behavior: properties of the stationary distribution}

\vspace{-5pt}
We now state the assumptions on the estimated masked-state values used in our theoretical analysis. Following
\citet{rohatgi2025taming}, we assume that the process verifier
approximates the true expected value of every masked state within
a multiplicative factor \(\kappa\). We also assume that the estimated values are
uniformly bounded between \(m\) and \(M\). This boundedness condition is mild in
practice, since these estimated values are typically clipped during implementation.

\vspace{-6pt}
\begin{assumption}
\label{ass:main_geometric_aoar_regular}
Assume terminal anchoring at leaves ($\widehat{V}(x,y) =\tau(x,y), \ \forall y\in \calY$) and 
the following
\emph{$\kappa$-accurate verifier} condition on every reachable non-leaf state
$z$:
\begin{equation*}
    \kappa^{-1}V^\star(x,z)
    \le
    \widehat V(x,z)
    \le
    \kappa V^\star(x,z)
\end{equation*}
Define the positive target-support component by \(\calZ_+(x):=\{z\in\calZ: \pi^\star(z\mid x)>0\}\), and assume additionally that every non-leaf $z\in\calZ_+(x)$ satisfies $0<m\le \widehat V(x,z)\le M<\infty$.
\end{assumption}
It is easy to see that MDM-VGB is an irreducible Markov chain. Thus, its lazified version\footnotemark{}
is ergodic and converges to a unique stationary distribution $\mu.$
Next, we show that the stationary distribution $\mu$, conditioned on the leaves $\calY$, is exactly the target distribution $\pi^\star.$

\begin{theorem}
\label{thm:main_exact_leaf_law}
Assume
terminal anchoring
\(\widehat V(x,y)=\tau(x,y)\) for all \(y\in\calY\). For any
\(\lambda\ge0\), the stationary law \(\mu\) of the lazy MDM-VGB chain satisfies
\[
    \mu(\cdot\mid\calY)=\pi^\star(\cdot\mid x).
\]
In particular, in hard-constraint settings with \(\tau\in\{0,1\}\), $\mu(\cdot\mid\calY)$ assigns zero probability to invalid configurations.
\end{theorem}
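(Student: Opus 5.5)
The proof will use that MDM-VGB is, by \cref{def:bg_aoar_edge_weights main}, a random walk on an undirected weighted graph with symmetric edge weights $f(u,v)=f(v,u)$. Every such walk is reversible with respect to the measure $\mu(z)\propto W(z)=\sum_{w\sim z} f(z,w)$, which is the total weight at $z$. The detailed balance identity $\mu(z)\,\mathbb P[z\to w] = f(z,w)/\sum_{u}W(u) = \mu(w)\,\mathbb P[w\to z]$ is immediate, and laziness does not change the stationary law. I will first check that the weights in \cref{def:main_geometric_balanced_mdm} are the same as those of \cref{def:bg_aoar_edge_weights main}. For a forward move $z\to c$ with $z$ at depth $k$, we have $\base(z\mid x)\,w_{\rm fwd}(z\to c) = s_{k,r}\base(c\mid x)\widehat V(x,c)\widehat V(x,z)^\lambda = f(z,c)$. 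The backward weight $w_{\rm bwd}(c\to z)$ carries no $\base$ factor, so $\base(c\mid x)\,w_{\rm bwd}(c\to z)=f(z,c)$. This gives a small subtlety to record: the transition probabilities in the second definition are $f/(\base\cdot W)$. For the purpose of the leaf law I will work directly with the first definition, in which the chain is the random walk with conductances $f$.

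The key step is to compute $W(y)$ for a leaf $y\in\calY$. Leaves have no children, so $W(y)=\sum_{B\subseteq[n],\,B\neq\emptyset} f(y,y^{-B})$. Since $c=y$ is a leaf, each edge weight uses the leaf branch: $f(y,y^{-B}) = s_{n-|B|,|B|}\,\base(y\mid x)\,\widehat V(x,y)$. By terminal anchoring, $\widehat V(x,y)=\tau(x,y)$. Therefore
\begin{equation*}
W(y)=\base(y\mid x)\,\tau(x,y)\sum_{r=1}^{n}\binom{n}{r}s_{n-r,r}.
\end{equation*}
The remaining sum is a constant $S_n$ that depends only on $n$. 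With $s_{k,r}=\binom{n-r}{k}^{-1}$ we get $s_{n-r,r}=1$, so $S_n=2^n-1$. What matters is only that this constant is independent of $y$, which is exactly what depth-only normalization guarantees. The same argument applies if the chain is restricted to singleton blocks, where $S_n=n$.

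It follows that $\mu(y)\propto \base(y\mid x)\tau(x,y)$ on $\calY$. Conditioning on $\calY$ then gives $\mu(y\mid\calY)=\base(y\mid x)\tau(x,y)/Z(x)=\pi^\star(y\mid x)$. In the hard-constraint case, $\tau(x,y)=0$ forces $\mu(y)=0$. The same fact shows that zero-weight edges are never traversed into invalid leaves, so the chain never outputs one.

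The main obstacle is uniqueness and irreducibility, since zero values $\widehat V=0$ or $\tau=0$ kill edges. I would restrict to the connected component of the positive-weight graph containing the starting state, for example $\calZ_+(x)$ under \cref{ass:main_geometric_aoar_regular}. The reversible measure $W$ is stationary on that component, and irreducibility there gives uniqueness. To show that the component contains every leaf of positive reward, I would use paths that reveal tokens of a target $y$ with $\pi^\star(y)>0$ one coordinate at a time; every intermediate state has $V^\star>0$.
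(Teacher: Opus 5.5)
Your proposal is correct and follows essentially the same route as the paper's proof (\cref{thm:bg_mdm_stationarity}). Symmetric edge weights give detailed balance with $\mu$ proportional to weighted degree, and each leaf's degree is $U^\star(x,y)=\base(y\mid x)\tau(x,y)$ times the $y$-independent factor $\sum_{r}\binom{n}{r}s_{n-r,r}$. Two points go slightly beyond the paper: your explicit observation that $s_{n-r,r}=1$, and your irreducibility argument on $\calZ_+(x)$. That argument genuinely needs $\widehat V>0$ wherever $V^\star>0$ (e.g.\ from $\kappa$-accuracy), not just terminal anchoring. The paper simply asserts irreducibility.
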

\footnotetext{For a Markov chain with transition matrix $P$ and a parameter $\gamma\geq 0$, its $\gamma$-held lazy version has transition matrix $P^{\rm lazy, \gamma} =\frac{\gamma}{1+\gamma}I + \frac{1}{1+\gamma}P.$ Each step of $P^{\rm lazy, \gamma} $ stays in-place with probability $\frac{\gamma}{1+\gamma},$ and makes moves according to $P$ with probability $\frac{1}{1+\gamma}.$ When $\gamma =1$, we refer to $P^{\rm lazy, \gamma}$ as the lazy version of $P$. For our theoretical analysis, it is convenient to work with the lazy MDM-VGB.}
Our analysis focuses on the simplest case, MDM-VGB with singleton-block updates, i.e., AOAR-VGB. The next theorem lower bounds the probability mass of the full configurations under the stationary distribution.

\begin{theorem}\label{thm:leaf mass probability}
    Under \cref{ass:main_geometric_aoar_regular}, the stationary distribution $\mu$ of the lazy AOAR-VGB chain places substantial mass on the set of full configurations $\calY$: for some constant $C>0,$
\[
    \mu(\calY)
    =
   \frac{1}{ C M^\lambda\kappa n}.
\]
\end{theorem}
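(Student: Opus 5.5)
Since the chain is a random walk on a weighted undirected graph (\cref{def:bg_aoar_edge_weights main}), it is reversible. Its stationary law is $\mu(z)\propto W(z)=\sum_{w\sim z} f(z,w)$, and laziness does not change it. The plan is to compute $\mu(\calY)=\sum_{y\in\calY}W(y)\big/\sum_{z\in\calZ_+}W(z)$ by grouping the denominator by depth $k=0,\dots,n$. We treat the statement as a two-sided bound of the form $\mu(\calY)\ge 1/(CM^\lambda\kappa n)$, i.e. equality up to constants. In the singleton-block case the relevant coefficients are $s_{k,1}=1/(n-1)$ for $k\le n-1$, where $s_{k,1}=\binom{n-1}{k}^{-1}$ as defined. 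We should check the convention: the identity $\binom{n-k}{1}s_{k,1}=\binom{k}{1}s_{k-1,1}$ forces $s_{k,1}\propto 1/\binom{n}{k}$ up to a factor. We will use the balancing identity directly rather than the closed form.

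\textbf{Leaf mass.} A leaf $y$ has only backward neighbours $y^{-\{i\}}$, $i\in[n]$. Each edge has weight $s_{n-1,1}\,\base(y\mid x)\tau(x,y)$, so
\[W(y)=n\,s_{n-1,1}\,\base(y\mid x)\tau(x,y),\qquad \sum_y W(y)=n\,s_{n-1,1}Z(x).\]

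\textbf{Depth-$k$ layer.} Define $E_k:=\sum_{p:\,k(p)=k}\sum_{c\in C(p),\,k(c)=k+1} f(p,c)$, the total weight of edges between depths $k$ and $k+1$. Then $\sum_z W(z)=2\sum_{k=0}^{n-1}E_k$ and $\sum_y W(y)=E_{n-1}$. We bound $E_k$ using \cref{ass:main_geometric_aoar_regular}. For $k+1<n$ we have $f(p,c)=s_{k,1}\base(c)\widehat V(c)\widehat V(p)^\lambda\le s_{k,1}M^\lambda\kappa\,\base(c)V^\star(c)$. The key identity is $\base(c\mid x)V^\star(x,c)=Z(x)\pi^\star(c\mid x)$. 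Each depth-$(k+1)$ state $c$ has exactly $k+1$ parents, and $\sum_{k(c)=k+1}\pi^\star(c)=\binom{n}{k+1}$, since each leaf $y$ is compatible with $\binom{n}{k+1}$ states of that depth. Hence
\[E_k\le M^\lambda\kappa Z(x)\,s_{k,1}(k+1)\binom{n}{k+1}.\]
The depth normalization is chosen precisely so that $s_{k,1}(k+1)\binom{n}{k+1}$ is independent of $k$ and equal to $s_{n-1,1}\,n$; this is what the recursion $\binom{n-k}{1}s_{k,1}=\binom{k}{1}s_{k-1,1}$ encodes. Verifying this identity carefully against the stated $s_{k,r}$ is the main technical step, and if the constant differs it gets absorbed into $C$. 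Granting it, $E_k\le M^\lambda\kappa\,E_{n-1}$ for every $k$.

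\textbf{Conclusion.} Summing over the $n$ layers gives $\sum_z W(z)\le 2n M^\lambda\kappa\,E_{n-1}$, so $\mu(\calY)\ge 1/(2M^\lambda\kappa n)$. For the matching upper bound we use $\widehat V\ge\kappa^{-1}V^\star$ and $\widehat V\ge m$, which give $E_k\ge m^\lambda\kappa^{-1}E_{n-1}$ and hence $\mu(\calY)=\Theta(1/n)$ with constants depending on $\kappa,m,M,\lambda$. This yields the stated form with a bounded constant $C$.

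One subtlety is the reduction to $\calZ_+$: states with $\pi^\star(z)=0$ may still carry positive $\widehat V$, but only on $\calZ_+$ is the bound $\widehat V\le M$ assumed. We handle this by using the $\kappa$-accuracy bound $\widehat V\le\kappa V^\star=0$ off $\calZ_+$, which kills those edges. Beyond this, the main obstacle is the bookkeeping of the combinatorial normalization, which is exactly where the depth coefficients $s_{k,r}$ were designed to make each layer carry equal weight.
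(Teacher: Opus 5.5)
Your proof is correct, and it reaches the bound by a more direct route than the paper's.

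\textbf{The paper's route.} In the proof of \cref{thm:canonical_geometric_balanced_mixing} it argues by two comparisons:
\begin{enumerate}
\item Leaf-corrected edges of the geometric graph coincide with those of the $\lambda=0$ canonical balanced graph, and non-leaf edges are at most $M^\lambda$ times larger. Hence $\mu(\calY)\ge M^{-\lambda}\mu_{\mathrm{BAL}}(\calY)$.
\item The balanced graph is $1/(n-1)!$ times the sum of the fixed-order prefix trees (\cref{prop:canonical_as_sum_of_trees}). Each tree has leaf mass at least $1/(2\kappa(n+1))$, because $D_\omega(z)\le 2\kappa U^\star(x,z)$ and every tree layer partitions the leaves (\cref{lem:fixed_order_conductance}). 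A ratio-of-sums argument then passes from the minimum over trees to the average.
\end{enumerate}

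\textbf{Your route.} You skip both comparisons and double-count edge weight between consecutive layers of the masked-state graph itself. The identity you defer as the ``main technical step'' is the same multiplicity count $k!(n-k-1)!=(n-1)!/\binom{n-1}{k}$ that drives the tree decomposition. It holds exactly: $(k+1)\binom{n}{k+1}=n\binom{n-1}{k}$, so $s_{k,1}(k+1)\binom{n}{k+1}=n=n\,s_{n-1,1}$. Equivalently, it telescopes from the recursion, since $(k+1)\binom{n}{k+1}=(n-k)\binom{n}{k}$. So no appeal to ``absorbing into $C$'' is needed, which is fortunate: a $k$-dependent discrepancy could not have been absorbed.

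\textbf{What each buys.} Your argument is self-contained and gives $\mu(\calY)\ge 1/\bigl(2(1+(n-1)M^\lambda\kappa)\bigr)$, so $C=2$ rather than the paper's $4$. It also yields the matching $O(\kappa/(m^\lambda n))$ upper bound at no extra cost. The paper's route is essentially free within its framework, because the tree decomposition is needed anyway for the conductance bound.

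\textbf{Slips to fix.}
\begin{itemize}
\item $s_{k,1}=\binom{n-1}{k}^{-1}$, not $1/(n-1)$.
\item The recursion forces $s_{k,1}\propto\binom{n-1}{k}^{-1}$, not $\binom{n}{k}^{-1}$, so it is consistent with the stated definition.
\item Your top-layer step $E_{n-1}\le M^\lambda\kappa E_{n-1}$ uses $M\ge 1$. This is part of \cref{ass:bg_gate_bounds} and is used implicitly by the paper as well. Without it, the bound reads $1/\bigl(2n\max\{1,M^\lambda\kappa\}\bigr)$.
\end{itemize}
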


\paragraph{Mixing time guarantee.}

We study AOAR-VGB's speed of convergence to the stationary distribution, commonly known as the mixing time.
The following theorem lower bounds the spectral gap of the AOAR-VGB Markov chain (see \cite{levin2017markov} for the connection between spectral gap and mixing time).

\begin{theorem}
\label{thm:main_aoar_sampler_guarantee}
Under \cref{ass:main_geometric_aoar_regular}, let \(P_{\mathrm{AOAR}}\) denote the transition matrix of the lazy AOAR-VGB Markov chain. Define the \emph{spectral gap} \citep{levin2017markov} by
\begin{equation*}
    \gamma(P_{\mathrm{AOAR}})
    :=
    1-\max\{|\lambda|:\lambda\in\Lambda(P_{\mathrm{AOAR}}),\ \lambda\ne1\},
\end{equation*}
where \(\Lambda(P_{\mathrm{AOAR}})\) denotes the set of eigenvalues of \(P_{\mathrm{AOAR}}\). Then
\begin{equation*}
    \gamma(P_{\mathrm{AOAR}})
    =
    \Omega \left(
        \left(\frac{m}{M}\right)^{2\lambda}
        \frac{1}{\kappa^4 n^2}
    \right).
\end{equation*}

\end{theorem}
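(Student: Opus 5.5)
We lower bound the spectral gap via the canonical path (multicommodity flow) method, in the form of Sinclair's bound $\gamma \ge 1/(\rho \ell)$. Here $\ell$ is the maximum path length and $\rho = \max_{e}\frac{1}{Q(e)}\sum_{(a,b):\,e\in\gamma_{ab}}\mu(a)\mu(b)$ is the maximum edge congestion, with $Q(u,v)=\mu(u)P(u,v)$. The chain is reversible with respect to $\mu(z)\propto W(z)$, because it is a random walk on the weighted graph of \cref{def:bg_aoar_edge_weights main}. Hence $Q(u,v)\propto f(u,v)/2$ (the factor $2$ comes from laziness), and the normalizing constant is the same in both expressions. We first compute $W(z)$ approximately, working on $\calZ_+(x)$ only, since states outside the positive-support component are unreachable from it.

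\textbf{Step 1: local weight balance.} Take singleton blocks and a state $z$ of depth $k$. The forward weights sum to
\[
s_{k,1}\widehat V(z)^\lambda\sum_{i\notin R(z)}\sum_a \base(Y_i=a\mid z)\widehat V(z^{i\to a}).
\]
Replacing $\widehat V$ by $V^\star$ at the cost of a factor $\kappa$, and using the martingale identity $\sum_a \base(Y_i=a\mid z)V^\star(z^{i\to a})=V^\star(z)$, this becomes $(n-k)s_{k,1}\widehat V(z)^{\lambda}V^\star(z)$ up to $\kappa$. The backward weights sum to $\sum_{i\in R(z)} s_{k-1,1}\widehat V(z)\widehat V(z^{-i})^\lambda$, which lies in $k\,s_{k-1,1}\widehat V(z)[m^\lambda,M^\lambda]$. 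The identity $(n-k)s_{k,1}=k\,s_{k-1,1}$ (with $r=1$) makes both sums of comparable size.

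Multiplying by the $\base(z)$ factor in $f$, we get
\[
\mu(z)\asymp \base(z)V^\star(z)\,k\,s_{k-1,1}\,[m^\lambda,M^\lambda]\,\kappa^{O(1)} = Z\,\pi^\star(z)\,k\,s_{k-1,1}\cdot(\text{distortion}).
\]
Summed over states of depth $k$, the quantity $\pi^\star(z)$ aggregates to $\binom nk$ (each $y$ is compatible with $\binom nk$ states at depth $k$). The product $\binom nk k s_{k-1,1}=\binom nk k/(n-1)$ is not uniform in $k$, so I will recheck $s_{k,r}$ at this point. The intended consequence is that each depth level receives mass comparable to the leaf level, up to $M^\lambda\kappa$ factors. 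This is exactly the content of \cref{thm:leaf mass probability}, which I will reuse: it gives $\mu(\calY)\asymp 1/n$, and the same computation gives each level mass $\Theta(1/n)$ up to these factors.

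\textbf{Step 2: canonical paths.} Route between two states through the root. From $a$, remask coordinates one at a time in increasing index order down to $\varnothing$, then reveal the coordinates of $b$ in increasing order. To reduce congestion, I will instead use a randomized flow: the order is a uniformly random permutation. Then $\ell\le 2n$.

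The congestion at an edge $e=(p,c)$, with $c$ of depth $k$, is controlled as follows. The total flow through $e$ is at most $2\,\mu(\text{states above }c\text{ routed through }e)\cdot\mu(\text{all})$. Under the random-order flow, the mass of descendants $a\succeq c$ at depth $j$ routed through $e$ is a fraction $\binom{j}{k}^{-1}$ (the probability that the order passes through this particular subset) of the $\mu$-mass of compatible descendants. Using $\pi^\star$-marginal consistency, this mass is $\approx Z\pi^\star(c)$ times the per-level normalization, up to $\kappa,(M/m)^\lambda$ factors. Comparing with $Q(e)\asymp \base(c)\widehat V(c)\widehat V(p)^\lambda s_{k-1,1}$ gives $\rho = O(n\,\kappa^{4}(M/m)^{2\lambda})$. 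One $\kappa^2$ and one $(M/m)^\lambda$ come from comparing $\mu$ with $\pi^\star$ at each endpoint, and the other pair from replacing $\widehat V$ by $V^\star$ in $Q(e)$ and in the sums. Combining with $\ell=O(n)$ yields $\gamma=\Omega((m/M)^{2\lambda}\kappa^{-4}n^{-2})$.

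\textbf{Main obstacle.} The hard step is the congestion bound in Step 2: showing that the aggregated descendant mass routed through a single edge, summed over all depths $j\ge k$ with the combinatorial routing fractions, telescopes to $O(n)$ times $Q(e)/\mu$-scale rather than picking up extra binomial factors. This is where the depth normalization $s_{k,r}$ must cancel $\binom jk$ exactly. My first attempt is to verify the identity level by level for $V^\star=\widehat V$ and $\lambda=0$, where everything is exact. I would then perturb to general $\kappa$ and $\lambda$, tracking the multiplicative losses.
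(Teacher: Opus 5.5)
Your route, Sinclair's multicommodity-flow bound with random-order paths through the root, is genuinely different from the paper's, and it does work. However, the step you flag as the main obstacle is left undone, and two slips in the sketch would break it if carried through literally.

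\textbf{Slip 1 (depth coefficient).} You have $s_{k-1,1}=\binom{n-1}{k-1}^{-1}$, not $1/(n-1)$. Since $k\binom{n}{k}=n\binom{n-1}{k-1}$, one gets $\binom{n}{k}k\,s_{k-1,1}=n$ for every $k$, so the levels are balanced as you expected; this is the paper's $d_k=2n/\binom{n}{k}$. Also, $\mu(z)$ is proportional to the weighted degree $\base(z\mid x)W(z)$, not to $W(z)$; you use the right form later.

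\textbf{Slip 2 (edge-traversal probability).} The probability that the random remasking order from a depth-$j$ state $a\succeq c$ traverses the specific edge $\{c,p\}$ is $1/(k\binom{j}{k})$, not $1/\binom{j}{k}$. After reaching $c$, the path must remove the particular coordinate $R(c)\setminus R(p)$ next. Without this $1/k$, your congestion is only $O(n^2)$ and the gap degrades to $n^{-3}$.

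\textbf{How the computation closes.} Work first in the exact case $\lambda=0$, $\widehat V=V^\star$. There $D^\star(a)\le 2n\,U^\star(a)/\binom{n}{j}$ and $\sum_{a\succeq c,\,k(a)=j}U^\star(a)=\binom{n-k}{j-k}U^\star(c)$. The identity $\binom{n}{j}\binom{j}{k}=\binom{n}{k}\binom{n-k}{j-k}$ then shows each level $j\ge k$ contributes at most $4$ to $\frac{2}{f^\star(e)}\sum_a D^\star(a)\Pr[e\in\text{up}(a)]$, where $f^\star(e)=U^\star(c)/\binom{n-1}{k-1}$. Adding the symmetric downward part gives $\rho^\star\le 8n$, and $\ell\le 2n$.

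\textbf{A sharper bound than you claimed.} Your distortion accounting is looser than necessary. Since $\mu(a)\mu(b)/Q(e)=2D(a)D(b)/(f(e)D_{\rm tot})$, summing over the free endpoint cancels $D_{\rm tot}$. Only the pointwise comparisons $D(a)\le\kappa M^\lambda D^\star(a)$ and $f(e)\ge\kappa^{-1}m^\lambda f^\star(e)$ then enter; you need $m\le 1\le M$ to handle the ungated leaf edges, as the paper also assumes. This gives $\rho\le 8\kappa^2(M/m)^\lambda n$ and hence $\gamma=\Omega\bigl((m/M)^{\lambda}\kappa^{-2}n^{-2}\bigr)$, which is stronger than the statement.

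\textbf{Comparison with the paper.} The paper never builds paths. It sandwiches the geometric weights between $m^\lambda$ and $M^\lambda$ times the canonical balanced weights. It writes the balanced graph as $1/(n-1)!$ times the sum of the $n!$ fixed-order prefix trees, since a depth-$k$ edge lies in $k!(n-k-1)!$ of them. It then imports the Sinclair--Jerrum tree conductance $1/(4\kappa^2 n)$, uses that the conductance of a sum of graphs is at least the minimum, and finishes with Cheeger. That argument is modular and reuses the AR-VGB analysis with no binomial bookkeeping. But Cheeger's squaring is exactly where $\kappa^{4}$ and $(m/M)^{2\lambda}$ come from. Your flow argument pays with explicit path counting and in return loses only the first power of the distortion.
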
 
By standard Markov chain theory, this together with \cref{thm:leaf mass probability} gives an upper bound on the mixing time to the target reward-tilted distribution $\pi^\star(\cdot |x)$ over $\calY$, starting from any initial state.

\begin{corollary}\label{cor:mixing time main}
Assume \cref{ass:main_geometric_aoar_regular}.
For any target accuracy level \(\delta > 0\), the distribution $Z_T$ obtained after $T$ steps of the lazy AOAR-VGB chain initialized at a masked state $z$ satisfies  $
    d_{\rm TV}
    \left(
        \mathcal L(Z_T\mid Z_T\in\calY),
        \pi^\star(\cdot\mid x)
    \right)
    \le
    \delta  $ and $\mathbb{P}[Z_T \in \calY] \geq  \Omega\left(\frac{1}{ M^\lambda\kappa n} \right),$
    when
\[
    T
    \geq
    \Theta\!\left(
        \left(\frac{M}{m}\right)^{2\lambda}
        \kappa^4 n^2
        \log\left(\frac{M^\lambda\kappa n}{\delta \mu(z)}
   \right) \right)\footnote{where $\mu(z)$ is the probability mass of $z$ under the stationary distribution $\mu$ of the lazy chain AOAR-VGB.}
\]
In particular, when $ z$ is the root $\varnothing,$ it suffices to take $ T = \Theta\!\left(
        \left(\frac{M}{m}\right)^{2\lambda}
        \kappa^4 n^2
        \log\left(\frac{(M/m)^\lambda\kappa n}{\delta }
   \right) \right)$
\end{corollary}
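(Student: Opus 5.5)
The plan is to combine the spectral gap bound of \cref{thm:main_aoar_sampler_guarantee} with the leaf-mass bound of \cref{thm:leaf mass probability} through the standard $L^2$ (chi-square) contraction for reversible chains.

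\emph{Reversibility and the basic $L^2$ bound.} The lazy AOAR-VGB chain is a random walk on a graph with symmetric edge weights $f$ (\cref{def:bg_aoar_edge_weights main}). Hence it is reversible with respect to $\mu(z)\propto W(z)$. Laziness makes all eigenvalues nonnegative, so the absolute spectral gap equals $\gamma:=\gamma(P_{\mathrm{AOAR}})$. Starting from the point mass at $z$, the standard estimate (see \cite{levin2017markov}) gives
\[
\Big\|\frac{\mathcal L(Z_T)}{\mu}-1\Big\|_{L^2(\mu)} \le (1-\gamma)^T\sqrt{\tfrac{1-\mu(z)}{\mu(z)}} \le e^{-\gamma T}\mu(z)^{-1/2},
\]
and therefore $d_{\rm TV}(\mathcal L(Z_T),\mu)\le \epsilon:=\tfrac12 e^{-\gamma T}\mu(z)^{-1/2}$.

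\emph{Transfer to the conditional law on $\calY$.} Write $\nu=\mathcal L(Z_T)$. First, $\nu(\calY)\ge \mu(\calY)-\epsilon$. Second, for any $A\subseteq\calY$,
\[
|\nu(A\mid\calY)-\mu(A\mid\calY)| \le \frac{2\epsilon}{\mu(\calY)-\epsilon}.
\]
To see this, write $\nu(A)/\nu(\calY)-\mu(A)/\mu(\calY)$ and split it into two differences. Each is controlled by $|\nu(A)-\mu(A)|\le\epsilon$ or $|\nu(\calY)-\mu(\calY)|\le\epsilon$, divided by $\nu(\calY)$.

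By \cref{thm:main_exact_leaf_law}, $\mu(\cdot\mid\calY)=\pi^\star(\cdot\mid x)$. By \cref{thm:leaf mass probability}, $\mu(\calY)=1/(CM^\lambda\kappa n)$. Now choose $\epsilon\le \delta\mu(\calY)/4$ (taking $\delta\le1$). This gives $\nu(\calY)\ge \tfrac34\mu(\calY)=\Omega(1/(M^\lambda\kappa n))$, and the conditional TV distance is at most $\delta$. Solving $\tfrac12 e^{-\gamma T}\mu(z)^{-1/2}\le \delta/(4CM^\lambda\kappa n)$ requires
\[
T\ge \gamma^{-1}\log\!\big(2CM^\lambda\kappa n/(\delta\sqrt{\mu(z)})\big).
\]
Since $\sqrt{\mu(z)}\ge\mu(z)$, this is implied by $T\ge\gamma^{-1}\log(M^\lambda\kappa n/(\delta\mu(z)))$ up to constants. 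Plugging in $\gamma^{-1}=O((M/m)^{2\lambda}\kappa^4n^2)$ from \cref{thm:main_aoar_sampler_guarantee} yields the stated bound on $T$.

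\emph{Root case, the main obstacle.} We need a lower bound on $\mu(\varnothing)=W(\varnothing)/\sum_u W(u)$. The sum of all weights is twice the total edge weight, and for $\lambda$-dependent pieces this is tied to $W(\calY)/\mu(\calY)$. So I would estimate $W(\varnothing)$ against the leaf weight. At the root only forward moves are available, so
\[
W(\varnothing)=s_{0,1}\widehat V(\varnothing)^\lambda\sum_{i,a}\base(Y_i=a)\widehat V(x,\varnothing^{i\to a})\asymp \widehat V(\varnothing)^\lambda\, n\,\bbE[\widehat V]\ge m^\lambda\, n\, V^\star(\varnothing)/\kappa.
\]
Similarly, $W(\calY)=\sum_y \tau(y)\base(y)\sum_i s_{n-1,1}(\cdots)\le M^\lambda V^\star(\varnothing)$ up to polynomial factors. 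The per-level $\binom{n}{k}$ and $s_{k,1}$ normalization cancels, which is what the identity on $s_{k,r}$ is designed for, presumably as in the proof of \cref{thm:leaf mass probability}. Hence
\[
\mu(\varnothing)\ge \frac{W(\varnothing)}{W(\calY)}\,\mu(\calY)\ge \mathrm{poly}(m/M)^\lambda\,\big(\kappa^{O(1)}n^{O(1)}\big)^{-1}.
\]
Therefore $\log(1/\mu(\varnothing))=O(\log((M/m)^\lambda\kappa n))$. Combined with the previous paragraph, this gives $T=\Theta((M/m)^{2\lambda}\kappa^4n^2\log((M/m)^\lambda\kappa n/\delta))$. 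The delicate point is verifying that the depth normalization really makes each level's total weight comparable, so that no factor exponential in $n$ enters $\mu(\varnothing)$. I would reuse the level-by-level comparison from the proof of \cref{thm:leaf mass probability} for this.
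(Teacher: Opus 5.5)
The proposal is correct. Your general-start argument is the same as the paper's: the start-state spectral bound $\tfrac12 e^{-\gamma T}\mu(z)^{-1/2}$, then transfer to the leaf-conditioned law with full-chain error $\varepsilon\asymp\delta\mu(\calY)$, then substitution of \cref{thm:main_aoar_sampler_guarantee} and \cref{thm:leaf mass probability}. The paper takes $\varepsilon=\delta\mu(\calY)/3$; your constants work equally well.

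For the root you take a different and shorter route. The paper compares edge by edge with the $\lambda=0$ balanced chain to get $\mu(\varnothing)\ge(m/M)^\lambda\mu_{\mathrm{BAL}}(\varnothing)$. It then bounds $\mu_{\mathrm{BAL}}(\varnothing)$ by averaging over fixed-order trees, each with root degree at least $Z(x)/\kappa$ and total degree at most $2\kappa(n+1)Z(x)$. You instead use the exact identity $\mu(\varnothing)=\mu(\calY)\,D(\varnothing)/D(\calY)$, with $D$ the weighted degree.

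One notational slip: $\mu\propto D=\base(\cdot\mid x)\,W$, not the local normalizer $W$ of \cref{def:main_geometric_balanced_mdm}. This is harmless here, because $\base(\varnothing\mid x)=1$ and you inserted $\base(y\mid x)$ at the leaves.

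Your identity already removes the ``delicate point'' you flag. Intermediate depths enter only through $\mu(\calY)$, which \cref{thm:leaf mass probability} supplies, so no level-by-level comparison is needed. Concretely:
\begin{itemize}
\item Since $s_{0,1}=s_{n-1,1}=1$, you get $D(\calY)=nZ(x)$ exactly. Leaf edges are leaf-corrected and carry no $\widehat V^\lambda$ gate, so no $M^\lambda$ appears.
\item $D(\varnothing)\ge m^\lambda nZ(x)/\kappa$. This follows from $\widehat V(x,\varnothing)\ge m$, from $\widehat V\ge V^\star/\kappa$ on depth-one children, and from the Bellman identity at the root.
\end{itemize}
Hence $\mu(\varnothing)\ge(m^\lambda/\kappa)\,\mu(\calY)=\Omega\bigl((m/M)^\lambda/(\kappa^2 n)\bigr)$, which is exactly the paper's bound.

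Your route reuses the total-degree control already contained in the leaf-mass theorem. The paper's route does not depend on that theorem and instead reuses the tree decomposition that drives its whole analysis.
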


\vspace{-5pt}
\begin{remark}
    The bound above controls the target law conditioned on reaching a leaf, but the the stationary distribution $\mu$ of the current chain's mass on the leaves, i.e. $\mu(\calY),$  is only of order \(1/n\). Thus, obtaining a leaf with constant probability would require running the chain \(O(n)\) times. By using the modification from \cite{hayes2010liftings,rohatgi2025taming}, i.e. increasing the holding probabilities at the leaves, we can instead make $\mu(\calY)$ at least a constant, and obtain a quadratic-time sampler for $\pi^\star(\cdot|x)$. This modification is mainly a technical step to improve the theoretical analysis; in practice, the sampler simply stops after seeing a few leaves, so we keep the simpler construction of the Markov chain.
\end{remark}

\paragraph{Proof sketch.} We discuss the analysis of AOAR-VGB. The proof is by relating the properties of AOAR-VGB to those of AR-VGB.
The key observation is that the weighted masked-state graph defining AOAR-VGB can be decomposed as a sum of weighted prefix-tree graphs that define AR-VGB. Consequently, the properties of AOAR-VGB can be compared with those of the corresponding AR-VGB walks. The stationary distribution probability at a state $z$ is proportional to the total weight of the edges incident to $z.$ First, the stationary distribution of AOAR-VGB is the average of the stationary distributions of the AR-VGB walks, and thus induces the same target distribution $\pi^\star$ on the leaves as the AR-VGB walks. Second, the weight on the leaves under the stationary distribution of AOAR-VGB can be lower bounded by those of the AR-VGB chains. Third, the conductance of the masked-state graph, which corresponds to the relaxation time and mixing time of AOAR-VGB, can be lower bounded by the conductance of the prefix-tree graphs, which in turn can be lower bounded via the existing analysis in \cite{sinclair1989approximate,rohatgi2025taming}. For details, see \cref{thm:canonical_geometric_balanced_mixing}.

\subsection{Formulation of MDM-VGB-Momentum}
We introduce MDM-VGB-Momentum, a
momentum lift of MDM-VGB in the spirit of \cite{hayes2010liftings} that targets the same distribution over the full configurations, while reducing oscillatory behaviors. The lift augments each masked state \(z\) with a direction variable, yielding
two lifted states \((z,\downarrow)\) and \((z,\uparrow)\). In reveal mode
\((z,\downarrow)\), the chain can only reveal tokens, while in re-mask mode
\((z,\uparrow)\), it can only re-mask tokens. At each state, the chain switches between these
two modes with a probability governed
by a parameter \(\chi\) chosen to reduce reveal/re-mask oscillations. See \cref{fig:main_geometric_mdm_vgb} for an illustration.

For AR-VGB, momentum has been shown
empirically to improve performance \citep{rohatgi2025taming}; theoretical work by  \citep{hayes2010liftings}
 also shows mixing-time improvements in the low-error regime, where the approximation parameter
\(\kappa\) is sufficiently close to \(1\). On the theory side, we prove that at a sufficiently low error--when $\kappa= 1+ O(1/n)$--MDM-VGB-Momentum reaches a leaf in expected $O(n)$ time, achieving a quadratic speedup over the non-momentum version, which requires $\Omega(n^2)$ time\footnote{We suppress dependency on other parameters $M,m,\lambda.$}.  We expect analogous mixing-time improvements in this low error regime, but leave a full mixing time analysis to future work. In our experiments, MDM-VGB-Momentum tends to perform better on generation tasks,
whereas the non-momentum version performs slightly better on editing tasks.

\begin{definition}[MDM-VGB-Momentum]
\label{def:main_flow_cancelled_momentum}
For a masked state $z$, a child $c\in C(z)$, and a parent $p\in P(z)$, let $w_{\rm fwd} (z\to c)$ and $ w_{\rm bwd}(z\to p)$ be as in \cref{def:main_geometric_balanced_mdm}. Let
\begin{align*}
    F(z)&:=\sum_{c\in C(z)}w_{\rm fwd} (z\to c),
    \quad
    B(z):=\sum_{p\in P(z)}w_{\rm bwd}(z\to p),\\
    M(z)&:=\min\mbra{F(z),B(z)},\quad W(z): = F(z) + B(z).
\end{align*}
Fix a parameter $\chi\in [0,1]$. The MDM-VGB-Momentum with parameter $\chi$ is a Markov chain
on the lifted state space $\widetilde\calZ=\calZ\times\{\textcolor{blue!70!black}{\downarrow},\textcolor{red!70!black}{\uparrow}\}$\footnote{Blue $\textcolor{blue!70!black}{\downarrow}$ denotes forward/reveal momentum, and red $\textcolor{red!70!black}{\uparrow}$ denotes backward/re-mask momentum.}. 

At $(z,\textcolor{blue!70!black}{\downarrow})\in \widetilde\calZ,$ the Markov chain can either transition to $ (c, \textcolor{blue!70!black}{\downarrow}) $ for $c\in C(z)$, or $ (z,\textcolor{red!70!black}{\uparrow}),$ or stays at $(z,\textcolor{blue!70!black}{\downarrow}).$ The transition probabilities are given by:
{\small
\begin{align*}
 \mathbb{P}[(z,\textcolor{blue!70!black}{\downarrow})\to (c, \textcolor{blue!70!black}{\downarrow}) ] = \frac{w_{\rm fwd} (z\to c)}{W(z)}, \, \mathbb{P}[(z,\textcolor{blue!70!black}{\downarrow})\to (z, \textcolor{blue!70!black}{\downarrow}) ] = \frac{\chi M(z)}{W(z)},\, \mathbb{P}[(z,\textcolor{blue!70!black}{\downarrow})\to (z, \textcolor{red!70!black}{\uparrow}) ] =\frac{B(z) - \chi M(z)}{W(z)}
\end{align*}
}
At $(z,\textcolor{red!70!black}{\uparrow})\in \widetilde\calZ$, the Markov chain can either transition to $ (p, \textcolor{red!70!black}{\uparrow}) $ for $p\in P(z)$, or $ (z,\textcolor{blue!70!black}{\downarrow})$ or stays at $(z,\textcolor{red!70!black}{\uparrow}).$ The transition probabilities are given by:
{\small
\begin{align*}
\mathbb{P}[(z,\textcolor{red!70!black}{\uparrow})\to (p, \textcolor{red!70!black}{\uparrow}) ] = \frac{w_{\rm bwd} (z\to p)}{W(z)} , \, \mathbb{P}[(z,\textcolor{red!70!black}{\uparrow})\to (z, \textcolor{red!70!black}{\uparrow}) ] = \frac{\chi M(z)}{W(z)},\, \mathbb{P}[(z,\textcolor{red!70!black}{\uparrow})\to (z, \textcolor{blue!70!black}{\downarrow}) ] = \frac{F(z)-\chi M(z)}{W(z)}
\end{align*}
}
\end{definition}

 The following theorem characterizes asymptotic behavior of AOAR-VGB-Momentum.
\begin{theorem}[Stationary distribution of AOAR-VGB-Momentum]
\label{thm:main_momentum_stationarity}
AOAR-VGB-Momentum with parameter $\chi\in[0,1]$ as defined in \cref{def:main_flow_cancelled_momentum} is an ergodic Markov chain, with stationary distribution $\widetilde\mu $ defined by 
\begin{equation*}
    \widetilde\mu(z,\downarrow)
    =
    \widetilde\mu(z,\uparrow)
    =
    \frac12\mu(z)
\end{equation*}
where $\mu$ is the stationary distribution of the AOAR-VGB chain. Hence the projection $\mathrm{proj}_{\#}\widetilde\mu$ of $\tilde{\mu}$ on $\calZ$  satisfies $\mathrm{proj}_{\#}\widetilde\mu=\mu$. In particular, if $ \mu(\cdot\mid\calY) = \pi^\star(\cdot\mid x)$ then
\begin{equation*}
    \mathrm{proj}_{\#}\widetilde\mu(\cdot\mid\calY)
    =
    \pi^\star(\cdot\mid x).
\end{equation*}
\end{theorem}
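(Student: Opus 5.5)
Our approach is to verify directly that $\widetilde\mu$ satisfies the global balance equations of the lifted chain, using only the reversibility of the base chain. We then argue ergodicity separately and read off the projection statements.

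\textbf{Reversibility of the base chain.} By \cref{def:bg_aoar_edge_weights main}, the AOAR-VGB chain is a random walk on a weighted graph with symmetric edge weights $f(u,v)$. Hence $\mu(z)\propto W(z)=F(z)+B(z)$, and for every child $c\in C(z)$ we have $W(z)\,\mathbb{P}[z\to c]=w_{\rm fwd}(z\to c)=f(z,c)=w_{\rm bwd}(c\to z)$. We use $\mu(z)=W(z)/\mathcal W$ with $\mathcal W:=\sum_z W(z)$, and we check stationarity of $\widetilde\mu(z,\cdot)=W(z)/(2\mathcal W)$ state by state.

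\textbf{Balance at $(z,\downarrow)$.} The probability flow into $(z,\downarrow)$ has three sources.
\begin{itemize}
\item From $(p,\downarrow)$ for each parent $p\in P(z)$, the inflow is $\frac{W(p)}{2\mathcal W}\cdot\frac{w_{\rm fwd}(p\to z)}{W(p)}$. Using $w_{\rm fwd}(p\to z)=w_{\rm bwd}(z\to p)$ and summing over $p$ gives $B(z)/(2\mathcal W)$.
\item The holding term contributes $\chi M(z)/(2\mathcal W)$.
\item The switch from $(z,\uparrow)$ contributes $(F(z)-\chi M(z))/(2\mathcal W)$.
\end{itemize}
The total is $(B(z)+F(z))/(2\mathcal W)=\widetilde\mu(z,\downarrow)$. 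The case $(z,\uparrow)$ is symmetric: inflow from children gives $F(z)$, and the switch from $(z,\downarrow)$ gives $B(z)-\chi M(z)$.

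We must also check that these are valid probabilities. Since $M=\min\{F,B\}$ and $\chi\le1$, we have $B-\chi M\ge0$ and $F-\chi M\ge0$. Moreover, the out-probabilities at $(z,\downarrow)$ sum to $(F+\chi M+B-\chi M)/W=1$, and similarly at $(z,\uparrow)$. At the root, $B(\varnothing)=0$, so $M=0$ and the chain switches to $\downarrow$ with probability $F/W=1$; leaves are handled analogously.

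\textbf{Ergodicity.} This is the main subtle point, since the lift may be periodic, for instance when $\chi=0$. For irreducibility, any edge of the masked-state graph can be traversed in the lift by first switching to the appropriate direction, which has positive probability whenever the corresponding move exists. Irreducibility therefore follows from that of the base chain, restricted to the positive component $\calZ_+$ as before. For aperiodicity we would use the holding probability $\chi M(z)/W(z)$. If $\chi=0$, or if no state has $F,B>0$, we appeal to the same laziness convention adopted for the base chain, under which ergodicity of the lazy version is automatic; the stationary distribution is unchanged by lazification. Note also that a forward move followed by a switch, a backward move and a switch produces cycles of length 4. If aperiodicity genuinely fails, we state the result for the lazy lift; uniqueness of $\widetilde\mu$ holds by irreducibility regardless.

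\textbf{Projection.} Marginalizing gives $\mathrm{proj}_{\#}\widetilde\mu(z)=\widetilde\mu(z,\downarrow)+\widetilde\mu(z,\uparrow)=\mu(z)$. Conditioning on $\calY$ then gives $\mathrm{proj}_{\#}\widetilde\mu(\cdot\mid\calY)=\mu(\cdot\mid\calY)=\pi^\star(\cdot\mid x)$ by \cref{thm:main_exact_leaf_law}.
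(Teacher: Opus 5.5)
\paragraph{The base-chain identity is false.} Your plan is the paper's: check global balance at each lifted copy using reversibility of the base chain, then project. The problem is the identity you build it on. The local weights of \cref{def:main_geometric_balanced_mdm} are not the edge weights $f$. They are $f$ divided by the $\base$-marginal of the \emph{current} state: $\base(u\mid x)\,w(u\to v)=f(u,v)$ on every edge. This is why $w_{\rm fwd}(z\to c)$ carries the conditional $\base(Y_B=a_B\mid x,z)=\base(c\mid x)/\base(z\mid x)$, while $w_{\rm bwd}(c\to z)$ carries no $\base$ factor.

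It follows that $w_{\rm fwd}(p\to z)=\frac{\base(z\mid x)}{\base(p\mid x)}\,w_{\rm bwd}(z\to p)$, not $w_{\rm bwd}(z\to p)$. It also follows that the base stationary law is $\mu(z)\propto\base(z\mid x)W(z)=\sum_{u\sim z}f(z,u)$, not $W(z)$.

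With your choice $\widetilde\mu(z,\cdot)=W(z)/(2\mathcal W)$, the parent inflow is not $B(z)/(2\mathcal W)$, and balance genuinely fails. Take $n=1$, two tokens with $\base(a\mid x)=p\in(0,1)$, and $\tau\equiv 1$. Then $W\equiv 1$, so $\widetilde\mu\equiv 1/6$, yet the inflow into $(a,\downarrow)$ is $p/6\neq 1/6$. The true base law on $(\varnothing,a,b)$ is $(\tfrac12,\tfrac{p}{2},\tfrac{1-p}{2})$.

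The fix is to set $\widetilde\mu(z,\cdot)=\base(z\mid x)W(z)/(2\mathcal D)$ with $\mathcal D:=\sum_z\base(z\mid x)W(z)$. Each of your three inflow terms then picks up the common factor $\base(z\mid x)$, and they sum correctly. The paper avoids this bookkeeping by never writing $\mu$ explicitly. It applies detailed balance $\mu(p)\,\mathbb{P}[p\to z]=\mu(z)\,\mathbb{P}[z\to p]$ to the normalized kernel, which turns the parent inflow directly into $\tfrac12\mu(z)p_\uparrow(z)$.

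\paragraph{Ergodicity.} The paper's proof only establishes stationarity. Your periodicity worry is justified and can be made exact. The quantity $k(z)+\mathbf{1}\{d=\uparrow\}\bmod 2$ flips at every move and every switch. So for $\chi=0$ the lift has period $2$, and likewise for $n=1$, where $M\equiv 0$. Ergodicity therefore holds for the lazy lift, or for $\chi>0$ and $n\ge 2$. In the latter case a depth-one state of $\calZ_+(x)$ has $F,B>0$ and hence a positive holding probability.

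Your irreducibility argument also needs repair. The switch at $z$ can have probability zero even when the desired move exists. For example, with $\chi=1$ and $F(z)\ge B(z)$, the switch-up probability $(B(z)-M(z))/W(z)$ vanishes. Route through the boundary instead, where switches are forced:
\begin{itemize}
\item From $(z,\downarrow)$, descend to a positive-reward leaf in $\calC(z)$, switch up with probability one, ascend to $\varnothing$, and switch down with probability one.
\item From $(\varnothing,\downarrow)$, reach $(z,\downarrow)$ by descending.
\item From $(\varnothing,\downarrow)$, reach $(z,\uparrow)$ by descending to such a leaf and remasking back up to $z$.
\end{itemize}
All intermediate states stay in $\calZ_+(x)$ by \cref{lem:bg_positive_support_closure}, so every step used has positive probability.
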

The following theorem shows that in the low error regime $\kappa = 1+O(1/n)$, AOAR-VGB-Momentum takes only $O(n)$ steps to hit a leaf, whereas AOAR-VGB requires $\Theta(n^2)$ steps.
\begin{theorem}
\label{thm:main_momentum_hitting}
Consider the AOAR-VGB-Momentum with parameter $\chi$ as defined in \cref{def:main_flow_cancelled_momentum}.
Let
\begin{equation*}
    p_\downarrow(z)=\frac{F(z)}{F(z)+B(z)},
    \qquad
    p_\uparrow(z)=\frac{B(z)}{F(z)+B(z)},
    \qquad
    \Delta:=\sup_{z \in \tilde{\calZ}_+^\circ(x)} \abs{p_\downarrow(z)-p_\uparrow(z)},
\end{equation*}
where the supremum is over reachable interior states $ \tilde{Z}_+^\circ (x) := \mbra{z\in \calZ_+(x) \mid 0<k(z)<n} \times \mbra{\downarrow, \uparrow}.$
Suppose $\Delta=O(1/n)$ and $1-\chi=O(1/n)$. Let $T_n$ and $T_n^{\rm Mo}$ be the time of first hitting the set of leaves $\calY$ for AOAR-VGB and AOAR-VGB-Momentum initialized at $ \varnothing$ and $(\varnothing,\downarrow)$ respectively, then
\begin{equation*}
    \mathbb E[T_n]=\Theta(n^2),
    \qquad
    \mathbb E[T_n^{\rm Mo}]=O(n),
\end{equation*}
In particular, under the multiplicative value-error bound $\kappa^{-1}V^\star\le\widehat V\le\kappa V^\star$, the condition $\Delta=O(1/n)$ is satisfied when $\kappa=1+O(1/n)$.
\end{theorem}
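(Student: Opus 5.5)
The plan is to reduce both chains to one-dimensional processes on the depth $k(z)\in\{0,\dots,n\}$. In AOAR-VGB every move changes the depth by exactly $\pm1$. From an interior state $z$ the chain moves up (reveals) with probability $p_\downarrow(z)$ and down (re-masks) with probability $p_\uparrow(z)$. Here $|p_\downarrow(z)-p_\uparrow(z)|\le\Delta=O(1/n)$, and the root is reflecting, since it has no parents.

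\textbf{Lower bound $\mathbb E[T_n]=\Omega(n^2)$.} The depth process $D_t=k(Z_t)$ is not Markov by itself, but its conditional drift is bounded by $\Delta$, and its increments are $\pm1$ with conditional variance at least $1-\Delta^2$. I will apply optional stopping to $D_t^2-(1+\Delta')t$, a supermartingale up to a reflection correction at $0$ that only helps. This gives $n^2=\mathbb E[D_{T_n}^2]\le C(1+n\Delta)\,\mathbb E[T_n]$, and hence $\mathbb E[T_n]\ge c n^2$ when $n\Delta=O(1)$.

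\textbf{Upper bound $\mathbb E[T_n]=O(n^2)$.} I will compare $D_t$ with a birth--death chain on $\{0,\dots,n\}$ with up-probability $\tfrac12-\Delta/2$ and reflection at $0$. A coupling argument on the depth shows that $D_t$ stochastically dominates this chain. The comparison chain's expected hitting time of $n$ is $O(n^2 e^{O(n\Delta)})=O(n^2)$. Together with the lower bound, this gives $\mathbb E[T_n]=\Theta(n^2)$.

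\textbf{Momentum chain: persistence.} In mode $\downarrow$ at an interior $z$, the chain reveals a token with probability $F/W=p_\downarrow$. It holds with probability $\chi M/W$, and it flips to $\uparrow$ with probability $(B-\chi M)/W$. When $\Delta$ is small, $M=\min(F,B)$ and both $F/W,B/W$ equal $\tfrac12+O(1/n)$. Therefore
\[
\frac{B-\chi M}{W}\le \frac{B-M}{W}+\frac{(1-\chi)M}{W}\le \Delta+(1-\chi)=O(1/n),
\]
and the same bound holds for flips out of $\uparrow$. Holds cost only a constant factor of time, since conditional on moving, the chain reveals with probability $1-O(1/n)$.

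\textbf{Momentum chain: counting runs.} Each run in mode $\downarrow$ advances $\mathrm{Geom}(O(1/n))$ levels, so with constant probability it is at least $cn$ long. At the root in mode $\uparrow$ there are no parents, so $B=M=0$ and the chain must flip to $\downarrow$. The key step is to show that from any lifted state the chain reaches a leaf within $O(n)$ expected steps via renewals. From $(z,\downarrow)$ at depth $k$, the probability of reaching depth $n$ with no flip is at least $(1-c/n)^{n}\ge e^{-c}$. On failure, the chain flips to $\uparrow$ and then goes down for $\mathrm{Geom}$ many steps with mean $O(n)$, but never below depth $0$. It then flips back to $\downarrow$, either spontaneously or at the root. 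Each such cycle costs $O(n)$ expected steps and succeeds with constant probability, so Wald's identity gives $\mathbb E[T_n^{\rm Mo}]=O(n)$.

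\textbf{Main obstacle.} The hardest step is making the cycle cost rigorous when the flip rates are not uniform. I will handle it by dominating each run length by geometric variables with parameter $O(1/n)$, using only the uniform bounds above; the state dependence is irrelevant under this domination.

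\textbf{The condition $\Delta=O(1/n)$.} This requires computing $F(z)$ and $B(z)$ at depth $k$. For singleton blocks, $s_{k,1}=1/(n-1-k+1)\cdot$const. Summing the forward weights gives $F=s_{k,1}\widehat V(z)^\lambda\sum_{i,a}\base(a\mid z)\widehat V(z^{i\to a})$. Under exact values, the inner sum equals $(n-k)V^\star(z)$, by the martingale property of $V^\star$. Similarly, $B=s_{k-1,1}\,k\,\widehat V(z)\,\overline{\widehat V(p)^\lambda}$. The identity $\binom{n-k}{1}s_{k,1}=\binom{k}{1}s_{k-1,1}$ makes the two sums equal up to the $\lambda$-power factors. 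Under $\kappa$-accuracy each factor is off by at most $\kappa^{O(1+\lambda)}$. This gives $|p_\downarrow-p_\uparrow|=O(\kappa^{O(1)}-1)=O(1/n)$; the parent $\lambda$-terms require $\widehat V(p)\approx\widehat V(z)$, which I will take from $\kappa$-accuracy with the stated boundedness.
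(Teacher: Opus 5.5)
Your hitting-time arguments are sound and essentially follow the paper. The upper bound on $\bbE[T_n]$ uses a monotone coupling with the reflecting walk of up-probability $(1-\Delta)/2$. The momentum bound decomposes the run into attempts whose decisive steps reveal with probability at least $(1-\Delta)/(2-\chi(1-\Delta))=1-O(1/n)$, each cycle costing $O(n)$ expected steps. Your lower bound via the supermartingale $D_t^2-(1+2n\Delta)t$ is a valid and slightly cleaner alternative to the paper's coupling with the favorable walk and its explicit hitting-time formula. Two small slips: $(1-c/n)^n\ge e^{-c}$ is backwards, but $e^{-2c}$ works for $n\ge 2c$; and $s_{k,1}=1/\binom{n-1}{k}$, not a constant times $1/(n-k)$. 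The latter is harmless, since you only use $(n-k)s_{k,1}=k\,s_{k-1,1}$.

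The genuine gap is the final clause, that $\kappa=1+O(1/n)$ forces $\Delta=O(1/n)$ for general $\lambda$. Your plan needs $\widehat V(x,z^{-i})\approx\widehat V(x,z)$, and neither hypothesis gives this. $\kappa$-accuracy only compares $\widehat V$ with $V^\star$ at the \emph{same} state. The bounds $m\le\widehat V\le M$ only confine ratios to $[m/M,M/m]$, which is a constant factor, not $1+O(1/n)$.

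In fact the implication is false for $\lambda>0$ even with exact values. Take $\tau\equiv c\neq 1$ and a state at depth $n-1$. The leaf-corrected forward mass is $F=c$, while $B=s_{n-2,1}(n-1)\,c\cdot c^{\lambda}=c^{1+\lambda}$. Hence $\abs{p_\downarrow-p_\uparrow}=\abs{1-c^\lambda}/(1+c^\lambda)$, a constant. At interior depths the paper's exact computation (\cref{prop:bg_aoar_directional_masses}, \cref{rem:bg_canonical_anchor}) likewise gives a backward probability deviating from $\tfrac12$ by roughly $\tfrac{\lambda}{4}\bar m_1(z)$.

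The paper proves this clause only for the balanced chain $\lambda=0$ (\cref{cor:kappa_for_linear_momentum}); the appendix explicitly restricts the momentum analysis to that chain. There the parent gate disappears, and the Bellman identity together with $(n-k)s_{k,1}=k\,s_{k-1,1}$ gives $F^\star=B^\star$. $\kappa$-accuracy then yields $\kappa^{-2}\le F/B\le\kappa^2$, so $\Delta\le(\kappa^2-1)/(\kappa^2+1)=O(1/n)$. You should restrict your last step to $\lambda=0$. For $\lambda>0$, your hitting-time conclusions remain valid only with $\Delta=O(1/n)$ taken as a hypothesis.
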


\begin{figure*}[t!]
    \centering
    \includegraphics[width=0.90\textwidth,trim={0 0 0 48pt},clip]{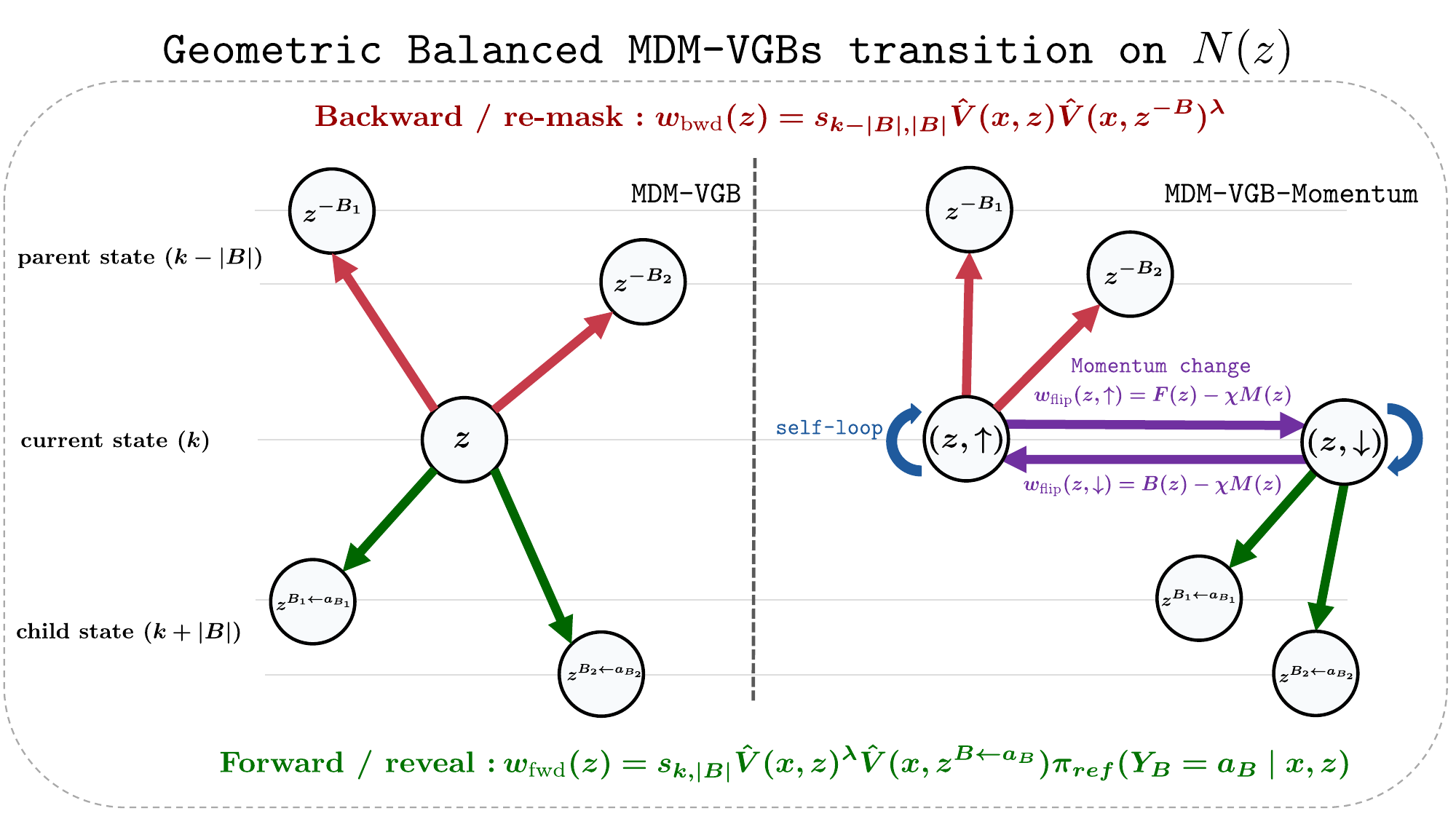}
    \caption{MDM-VGB and MDM-VGB-Momentum. MDM-VGB alternates value-guided reveal and re-mask moves, while the momentum variant augments the state with a direction variable to reduce immediate reversals.
    }    \label{fig:main_geometric_mdm_vgb}
    \vspace{-15pt}
\end{figure*}

\vspace{-10pt}
\section{Experiments}
\label{main:sec:Experiments}

We evaluate our methods \textsc{MDM-VGB}  and \textsc{MDM-VGB-Momentum} on reward-guided generation and editing tasks across several domains.

\medskip
\noindent\textbf{Task descriptions.} We briefly summarize each task, its terminal score \(R\), the base model, and the learned verifier used by our method. When applicable, the Pass@95 terminal reward is defined from the terminal score \(R\) as described in the Evaluation metrics paragraph.
\par\nobreak\smallskip

\textbf{Molecular design} aims to design drug-like molecules using the QM9 benchmark
\citep{ramakrishnan2014quantum}. For the base model, we train a 92.4M-parameter QM9 MDLM model with the MDLM objective using the UDLM-QM9 architecture.  We use the quantitative estimate of drug-likeness (QED) \citep{bickerton2012quantifying} as the terminal score \(R\). The terminal reward is the Pass@95 indicator defined from QED as described in the Evaluation metrics paragraph. To approximate the values of partial configurations, 
we train a process verifier using a token Transformer over SMILES tokens with $3.24$M parameters.

\textbf{DNA enhancer design} evaluates regulatory-sequence generation. We use
D3LM \citep{yang2026d3lmdiscretednadiffusion}, a DNA diffusion language model initialized from Nucleotide Transformer \citep{dallatorre2025nucleotide}, as the base model, and DeepSTARR developmental enhancer activity \citep{dealmeida2022deepstarr} as the terminal score \(R\). The terminal reward is the Pass@95 indicator defined from DeepSTARR developmental activity. The goal is to generate DNA sequences with high
predicted developmental enhancer activity. We learn the values of partial configurations using a small Transformer over D3LM tokens. We test multiple verifier model sizes (see \cref{sec:exp_ablation}) and report results for the verifier with $0.97$M parameters.

\textbf{Protein motif scaffolding} evaluates whether generated protein
sequences can scaffold a prescribed structural motif. We use EvoDiff OADM as
the base model \citep{alamdari2023protein}, and fold generated
sequences with OmegaFold \citep{wu2022omegafold}. For terminal scoring, we measure the motif \(C_\alpha\) root-mean-square deviation (RMSD) against the reference structure \citep{MAIOROV1994625}. For evaluation, we report motif RMSD and Success@1\AA{}, the percentage of generated configurations whose motif RMSD is below \(1.0\text{\AA}\).

We learn the values of partial configurations using a small Transformer over amino-acid tokens with $1.94$M parameters. To avoid training the verifier on a sparse Success@1\AA{} signal, we use a smoothed RMSD-derived target for verifier training.

\textbf{Sudoku} is a typical constraint-satisfaction generation task. The goal is to
generate a complete \(9\times9\) grid filled with digits \(1,\ldots,9\) so that every row,
column, and \(3\times3\) subgrid contains each digit exactly once. The terminal reward is the binary indicator that all constraints are satisfied. For partial configurations, we use a heuristic process verifier that returns the indicator that no constraint has yet been violated. For the base model, we use DiT-MDM, i.e., a masked discrete diffusion model
\citep{sahoo2024simple} with a Diffusion Transformer backbone
\citep{peebles2023scalable}.

\textbf{Letter avoidance} is a synthetic constrained natural-language generation task:
the model must generate a one-sentence story without using the letter
``e'', a task also considered in AR-VGB \citep{rohatgi2025taming}. We use Qwen3-0.6B-diffusion-mdlm-v0.1 as the base model
\cite{qwen3_06b_diffusion_mdlm}. The terminal reward is the binary indicator that the sequence does not contain
the forbidden character. For partial configurations, we use a heuristic process verifier that returns this same indicator.

\textbf{Dyck grammar} is a formal language over bracket symbols
\(\{\texttt{(},\texttt{)},\texttt{[},\texttt{]}\}\), commonly used as a sandbox test (see e.g. \cite{rohatgi2025taming}). The terminal score is the binary indicator that the (full) configuration is a valid well-balanced expression: every opening
bracket must be closed by the matching bracket type in the correct nested
order. For example, \(\texttt{()}\), \(\texttt{[]}\), and \(\texttt{[()]}\) are
valid, whereas \(\texttt{(}\), \(\texttt{]}\), \(\texttt{[(]}\),
\(\texttt{(()}\), and \(\texttt{[)}\) are not. We focus on the editing task, which highlights the benefit
of arbitrary-coordinate repair, since due to long-range dependencies, an early mistake can invalidate an otherwise plausible subsequent generation. For the base model, we use a BERT-style masked language model with $12.9$M parameters. For the process verifier, we train a small token-state Transformer with $3.24$M parameters.

\vspace{-25pt}
\paragraph{Evaluation metrics.}
For a given task, all test-time inference methods use the same reference model \(\base\) and terminal reward \(\tau\).
We compare \textsc{MDM-VGB} and \textsc{MDM-VGB-Momentum} against the baselines \textsc{Base}, \textsc{BoN}, \textsc{MDM-VGR}, \textsc{AR-VGB}, and \textsc{AR-VGB-Momentum}. \textsc{Base} is the unguided MDM
sampler. \textsc{BoN} (best-of-$N$) generates \(N\) full configurations using unguided MDM and returns the one
with the highest terminal reward
\citep{brown2024large,huang2025bestofn}. To match the complexity of \textsc{BoN}, we allow up to \(n\times N\) MDM-VGB steps, and upon reaching this budget, perform forward-only sampling using
\(\base\). For editing, all methods edit the same pool of full
configurations of lower reward.
See \cref{app:subsec:baseline_algorithms} for further details on the baselines.

For tasks with hard constraints, i.e., binary terminal score/reward, we report the success rate (percentage of configurations satisfying the constraint), and, when appropriate, a secondary quality metric: for Sudoku, we report the number of violated constraints and for Letter, we report the overall quality score (Ovl.) of the generated sentence provided by Qwen3-32B model \citep{qwen2025qwen3}; see the Letter prompt details in \cref{app:task_descriptions}.

For tasks with a real-valued terminal score (QM9, DNA, Protein), we report a calibrated metric that measures how often generated samples reach the high-quality region of the base model's distribution. Given an
independent \textsc{Base} calibration set and task-specific terminal score \(R(y)\), let
\(q_\alpha^{\rm Base}(R)\) be the \(\alpha\)-quantile of the
distribution over terminal scores produced by the base model. We define
\begin{equation*}
    \mathrm{Pass@\alpha}(y)
    =
    \begin{cases}
        \mathbf 1\{R(y)\ge q_{\alpha}^{\rm Base}(R)\},
        & \text{if higher values are better},\\
        \mathbf 1\{R(y)\le q_{1-\alpha}^{\rm Base}(R)\},
        & \text{if lower values are better}.
    \end{cases}
\end{equation*}
Across QM9 and DNA we use \(\alpha=0.95\): QM9 reports Pass@95 based on QED and DNA reports Pass@95 based on developmental activity. Protein is evaluated with Success@1\AA{} and motif RMSD.

We report additional metrics as appropriate: for QM9, the average QED of valid generated molecules and validity (percentage of valid molecules); for DNA, the average developmental activity score (Dev); and for Protein, mean motif RMSD and Success@1\AA{}, the percentage of generated configurations whose RMSD is below \(1.0\text{\AA}\). For editing, we report the reward of the edited configuration and the total computational cost of making the edits. For tasks with learned verifiers, we also report adjusted NFE, which accounts for the total cost of verifier and base-model calls. Further details about the experiments are given in \cref{app:exp_details}.

\begin{figure*}[t]
    \centering
    \includegraphics[width=\textwidth]{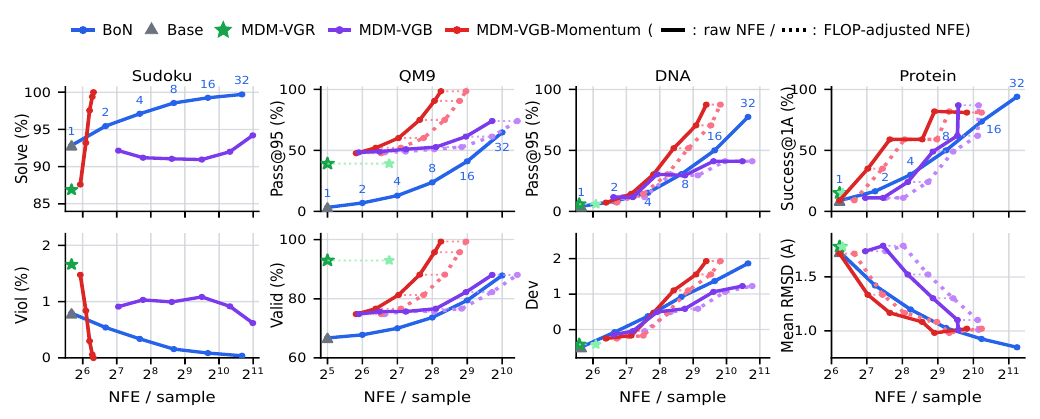}
    \caption{
    Generation quality--cost frontiers for Sudoku, QM9, DNA, and Protein.
    Solid curves count raw reference-model forward calls, and dotted curves show
    FLOP-adjusted NFE including learned-verifier overhead. Numeric labels on the
    \textsc{BoN} curve denote the number of independent rollouts \(N\).}
    \label{fig:generation_quality_cost}
    \vspace{-5pt}
\end{figure*}

\begin{table*}[t]
    \centering
    \ttfamily
    \setlength{\tabcolsep}{4.0pt}
    \renewcommand{\arraystretch}{1.08}
    \resizebox{\textwidth}{!}{
    \begin{tabular}{lcccccccccc}
    \toprule
    &
    \multicolumn{4}{c}{Heuristic verifier}
    &
    \multicolumn{6}{c}{Learned verifier} \\
    \cmidrule(lr){2-5}
    \cmidrule(lr){6-11}
    Method
    & \multicolumn{2}{c}{\cellcolor{green!8}Letter}
    & \multicolumn{2}{c}{\cellcolor{green!8}Sudoku}
    & \multicolumn{2}{c}{\cellcolor{blue!6}QM9}
    & \multicolumn{2}{c}{\cellcolor{blue!6}DNA}
    & \multicolumn{2}{c}{\cellcolor{blue!6}Protein} \\
    \cmidrule(lr){2-3}
    \cmidrule(lr){4-5}
    \cmidrule(lr){6-7}
    \cmidrule(lr){8-9}
    \cmidrule(lr){10-11}
    & Avoid \(\uparrow\)
    & Ovl.\(\uparrow\)
    & Solve \(\uparrow\)
    & Viol.\(\downarrow\)
    & Pass@95 \(\uparrow\)
    & Valid \(\uparrow\)
    & Pass@95 \(\uparrow\)
    & Dev.\(\uparrow\)
    & Success@1\AA{} \(\uparrow\)
    & RMSD \(\downarrow\) \\
    \midrule
    Base
        & 0.0 & --
        & 92.9 & 0.791
        & 3.2 & 66.8
        & 0.041 & -0.50
        & 8.7 & 1.732 \\
    BoN
        & \underline{0.1} & \tabbest{7.00}
        & \underline{99.3} & \underline{0.087}
        & 41.0 & 79.5
        & \underline{0.500} & \underline{1.37}
        & \underline{73.7} & \tabbest{0.926} \\
    MDM-VGR
        & \tabbest{100.0} & 5.72
        & 86.9 & 1.659
        & 39.1 & \underline{92.9}
        & 0.058 & -0.42
        & 15.0 & 1.780 \\
    \rowcolor{black!6}
    MDM-VGB
        & \tabbest{100.0} & \underline{5.86}
        & 92.0 & 0.917
        & \underline{61.3} & 82.2
        & 0.411 & 1.06
        & 62.0 & 1.101 \\
    \rowcolor{black!6}
    MDM-VGB-Momentum
        & \tabbest{100.0} & 5.72
        & \tabbest{100.0} & \tabbest{0.000}
        & \tabbest{90.6} & \tabbest{95.7}
        & \tabbest{0.705} & \tabbest{1.55}
        & \tabbest{82.0} & \underline{0.982} \\
    \bottomrule
    \end{tabular}
    }
    \caption{
    Generation results at a representative budget \(N=16\). 
    Letter and Sudoku use heuristic verifiers; QM9, DNA, and Protein use learned verifiers. 
    Columns report primary and secondary metrics for each task. \textbf{Bold}
    and \underline{underline} mark best and second-best.
    }
    \vspace{-20pt}
    \label{tab:generation_matched_compute}
\end{table*}

\subsection{Generation}
\label{sec:exp_generation}
\begin{wrapfigure}[10]{r}{0.35\linewidth}
    \vspace{-40pt}
    \centering
    \captionsetup{belowskip=15pt}
    \includegraphics[width=\linewidth]{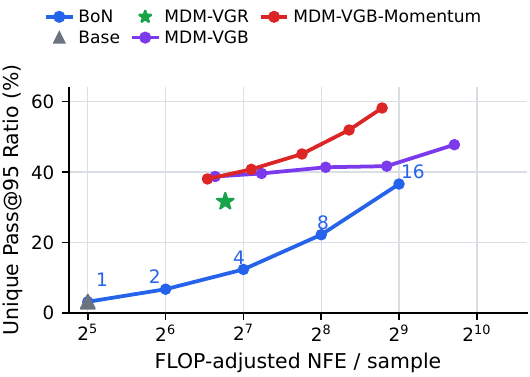}
    \caption{QM9 unique Pass@95: fraction of distinct generated molecules satisfying Pass@95 versus per-sample compute.}
    \label{fig:qm9_unique_pass_rate}
\end{wrapfigure}
\cref{fig:generation_quality_cost,tab:generation_matched_compute} evaluate MDM-VGB and MDM-VGB-Momentum as generative mechanisms under matched compute. 
Overall, the main trend is that MDM-VGB and MDM-VGB-Momentum
improve the quality--cost frontier, especially on tasks with learned verifiers. On QM9 and DNA, MDM-VGB-Momentum reaches high Pass@95 at
substantially smaller NFE than \textsc{BoN}; this trend largely holds even after FLOP-adjusting for verifier calls. 
Protein has a more competitive \textsc{BoN} frontier at large rollout budgets, but MDM-VGB-Momentum still gives the best success rate (Success@1\AA{}) under matched budget (see~\cref{tab:generation_matched_compute}).

For Sudoku, a popular constraint-satisfaction benchmark, MDM-VGB-Momentum reaches perfect accuracy with much lower compute than the baselines, whereas MDM-VGB performs slightly worse.
For QM9, \cref{fig:qm9_unique_pass_rate} plots the fraction of generated samples that are both unique and satisfy Pass@95 versus compute. The MDM-VGB variants still achieve a higher fraction of high-reward molecules at similar FLOP-adjusted NFE, suggesting that their gains are not simply due to repeatedly sampling the same few high-reward molecules.

\subsection{Editing}
\label{sec:exp_editing}
Next, we evaluate \textsc{MDM-VGB} and \textsc{MDM-VGB-Momentum} as editing procedures. 
We initialize the Markov chain at a completed low-reward output
\(y_0\in\calY\) sampled from the reference model. 
All editing methods start with the same $y_0 \in\calY$. \textsc{NoEdit} returns \(y_0\), \textsc{BoN} discards
\(y_0\) and samples fresh completions from \(\base\), and root-start \textsc{MDM-VGB}
variants restart from the fully masked configuration under the same maximum budget.
Leaf-start \textsc{MDM-VGB} variants instead re-mask selected coordinates of \(y_0\) and return the best full-configuration observed within the allowed edit budget.

\begin{figure*}[t]
    \centering
    \includegraphics[width=0.8\textwidth]{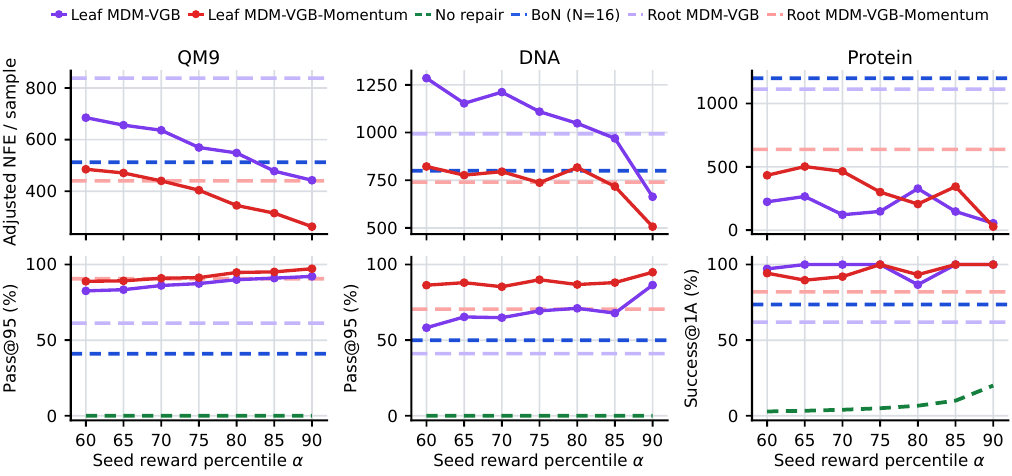}
    \caption{
    Editing on QM9, DNA, and Protein. Initial configurations are grouped by seed reward
    percentile \(\alpha\), i.e., base samples that fail Pass@95 but pass
    Pass@\(\alpha\). Larger \(\alpha\) indicates higher-reward initial outputs.
    Top: adjusted NFE per sample \((\downarrow)\). Bottom: edited quality
    \((\uparrow)\). Dashed lines show the reward of the initial configuration (no-edit) and of baselines that discard the initial configurations and generate from scratch: BoN, root-start MDM-VGB, and root-start MDM-VGB-Momentum.
    }
    \label{fig:leaf_start_repair}
\end{figure*}

\begin{table*}[t]
    \centering
    \scriptsize\ttfamily
    \setlength{\tabcolsep}{2.5pt}
    \renewcommand{\arraystretch}{1.08}
    \resizebox{\textwidth}{!}{
    \begin{tabular}{llccccccc}
        \toprule
        Task & Metric & NoEdit
        & \multicolumn{3}{c}{\cellcolor{red!6}Generation (Root start)}
        & \multicolumn{2}{c}{\cellcolor{green!8}Editing (Leaf start)}
        & Gain \\
        \cmidrule(lr){4-6}
        \cmidrule(lr){7-8}
        & & & BoN & MDM-VGB & MDM-VGB-Momentum & MDM-VGB & MDM-VGB-Momentum & \\
        \midrule
        QM9
        & Pass@95
        & 0.0
        & 41.0
        & 61.3
        & \underline{90.6}
        & \cellcolor{black!6}87.5
        & \cellcolor{black!6}\tabbest{92.5}
        & 1.9 pp \\
        DNA
        & Pass@95
        & 0.0
        & 50.0
        & 41.1
        & \underline{70.5}
        & \cellcolor{black!6}69.0
        & \cellcolor{black!6}\tabbest{88.5}
        & 18.0 pp \\
        Protein
        & Success@1\AA{}
        & 7.4
        & 73.7
        & 62.0
        & 82.0
        & \cellcolor{black!6}\tabbest{97.7}
        & \cellcolor{black!6}\underline{95.6}
        & 15.7 pp \\
        \bottomrule
    \end{tabular}
    }
    \caption{
        Comparison of generation and editing methods on QM9, DNA, and
        protein-design tasks. Editing methods are initialized from the same
        low-reward configurations, whereas generation methods discard these
        initial configurations and generate new samples from scratch.
        Leaf-start entries are averaged over seed-leaf reward percentile groups \(\alpha\in\{60,65,\ldots,90\}\).
        BoN uses \(N=16\) independent rollouts, and the MDM-VGB variants use up to the corresponding matched budget. \textnormal{Gain} denotes the absolute
        percentage-point improvement of the best editing method over the best
        non-editing baseline.
    }
    \label{tab:editing_repair}
    \vspace{-10pt}
\end{table*}

\cref{fig:leaf_start_repair,tab:editing_repair} show that using \textsc{MDM-VGB} and \textsc{MDM-VGB-Momentum} as editing procedures improves over non-editing procedures that generate samples from scratch. Across the three
scientific tasks, \textsc{MDM-VGB} and \textsc{MDM-VGB-Momentum} as editing procedures match or
improve output quality relative to the best non-editing baseline, with gains of
\(1.9\), \(18.0\), and \(15.7\) percentage points on QM9, DNA, and Protein,
respectively. The required adjusted NFE generally decreases as the initial
reward percentile \(\alpha\) increases, indicating that the editing methods make fewer edits when starting closer to the targeted high-reward region.

\begin{figure}[t]
\centering

\begin{minipage}[t]{0.49\linewidth}
\vspace{0pt}
\centering
\resizebox{\linewidth}{!}{
\begin{tikzpicture}[
    token/.style={draw, rounded corners=1.2pt, minimum width=0.44cm, minimum height=0.28cm, inner sep=1pt, font=\scriptsize\ttfamily},
    bad/.style={token, fill=red!16, draw=red!65!black},
    erase/.style={token, fill=gray!18, draw=gray!60!black},
    keep/.style={token, fill=green!12, draw=green!45!black},
    arrow/.style={-{Latex[length=1.7mm]}, thick}
]
\node[font=\scriptsize\ttfamily\bfseries, text=blue!70!black, anchor=east] at (-0.28,0.8) {AR};
\node[keep] at (0.00,0.8) {(}; \node[keep] at (0.48,0.8) {)}; \node[bad] at (0.96,0.8) {)}; \node[keep] at (1.44,0.8) {)}; \node[keep] at (1.92,0.8) {[}; \node[keep] at (2.40,0.8) {]};
\draw[arrow] (2.76,0.8) -- (3.10,0.8);
\node[keep] at (3.42,0.8) {(}; \node[keep] at (3.90,0.8) {)}; \node[erase] at (4.38,0.8) {M}; \node[erase] at (4.86,0.8) {M}; \node[erase] at (5.34,0.8) {M}; \node[erase] at (5.82,0.8) {M};
\node[font=\scriptsize\ttfamily, align=center] at (4.86,1.22) {re-mask suffix};

\node[font=\scriptsize\ttfamily\bfseries, text=blue!70!black, anchor=east] at (-0.28,0.15) {MDM};
\node[keep] at (0.00,0.15) {(}; \node[keep] at (0.48,0.15) {)}; \node[bad] at (0.96,0.15) {)}; \node[keep] at (1.44,0.15) {)}; \node[keep] at (1.92,0.15) {[}; \node[keep] at (2.40,0.15) {]};
\draw[arrow] (2.76,0.15) -- (3.10,0.15);
\node[keep] at (3.42,0.15) {(}; \node[keep] at (3.90,0.15) {)}; \node[erase] at (4.38,0.15) {M}; \node[keep] at (4.86,0.15) {)}; \node[keep] at (5.34,0.15) {[}; \node[keep] at (5.82,0.15) {]};
\node[font=\scriptsize\ttfamily, align=center] at (4.76,-0.27) {re-mask coordinate};
\end{tikzpicture}
}
\captionof{figure}{MDM-VGB can edit an early mistake directly, while AR-VGB must erase the suffix. The red token marks the local error, and gray tokens denote positions selected for re-masking.}
\label{fig:prefix_vs_aoar_repair}
\end{minipage}
\hfill
\begin{minipage}[t]{0.47\linewidth}
\vspace{0pt}
\centering
{\scriptsize
\setlength{\tabcolsep}{3.5pt}
\renewcommand{\arraystretch}{1.05}
\begin{tabular}{lcccc}
\toprule
& \multicolumn{2}{c}{\textsc{AR}} & \multicolumn{2}{c}{\textsc{MDM}} \\
\cmidrule(lr){2-3}\cmidrule(lr){4-5}
Method
& Acc. $\uparrow$
& Cost $\downarrow$
& Acc. $\uparrow$
& Cost $\downarrow$ \\
\midrule
VGB
& 25.64\%
& 127.63
& \underline{99.41}\%
& 29.98 \\
+Momentum
& 80.92\%
& 75.50
& 97.54\%
& \underline{26.24} \\
\bottomrule
\end{tabular}
}
\vspace{3pt}
\captionof{table}{Dyck grammar editing. Cost is the average number of forward and backward moves. MDM-VGB variants achieve higher accuracy with fewer edit moves than AR-VGB variants.}
\label{tab:prefix_vs_aoar_repair}
\vspace{-15pt}
\end{minipage}

\end{figure}

\cref{fig:prefix_vs_aoar_repair,tab:prefix_vs_aoar_repair} show the advantage of \textsc{MDM-VGB} and \textsc{MDM-VGB-Momentum} over the fixed-order backtracking baselines \textsc{AR-VGB} and \textsc{AR-VGB-Momentum} as editing procedures. On Dyck
grammar editing, \textsc{MDM-VGB} reaches \(99.41\%\) accuracy with an average
cost of \(29.98\) moves, compared with \(25.64\%\) accuracy and \(127.63\)
moves for \textsc{AR-VGB}. Momentum reduces edit cost for both AR and MDM variants: \textsc{MDM-VGB-Momentum}
achieves \(97.54\%\) accuracy with \(26.24\) moves, while
\textsc{AR-VGB-Momentum} reaches \(80.92\%\) accuracy with \(75.50\) moves. For MDM-VGB, momentum trades a small drop in accuracy for fewer moves, while the any-order variant remains substantially more efficient overall.

\subsection{Component Ablations}
\label{sec:exp_ablation}
\begin{wrapfigure}{r}{0.40\linewidth}
\vspace{-45pt}
\centering
\includegraphics[width=\linewidth]{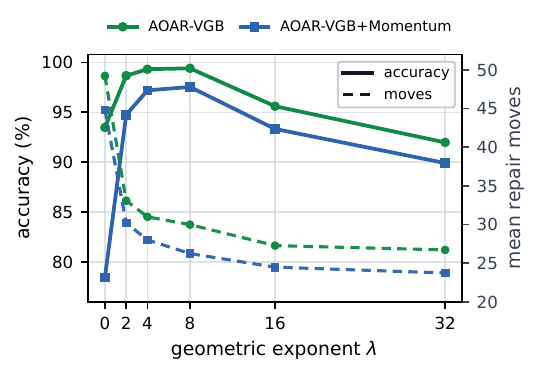}
\vspace{-15pt}
\caption{Dyck grammar editing with varying re-masking strength \(\lambda\). Moderate re-masking strength improves editing accuracy while reducing the average number of moves.}
\label{fig:dyck_lambda_sweep}
\vspace{-15pt}
\end{wrapfigure}
We highlight two main ablations: one on the re-masking parameter $\lambda$, which controls how strongly re-masking decisions are guided by reward values, and one on verifier model size.
Additional ablations, including block size and shortlisting rule, are provided in \cref{app:ablation_details}.

\paragraph{Re-masking parameter $\lambda$.}
We ablate the re-masking parameter \(\lambda\) on Dyck grammar editing. We observe that moderate
values of \(\lambda\) give the best accuracy--efficiency tradeoff, because the algorithm can focus on editing problematic tokens while retaining enough flexibility to explore alternative moves. Very large
\(\lambda\) makes the algorithm more deterministic, producing shorter trajectories
but lowering accuracy; see
\cref{fig:dyck_lambda_sweep}.

\paragraph{Verifier size and amortized cost.}
\begin{wrapfigure}{r}{0.40\linewidth}
\vspace{-25pt}
\centering
\includegraphics[width=0.85\linewidth]{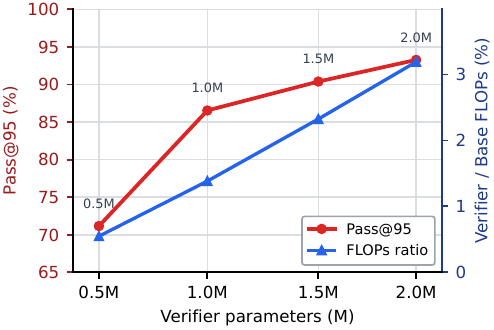}
\vspace{-2pt}
\caption{DNA verifier size tradeoff. Larger verifiers improve Pass@95 at the cost of additional verifier inference.}
\label{fig:dna_verifier_size_quality_flops}
\vspace{-20pt}
\end{wrapfigure}
We ablate the size of the learned DNA verifier in
\cref{fig:dna_verifier_size_quality_flops}. Increasing verifier size from 0.5M
to 2.0M parameters improves Pass@95, while adding less than \(3\%\) overhead
relative to the base generator FLOPs. This suggests that a stronger verifier can
provide more useful guidance for deciding which tokens to reveal or re-mask. To
keep this guidance inexpensive, we evaluate only a small shortlisted set of
candidate edits: for each selected coordinate, we sample \(K\) candidate tokens
from the reference model, remove duplicates, and batch the verifier calls.

\section{Conclusion and Future Work}
\label{main:sec:Conclusion and Limitations}
We introduced MDM-VGB, a reward-guided backtracking sampler for masked diffusion and any-order discrete generation. By extending value-guided backtracking from fixed prefix-tree to masked-state graph, MDM-VGB can reveal and re-mask arbitrary coordinates, allowing it to effectively repair faulty tokens. Theoretically, MDM-VGB efficiently target the reward-tilted law on full configurations and tolerate process verifiers noise. Empirically, MDM-VGB and MDM-VGB-Momentum improve the quality--cost frontier over baselines such as best-of-$N$ and action level sampling (MDM-VGR) for reward-tilting generation and editing across scientific-design and constraint-satisfaction tasks, with especially large gains when the base model performs poorly.

A key limitation of our work lies in the cost of training high-quality process verifiers, especially in domains with long sequence length and large vocabulary. We plan to improve verifier training through distillation, or to directly learn the MDM-VGB transition weights via fine-tuning. On the inference-time side, we plan to incorporate hierarchical update and combine our method with parallel Monte Carlo sampling \citet{lee2025test}. We hope these improvements will make MDM-VGB applicable on larger reasoning and coding benchmarks such as GSM8K, MATH500, HumanEval, and SWE-bench.

\ifshowbibliography
    \bibliography{references}

@article{rohatgi2025taming,
  title={Taming Imperfect Process Verifiers: A Sampling Perspective on Backtracking},
  author={Rohatgi, Dhruv and Shetty, Abhishek and Saless, Donya and Li, Yuchen and Moitra, Ankur and Risteski, Andrej and Foster, Dylan J},
  journal={arXiv preprint arXiv:2510.03149},
  year={2025}
}

@book{levin2017markov,
  title={Markov Chains and Mixing Times},
  author={Levin, David A. and Peres, Yuval},
  volume={107},
  year={2017},
  publisher={American Mathematical Society}
}

@inproceedings{hayes2010liftings,
  title={Liftings of Tree-Structured Markov Chains},
  author={Hayes, Thomas P. and Sinclair, Alistair},
  booktitle={Approximation, Randomization, and Combinatorial Optimization. Algorithms and Techniques},
  pages={602--616},
  year={2010},
  publisher={Springer}
}

@article{sinclair1989approximate,
  title={Approximate counting, uniform generation and rapidly mixing Markov chains},
  author={Sinclair, Alistair and Jerrum, Mark},
  journal={Information and Computation},
  volume={82},
  number={1},
  pages={93--133},
  year={1989},
  publisher={Elsevier}
}

@incollection{chomsky1963algebraic,
  title={The Algebraic Theory of Context-Free Languages},
  author={Chomsky, Noam and Sch{\"u}tzenberger, Marcel P.},
  booktitle={Computer Programming and Formal Systems},
  pages={118--161},
  year={1963},
  publisher={North-Holland}
}

@inproceedings{austin2021structured,
  title={Structured Denoising Diffusion Models in Discrete State-Spaces},
  author={Austin, Jacob and Johnson, Daniel D. and Ho, Jonathan and Tarlow, Daniel and van den Berg, Rianne},
  booktitle={Advances in Neural Information Processing Systems},
  volume={34},
  pages={17981--17993},
  year={2021}
}

@inproceedings{sahoo2024simple,
  title={Simple and Effective Masked Diffusion Language Models},
  author={Sahoo, Subham Sekhar and Arriola, Marianne and Schiff, Yair and Gokaslan, Aaron and Marroquin, Edgar and Chiu, Justin T and Rush, Alexander and Kuleshov, Volodymyr},
  booktitle={Advances in Neural Information Processing Systems},
  volume={37},
  year={2024}
}

@inproceedings{arriola2025block,
  title={Block Diffusion: Interpolating Between Autoregressive and Diffusion Language Models},
  author={Arriola, Marianne and Gokaslan, Aaron and Chiu, Justin T. and Yang, Zhihan and Qi, Zhixuan and Han, Jiaqi and Sahoo, Subham Sekhar and Kuleshov, Volodymyr},
  booktitle={The Thirteenth International Conference on Learning Representations},
  year={2025}
}

@article{cobbe2021training,
  title={Training Verifiers to Solve Math Word Problems},
  author={Cobbe, Karl and Kosaraju, Vineet and Bavarian, Mohammad and Chen, Mark and Jun, Heewoo and Kaiser, Lukasz and Plappert, Matthias and Tworek, Jerry and Hilton, Jacob and Nakano, Reiichiro and Hesse, Christopher and Schulman, John},
  journal={arXiv preprint arXiv:2110.14168},
  year={2021}
}

@inproceedings{wang2023selfconsistency,
  title={Self-Consistency Improves Chain of Thought Reasoning in Language Models},
  author={Wang, Xuezhi and Wei, Jason and Schuurmans, Dale and Le, Quoc and Chi, Ed and Narang, Sharan and Chowdhery, Aakanksha and Zhou, Denny},
  booktitle={The Eleventh International Conference on Learning Representations},
  year={2023}
}

@article{brown2024large,
  title={Large Language Monkeys: Scaling Inference Compute with Repeated Sampling},
  author={Brown, Bradley and Juravsky, Jordan and Ehrlich, Ryan and Clark, Ronald and Le, Quoc V. and R{\'e}, Christopher and Mirhoseini, Azalia},
  journal={arXiv preprint arXiv:2407.21787},
  year={2024}
}

@article{huang2025bestofn,
  title={Is Best-of-{N} the Best of Them? Coverage, Scaling, and Optimality in Inference-Time Alignment},
  author={Huang, Audrey and Block, Adam and Liu, Qinghua and Jiang, Nan and Krishnamurthy, Akshay and Foster, Dylan J.},
  journal={arXiv preprint arXiv:2503.21878},
  year={2025}
}

@inproceedings{lightman2024lets,
  title={Let's Verify Step by Step},
  author={Lightman, Hunter and Kosaraju, Vineet and Burda, Yura and Edwards, Harri and Baker, Bowen and Lee, Teddy and Leike, Jan and Schulman, John and Sutskever, Ilya and Cobbe, Karl},
  booktitle={The Twelfth International Conference on Learning Representations},
  year={2024}
}

@inproceedings{wang2024mathshepherd,
  title={Math-Shepherd: Verify and Reinforce {LLM}s Step-by-step without Human Annotations},
  author={Wang, Peiyi and Li, Lei and Shao, Zhihong and Xu, Runxin and Dai, Damai and Li, Yifei and Chen, Deli and Wu, Yu and Sui, Zhifang},
  booktitle={Proceedings of the 62nd Annual Meeting of the Association for Computational Linguistics (Volume 1: Long Papers)},
  pages={9426--9439},
  year={2024},
  address={Bangkok, Thailand},
  publisher={Association for Computational Linguistics}
}

@inproceedings{wang2025valueguided,
  title={Value-Guided Search for Efficient Chain-of-Thought Reasoning},
  author={Wang, Kaiwen and Zhou, Jin Peng and Chang, Jonathan and Gao, Zhaolin and Kallus, Nathan and Brantley, Kiant{\'e} and Sun, Wen},
  booktitle={Advances in Neural Information Processing Systems},
  year={2025}
}

@inproceedings{yang2021fudge,
  title={{FUDGE}: Controlled Text Generation With Future Discriminators},
  author={Yang, Kevin and Klein, Dan},
  booktitle={Proceedings of the 2021 Conference of the North American Chapter of the Association for Computational Linguistics: Human Language Technologies},
  pages={3511--3535},
  year={2021},
  address={Online},
  publisher={Association for Computational Linguistics}
}

@article{ramakrishnan2014quantum,
  title={Quantum chemistry structures and properties of 134 kilo molecules},
  author={Ramakrishnan, Raghunathan and Dral, Pavlo O. and Rupp, Matthias and von Lilienfeld, O. Anatole},
  journal={Scientific Data},
  volume={1},
  pages={140022},
  year={2014}
}

@inproceedings{wang2025remasking,
  title={Remasking Discrete Diffusion Models with Inference-Time Scaling},
  author={Wang, Guanghan and Schiff, Yair and Sahoo, Subham Sekhar and Kuleshov, Volodymyr},
  booktitle={The Thirty-ninth Annual Conference on Neural Information Processing Systems},
  year={2025}
}

@article{qwen2025qwen3,
  title={Qwen3 Technical Report},
  author={Yang, An and Li, Anfeng and Yang, Baosong and Zhang, Beichen and Hui, Binyuan and Zheng, Bo and Yu, Bowen and Gao, Chang and Huang, Chengen and Lv, Chenxu and Zheng, Chujie and Liu, Dayiheng and Zhou, Fan and Huang, Fei and Hu, Feng and Ge, Hao and Wei, Haoran and Lin, Huan and Tang, Jialong and Yang, Jian and Tu, Jianhong and Zhang, Jianwei and Yang, Jianxin and Yang, Jiaxi and Zhou, Jing and Zhou, Jingren and Lin, Junyang and Dang, Kai and Bao, Keqin and Yang, Kexin and Yu, Le and Deng, Lianghao and Li, Mei and Xue, Mingfeng and Li, Mingze and Zhang, Pei and Wang, Peng and Zhu, Qin and Men, Rui and Gao, Ruize and Liu, Shixuan and Luo, Shuang and Li, Tianhao and Tang, Tianyi and Yin, Wenbiao and Ren, Xingzhang and Wang, Xinyu and Zhang, Xinyu and Ren, Xuancheng and Fan, Yang and Su, Yang and Zhang, Yichang and Zhang, Yinger and Wan, Yu and Liu, Yuqiong and Wang, Zekun and Cui, Zeyu and Zhang, Zhenru and Zhou, Zhipeng and Qiu, Zihan},
  journal={arXiv preprint arXiv:2505.09388},
  year={2025}
}

@article{misaki2026unmaskfork,
  title={UnMaskFork: Test-Time Scaling for Masked Diffusion via Deterministic Action Branching},
  author={Misaki, Kou and Akiba, Takuya},
  journal={arXiv preprint arXiv:2602.04344},
  year={2026}
}

@article{Lee2025EffectiveTS,
  title={Effective Test-Time Scaling of Discrete Diffusion through Iterative Refinement},
  author={Lee, Sanghyun and Kim, Sunwoo and Kim, Seungryong and Park, Jongho and Park, Dongmin},
  journal={arXiv preprint arXiv:2511.05562},
  year={2025}
}

@article{lee2025test,
  title={Test-Time Scaling in Diffusion LLMs via Hidden Semi-Autoregressive Experts},
  author={Lee, Jihoon and Moon, Hoyeon and Zhai, Kevin and Chithanar, Arun Kumar and Sahu, Anit Kumar and Kar, Soummya and Lee, Chul and Chakraborty, Souradip and Bedi, Amrit Singh},
  journal={arXiv preprint arXiv:2510.05040},
  year={2025}
}

@article{ou2025inference,
  title={Inference-Time Scaling of Discrete Diffusion Models via Importance Weighting and Optimal Proposal Design},
  author={Ou, Zijing and Pani, Chinmay and Li, Yingzhen},
  journal={arXiv preprint arXiv:2505.22524},
  year={2025}
}

@article{kim2025fine,
  title={Fine-Tuning Masked Diffusion for Provable Self-Correction},
  author={Kim, Jaeyeon and Kim, Seunggeun and Lee, Taekyun and Pan, David Z. and Kim, Hyeji and Kakade, Sham and Chen, Sitan},
  journal={arXiv preprint arXiv:2510.01384},
  year={2025}
}

@article{dealmeida2022deepstarr,
  title={{DeepSTARR} predicts enhancer activity from {DNA} sequence and enables the de novo design of synthetic enhancers},
  author={de Almeida, Bernardo P. and Reiter, Franziska and Pagani, Michaela and Stark, Alexander},
  journal={Nature Genetics},
  volume={54},
  pages={613--624},
  year={2022}
}

@article{dallatorre2025nucleotide,
  title={Nucleotide Transformer: building and evaluating robust foundation models for human genomics},
  author={Dalla-Torre, Hugo and Gonzalez, Liam and Mendoza-Revilla, Javier and Lopez Carranza, Nicolas and Grzywaczewski, Adam Henryk and Oteri, Francesco and Dallago, Christian and Trop, Evan and de Almeida, Bernardo P. and Sirelkhatim, Hassan and Richard, Guillaume and Skwark, Marcin and Beguir, Karim and Lopez, Marie and Pierrot, Thomas},
  journal={Nature Methods},
  volume={22},
  pages={287--297},
  year={2025},
  doi={10.1038/s41592-024-02523-z}
}

@misc{yang2026d3lmdiscretednadiffusion,
  title={{D3LM}: A Discrete {DNA} Diffusion Language Model for Bidirectional {DNA} Understanding and Generation},
  author={Yang, Zhao and Liu, Hengchang and Cao, Chuan and Su, Bing},
  year={2026},
  eprint={2603.01780},
  archivePrefix={arXiv},
  primaryClass={cs.LG},
  url={https://arxiv.org/abs/2603.01780}
}

@inproceedings{alamdari2023protein,
  title={Protein generation with evolutionary diffusion: sequence is all you need},
  author={Alamdari, Sarah and Thakkar, Nitya and van den Berg, Rianne and Lu, Alex and Fusi, Nicolo and Amini, Ava and Yang, Kevin},
  booktitle={NeurIPS 2023 Generative AI and Biology (GenBio) Workshop},
  year={2023}
}

@article{wu2022omegafold,
  title={High-resolution de novo structure prediction from primary sequence},
  author={Wu, Ruidong and Ding, Fan and Wang, Rui and Shen, Rui and Zhang, Xiwen and Luo, Shitong and Su, Chenpeng and Wu, Zuofan and Xie, Qi and Berger, Bonnie and Ma, Jianzhu and Peng, Jian},
  journal={bioRxiv preprint},
  year={2022}
}

@misc{qwen3_06b_diffusion_mdlm,  
title        = {Qwen3-0.6B-diffusion-mdlm-v0.1},  
author       = {{dLLM Hub}},  
year         = {2026},  
howpublished = {\url{https://huggingface.co/dllm-hub/Qwen3-0.6B-diffusion-mdlm-v0.1}},}

@inproceedings{peebles2023scalable,
  title     = {Scalable Diffusion Models with Transformers},
  author    = {Peebles, William and Xie, Saining},
  booktitle = {Proceedings of the IEEE/CVF International Conference on Computer Vision},
  year      = {2023}
}

@article{bickerton2012quantifying,
author = {Bickerton, Richard and Paolini, Gaia and Besnard, Jérémy and Muresan, Sorel and Hopkins, Andrew},
year = {2012},
month = {02},
pages = {90-8},
title = {Quantifying the chemical beauty of drugs},
volume = {4},
journal = {Nature chemistry},
doi = {10.1038/nchem.1243}
}

@article{MAIOROV1994625,
title = {Significance of Root-Mean-Square Deviation in Comparing Three-dimensional Structures of Globular Proteins},
journal = {Journal of Molecular Biology},
volume = {235},
number = {2},
pages = {625-634},
year = {1994},
issn = {0022-2836},
doi = {https://doi.org/10.1006/jmbi.1994.1017},
url = {https://www.sciencedirect.com/science/article/pii/S0022283684710175},
author = {Vladimir N. Maiorov and Gordon M. Crippen}
}
    \bibliographystyle{plainnat}
\fi

\clearpage
\appendix
\clearpage
\tableofcontents
\clearpage
\section{Table of Key Notation and Baseline Algorithms}
\label{app:sec:key_notation_and_remarks}
\renewcommand{\arraystretch}{1.3}
\begin{table}[ht]
\caption{Table of Key Notations}
\label{tab:notation}
\vspace{5pt}
\centering
\renewcommand{\arraystretch}{1.3}
\begin{tabular}{c l l}
\toprule
\textbf{Symbol} & \textbf{Definition} & \textbf{Descriptions} \\
\midrule

$x$
  & conditioning context
  & Prompt or source input \\

$n$
  & sequence length
  & Number of token positions \\

$\calV$
  & finite vocabulary
  & Token alphabet \\

$\calY$
  & $\calV^n$
  & Fully revealed sequences \\

$\calZ$
  & $\sbra{\calV \cup \mbra{\texttt{[mask]}}}^n$
  & Masked state space \\

$R\sbra{z}$
  & $\mbra{i \in \lbra{n} : z_i \neq \texttt{[mask]}}$
  & Observed coordinates of $z$ \\

$k\sbra{z}$
  & $\abs{R\sbra{z}}$
  & Depth of masked state $z$ \\

$\calC\sbra{z}$
  & $\mbra{y \in \calY : y_i = z_i,\ i \in R\sbra{z}}$
  & Compatible full completions of $z$ \\

$\pi_{\mathrm{ref}}\sbra{y \mid x}$
  & base sequence model
  & Reference distribution \\

$\tau\sbra{x,y}$
  & nonnegative reward
  & Leaf-level verifier or reward \\

$\pi^\star\sbra{y \mid x}$
  & $\propto \pi_{\mathrm{ref}}\sbra{y \mid x}\tau\sbra{x,y}$
  & Tilted target distribution \\

$M_{\mathrm{ref}}\sbra{x,z}$
  & $\sum_{y \in \calC\sbra{z}} \pi_{\mathrm{ref}}\sbra{y \mid x}$
  & Base completion mass \\

$V^\star\sbra{x,z}$
  & $\bbE_{\pi_{\mathrm{ref}}}\lbra{\tau\sbra{x,Y}\mid x,Y \in \calC\sbra{z}}$
  & Conditional-expectation value \\

$U^\star\sbra{x,z}$
  & $\sum_{y \in \calC\sbra{z}} \pi_{\mathrm{ref}}\sbra{y \mid x}\tau\sbra{x,y}$
  & Tilted completion mass \\

$\widehat V\sbra{x,z}$
  & learned approximation of $V^\star\sbra{x,z}$
  & Practical plug-in verifier \\

$z^{j \leftarrow a}$
  & single-site reveal update
  & Fill masked coordinate $j$ with token $a$ \\

$z^{-i}$
  & single-site re-mask update
  & Re-mask observed coordinate $i$ \\

$s_k$
  & depth-dependent edge coefficient
  & Balanced AOAR-VGB coefficient \\

$\lambda$
  & geometric value-gating exponent
  & Strength of geometric remasking gate \\

$d$
  & $d \in \mbra{\uparrow,\downarrow}$
  & Momentum direction state \\

$\chi$
  & cancellation strength
  & Flow-cancelled momentum parameter \\

\midrule

$B$
  & subset of coordinates with $r=|B|$
  & Reveal or re-mask block \\

$s_{k,r}$
  & $\binom{n-r}{k}^{-1}$
  & Block-update depth coefficient \\

$z^{B \leftarrow a_B}$
  & block reveal update
  & Fill block $B$ with assignment $a_B$ \\

$z^{-B}$
  & block re-mask update
  & Mask every coordinate in block $B$ \\

\bottomrule
\end{tabular}
\vspace{1mm}
\end{table}

\begin{figure}[h!]
\centering
\resizebox{0.98\textwidth}{!}{
\begin{tikzpicture}[
    node distance=1.7cm and 1.95cm,
    method/.style={
        ellipse,
        draw=darkgreen!80!black,
        fill=remarkgreen,
        very thick,
        align=center,
        minimum width=3.05cm,
        minimum height=1.12cm,
        inner sep=3pt
    },
    prior/.style={
        method,
        draw=gray!65!black,
        fill=gray!8
    },
    problem/.style={
        font=\footnotesize\itshape,
        align=center,
        text=darkred
    },
    extension/.style={
        font=\footnotesize,
        align=center,
        text=darkgreen
    },
    flow/.style={
        -{Latex[length=2.3mm]},
        very thick,
        draw=darkgreen!70!black
    },
    issue/.style={
        -{Latex[length=2.3mm]},
        very thick,
        draw=darkred!75!black
    }
]
\node[prior] (ar) {AR-VGB\\[-1pt]{\scriptsize \citep{rohatgi2025taming}}};
\node[prior, right=of ar] (armom) {AR-VGB\\Momentum\\[-1pt]{\scriptsize \citep{rohatgi2025taming}}};
\node[method, below=of ar] (primitive) {Primitive\\AOAR-VGB\\[-1pt]{\scriptsize (\cref{app:subsec:primitive_aoar_vgb})}};
\node[method, right=2.25cm of primitive] (balanced) {Balanced\\AOAR-VGB\\[-1pt]{\scriptsize (\cref{app:subsec:balanced_aoar_vgb})}};
\node[method, right=2.25cm of balanced] (geobal) {Geometric\\Balanced\\AOAR-VGB\\[-1pt]{\scriptsize (\cref{app:subsec:balanced_geometric_aoar_vgb})}};
\node[method, right=2.25cm of geobal] (balmom) {Momentum-Balanced\\Geometric\\AOAR-VGB\\[-1pt]{\scriptsize (\cref{app:subsec:momentum_bg_aoar_vgb})}};
\node[method, below=1.75cm of geobal] (mdm) {Geometric\\Balanced\\MDM-VGB\\[-1pt]{\scriptsize (\cref{app:subsec:balanced_geometric_mdm_vgb})}};
\node[method, right=2.25cm of mdm] (mdmmom) {Momentum-Balanced\\Geometric\\MDM-VGB\\[-1pt]{\scriptsize (\cref{app:subsec:momentum_bg_mdm_vgb})}};

\draw[issue] (ar) -- node[problem, above] {leaf\\hitting} (armom);
\draw[flow] (ar) -- node[extension, left] {AOAR\\extension} (primitive);
\draw[issue] (primitive) -- node[problem, above] {leaf-mass\\dilution}
    node[problem, below, yshift=-0.35em] {\scriptsize
    (\hyperref[rem:primitive_vs_balanced_directional_bias]{Remark~\ref*{rem:primitive_vs_balanced_directional_bias}}\\[-1pt]
    \& \hyperref[prop:primitive_leaf_dilution]{Prop.~\ref*{prop:primitive_leaf_dilution}})} (balanced);
\draw[flow] (balanced) -- node[extension, above] {value-gated\\remasking} (geobal);
\draw[issue] (geobal) -- node[problem, above] {leaf\\hitting} (balmom);
\draw[flow] (geobal) -- node[extension, left] {MDM\\extension} (mdm);
\draw[issue] (mdm) -- node[problem, above] {leaf\\hitting} (mdmmom);
\end{tikzpicture}
}
\caption{Roadmap of the VGB variants studied in the appendix. AR-VGB and its momentum variant follow \citet{rohatgi2025taming}; Primitive AOAR-VGB gives the any-order extension; Balanced AOAR-VGB addresses leaf-mass dilution; Geometric Balanced AOAR-VGB adds value-gated remasking; momentum-balanced geometric variants target leaf hitting under finite budgets; and Geometric Balanced MDM-VGB gives the block-update extension.}
\label{fig:appendix_vgb_roadmap}
\end{figure}
Throughout the appendix, AOAR-VGB denotes the singleton-block any-order case, while MDM-VGB denotes the block-update method used in the main text.

\begin{table*}[h!]
    \centering
    \caption{
    Overview of baseline methods. The final column marks whether the
    theoretical sampler has the exact leaf-conditioned target
    $\pi^\star(y\mid x)\propto \base(y\mid x)\tau(x,y)$. Here
    $\triangle$ denotes conditional exactness: BoN matches the
    success-conditioned target when $\tau\in\mbra{0,1}$, while VGR is exact only
    when $\widehat V=V^\star$. Exactness here refers to the ideal asymptotic sampler with full-neighborhood transitions.
    }
    \label{tab:baseline_taxonomy}
    \vspace{2pt}
    \small
    \setlength{\tabcolsep}{5.0pt}
    \renewcommand{\arraystretch}{1.08}
    \begin{tabular}{l c c c}
    \toprule
    Method
    & Uses $\widehat V$?
    & Backtracking?
    & Exact $\pi^\star$ law? \\
    \midrule
    Base
    & $\mathsf{X}$
    & $\mathsf{X}$
    & $\mathsf{X}$ \\
    BoN
    & $\mathsf{X}$
    & $\mathsf{X}$
    & $\triangle$ \\
    VGR
    & $\mathsf{O}$
    & $\mathsf{X}$
    & $\triangle$ \\
    VGB
    & $\mathsf{O}$
    & $\mathsf{O}$
    & $\mathsf{O}$ \\
    VGB-Momentum
    & $\mathsf{O}$
    & $\mathsf{O}$
    & $\mathsf{O}$ \\
    \bottomrule
    \end{tabular}
\end{table*}

\subsection{Baseline Algorithms}
\label{app:subsec:baseline_algorithms}

We compare MDM-VGB algorithms against baselines that use the same reference model $\base$. BoN is the terminal-verifier version of the standard best-of-$N$ reranking baseline. VGR keeps the verifier-weighted forward reveal rule from VGB, but disables all re-mask (backtracking) moves. In this subsection, $\widehat V(x,z)$ denotes the learned verifier score used by value-guided baselines on non-leaf states, while the true reward $\tau(x,y)$ is used at leaves.

\paragraph{Best-of-$N$ with terminal verifier.}
BoN is the usual verifier-reranking or best-of-$N$ selection baseline \citep{cobbe2021training,brown2024large,huang2025bestofn}. It samples $N$ independent complete candidates from a forward-only reference rollout of $\base(\cdot\mid x)$, and returns the candidate with the largest terminal verifier value $\tau(x,y)$. It therefore tests what can be gained by spending compute on more complete reference samples, without modifying local transitions online.

\begin{algorithm}[h!]
\caption[Best-of-$N$ with terminal verifier (BoN)]{Best-of-$N$ with terminal verifier (BoN)
(\protect\raisebox{0.15ex}{\protect\colorbox{Lavender!25}{\protect\rule{0pt}{0.8ex}\protect\hspace{0.8ex}}} = parallelizable area)}
\label{alg:bon}
\begin{algorithmic}[1]
\Require context $x$, reference model $\base$, reward $\tau$, number of samples $N$
\State Initialize $\mathcal D\gets\varnothing$
\Statex \hspace*{\algorithmicindent}\colorbox{Lavender!25}{
\parbox{0.86\linewidth}{
\textbf{Parallel candidate generation.}
For $i=1,\dots,N$, independently sample a complete candidate
$y^{(i)}$ by a forward-only reference rollout from $\base(\cdot\mid x)$,
and compute $s_i\gets\tau(x,y^{(i)})$.}}
\For{$i=1,\dots,N$}
    \State Add $(y^{(i)},s_i)$ to $\mathcal D$
\EndFor
\State $i^\star\gets \argmax_{i\in\lbra{N}}s_i$
\State \Return $y^{(i^\star)}$
\end{algorithmic}
\end{algorithm}

\paragraph{Value-guided rollout.}
\textsc{VGR} is a forward-only value-guided baseline. It is the AOAR/MDM
analogue of action-level rejection sampling in \citet{rohatgi2025taming}: the
verifier is used online to choose forward reveal actions, but all
re-mask moves are disabled. Thus \textsc{VGR} isolates online value-guided
forward selection from the stochastic backtracking mechanism that distinguishes
VGB.

Since the full forward action space can be large, \textsc{VGR} uses the same
forward shortlisting approximation as VGB. At state $z$, it constructs a
shortlisted forward candidate set $\mathcal A(z)$. For a candidate
$u=z^{B\leftarrow a_B}$, with singleton blocks giving the AOAR case, the
\textsc{VGR} weight is
\begin{equation*}
    w(z,u)
    :=
    s_{k(z),|B|}\,
    \base(Y_B=a_B\mid x,z)\,
    \widehat V(x,u).
\end{equation*}
The geometric factor $\widehat V(x,z)^\lambda$ is omitted because it is common
to all forward candidates from the same state and cancels under normalization.
We do not report a separate momentum version of \textsc{VGR}: once backward
moves are disabled, the reveal/re-mask direction variable is degenerate.
\begin{algorithm}[h!]
\caption[Value-guided rollout (VGR)]{Value-guided rollout (VGR)
(\protect\raisebox{0.15ex}{\protect\colorbox{Lavender!25}{\protect\rule{0pt}{0.8ex}\protect\hspace{0.8ex}}} = parallelizable area)}
\label{alg:vgr}
\begin{algorithmic}[1]
\Require context $x$, initial masked state $z_0$, reference model $\base$, verifier score $\widehat V$, depth coefficients $s_{k,r}$, step limit $T$
\State $z\gets z_0$
\For{$t=1,\dots,T$}
    \If{$z\in\calY$}
        \State \Return $z$
    \EndIf
    \Statex \hspace*{\algorithmicindent}\colorbox{Lavender!25}{
    \parbox{0.86\linewidth}{
    \textbf{Parallel forward construction.}
    Construct a shortlisted forward candidate set $\mathcal A(z)$. For every
    $u\in\mathcal A(z)$, write $u=z^{B\leftarrow a_B}$,
    $r=|B|$, compute
    \begin{equation*}
        w(u)
        =
        s_{k(z),r}\,
        \base(Y_B=a_B\mid x,z)\,
        \widehat V(x,u).
    \end{equation*}}}
    \If{$\sum_{u\in\mathcal A(z)}w(u)>0$}
        \State Sample $u\in\mathcal A(z)$ proportional to $w(u)$
    \Else
        \State Sample $u=z^{B\leftarrow a_B}\in\mathcal A(z)$ proportional to $\base(Y_B=a_B\mid x,z)$
    \EndIf
    \State $z\gets u$
\EndFor
\If{$z\notin\calY$}
    \State Complete remaining masked coordinates using $\base(\cdot\mid x,z)$
\EndIf
\State \Return $z$
\end{algorithmic}
\end{algorithm}

\paragraph{MDM-VGB samplers.}
We next give the block-update VGB samplers. The first
algorithm is the generic MDM-VGB with admissible block sizes
$\calM$; the second applies the lifted momentum construction to the same
local forward and backward weights. Formal definitions of these weights and the
stationarity guarantees are given in the following sections. The block sets \(\calB_r^+(z)\), \(\calB_r^-(z)\) and the full MDM neighborhoods \(C_{\mathrm{MDM}}(z)\), \(P_{\mathrm{MDM}}(z)\) are defined in \cref{app:subsec:balanced_mdm_vgb}.

\begin{algorithm}[h!]
\caption[MDM-VGB sampler with block coefficients, without a self-loop]{MDM-VGB sampler with block coefficients, without a self-loop
(\protect\raisebox{0.15ex}{\protect\colorbox{Lavender!25}{\protect\rule{0pt}{0.8ex}\protect\hspace{0.8ex}}} = parallelizable area)}
\label{alg:mdm_vgb_sampler}
\begin{algorithmic}[1]
\Statex \textbf{Input:} context $x$, initial state $z$, reference block conditionals $\base(\cdot\mid x,z)$, verifier $\widehat V$, reward $\tau$, admissible sizes $\calM$, coefficients $s_{k,r}$, geometric exponent $\lambda\ge0$, step limit \texttt{step\_limit}
\Statex \textbf{Output:} leaf sample list $\mathcal L$
\State Initialize $\mathcal L\gets\varnothing$ and $\texttt{step}\gets0$
\Statex \hspace{\algorithmicindent}$\triangleright$ Balanced MDM-VGB uses $\lambda=0$; Geometric Balanced MDM-VGB allows $\lambda>0$.
\While{$\texttt{step}<\texttt{step\_limit}$}
    \State Set $k\gets k(z)=\abs{R(z)}$
    \State Initialize an empty weighted candidate set $\mathcal A$
    \If{$k<n$}
        \Statex \hspace*{\algorithmicindent}\colorbox{Lavender!25}{
        \parbox{0.88\linewidth}{
        \textbf{Parallel forward construction.}
        For all $r\in\calM$ with $r\le n-k$, all $A\in\calB_r^+(z)$, and all
        $a_A\in\calV^A$, independently set $u_{A,a_A}\gets z^{A\leftarrow a_A}$
        and add $u_{A,a_A}$ to $\mathcal A$ with weight
        \begin{align*}
            \begin{cases}
                s_{n-r,r}\base(Y_A=a_A\mid x,z)\tau(x,u_{A,a_A}),
                    & u_{A,a_A}\in\calY,\\
                s_{k,r}\base(Y_A=a_A\mid x,z)\widehat V(x,u_{A,a_A}),
                    & u_{A,a_A}\notin\calY,\ \lambda=0,\\
                s_{k,r}\widehat V(x,z)^\lambda
                \base(Y_A=a_A\mid x,z)\widehat V(x,u_{A,a_A}),
                    & u_{A,a_A}\notin\calY,\ \lambda>0.
            \end{cases}
        \end{align*}}}
    \EndIf
    \If{$k>0$}
        \Statex \hspace*{\algorithmicindent}\colorbox{Lavender!25}{
        \parbox{0.88\linewidth}{
        \textbf{Parallel backward construction.}
        For all $r\in\calM$ with $r\le k$ and all $A\in\calB_r^-(z)$,
        independently set $u_A\gets z^{-A}$ and add $u_A$ to $\mathcal A$ with
        weight
        \begin{align*}
            \begin{cases}
                s_{n-r,r}\tau(x,z), & z\in\calY,\\
                s_{k-r,r}\widehat V(x,z),
                    & z\notin\calY,\ \lambda=0,\\
                s_{k-r,r}\widehat V(x,z)\widehat V(x,u_A)^\lambda,
                    & z\notin\calY,\ \lambda>0.
            \end{cases}
        \end{align*}}}
    \EndIf
    \State Sample $u\in\mathcal A$ proportional to its weight
    \State Set $z\gets u$
    \If{$z\in\calY$}
        \State Append $z$ to $\mathcal L$
    \EndIf
    \State $\texttt{step}\gets\texttt{step}+1$
\EndWhile
\State \Return $\mathcal L$
\end{algorithmic}
\end{algorithm}

\begin{algorithm}[t]
\caption[Momentum MDM-VGB sampler]{Momentum MDM-VGB sampler
(\protect\raisebox{0.15ex}{\protect\colorbox{Lavender!25}{\protect\rule{0pt}{0.8ex}\protect\hspace{0.8ex}}} = parallelizable area;
\protect\raisebox{0.15ex}{\protect\colorbox{SkyBlue!15}{\protect\rule{0pt}{0.8ex}\protect\hspace{0.8ex}}} = momentum area)}
\label{alg:momentum_mdm_sampler}
\label{alg:momentum_mdm_transition}
\label{alg:momentum_balanced_mdm_transition}
\begin{algorithmic}[1]
\Statex \textbf{Input:} context $x$, initial lifted state $(z,d)$ with $d\in\mbra{\downarrow,\uparrow}$, base local weights $w_{\rm fwd},w_{\rm bwd}$ for balanced or geometric MDM-VGB, cancellation strength $\chi\in[0,1]$, step limit \texttt{step\_limit}
\Statex \textbf{Output:} leaf sample list $\mathcal L$
\State Initialize $\mathcal L\gets\varnothing$ and $\texttt{step}\gets0$
\While{$\texttt{step}<\texttt{step\_limit}$}
    \State Set $k\gets k(z)=\abs{R(z)}$
    \If{$k=0$ and $d=\uparrow$}
        \State Set $d\gets\downarrow$ \hfill \textsc{Root Direction Reset}
    \EndIf
    \If{$k=n$ and $d=\downarrow$}
        \State Set $d\gets\uparrow$ \hfill \textsc{Leaf Direction Reset}
    \EndIf
    \State Initialize empty weighted candidate sets $\mathcal A_{\mathrm{fwd}}$, $\mathcal A_{\mathrm{bwd}}$, and $\widetilde{\mathcal A}$
    \If{$k<n$}
        \Statex \hspace*{\algorithmicindent}\colorbox{Lavender!25}{
        \parbox{0.88\linewidth}{
        \textbf{Parallel forward construction.}
        For all $c\in C_{\mathrm{MDM}}(z)$, independently add $c$ to
        $\mathcal A_{\mathrm{fwd}}$ with weight
        $w_{\rm fwd}(z\to c)$.}}
    \EndIf
    \If{$k>0$}
        \Statex \hspace*{\algorithmicindent}\colorbox{Lavender!25}{
        \parbox{0.88\linewidth}{
        \textbf{Parallel backward construction.}
        For all $p\in P_{\mathrm{MDM}}(z)$, independently add $p$ to
        $\mathcal A_{\mathrm{bwd}}$ with weight
        $w_{\rm bwd}(z\to p)$.}}
    \EndIf
    \Statex \hspace*{\algorithmicindent}\colorbox{SkyBlue!15}{
    \parbox{0.88\linewidth}{
    \textbf{Momentum construction.}
    Set
    \begin{align*}
    \begin{aligned}
        F(z)&=\sum_{c\in\mathcal A_{\mathrm{fwd}}}w_{\rm fwd}(z\to c),\\
        B(z)&=\sum_{p\in\mathcal A_{\mathrm{bwd}}}w_{\rm bwd}(z\to p),\\
        M(z)&=\min\mbra{F(z),B(z)}.
    \end{aligned}
    \end{align*}
    Populate $\widetilde{\mathcal A}$ with the following lifted-state weights:
    \begin{align*}
    \begin{array}{c@{\quad}c@{\quad}c@{\quad}c}
    \texttt{mode} & \texttt{lifted state} & \texttt{weight} & \texttt{transition mode}\\[0.5mm]
    \midrule
    d=\downarrow & (c,\downarrow),\ c\in\mathcal A_{\mathrm{fwd}} & w_{\rm fwd}(z\to c) & \textsc{Move}\\
    d=\downarrow & (z,\uparrow) & B(z)-\chi M(z) & \textsc{Switch Up}\\
    d=\downarrow & (z,\downarrow) & \chi M(z) & \textsc{Stay Down (optional)}\\[0.5mm]
    d=\uparrow & (p,\uparrow),\ p\in\mathcal A_{\mathrm{bwd}} & w_{\rm bwd}(z\to p) & \textsc{Move}\\
    d=\uparrow & (z,\downarrow) & F(z)-\chi M(z) & \textsc{Switch Down}\\
    d=\uparrow & (z,\uparrow) & \chi M(z) & \textsc{Stay Up (optional)}
    \end{array}
    \end{align*}}}
    \State Sample $(u,d')\in\widetilde{\mathcal A}$ proportional to its weight
    \State Set $(z,d)\gets(u,d')$
    \If{$z\in\calY$}
        \State Append $z$ to $\mathcal L$
    \EndIf
    \State $\texttt{step}\gets\texttt{step}+1$
\EndWhile
\State \Return $\mathcal L$
\end{algorithmic}
\end{algorithm}
\clearpage

\section{Shared Setup and Verifier Training}
\label{app:sec:shared_setup_verifier_training}
We first collect the notation shared by all VGB variants. Fix a conditioning context $x\sim\rho$, a sequence length $n$, and a finite vocabulary $\calV$. The fully revealed state space and the masked state space are
\begin{equation*}
    \calY := \calV^n, \qquad \calZ := \sbra{\calV\cup\mbra{\mask}}^n,
\end{equation*}
where $\mask\notin\calV$ is the masking symbol. For $z\in\calZ$, let
\begin{equation*}
    R(z) := \mbra{i\in\lbra{n}\mid z_i\neq\mask}, \qquad k(z):=\abs{R(z)}.
\end{equation*}
Here $R(z)$ is the set of revealed coordinates and $k(z)$ is the depth of $z$. The compatible completions of $z$ are
\begin{equation*}
    \calC(z) := \mbra{y\in\calY \mid y_i=z_i\ \text{for all}\ i\in R(z)}.
\end{equation*}
Equivalently, observing the partial state $z$ restricts the unknown terminal sequence $Y$ to the completion set $\calC(z)$. As more coordinates are revealed, this set becomes more informative.

\paragraph{Tilted target.}
Let $\base(y\mid x)$ be a reference sequence model and let $\tau(x,y)\ge 0$ be a terminal reward. These two objects define the \emph{tilted target distribution}
\begin{equation*}
    \pi^*(y\mid x) := \frac{\base(y\mid x)\tau(x,y)}{Z(x)}, \qquad Z(x):=\sum_{y\in\calY}\base(y\mid x)\tau(x,y).
\end{equation*}
The goal of the VGB procedures is to sample from $\pi^*(\cdot\mid x)$ while using $\base$ as the proposal backbone.

\paragraph{Completion values.}
To describe local moves on masked states, we need to know both how much reference probability remains under a partial assignment and how promising that partial assignment is under the reward. The first quantity is the \emph{reference completion mass}
\begin{equation*}
    M_{\mathrm{ref}}(x,z) := \sum_{y\in\calC(z)} \base(y\mid x) = \bbP_{Y\sim\base}(Y\in\calC(z)\mid x).
\end{equation*}
This mass determines which masked states are reachable under the reference model:
\begin{equation*}
    \calZ_{\mathrm{reach}}(x) := \mbra{z\in\calZ: M_{\mathrm{ref}}(x,z)>0}.
\end{equation*}
If $z\in\calZ_{\mathrm{reach}}(x)$, we say that $z$ is \emph{$\base$-reachable}. When the context is fixed, we write \(\calZ_{\mathrm{reach}}\) for \(\calZ_{\mathrm{reach}}(x)\).

The second quantity is the conditional-expectation value
\begin{equation*}
    V^*(x,z) := \bbE_{\base}\lbra{\tau(x,Y)\mid x,\ Y\in\calC(z)},
\end{equation*}
which is the object approximated by the verifier. For the analysis, it is also convenient to introduce the unnormalized tilted completion mass
\begin{equation*}
    U^*(x,z) := \sum_{y\in\calC(z)}\base(y\mid x)\tau(x,y).
\end{equation*}
We note that these two views are related by
\begin{equation*}
    U^*(x,z)=M_{\mathrm{ref}}(x,z)V^*(x,z).
\end{equation*}
The quantity $U^*$ is useful for stationarity proofs, but learning it directly would also require estimating the completion mass $M_{\mathrm{ref}}$. The VGB framework discussed in later sections avoids this extra estimation problem: its local transitions can be implemented using reference conditionals together with a learned approximation $\widehat V$ of $V^*$.

\paragraph{Verifier training.}
When a learned verifier is used, we train a masked-state verifier \(\widehat V(x,z)\). During training, we fit a parametric model $V_\theta(x,z)$ and then set $\widehat V$ to the trained model.

For a sampled context $x$, we run several reference rollouts, each producing a terminal completion $y$. Along the way, the sampler reveals coordinates, producing a trajectory of intermediate masked states. We subsample several snapshots from each trajectory and assign all of them the same terminal label $r=\tau(x,y)$.

The verifier is trained with the Monte Carlo (MC) rollout regression objective
\begin{equation*}
    \calL_{\mathrm{reg}}(\theta) := \bbE_{x,z,y}\lbra{\sbra{V_\theta(x,z)-\tau(x,y)}^2},
\end{equation*}
where $z$ is a sampled intermediate state from a rollout that terminates at $y$. The following proposition reveals the population target of this regression.

\begin{proposition}[MC regression recovers \(V^\star\)]
\label{prop:mc_regression_recovers_value}
Consider the population rollout law $\bbP_{\mathrm{ref}}$ over triples $(x,z,y)$. Suppose this law is reference-consistent: whenever $z\in\calZ_{\mathrm{reach}}(x)$,
\begin{equation*}
    \bbP_{\mathrm{ref}}(Y=y\mid x,z)
    =
    \base(y\mid x,\ Y\in\calC(z)).
\end{equation*}
Then the population regression objective
\begin{equation*}
    \calL_{\mathrm{reg}}(V)
    :=
    \bbE_{\bbP_{\mathrm{ref}}}
    \left[
        \bigl(V(x,z)-\tau(x,Y)\bigr)^2
    \right]
\end{equation*}
recovers \(V^\star\) as its population optimum:
\begin{equation*}
    V^\star = \argmin_V \calL_{\mathrm{reg}}(V)
    \qquad
    \bbP_{\mathrm{ref}}\text{-a.e.}
\end{equation*}
\end{proposition}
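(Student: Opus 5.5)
The plan is the standard $L^2$ projection argument: the minimizer of a squared loss over all measurable functions of $(x,z)$ is the conditional expectation of the target given $(x,z)$. Reference-consistency then identifies that conditional expectation with $V^\star$.

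\textbf{Step 1: decompose the loss.} Let $\bar V(x,z):=\bbE_{\bbP_{\mathrm{ref}}}[\tau(x,Y)\mid x,z]$. Since $\calY$ is finite, $\tau(x,\cdot)$ is bounded for each fixed $x$; assume, or restrict to the case, that $\bbE[\tau^2]<\infty$ so the objective is finite. Write
\[
V(x,z)-\tau(x,Y)=\bigl(V(x,z)-\bar V(x,z)\bigr)+\bigl(\bar V(x,z)-\tau(x,Y)\bigr).
\]
The first term is a function of $(x,z)$. By the tower property, the cross term has zero expectation. This gives
\[
\calL_{\mathrm{reg}}(V)=\bbE\bigl[(V-\bar V)^2(x,z)\bigr]+\calL_{\mathrm{reg}}(\bar V).
\]
Hence $\bar V$ is a minimizer. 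Any other minimizer satisfies $\bbE[(V-\bar V)^2]=0$, so it equals $\bar V$ $\bbP_{\mathrm{ref}}$-a.e. in $(x,z)$.

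\textbf{Step 2: identify $\bar V$ with $V^\star$.} The sampled intermediate states satisfy $z\in\calZ_{\mathrm{reach}}(x)$ almost surely, because a rollout only passes through states of positive reference completion mass. By the reference-consistency hypothesis,
\[
\bar V(x,z)=\sum_{y\in\calC(z)}\tau(x,y)\,\frac{\base(y\mid x)}{M_{\mathrm{ref}}(x,z)}=\frac{U^\star(x,z)}{M_{\mathrm{ref}}(x,z)}=V^\star(x,z),
\]
using the relation $U^\star=M_{\mathrm{ref}}V^\star$ recorded in the paper.

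The only delicate point is measure-theoretic bookkeeping. The conclusion holds only on the support of the marginal of $(x,z)$, which is why the statement says $\bbP_{\mathrm{ref}}$-a.e. One also needs the observation that unreachable $z$ occur with probability zero, so the conditioning in Step 2 is well defined. Everything else is routine.
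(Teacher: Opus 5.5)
Your proposal is correct and follows the same route as the paper. The squared loss is minimized by the conditional mean $\bbE_{\bbP_{\mathrm{ref}}}[\tau(x,Y)\mid x,z]$, and reference-consistency identifies this mean with $V^\star(x,z)$; the paper argues the first step pointwise at fixed $(x,z)$, while your orthogonal decomposition also makes explicit the a.e.\ uniqueness and the requirement that $z\in\calZ_{\mathrm{reach}}(x)$ almost surely, both of which the paper leaves implicit.
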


\begin{proof}
For fixed $(x,z)$, the squared loss is minimized by the conditional mean
\begin{equation*}
    \bbE_{\bbP_{\mathrm{ref}}}\left[\tau(x,Y)\mid x,z\right].
\end{equation*}
Therefore, reference consistency gives
\begin{equation*}
    \bbE_{\bbP_{\mathrm{ref}}}\left[\tau(x,Y)\mid x,z\right]
    =
    \bbE_{\base}
    \left[
        \tau(x,Y)\mid x,\ Y\in\calC(z)
    \right]
    =
    V^\star(x,z),
\end{equation*}
which proves the claim.
\end{proof}

By \cref{prop:mc_regression_recovers_value}, under a reference-consistent population rollout law $\bbP_{\mathrm{ref}}$, the population target of MC rollout regression is exactly $V^\star(x,z)$. Thus, $\widehat V$ plays the role of an intermediate-state value function, analogous to the prefix-state value function in \citet{rohatgi2025taming}, but lifted here to any-order partial states. The training procedure is summarized in \cref{alg:aoar_verifier_training}.

\begin{algorithm}[t]
\caption{Training the shared masked-state verifier}
\label{alg:aoar_verifier_training}
\begin{algorithmic}[1]
\Require context distribution $\rho$, reference model $\base$, reward $\tau$, parametric verifier $V_\theta$, snapshots per rollout $K$, number of rollouts $N$
\State Initialize snapshot dataset $\calD_{\mathrm{snap}} \gets \varnothing$
\For{$i=1,\dots,N$}
    \State Sample $x\sim\rho$
    \State Run a reference rollout using $\base(\cdot\mid x)$ to obtain a terminal completion $y$ and snapshots $\mbra{z^{(1)},\dots,z^{(K)}}$
    \State Compute $r\gets\tau(x,y)$
    \State Add $\mbra{(x,z^{(k)},r)}_{k=1}^K$ to $\calD_{\mathrm{snap}}$
\EndFor
\State Update $\theta$ by minimizing
\Statex \hspace{\algorithmicindent}  $\displaystyle
\theta \gets \argmin_{\theta'}\
\frac{1}{|\calD_{\mathrm{snap}}|}
\sum_{(x,z,r)\in\calD_{\mathrm{snap}}}
\sbra{V_{\theta'}(x,z)-r}^2$
\State Define $\widehat V(x,z)\gets V_{\theta}(x,z)$
\State \Return trained verifier $\widehat V$
\end{algorithmic}
\end{algorithm}

\section{AOAR-VGB}
\label{app:sec:aoar_vgb}

We now extend the VGB perspective of \citet{rohatgi2025taming} from left-to-right prefix trees to any-order masked states. The key difference is that the state is no longer a prefix. Instead, a state is a masked partial sequence $z \in \calZ$, and a forward move reveals one currently masked coordinate. A backward move re-masks one currently revealed coordinate. This gives a natural random walk on the graph of masked states.

\subsection{Setup and Notation}
\label{app:subsec:aoar_setup}

We use the shared notation from the previous section. In particular, for a masked state $z \in \calZ$, $R(z)$ denotes the set of revealed coordinates, $k(z)=\abs{R(z)}$ denotes its depth, and $\calC(z)$ denotes the set of full sequences compatible with $z$.

We write $\varnothing$ for the fully masked state, so $R(\varnothing)=\varnothing$. The fully revealed states are exactly the leaves $\calY=\calV^n$.

For an unrevealed coordinate $j\notin R(z)$ and token $a\in\calV$, define the single-site reveal update $z^{j\leftarrow a}$ by
\begin{equation*}
    \sbra{z^{j\leftarrow a}}_i = \begin{cases}
        a, & i=j,\\
        z_i, & i\neq j.
    \end{cases}
\end{equation*}
For a revealed coordinate $i\in R(z)$, define the single-site re-masking update $z^{-i}$ by
\begin{equation*}
    \sbra{z^{-i}}_\ell = \begin{cases}
        \mask, & \ell=i,\\
        z_\ell, & \ell\neq i.
    \end{cases}
\end{equation*}

The AOAR neighbor sets are
\begin{align*}
    C_{\mathrm{AOAR}}(z) &:= \mbra{ z^{j\leftarrow a} \mid j\notin R(z),\ a\in\calV,\ M_{\text{ref}}(x,z^{j\leftarrow a})>0 }, \tag{\text{Child neighborhood}} \\
    P_{\mathrm{AOAR}}(z) &:= \mbra{ z^{-i} \mid i\in R(z) }, \tag{\text{Parent neighborhood}}
\end{align*}
and
\begin{equation*}
    N_{\mathrm{AOAR}}(z) := C_{\mathrm{AOAR}}(z)\cup P_{\mathrm{AOAR}}(z). \tag{\text{AOAR neighborhood}}
\end{equation*}
Thus, $C_{\mathrm{AOAR}}(z)$ contains one-step reveal states, and $P_{\mathrm{AOAR}}(z)$ contains one-step re-masked states.

The AOAR oracle assumption is that for every reachable state $z$ and every $j\notin R(z)$, we can evaluate the single-coordinate reference conditional
\begin{equation*}
    \base(Y_j=a\mid x,z) := \bbP_{\base} \sbra{ Y_j=a \mid x,\ Y\in\calC(z) }, \qquad a\in\calV.
\end{equation*}
Equivalently,
\begin{equation*}
    \base(Y_j=a\mid x,z) = \frac{M_{\text{ref}}(x,z^{j\leftarrow a})}{M_{\text{ref}}(x,z)}.
\end{equation*}

We first record the basic identities that will be used in the weighted-graph construction and the stationary analysis below.

\begin{lemma}[Single-site partition identity]
\label{lem:aoar_partition}
Fix a reachable state $z$ and coordinate $j\notin R(z)$. Then
\begin{equation*}
    \calC(z) = \bigsqcup_{a\in\calV} \calC(z^{j\leftarrow a}).
\end{equation*}
Consequently,
\begin{equation*}
    \sum_{a\in\calV} U^*(x,z^{j\leftarrow a}) = U^*(x,z), \qquad \sum_{a\in\calV} M_{\text{ref}}(x,z^{j\leftarrow a}) = M_{\text{ref}}(x,z).
\end{equation*}
\end{lemma}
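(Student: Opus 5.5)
The plan is to prove the set identity first and then get the two sums by splitting over this disjoint union. For the partition, I would show two inclusions plus disjointness. Take any $y\in\calC(z)$. Since $j\notin R(z)$, the value $y_j$ is some token $a\in\calV$. Then $y$ agrees with $z^{j\leftarrow a}$ on $R(z)\cup\{j\}=R(z^{j\leftarrow a})$, so $y\in\calC(z^{j\leftarrow a})$.

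Conversely, take $y\in\calC(z^{j\leftarrow a})$. It agrees with $z^{j\leftarrow a}$ on $R(z)$, where $z^{j\leftarrow a}$ coincides with $z$, so $y\in\calC(z)$. For disjointness, note that $y\in\calC(z^{j\leftarrow a})\cap\calC(z^{j\leftarrow b})$ forces $y_j=a$ and $y_j=b$, hence $a=b$. This gives $\calC(z)=\bigsqcup_{a\in\calV}\calC(z^{j\leftarrow a})$.

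For the consequences, I would substitute the definitions $U^\star(x,z)=\sum_{y\in\calC(z)}\base(y\mid x)\tau(x,y)$ and $M_{\mathrm{ref}}(x,z)=\sum_{y\in\calC(z)}\base(y\mid x)$. A finite sum of a fixed function over a disjoint union equals the sum of the sums over the pieces. Applying this to $y\mapsto\base(y\mid x)\tau(x,y)$ and to $y\mapsto\base(y\mid x)$ yields both identities.

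Reachability of $z$ plays no role in this argument; it only ensures the quantities are the relevant ones, for instance that the conditional $\base(Y_j=a\mid x,z)$ built from them is well defined.

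There is no real obstacle here. The only point needing care is that $z^{j\leftarrow a}$ ranges over all $a\in\calV$, including children with zero reference mass. Those children contribute zero to both sums, so including them, rather than restricting to $C_{\mathrm{AOAR}}(z)$, causes no issue.
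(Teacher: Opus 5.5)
Your proof is correct and follows essentially the same route as the paper, which notes that each $y\in\calC(z)$ has a unique value $y_j=a$ and so lies in exactly one $\calC(z^{j\leftarrow a})$, then sums the masses over this partition (getting the $M_{\text{ref}}$ identity by setting $\tau\equiv 1$). You additionally check the reverse inclusion explicitly, which the paper leaves implicit.
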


\begin{proof}
Every full sequence $y\in\calC(z)$ has a unique value $y_j=a$ at coordinate $j$. Hence $y$ belongs to exactly one set $\calC(z^{j\leftarrow a})$. This proves the disjoint union. Summing the masses $\base(y\mid x)\tau(x,y)$ over this partition gives the identity for $U^*$. Setting $\tau\equiv 1$ gives the identity for $M_{\text{ref}}$.
\end{proof}

\begin{proposition}[AOAR Bellman identity]
\label{prop:aoar_bellman}
For every reachable state $z$ and every $j\notin R(z)$,
\begin{equation*}
    V^*(x,z) = \sum_{a\in\calV} \base(Y_j=a\mid x,z) V^*(x,z^{j\leftarrow a}).
\end{equation*}
\end{proposition}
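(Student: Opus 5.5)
The plan is to reduce the Bellman identity to the single-site partition identity of \cref{lem:aoar_partition}, using the factorization $U^*(x,z)=M_{\mathrm{ref}}(x,z)V^*(x,z)$ from the shared setup. Fix a reachable state $z$, so that $M_{\mathrm{ref}}(x,z)>0$, and fix a coordinate $j\notin R(z)$.

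First, write the left-hand side as $V^*(x,z)=U^*(x,z)/M_{\mathrm{ref}}(x,z)$. This holds because $V^*$ is a conditional expectation under $\base$ given the event $Y\in\calC(z)$, and that event has positive probability $M_{\mathrm{ref}}(x,z)$. Then apply \cref{lem:aoar_partition} to expand the numerator:
\[
U^*(x,z)=\sum_{a\in\calV}U^*(x,z^{j\leftarrow a}).
\]

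Next, handle the summands according to whether the child is reachable.

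\emph{Reachable children.} If $M_{\mathrm{ref}}(x,z^{j\leftarrow a})>0$, write $U^*(x,z^{j\leftarrow a})=M_{\mathrm{ref}}(x,z^{j\leftarrow a})\,V^*(x,z^{j\leftarrow a})$. Dividing by $M_{\mathrm{ref}}(x,z)$ and using
\[
\base(Y_j=a\mid x,z)=\frac{M_{\mathrm{ref}}(x,z^{j\leftarrow a})}{M_{\mathrm{ref}}(x,z)}
\]
turns each such term into $\base(Y_j=a\mid x,z)\,V^*(x,z^{j\leftarrow a})$.

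\emph{Unreachable children.} If $M_{\mathrm{ref}}(x,z^{j\leftarrow a})=0$, then every $y\in\calC(z^{j\leftarrow a})$ has $\base(y\mid x)=0$, so $U^*(x,z^{j\leftarrow a})=0$. The conditional weight $\base(Y_j=a\mid x,z)$ is also zero. Whatever convention is adopted for $V^*$ at such a child (say $0$, or any finite value), the product term vanishes, so these terms contribute nothing on either side.

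Summing over $a\in\calV$ then gives the claimed identity. The only mildly delicate step is the treatment of unreachable children, where $V^*$ is a conditional expectation on a null event and is therefore formally undefined. I would address it with a one-line remark that such terms carry zero reference weight, so the identity holds under any finite convention. Everything else is a direct algebraic consequence of the partition lemma.
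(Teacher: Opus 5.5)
Your proof is correct and is essentially the paper's argument written in unnormalized form: the paper applies the law of total expectation conditioned on $Y_j$ and identifies the event $\{Y\in\calC(z),\ Y_j=a\}$ with $\{Y\in\calC(z^{j\leftarrow a})\}$, which is exactly your partition identity for $U^*$ followed by division by $M_{\mathrm{ref}}(x,z)$. Your explicit handling of children with $M_{\mathrm{ref}}(x,z^{j\leftarrow a})=0$ is a small improvement, since the paper's proof conditions on the possibly null events $\{Y_j=a\}$ without comment.
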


\begin{proof}
By the law of total expectation,
\begin{equation*}
    V^*(x,z) = \bbE_{\base} \lbra{ \tau(x,Y) \mid x,\ Y\in\calC(z) }.
\end{equation*}
Conditioning further on the value of $Y_j$, we obtain
\begin{equation*}
    V^*(x,z) = \sum_{a\in\calV} \bbP_{\base}(Y_j=a\mid x, Y \in \calC(z)) \bbE_{\base} \lbra{ \tau(x,Y) \mid x,\ Y\in\calC(z),\ Y_j=a }.
\end{equation*}
The event $\mbra{Y\in\calC(z),\ Y_j=a}$ is exactly $\mbra{Y\in\calC(z^{j\leftarrow a})}$. Therefore the inner conditional expectation is $V^*(x,z^{j\leftarrow a})$.
\end{proof}

\begin{proposition}[Exact target conditional]
\label{prop:aoar_exact_target_conditional}
For every reachable state $z$ with $U^* (x,z)>0$, every $j\notin R(z)$, and every $a\in\calV$,
\begin{equation*}
    \pi^*(Y_j=a\mid x,z) = \frac{U^*(x,z^{j\leftarrow a})}{U^*(x,z)} = \base(Y_j=a\mid x,z) \frac{V^*(x,z^{j\leftarrow a})}{V^*(x,z)}.
\end{equation*}
\end{proposition}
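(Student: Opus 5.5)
We prove the two equalities in order, unpacking the definition of the tilted marginal and then using the identities $U^*=M_{\mathrm{ref}}V^*$ and the single-site conditional formula from the AOAR oracle assumption.

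\textbf{First equality.} We compute $\pi^*(Y_j=a\mid x,z)$ as a conditional probability under $\pi^*$, namely $\pi^*(z^{j\leftarrow a}\mid x)/\pi^*(z\mid x)$. This mirrors the definition of $\base(Y_B=a_B\mid x,z)$. By definition of the marginal,
\begin{equation*}
    \pi^*(z\mid x)=\sum_{y\in\calC(z)}\pi^*(y\mid x)=\frac{1}{Z(x)}\sum_{y\in\calC(z)}\base(y\mid x)\tau(x,y)=\frac{U^*(x,z)}{Z(x)}.
\end{equation*}
The same identity holds for $z^{j\leftarrow a}$. The normalizer $Z(x)$ cancels, which gives $U^*(x,z^{j\leftarrow a})/U^*(x,z)$. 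The assumption $U^*(x,z)>0$ guarantees the denominator is nonzero. Equivalently, $\pi^*(z\mid x)>0$, so the conditional is well defined.

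\textbf{Second equality.} We write $U^*=M_{\mathrm{ref}}V^*$ for both states. The hypothesis $U^*(x,z)>0$ forces both $M_{\mathrm{ref}}(x,z)>0$ and $V^*(x,z)>0$, so we may divide by them.

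There is one subtlety: if $M_{\mathrm{ref}}(x,z^{j\leftarrow a})=0$, then $V^*(x,z^{j\leftarrow a})$ is an undefined conditional expectation. In that case the numerator $U^*(x,z^{j\leftarrow a})$ is $0$, and so is $\base(Y_j=a\mid x,z)$. We therefore adopt the convention that the product $\base(Y_j=a\mid x,z)\,V^*(x,z^{j\leftarrow a})$ equals $0$, and handle this case separately.

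Otherwise we split the ratio as
\begin{equation*}
    \frac{U^*(x,z^{j\leftarrow a})}{U^*(x,z)}=\frac{M_{\mathrm{ref}}(x,z^{j\leftarrow a})}{M_{\mathrm{ref}}(x,z)}\cdot\frac{V^*(x,z^{j\leftarrow a})}{V^*(x,z)}.
\end{equation*}
The first factor equals $\base(Y_j=a\mid x,z)$ by the oracle identity $\base(Y_j=a\mid x,z)=M_{\mathrm{ref}}(x,z^{j\leftarrow a})/M_{\mathrm{ref}}(x,z)$. This completes the second equality.

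As a sanity check, summing over $a$ gives total mass $1$, by \cref{lem:aoar_partition} for $U^*$ and by \cref{prop:aoar_bellman} for the right-hand form.

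The only step requiring care is the degenerate case $M_{\mathrm{ref}}(x,z^{j\leftarrow a})=0$ just described; everything else is direct algebra.
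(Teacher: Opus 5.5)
Your proof is correct and matches the paper's argument: the first equality comes from writing the $\pi^*$-conditional as a ratio of $\pi^*$-marginals, each equal to $U^*/Z(x)$, and cancelling $Z(x)$; the second comes from substituting $U^*=M_{\mathrm{ref}}V^*$ and $M_{\mathrm{ref}}(x,z^{j\leftarrow a})=M_{\mathrm{ref}}(x,z)\base(Y_j=a\mid x,z)$. Your treatment of the case $M_{\mathrm{ref}}(x,z^{j\leftarrow a})=0$, where $V^*(x,z^{j\leftarrow a})$ is undefined, is a small refinement that the paper leaves implicit.
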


\begin{proof}
By definition of conditional probability under the tilted target,
\begin{equation*}
    \pi^*(Y_j=a\mid x,z) = \frac{ \bbP_{\pi^*}(Y\in\calC(z^{j\leftarrow a})\mid x) }{ \bbP_{\pi^*}(Y\in\calC(z)\mid x) }.
\end{equation*}
Since
\begin{equation*}
    \bbP_{\pi^*}(Y\in\calC(z^{j\leftarrow a})\mid x) = \frac{U^*(x,z^{j\leftarrow a})}{Z(x)}
\end{equation*}
and
\begin{equation*}
    \bbP_{\pi^*}(Y\in\calC(z)\mid x) = \frac{U^*(x,z)}{Z(x)},
\end{equation*}
the first equality follows. For the second equality, use
\begin{equation*}
    U^*(x,z)=M_{\text{ref}}(x,z)V^*(x,z)
\end{equation*}
and
\begin{equation*}
    M_{\text{ref}}(x,z^{j\leftarrow a}) = M_{\text{ref}}(x,z) \base(Y_j=a\mid x,z).
\end{equation*}
\end{proof}

\subsection{Primitive AOAR-VGB}
\label{app:subsec:primitive_aoar_vgb}

The most direct AOAR extension of AR-VGB uses the same local rule as the prefix-tree AR-VGB walk: forward moves are weighted by the reference conditional times the verifier value of the child, and backward moves are weighted by the verifier value of the current state.

Let $\widehat V(x,z)$ denote the trained verifier. We use \emph{terminal anchoring}: whenever the terminal reward is available at a leaf, the verifier is fixed to the true reward,
\begin{equation*}
    \widehat V(x,y):=\tau(x,y), \qquad y\in\calY.
\end{equation*}

\begin{definition}[$\gamma$-held primitive AOAR-VGB local kernel]
\label{def:primitive_aoar_weights}
Fix a holding parameter $\gamma\ge 0$. For a reachable state $z$, define the primitive AOAR-VGB local weights by
\begin{alignat*}{3}
    \textsc{Forward:}\quad
    & w(z\to z^{j\leftarrow a}) &&:= \base(Y_j=a\mid x,z)\, \widehat V(x,z^{j\leftarrow a}), \qquad && j\notin R(z),\\
    \textsc{Backward:}\quad
    & w(z\to z^{-i}) &&:= \widehat V(x,z), \qquad && i\in R(z).
\end{alignat*}
Let
\begin{equation*}
    W_{\mathrm{PM}}(z) := \sum_{v\in N_{\mathrm{AOAR}}(z)} w(z\to v).
\end{equation*}
The associated non-self move kernel is
\begin{equation*}
    K_{\mathrm{PM}}(z,u) := \frac{w(z\to u)}{W_{\mathrm{PM}}(z)}, \qquad u\in N_{\mathrm{AOAR}}(z).
\end{equation*}
The $\gamma$-held primitive AOAR-VGB kernel is
\begin{equation*}
    P_{\mathrm{PM}}^{(\gamma)}(z,z)=\frac{\gamma}{1+\gamma},
\end{equation*}
and, for $u\in N_{\mathrm{AOAR}}(z)$,
\begin{equation*}
    P_{\mathrm{PM}}^{(\gamma)}(z,u) = \frac{1}{1+\gamma}K_{\mathrm{PM}}(z,u) = \frac{1}{1+\gamma}\frac{w(z\to u)}{\sum_{v\in N_{\mathrm{AOAR}}(z)}w(z\to v)}.
\end{equation*}
When $\gamma=1$, this recovers the usual $1/2$-lazy kernel. When $\gamma=0$, it is the non-lazy local random walk.
\end{definition}

Although $M_{\text{ref}}$ is not needed to implement the transition, it is useful to temporarily reintroduce it in the analysis. Multiplying the learned value by the reference completion mass $M_{\text{ref}}$  gives the plug-in tilted mass
\begin{equation*}
    \widehat U(x,z) := M_{\text{ref}}(x,z)\widehat V(x,z).
\end{equation*}

\begin{theorem}[Weighted-graph representation and exact leaf law]
\label{thm:primitive_weighted_graph}
Construct an undirected graph on reachable masked states by connecting $z$ and $z^{j\leftarrow a}$ whenever $j\notin R(z)$ and $z^{j\leftarrow a}$ is reachable. Assign the symmetric edge weight
\begin{equation*}
    f_{\mathrm{PM}}(z,z^{j\leftarrow a}) := \widehat U(x,z^{j\leftarrow a}) = M_{\text{ref}}(x,z^{j\leftarrow a})\widehat V(x,z^{j\leftarrow a}).
\end{equation*}
For a state $z$, let its weighted degree be
\begin{equation*}
    D_{\mathrm{PM}}(z) := \sum_{u\sim z} f_{\mathrm{PM}}(z,u),
\end{equation*}
where $u\sim z$ means that $u$ and $z$ are connected by an edge in the AOAR weighted graph.

Then, for every $\gamma\ge 0$, the $\gamma$-held primitive AOAR-VGB kernel $P_{\mathrm{PM}}^{(\gamma)}$ from \cref{def:primitive_aoar_weights} is exactly the $\gamma$-held weighted random walk on this graph. Also, its stationary distribution is
\begin{equation*}
    \mu_{\mathrm{PM}}(z) = \frac{D_{\mathrm{PM}}(z)}{\sum_{z'}D_{\mathrm{PM}}(z')}.
\end{equation*}
Moreover, if $\widehat V(x,y)=\tau(x,y)$ on all leaves $y\in\calY$, then the stationary law conditioned on the leaves is exactly the tilted target:
\begin{equation*}
    \mu_{\mathrm{PM}}(y\mid y\in\calY) = \pi^*(y\mid x).
\end{equation*}
\end{theorem}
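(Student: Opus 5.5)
Fix a reachable state $z$ and write its incident edge weights out explicitly, with the aim of showing that they are proportional to the local move weights $w(z\to\cdot)$. The common factor will be $M_{\mathrm{ref}}(x,z)$.

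\emph{Forward edges.} The edge to $z^{j\leftarrow a}$ has weight $\widehat U(x,z^{j\leftarrow a})=M_{\mathrm{ref}}(x,z^{j\leftarrow a})\widehat V(x,z^{j\leftarrow a})$. By the AOAR oracle identity $M_{\mathrm{ref}}(x,z^{j\leftarrow a})=M_{\mathrm{ref}}(x,z)\base(Y_j=a\mid x,z)$, this equals $M_{\mathrm{ref}}(x,z)\,w(z\to z^{j\leftarrow a})$.

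\emph{Backward edges.} The edge to $z^{-i}$ is the edge $\{z^{-i},z\}$ with $z$ as the child, so its weight is $\widehat U(x,z)=M_{\mathrm{ref}}(x,z)\widehat V(x,z)=M_{\mathrm{ref}}(x,z)\,w(z\to z^{-i})$.

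Every neighbor in $N_{\mathrm{AOAR}}(z)$ arises in exactly one of these two ways, and the reachability restriction in $C_{\mathrm{AOAR}}$ matches the edge set. Two further checks are needed:
\begin{itemize}
\item each parent $z^{-i}$ of a reachable $z$ is reachable, since $M_{\mathrm{ref}}$ is monotone under remasking;
\item no edge is counted twice, since distinct $i$ give distinct parents and distinct $(j,a)$ give distinct children.
\end{itemize}
It follows that $D_{\mathrm{PM}}(z)=M_{\mathrm{ref}}(x,z)W_{\mathrm{PM}}(z)$ and $f_{\mathrm{PM}}(z,u)/D_{\mathrm{PM}}(z)=K_{\mathrm{PM}}(z,u)$. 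So the non-lazy kernel is the weighted random walk, and adding the holding probability $\gamma/(1+\gamma)$ gives the same identity for every $\gamma$.

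States with $D_{\mathrm{PM}}(z)=0$ (zero-value states) need a convention. Such states are excluded from, or irrelevant to, the positive component. I will restrict to the component where weights are positive, which is the main technical caveat.

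For stationarity I use the standard reversibility check. For $u\sim z$, $\mu(z)P(z,u)\propto D_{\mathrm{PM}}(z)\frac{1}{1+\gamma}\frac{f(z,u)}{D_{\mathrm{PM}}(z)}=\frac{f(z,u)}{1+\gamma}$, which is symmetric in $(z,u)$. Detailed balance therefore holds with respect to $\mu_{\mathrm{PM}}\propto D_{\mathrm{PM}}$, and summing over $z$ gives stationarity. Uniqueness on the positive component follows from irreducibility: every positive-weight state connects to the root through its remask path.

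For the leaf law, note that a leaf $y$ has no children. Its only incident edges are the $n$ edges to its parents $y^{-i}$, and each has weight $\widehat U(x,y)=M_{\mathrm{ref}}(x,y)\tau(x,y)=\base(y\mid x)\tau(x,y)$ by terminal anchoring. Hence $D_{\mathrm{PM}}(y)=n\,\base(y\mid x)\tau(x,y)$. Because the factor $n$ is independent of $y$, conditioning on $\calY$ gives $\mu_{\mathrm{PM}}(y\mid\calY)=\base(y\mid x)\tau(x,y)/Z(x)=\pi^*(y\mid x)$.

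The only delicate point is the bookkeeping of reachability and zero-weight states; the algebra itself is immediate.
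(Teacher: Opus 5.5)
Your proposal is correct and follows the paper's proof essentially step for step. Both divide the incident edge weights at $z$ by the common factor $M_{\mathrm{ref}}(x,z)$ to identify the kernel with the $\gamma$-held weighted walk, verify detailed balance with respect to the weighted degree, and use terminal anchoring to get $D_{\mathrm{PM}}(y)=n\,\base(y\mid x)\tau(x,y)$ on leaves. One caveat: your extra uniqueness remark is not needed for the statement, and its justification is too quick. The remask path from $z$ can pass through a state $z'$ with $\widehat V(x,z')=0$, whose upward edges all have weight $\widehat U(x,z')=0$, so connectivity to the root needs an additional positivity assumption such as the $\kappa$-accuracy condition.
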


\begin{proof}
Fix a reachable state $z$. For a forward neighbor $z^{j\leftarrow a}$, we have
\begin{equation*}
    \frac{f_{\mathrm{PM}}(z,z^{j\leftarrow a})}{M_{\text{ref}}(x,z)} = \frac{M_{\text{ref}}(x,z^{j\leftarrow a})}{M_{\text{ref}}(x,z)}\widehat V(x,z^{j\leftarrow a}) = \base(Y_j=a\mid x,z)\widehat V(x,z^{j\leftarrow a}).
\end{equation*}
This is exactly the forward local weight.

For a backward neighbor $z^{-i}$, the edge between $z^{-i}$ and $z$ has weight
\begin{equation*}
    f_{\mathrm{PM}}(z^{-i},z) = \widehat U(x,z) = M_{\text{ref}}(x,z)\widehat V(x,z).
\end{equation*}
Dividing by the same state-dependent factor $M_{\text{ref}}(x,z)$ gives the backward local weight
\begin{equation*}
    \widehat V(x,z).
\end{equation*}
Thus, from every state $z$, the local weights in \cref{def:primitive_aoar_weights} are exactly the incident edge weights divided by the common positive factor $M_{\text{ref}}(x,z)$. This common factor cancels under normalization. Therefore the non-self move kernel is the weighted random walk on the graph with edge weights $f_{\mathrm{PM}}$:
\begin{equation*}
    K_{\mathrm{PM}}(z,u) = \frac{f_{\mathrm{PM}}(z,u)}{D_{\mathrm{PM}}(z)}, \qquad u\sim z.
\end{equation*}
This is the standard random walk on a network with conductances $f_{\mathrm{PM}}(z,u)$ in the sense of \citet{levin2017markov}.

We now verify stationarity directly. Let
\begin{equation*}
    D_{\mathrm{tot}} := \sum_{z'}D_{\mathrm{PM}}(z'), \qquad \mu_{\mathrm{PM}}(z):=\frac{D_{\mathrm{PM}}(z)}{D_{\mathrm{tot}}}.
\end{equation*}
For two distinct neighboring states $z\sim u$, the $\gamma$-held kernel satisfies
\begin{equation*}
    P_{\mathrm{PM}}^{(\gamma)}(z,u) = \frac{1}{1+\gamma}\frac{f_{\mathrm{PM}}(z,u)}{D_{\mathrm{PM}}(z)}.
\end{equation*}
Therefore
\begin{align*}
    \mu_{\mathrm{PM}}(z)P_{\mathrm{PM}}^{(\gamma)}(z,u)
    &= \frac{D_{\mathrm{PM}}(z)}{D_{\mathrm{tot}}}\cdot \frac{1}{1+\gamma}\frac{f_{\mathrm{PM}}(z,u)}{D_{\mathrm{PM}}(z)} \\
    &= \frac{f_{\mathrm{PM}}(z,u)}{(1+\gamma)D_{\mathrm{tot}}}.
\end{align*}
Since $f_{\mathrm{PM}}(z,u)=f_{\mathrm{PM}}(u,z)$, the same calculation gives
\begin{equation*}
    \mu_{\mathrm{PM}}(u)P_{\mathrm{PM}}^{(\gamma)}(u,z) = \frac{f_{\mathrm{PM}}(z,u)}{(1+\gamma)D_{\mathrm{tot}}}.
\end{equation*}
Thus detailed balance holds on every non-self edge. For the self-loop terms,
\begin{equation*}
    \mu_{\mathrm{PM}}(z)P_{\mathrm{PM}}^{(\gamma)}(z,z) = \mu_{\mathrm{PM}}(z)\frac{\gamma}{1+\gamma},
\end{equation*}
so detailed balance is immediate. Therefore $P_{\mathrm{PM}}^{(\gamma)}$ is reversible with respect to $\mu_{\mathrm{PM}}$, and hence $\mu_{\mathrm{PM}}$ is stationary.

Now consider a leaf $y\in\calY$. It has exactly $n$ parents, one for each coordinate that can be re-masked. For each parent $y^{-i}$, the edge weight is
\begin{equation*}
    f_{\mathrm{PM}}(y^{-i},y) = M_{\text{ref}}(x,y)\widehat V(x,y) = \base(y\mid x)\tau(x,y).
\end{equation*}
Here we used terminal anchoring. Hence
\begin{equation*}
    D_{\mathrm{PM}}(y) = n\,\base(y\mid x)\tau(x,y).
\end{equation*}
The factor $n$ is constant across leaves. Therefore
\begin{equation*}
    \mu_{\mathrm{PM}}(y\mid y\in\calY) = \frac{\base(y\mid x)\tau(x,y)}{\sum_{y'\in\calY}\base(y'\mid x)\tau(x,y')} = \pi^*(y\mid x).
\end{equation*}
\end{proof}

When the verifier is exact, every $\gamma$-held primitive AOAR-VGB chain has a particularly simple stationary law.

\begin{corollary}[Exact primitive stationary law]
\label{cor:primitive_exact_stationary}
For any $\gamma\ge 0$, if $\widehat V=V^*$, then the stationary distribution of $P_{\mathrm{PM}}^{(\gamma)}$ satisfies
\begin{equation*}
    \mu_{\mathrm{PM}}^*(z) \propto U^*(x,z).
\end{equation*}
\end{corollary}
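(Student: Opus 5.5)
Since \cref{thm:primitive_weighted_graph} already gives $\mu_{\mathrm{PM}}(z)\propto D_{\mathrm{PM}}(z)$ for every $\gamma\ge0$, we only need to show that when $\widehat V=V^*$ the weighted degree $D_{\mathrm{PM}}(z)$ is a fixed constant multiple of $U^*(x,z)$. Setting $\widehat V=V^*$ in the edge weights gives
\[
f_{\mathrm{PM}}(z,z^{j\leftarrow a}) = M_{\mathrm{ref}}(x,z^{j\leftarrow a})V^*(x,z^{j\leftarrow a}) = U^*(x,z^{j\leftarrow a}).
\]
So every edge carries the tilted completion mass of its deeper endpoint. Note that at leaves, $V^*(x,y)=\tau(x,y)$, so exactness there agrees with terminal anchoring.

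Next we split $D_{\mathrm{PM}}(z)$ into parent edges and child edges. Let $k=k(z)$.

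\textbf{Parent edges.} There are exactly $k$ of them, one for each $z^{-i}$ with $i\in R(z)$. Each has weight $U^*(x,z)$, since $z$ is the deeper endpoint. Together they contribute $k\,U^*(x,z)$.

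\textbf{Child edges.} For each of the $n-k$ masked coordinates $j\notin R(z)$, summing over $a\in\calV$ gives $\sum_a U^*(x,z^{j\leftarrow a})=U^*(x,z)$ by \cref{lem:aoar_partition}. Children with $M_{\mathrm{ref}}=0$ are excluded from the graph, but they have $U^*=0$, so dropping them changes nothing. Together the child edges contribute $(n-k)\,U^*(x,z)$.

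Adding the two parts gives $D_{\mathrm{PM}}(z)=n\,U^*(x,z)$ for every reachable $z$. This holds at the root ($k=0$) and at leaves ($k=n$) alike. Normalizing yields $\mu^*_{\mathrm{PM}}(z)\propto U^*(x,z)$.

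The main point needing care is the boundary bookkeeping. The graph contains only reachable states, so we must check that unreachable children contribute zero $U^*$ mass. We must also handle states with $U^*=0$ but $M_{\mathrm{ref}}>0$: they get stationary mass zero, which is consistent with the claim read as proportionality over all reachable states. No other obstacle is expected.
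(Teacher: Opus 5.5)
Your proof is correct and follows the paper's argument essentially line for line. With $\widehat V=V^*$, the $k$ parent edges contribute $kU^*(x,z)$, \cref{lem:aoar_partition} gives $(n-k)U^*(x,z)$ from the child edges, so $D_{\mathrm{PM}}(z)=nU^*(x,z)$, and \cref{thm:primitive_weighted_graph} finishes the proof. Your remarks on unreachable children contributing zero $U^*$ and on states with $U^*=0$ are sound bookkeeping that the paper leaves implicit.
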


\begin{proof}
When $\widehat V=V^*$, we have $\widehat U=U^*$. For a state $z$ of depth $k=k(z)$, every backward edge contributes $U^*(x,z)$, and there are $k$ such edges. For each unrevealed coordinate $j\notin R(z)$, \cref{lem:aoar_partition} gives
\begin{equation*}
    \sum_{a\in\calV} U^*(x,z^{j\leftarrow a}) = U^*(x,z).
\end{equation*}
There are $n-k$ unrevealed coordinates, so the total forward incident weight is $(n-k)U^*(x,z)$. Hence
\begin{equation*}
    D_{\mathrm{PM}}(z) = kU^*(x,z)+(n-k)U^*(x,z) = nU^*(x,z).
\end{equation*}
By \cref{thm:primitive_weighted_graph}, for every $\gamma\ge 0$,
\begin{equation*}
    \mu_{\mathrm{PM}}^*(z)\propto D_{\mathrm{PM}}(z).
\end{equation*}
Therefore
\begin{equation*}
    \mu_{\mathrm{PM}}^*(z)\propto U^*(x,z).
\end{equation*}
\end{proof}

However, we note that the same stationary law also exposes a depth-allocation problem: although the conditional law on leaves is correct, the primitive chain assigns exponentially small marginal mass to the leaf layer, as follows:

\Needspace{8\baselineskip}
\begin{proposition}[Exponential leaf-mass dilution of primitive AOAR-VGB]
\label{prop:primitive_leaf_dilution}
For any $\gamma\ge 0$, assume $\widehat V=V^*$. Then the stationary mass of depth $k$ under $P_{\mathrm{PM}}^{(\gamma)}$ is
\begin{equation*}
    \mu_{\mathrm{PM}}^*(k(z)=k) = \frac{\binom{n}{k}}{2^n}.
\end{equation*}
In particular,
\begin{equation*}
    \mu_{\mathrm{PM}}^*(\calY) = 2^{-n}.
\end{equation*}
\end{proposition}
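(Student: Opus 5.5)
By \cref{cor:primitive_exact_stationary}, under $\widehat V=V^*$ the stationary law of $P_{\mathrm{PM}}^{(\gamma)}$ is $\mu_{\mathrm{PM}}^*(z)=U^*(x,z)/\sum_{z'}U^*(x,z')$ for every $\gamma\ge0$. The proof therefore reduces to summing $U^*$ over each depth layer and then normalizing.

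\textbf{Mass of a single layer.} Fix a depth $k$ and group the states of that depth by their revealed set $S=R(z)$, with $|S|=k$. For a fixed $S$, the sets $\calC(z)$ over all $z$ with $R(z)=S$ and $z_S\in\calV^S$ partition $\calY$. Indeed, each $y\in\calY$ lies in exactly one of them, namely the one with $z_S=y_S$. This is the same partition argument as \cref{lem:aoar_partition}, iterated over the coordinates of $S$. Hence
\[
\sum_{z:\,R(z)=S} U^*(x,z)=\sum_{y\in\calY}\base(y\mid x)\tau(x,y)=Z(x).
\]
Unreachable $z$ have $U^*=0$, so including or excluding them does not affect the sum. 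Summing over the $\binom nk$ choices of $S$ gives
\[
\sum_{z:\,k(z)=k}U^*(x,z)=\binom nk Z(x).
\]

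\textbf{Normalization.} Summing this identity over $k=0,\dots,n$ gives a total mass of $2^nZ(x)$. Dividing yields
\[
\mu_{\mathrm{PM}}^*(k(z)=k)=\binom nk 2^{-n}.
\]
Setting $k=n$ gives $\mu^*_{\mathrm{PM}}(\calY)=2^{-n}$, since $\binom nn=1$.

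\textbf{Where care is needed.} The only delicate point is the state-space convention. The stationary distribution lives on reachable states, and the normalization needs $Z(x)>0$ so that the target is well defined. Since unreachable or zero-mass states contribute nothing to either the numerator or the denominator, the computation is unaffected. Everything else is a direct counting argument.
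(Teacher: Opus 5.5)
Your proof is correct and matches the paper's: both invoke \cref{cor:primitive_exact_stationary}, show $\sum_{z:\,k(z)=k}U^*(x,z)=\binom{n}{k}Z(x)$ by double counting, then sum over $k$ to get $2^nZ(x)$ and normalize. The only difference is the order of the double count (the paper swaps sums and notes each $y$ lies in exactly $\binom{n}{k}$ depth-$k$ states, while you fix the revealed set $S$ and use that the completion sets partition $\calY$).
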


\begin{proof}
For any fixed depth $k$, double-counting identity gives
\begin{align*}
    \sum_{z:\,k(z)=k} U^*(x,z) &= \sum_{z:k(z)=k} \sum_{y \in \calC(z)} \base(y \mid x) \tau(x,y) \\
    &= \sum_{y \in \calY} \base(y\mid x) \tau(x,y) \sum_{z : k(z) =k} \mathbf{1}\mbra{y \in \calC(z)}= \binom{n}{k} Z(x).
\end{align*}
Indeed, each full sequence $y\in\calY$ contributes the mass $\base(y\mid x)\tau(x,y)$ to exactly $\binom{n}{k}$ masked states of depth $k$, one for each choice of $k$ revealed coordinates. Therefore
\begin{equation*}
    \sum_{z\in\calZ} U^*(x,z) = \sum_{k=0}^n \sum_{z:k(z)=k} U^*(x,z)= \sum_{k=0}^n \binom{n}{k}Z(x) = 2^n Z(x).
\end{equation*}
By \cref{cor:primitive_exact_stationary}, the primitive stationary law is proportional to $U^*$, so
\begin{equation*}
    \mu_{\mathrm{PM}}^*(k(z)=k) = \frac{\binom{n}{k}Z(x)}{2^n Z(x)} = \frac{\binom{n}{k}}{2^n}.
\end{equation*}
Taking $k=n$ gives $\mu_{\mathrm{PM}}^*(\calY)=2^{-n}$.
\end{proof}

Thus, primitive AOAR-VGB has the correct leaf-conditioned distribution, but the stationary chain spends exponentially little mass on leaves. We therefore introduce Balanced AOAR-VGB, a depth-balanced variant designed to resolve this leaf-mass dilution problem.

\subsection{Balanced AOAR-VGB}
\label{app:subsec:balanced_aoar_vgb}

Primitive AOAR-VGB treats every single-site edge symmetrically. In the AOAR graph, however, the number of states at each depth grows as $\binom{n}{k}$. As a result, the primitive stationary law puts most of its mass around the middle depths. Balanced AOAR-VGB rescales forward and backward moves as a function of depth to suppress this combinatorial depth bias.

\begin{definition}[$\gamma$-held depth-rescaled AOAR-VGB local kernel]
\label{def:balanced_aoar_weights}
Fix a holding parameter $\gamma\ge 0$ and positive coefficients
\begin{equation*}
    \beta_0,\beta_1,\dots,\beta_{n-1}>0, \qquad \alpha_1,\alpha_2,\dots,\alpha_n>0.
\end{equation*}
For a reachable state $z$ with $k=k(z)$, define the depth-rescaled local weights by
\begin{alignat*}{3}
    \textsc{Forward:}\quad
    & w^{(k)}(z\to z^{j\leftarrow a}) &&:= \beta_k\, \base(Y_j=a\mid x,z)\widehat V(x,z^{j\leftarrow a}), \qquad && j\notin R(z),\\
    \textsc{Backward:}\quad
    & w^{(k)}(z\to z^{-i}) &&:= \alpha_k\, \widehat V(x,z), \qquad && i\in R(z).
\end{alignat*}
Let
\begin{equation*}
    W_{\mathrm{BAL}}(z):=\sum_{u\in N_{\mathrm{AOAR}}(z)} w^{(k)}(z\to u).
\end{equation*}
The associated non-self move kernel is
\begin{equation*}
    K_{\mathrm{BAL}}(z,u) := \frac{w^{(k)}(z\to u)}{W_{\mathrm{BAL}}(z)}, \qquad u\in N_{\mathrm{AOAR}}(z).
\end{equation*}
The $\gamma$-held balanced AOAR-VGB kernel is
\begin{align*}
    P_{\mathrm{BAL}}^{(\gamma)}(z,z) &= \frac{\gamma}{1+\gamma},\\
    P_{\mathrm{BAL}}^{(\gamma)}(z,u)
    &= \frac{1}{1+\gamma}K_{\mathrm{BAL}}(z,u)
    = \frac{1}{1+\gamma}\frac{w^{(k)}(z\to u)}{\sum_{v\in N_{\mathrm{AOAR}}(z)}w^{(k)}(z\to v)},
    \quad u\in N_{\mathrm{AOAR}}(z).
\end{align*}
When $\gamma=1$, this recovers the usual $1/2$-lazy kernel. When $\gamma=0$, it is the non-lazy local random walk.
\end{definition}

In the \cref{prop:balanced_directional_masses} below, we use the word ``move'' to mean a non-self transition. Thus, conditioning on $\mathrm{move}$ means conditioning on the event that the kernel does not take the holding self-loop and instead chooses a neighbor in $N_{\mathrm{AOAR}}(z)$. The events $\mathrm{forward}$ and $\mathrm{backward}$ mean that the selected neighbor lies in $C_{\mathrm{AOAR}}(z)$ or $P_{\mathrm{AOAR}}(z)$, respectively. Since the holding probability is common to all non-self moves, the conditional directional probabilities below do not depend on $\gamma$.

\begin{proposition}[Exact directional masses and balanced condition]
\label{prop:balanced_directional_masses}
Assume $\widehat V=V^*$. Let $z$ be a reachable state of interior depth $1\le k\le n-1$ with $V^*(x,z)>0$. Then the exact total forward and backward local masses are
\begin{equation*}
    W_{\mathrm{fwd}}^{(k),*}(x,z) = (n-k)\beta_k V^*(x,z), \qquad
    W_{\mathrm{bwd}}^{(k),*}(x,z) = k\alpha_k V^*(x,z).
\end{equation*}
Therefore, conditional on making a non-self move,
\begin{align*}
    \bbP(\mathrm{forward}\mid x,z,\mathrm{move}) &= \frac{(n-k)\beta_k}{(n-k)\beta_k+k\alpha_k},\\
    \bbP(\mathrm{backward}\mid x,z,\mathrm{move}) &= \frac{k\alpha_k}{(n-k)\beta_k+k\alpha_k}.
\end{align*}
In particular, the forward and backward directions are exactly balanced at depth $k$ if and only if
\begin{equation*}
    (n-k)\beta_k = k\alpha_k.
\end{equation*}
\end{proposition}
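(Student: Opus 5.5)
The plan is a direct computation of the two directional sums in \cref{def:balanced_aoar_weights}, evaluated with $\widehat V=V^*$, followed by normalization.

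\textbf{Forward mass.} Fix an interior state $z$ of depth $k$ and an unrevealed coordinate $j\notin R(z)$. Summing the forward weights over tokens $a\in\calV$ (only reachable children appear in $C_{\mathrm{AOAR}}(z)$, and unreachable children have zero reference conditional, so including them changes nothing) gives
\[
\beta_k\sum_{a\in\calV}\base(Y_j=a\mid x,z)\,V^*(x,z^{j\leftarrow a})=\beta_k V^*(x,z),
\]
by the AOAR Bellman identity (\cref{prop:aoar_bellman}). There are exactly $n-k$ unrevealed coordinates, so $W_{\mathrm{fwd}}^{(k),*}(x,z)=(n-k)\beta_k V^*(x,z)$.

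\textbf{Backward mass.} Every backward weight equals $\alpha_k V^*(x,z)$, independent of $i$. The parent set $P_{\mathrm{AOAR}}(z)$ has exactly $k$ elements, one for each revealed coordinate, and all parents are reachable because $M_{\mathrm{ref}}$ increases under re-masking. Hence $W_{\mathrm{bwd}}^{(k),*}(x,z)=k\alpha_k V^*(x,z)$.

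\textbf{Directional probabilities.} Conditional on a non-self move, the $\gamma$-held kernel chooses a neighbor with probability proportional to its local weight, since the factor $1/(1+\gamma)$ cancels. So the forward probability is $W_{\mathrm{fwd}}/(W_{\mathrm{fwd}}+W_{\mathrm{bwd}})$. Cancelling the common positive factor $V^*(x,z)$, which is allowed by the hypothesis $V^*(x,z)>0$, yields the stated formulas for both directions. The two probabilities equal $1/2$ exactly when $(n-k)\beta_k=k\alpha_k$, which gives the balance condition.

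The only step needing care is the forward sum. One must check that restricting to reachable children does not break the Bellman identity, which holds because excluded terms carry zero reference weight. One must also check that the interior-depth assumption guarantees both neighborhoods are nonempty, so the directional probabilities are well defined. Everything else is bookkeeping.
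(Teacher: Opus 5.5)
Your proposal is correct and follows the paper's proof essentially step for step. Both compute the forward mass by applying the AOAR Bellman identity (\cref{prop:aoar_bellman}) at each of the $n-k$ unrevealed coordinates, compute the backward mass by counting the $k$ equal-weight parents, and then normalize, with the holding factor $1/(1+\gamma)$ and $V^*(x,z)>0$ cancelling. Your extra checks (unreachable children carry zero weight, parents are reachable, the total weight is positive) are left implicit in the paper, and they are all sound.
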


\begin{proof}
For each unrevealed coordinate $j\notin R(z)$, \cref{prop:aoar_bellman} gives
\begin{equation*}
    \sum_{a\in\calV} \base(Y_j=a\mid x,z) V^*(x,z^{j\leftarrow a}) = V^*(x,z).
\end{equation*}
There are $n-k$ unrevealed coordinates, so the total forward mass is
\begin{align*}
    W_{\mathrm{fwd}}^{(k),*}(x,z)
    &=
    \sum_{j\notin R(z)}\sum_{a\in\calV}
    \beta_k\base(Y_j=a\mid x,z)V^*(x,z^{j\leftarrow a})
    =
    (n-k)\beta_k V^*(x,z).
\end{align*}
There are $k$ revealed coordinates, and each backward move has weight $\alpha_k V^*(x,z)$. Hence
\begin{equation*}
    W_{\mathrm{bwd}}^{(k),*}(x,z) = \sum_{i\in R(z)}\alpha_k V^*(x,z) = k\alpha_k V^*(x,z).
\end{equation*}
The conditional directional probabilities follow by normalizing the forward and backward masses among non-self moves. The common holding factor $1/(1+\gamma)$ cancels after conditioning on $\mathrm{move}$. Exact balance is therefore equivalent to
\begin{equation*}
    (n-k)\beta_k V^*(x,z)=k\alpha_k V^*(x,z),
\end{equation*}
or simply $(n-k)\beta_k=k\alpha_k$.
\end{proof}

\begin{remark}[Primitive AOAR-VGB as the unrescaled case]
\label{rem:primitive_vs_balanced_directional_bias}
Primitive AOAR-VGB corresponds to the unrescaled choice
\begin{equation*}
    \beta_k=1,\qquad \alpha_k=1
\end{equation*}
at every interior depth. Under this choice, \cref{prop:balanced_directional_masses} gives
\begin{equation*}
    \bbP(\mathrm{forward}\mid x,z,\mathrm{move})=\frac{n-k}{n}, \qquad
    \bbP(\mathrm{backward}\mid x,z,\mathrm{move})=\frac{k}{n}.
\end{equation*}
Thus, the primitive chain strongly favors forward moves near the root and strongly favors backward moves near the leaves.

This local depth bias is consistent with the global pathology in \cref{prop:primitive_leaf_dilution}: under the exact primitive stationary law, the depth marginal is binomial and the leaf layer receives only $2^{-n}$ stationary mass. Balanced AOAR-VGB removes this local directional bias by choosing $\alpha_k,\beta_k$ so that
\begin{equation*}
    (n-k)\beta_k=k\alpha_k,
\end{equation*}
making the forward and backward directions equally likely after conditioning on a non-self move.
\end{remark}

We next give the stationary law for the balanced chain. The following edge-coefficient representation is the cleanest way to see that terminal exactness is preserved even when $\widehat V$ is imperfect.

Let $g_1>0$ be arbitrary, and define $g_2,\dots,g_n$ recursively by
\begin{equation*}
    g_{k+1} = g_k \frac{\beta_k}{\alpha_k}, \qquad k=1,\dots,n-1.
\end{equation*}
We set $\beta_0=\alpha_n=1$. These boundary choices do not affect the normalized transition, since only forward moves are available at the root and only backward moves are available at the leaves.

\begin{theorem}[Weighted-graph representation of balanced AOAR-VGB]
\label{thm:balanced_weighted_graph}
Construct an undirected graph on reachable masked states with single-site AOAR edges. For an edge between a depth-$k$ parent $z$ and its child $z^{j\leftarrow a}$, assign the symmetric edge weight
\begin{equation*}
    f_{\mathrm{BAL}}(z,z^{j\leftarrow a}) := g_{k+1}\, M_{\text{ref}}(x,z^{j\leftarrow a}) \widehat V(x,z^{j\leftarrow a}).
\end{equation*}
Then, for every $\gamma\ge 0$, the balanced AOAR-VGB kernel from \cref{def:balanced_aoar_weights} is exactly the $\gamma$-held weighted random walk on this graph. Hence its stationary distribution is
\begin{equation*}
    \mu_{\mathrm{BAL}}(z)=\frac{D_{\mathrm{BAL}}(z)}{\sum_{z'}D_{\mathrm{BAL}}(z')}, \qquad
    D_{\mathrm{BAL}}(z) := \sum_{u\sim z} f_{\mathrm{BAL}}(z,u).
\end{equation*}
If $\widehat V(x,y)=\tau(x,y)$ for all leaves $y\in\calY$, then
\begin{equation*}
    \mu_{\mathrm{BAL}}(y\mid y\in\calY) = \pi^*(y\mid x).
\end{equation*}
\end{theorem}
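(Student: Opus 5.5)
I would mirror the proof of \cref{thm:primitive_weighted_graph}: show that, at each state $z$, the local weights of \cref{def:balanced_aoar_weights} equal the incident edge weights $f_{\mathrm{BAL}}$ divided by one common positive factor depending only on $z$. The natural candidate is $c(z):=g_{k+1}M_{\text{ref}}(x,z)/\beta_k$ for $k=k(z)$, which by the recursion $g_{k+1}=g_k\beta_k/\alpha_k$ also equals $g_kM_{\text{ref}}(x,z)/\alpha_k$. At the boundaries I would use the conventions $\beta_0=\alpha_n=1$ with $g_0$ defined through the recursion. This is harmless, since the root has only forward edges and the leaves have only backward edges, so only one of the two expressions for $c(z)$ is ever used there.

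For a forward neighbor $z^{j\leftarrow a}$ of a depth-$k$ state $z$, I would compute
\[
f_{\mathrm{BAL}}(z,z^{j\leftarrow a})/c(z)=\beta_k\,\frac{M_{\text{ref}}(x,z^{j\leftarrow a})}{M_{\text{ref}}(x,z)}\,\widehat V(x,z^{j\leftarrow a})=\beta_k\,\base(Y_j=a\mid x,z)\,\widehat V(x,z^{j\leftarrow a}),
\]
using the AOAR oracle identity for the conditional. For a backward neighbor $z^{-i}$, the parent has depth $k-1$, so the edge weight is $g_k M_{\text{ref}}(x,z)\widehat V(x,z)$. Dividing by $c(z)=g_kM_{\text{ref}}(x,z)/\alpha_k$ gives $\alpha_k\widehat V(x,z)$. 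The only thing to check carefully here is the index bookkeeping: the edge coefficient is indexed by the child's depth, so the same $g$ must serve as the forward coefficient from depth $k-1$ and the backward coefficient at depth $k$. That is precisely what the recursion guarantees, and this consistency is the main point of the argument.

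Since the common factor cancels under normalization, the non-self kernel satisfies $K_{\mathrm{BAL}}(z,u)=f_{\mathrm{BAL}}(z,u)/D_{\mathrm{BAL}}(z)$. Detailed balance for the $\gamma$-held kernel with $\mu\propto D_{\mathrm{BAL}}$ then follows verbatim from the primitive case, using symmetry of $f_{\mathrm{BAL}}$ and treating self-loops trivially. Hence $\mu_{\mathrm{BAL}}$ is stationary.

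For the leaf law, each leaf $y$ has exactly $n$ incident edges, one to each parent $y^{-i}$, and each has weight $g_nM_{\text{ref}}(x,y)\tau(x,y)=g_n\base(y\mid x)\tau(x,y)$ by terminal anchoring. So $D_{\mathrm{BAL}}(y)=ng_n\base(y\mid x)\tau(x,y)$. The factor $ng_n$ is the same for every leaf, so conditioning on $\calY$ gives $\pi^*(y\mid x)$.
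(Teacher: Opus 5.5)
Your proposal is correct and is essentially the paper's proof. The paper divides each incident edge weight by $M_{\text{ref}}(x,z)$ and uses $g_{k+1}/\beta_k=g_k/\alpha_k$ to get a common factor (your $c(z)$), handles the root and leaves via $\beta_0=\alpha_n=1$, checks detailed balance, and computes $D_{\mathrm{BAL}}(y)=ng_n\base(y\mid x)\tau(x,y)$ exactly as you do; the only superfluous item is $g_0$, which is never used (root edges carry $g_1$, leaf edges carry $g_n$) and which the recursion could not define anyway, since there is no $\alpha_0$.
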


\begin{proof}
Let $z$ be a state of depth $k$. First consider a forward edge $z\to z^{j\leftarrow a}$. Then
\begin{align*}
    \frac{f_{\mathrm{BAL}}(z,z^{j\leftarrow a})}{M_{\text{ref}}(x,z)}
    &=
    g_{k+1}
    \frac{M_{\text{ref}}(x,z^{j\leftarrow a})}{M_{\text{ref}}(x,z)}
    \widehat V(x,z^{j\leftarrow a}) \\
    &=
    g_{k+1}\base(Y_j=a\mid x,z)\widehat V(x,z^{j\leftarrow a}).
\end{align*}
For a backward edge from $z$ to $z^{-i}$, the child in the edge is $z$, so
\begin{equation*}
    \frac{f_{\mathrm{BAL}}(z^{-i},z)}{M_{\text{ref}}(x,z)}=g_k \widehat V(x,z).
\end{equation*}
For an interior state, the recursion
\begin{equation*}
    \frac{g_{k+1}}{g_k} = \frac{\beta_k}{\alpha_k}
\end{equation*}
implies $g_{k+1}/\beta_k=g_k/\alpha_k$. Thus, at every interior depth $k$, the incident edge weights divided by $M_{\text{ref}}(x,z)$ are proportional to the local depth-rescaled weights in \cref{def:balanced_aoar_weights}. The proportionality factor is common across all outgoing neighbors of $z$, and hence cancels under normalization.

At the root and leaves, we use the boundary convention $\beta_0=\alpha_n=1$. Since only forward moves are available at the root and only backward moves are available at the leaves, these boundary choices do not change the normalized transition. Therefore $P_{\mathrm{BAL}}^{(\gamma)}$ is exactly the $\gamma$-held weighted random walk on the graph with edge weights $f_{\mathrm{BAL}}$:
\begin{equation*}
    P_{\mathrm{BAL}}^{(\gamma)}(z,z)=\frac{\gamma}{1+\gamma}, \qquad
    P_{\mathrm{BAL}}^{(\gamma)}(z,u)=\frac{1}{1+\gamma}\frac{f_{\mathrm{BAL}}(z,u)}{D_{\mathrm{BAL}}(z)}, \quad u\sim z.
\end{equation*}

We now verify the stationary distribution directly. This is the standard random walk on a network with conductances $f_{\mathrm{BAL}}(z,u)$, for which the stationary distribution is proportional to the vertex conductance $D_{\mathrm{BAL}}(z)$ \citep{levin2017markov}. For completeness, we check detailed balance. Let
\begin{equation*}
    D_{\mathrm{tot}}:=\sum_{z'}D_{\mathrm{BAL}}(z'), \qquad
    \mu_{\mathrm{BAL}}(z):=\frac{D_{\mathrm{BAL}}(z)}{D_{\mathrm{tot}}}.
\end{equation*}
For two distinct neighboring states $z\sim u$,
\begin{align*}
    \mu_{\mathrm{BAL}}(z)P_{\mathrm{BAL}}^{(\gamma)}(z,u)
    =
    \frac{D_{\mathrm{BAL}}(z)}{D_{\mathrm{tot}}}
    \cdot
    \frac{1}{1+\gamma}
    \frac{f_{\mathrm{BAL}}(z,u)}{D_{\mathrm{BAL}}(z)} =
    \frac{f_{\mathrm{BAL}}(z,u)}{(1+\gamma)D_{\mathrm{tot}}}.
\end{align*}
Since the edge weight is symmetric, $f_{\mathrm{BAL}}(z,u)=f_{\mathrm{BAL}}(u,z)$, the same calculation gives
\begin{equation*}
    \mu_{\mathrm{BAL}}(u)P_{\mathrm{BAL}}^{(\gamma)}(u,z)=\frac{f_{\mathrm{BAL}}(z,u)}{(1+\gamma)D_{\mathrm{tot}}}.
\end{equation*}
Thus, detailed balance holds on every non-self edge. For the self-loop terms,
\begin{equation*}
    \mu_{\mathrm{BAL}}(z)P_{\mathrm{BAL}}^{(\gamma)}(z,z)=\mu_{\mathrm{BAL}}(z)\frac{\gamma}{1+\gamma},
\end{equation*}
so detailed balance is trivial. Therefore $P_{\mathrm{BAL}}^{(\gamma)}$ is reversible with respect to $\mu_{\mathrm{BAL}}$, and hence $\mu_{\mathrm{BAL}}$ is stationary.

Now let $y\in\calY$ be a leaf. Each of its $n$ incident edges has weight
\begin{equation*}
    g_n M_{\text{ref}}(x,y)\widehat V(x,y)=g_n \base(y\mid x)\tau(x,y).
\end{equation*}
Thus
\begin{equation*}
    D_{\mathrm{BAL}}(y)=n g_n \base(y\mid x)\tau(x,y).
\end{equation*}
The factor $n g_n$ is constant over leaves, so conditioning on $\calY$ gives
\begin{equation*}
    \mu_{\mathrm{BAL}}(y\mid y\in\calY)
    =
    \frac{\base(y\mid x)\tau(x,y)}
    {\sum_{y'\in\calY}\base(y'\mid x)\tau(x,y')}
    =
    \pi^*(y\mid x).
\end{equation*}
\end{proof}

\begin{remark}[Zero holding in the practical sampler]
\label{rem:balanced_gamma_zero}
\quad The holding parameter $\gamma$ is useful for connecting the balanced AOAR-VGB transition to the standard lazy random-walk formalism, but explicit self-loop steps have limited algorithmic value in implementation and directly add computational cost.

Following the practical convention in \citet{rohatgi2025taming}, we therefore set $\gamma=0$ in our AOAR-VGB implementation and use the non-lazy local move kernel. The stationarity argument in \cref{thm:balanced_weighted_graph} is stated for every $\gamma\ge 0$, so this practical choice does not change the leaf-conditioned stationary law.
\end{remark}

When the verifier is exact, the weighted-degree expression from \cref{thm:balanced_weighted_graph} simplifies to the tilted mass $U^*$ multiplied by a depth-dependent coefficient.

\begin{corollary}[Exact balanced stationary law]
\label{cor:balanced_exact_stationary}
Assume $\widehat V=V^*$. Define
\begin{equation*}
    d_0:=n g_1, \qquad d_n:=n g_n, \qquad
    d_k := k g_k+(n-k)g_{k+1}\quad (1\le k\le n-1).
\end{equation*}
Then, for every $\gamma\ge 0$, the exact balanced stationary law satisfies
\begin{equation*}
    \mu_{\mathrm{BAL}}^*(z) \propto d_{k(z)}\,U^*(x,z).
\end{equation*}
\end{corollary}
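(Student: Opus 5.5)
By \cref{thm:balanced_weighted_graph}, for every $\gamma\ge 0$ the stationary law is $\mu_{\mathrm{BAL}}(z)\propto D_{\mathrm{BAL}}(z)$. So it is enough to show that when $\widehat V=V^*$ we have $D_{\mathrm{BAL}}(z)=d_{k(z)}U^*(x,z)$ for every reachable $z$. The proof is a direct computation of the weighted degree, split into the interior case and the two boundary cases. It copies the argument of \cref{cor:primitive_exact_stationary}, now keeping track of the depth factors $g_k$.

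Fix $z$ at depth $k$. With $\widehat V=V^*$, each edge weight is $f_{\mathrm{BAL}}(z,z^{j\leftarrow a})=g_{k+1}M_{\mathrm{ref}}(x,z^{j\leftarrow a})V^*(x,z^{j\leftarrow a})=g_{k+1}U^*(x,z^{j\leftarrow a})$, using $U^*=M_{\mathrm{ref}}V^*$.

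\textbf{Forward edges.} Fix an unrevealed coordinate $j\notin R(z)$ and sum over $a$. By \cref{lem:aoar_partition}, $\sum_a U^*(x,z^{j\leftarrow a})=U^*(x,z)$. One point needs care: the sum in \cref{lem:aoar_partition} runs over all of $\calV$, while the graph only contains reachable children. Unreachable children have $M_{\mathrm{ref}}=0$, hence $U^*=0$, so dropping them changes nothing. There are $n-k$ unrevealed coordinates, so the forward contribution to $D_{\mathrm{BAL}}(z)$ is $(n-k)g_{k+1}U^*(x,z)$.

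\textbf{Backward edges.} Each of the $k$ parents $z^{-i}$ has depth $k-1$, and the edge to it has weight $g_kU^*(x,z)$. The backward contribution is therefore $kg_kU^*(x,z)$.

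\textbf{Cases.} For $1\le k\le n-1$, adding the two contributions gives $D_{\mathrm{BAL}}(z)=(kg_k+(n-k)g_{k+1})U^*(x,z)=d_kU^*(x,z)$. At the root ($k=0$) only forward edges exist, and they give $ng_1U^*(x,\varnothing)=d_0U^*$. At a leaf ($k=n$) only the $n$ backward edges exist, and they give $ng_nU^*(x,y)=d_nU^*$.

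Normalizing yields $\mu^*_{\mathrm{BAL}}(z)\propto d_{k(z)}U^*(x,z)$. No step is a real obstacle. The only point needing attention is the reachability bookkeeping in the forward sum, and it resolves because unreachable children carry zero $U^*$ mass.
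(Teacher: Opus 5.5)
Your proposal is correct and follows the paper's proof essentially line for line: use $k$ backward edges of weight $g_kU^*(x,z)$ and the partition identity over the $n-k$ unrevealed coordinates to get the weighted degree, treat the root and leaf layers separately, and then invoke the weighted-graph representation theorem for every $\gamma\ge 0$. Your remark that unreachable children carry zero $U^*$ mass, so excluding them from the graph does not change the forward sum, is a small bookkeeping point that the paper leaves implicit.
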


\begin{proof}
Let $z$ have depth $k$. If $1\le k\le n-1$, its backward incident edges contribute
\begin{equation*}
    k g_k U^*(x,z).
\end{equation*}
For each unrevealed coordinate $j\notin R(z)$, \cref{lem:aoar_partition} gives
\begin{equation*}
    \sum_{a\in\calV} g_{k+1}U^*(x,z^{j\leftarrow a}) = g_{k+1}U^*(x,z).
\end{equation*}
There are $n-k$ unrevealed coordinates, so the forward incident weight is
\begin{equation*}
    (n-k)g_{k+1}U^*(x,z).
\end{equation*}
Thus
\begin{equation*}
    D_{\mathrm{BAL}}^*(z)=\sbra{k g_k+(n-k)g_{k+1}}U^*(x,z).
\end{equation*}
The root and leaf cases give $d_0=n g_1$ and $d_n=n g_n$, respectively. By \cref{thm:balanced_weighted_graph}, the stationary distribution is proportional to $D_{\mathrm{BAL}}^*$ for every $\gamma\ge 0$, which proves the claim.
\end{proof}

\begin{corollary}[Canonical balanced AOAR-VGB]
\label{cor:canonical_balanced_aoar}
Set
\begin{equation*}
    \beta_k=k, \qquad \alpha_k=n-k, \qquad 1\le k\le n-1.
\end{equation*}
Then, the exact forward and backward directional probabilities are both $1/2$ at every interior depth. Choosing $g_1=1$, the corresponding edge coefficients are
\begin{equation*}
    g_k = \frac{1}{\binom{n-1}{k-1}}, \qquad 1\le k\le n.
\end{equation*}
Moreover, under $\widehat V=V^*$, for every $\gamma\ge 0$,
\begin{equation*}
    \mu_{\mathrm{BAL}}^*(\calY) = \frac{1}{2n}.
\end{equation*}
\end{corollary}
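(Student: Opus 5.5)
We will prove the three claims in order, using \cref{prop:balanced_directional_masses} for the directional probabilities, the recursion defining $g_k$ for the edge coefficients, and \cref{cor:balanced_exact_stationary} for the leaf mass.

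\textbf{Directional balance.} The balance condition in \cref{prop:balanced_directional_masses} is $(n-k)\beta_k=k\alpha_k$. With $\beta_k=k$ and $\alpha_k=n-k$, both sides equal $k(n-k)$. Hence the forward and backward probabilities are each $1/2$ at every interior depth.

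\textbf{Edge coefficients.} The recursion gives $g_{k+1}/g_k=\beta_k/\alpha_k=k/(n-k)$. We check that $g_k=\binom{n-1}{k-1}^{-1}$ satisfies this:
\[
\frac{\binom{n-1}{k-1}}{\binom{n-1}{k}}=\frac{k}{n-k}.
\]
The base case is $g_1=1=\binom{n-1}{0}^{-1}$, so induction on $k$ gives the formula for all $1\le k\le n$.

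\textbf{Leaf mass.} Under $\widehat V=V^*$, \cref{cor:balanced_exact_stationary} gives $\mu^*_{\mathrm{BAL}}(z)\propto d_{k(z)}U^*(x,z)$. The double-counting identity from the proof of \cref{prop:primitive_leaf_dilution} gives $\sum_{k(z)=k}U^*(x,z)=\binom{n}{k}Z(x)$. Therefore the mass of depth $k$ is proportional to $d_k\binom nk$.

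We now show $d_k\binom nk$ is the same constant for every depth $k$.
\begin{itemize}
\item For the forward term at interior $k$: $(n-k)g_{k+1}\binom nk=(n-k)\binom nk\big/\binom{n-1}{k}=n$.
\item For the backward term at interior $k$: $kg_k\binom nk=k\binom nk\big/\binom{n-1}{k-1}=n$.
\end{itemize}
Both identities follow from $\binom nk=\tfrac nk\binom{n-1}{k-1}=\tfrac{n}{n-k}\binom{n-1}{k}$. So $d_k\binom nk=2n$ for $1\le k\le n-1$.

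At the boundaries, $d_0\binom n0=ng_1=n$ and $d_n\binom nn=ng_n=n$.

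The total mass is therefore $(n-1)\cdot 2n+2n=2n^2$. The leaf layer has mass $n$, so
\[
\mu^*_{\mathrm{BAL}}(\calY)=\frac{n}{2n^2}=\frac{1}{2n}.
\]
Laziness does not affect this, because \cref{cor:balanced_exact_stationary} holds for every $\gamma\ge0$.

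The only delicate point is the boundary convention: $d_0$ and $d_n$ count one-sided edges. Using the definitions in \cref{cor:balanced_exact_stationary} handles this directly. The rest is binomial identities.
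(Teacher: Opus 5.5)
Your proposal is correct and follows essentially the same route as the paper: you check $(n-k)\beta_k=k\alpha_k$, solve $g_{k+1}/g_k=k/(n-k)$ from $g_1=1$, and then combine $\mu^*_{\mathrm{BAL}}\propto d_{k(z)}U^*$ from \cref{cor:balanced_exact_stationary} with $\sum_{k(z)=k}U^*=\binom nk Z(x)$ to obtain layer weights $n,2n,\dots,2n,n$ and hence $1/(2n)$. The differences are cosmetic: you verify the closed form for $g_k$ by induction rather than by evaluating the telescoping product, and you split $d_k\binom nk$ into its forward and backward halves rather than first simplifying $d_k=2kg_k=2n/\binom nk$.
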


\begin{proof}
The balanced condition follows immediately:
\begin{equation*}
    (n-k)\beta_k = (n-k)k = k(n-k) = k\alpha_k.
\end{equation*}
The recursion for $g_k$ becomes
\begin{equation*}
    g_{k+1} = g_k\frac{k}{n-k}.
\end{equation*}
Starting from $g_1=1$, this gives
\begin{equation*}
    g_k=\prod_{r=1}^{k-1}\frac{r}{n-r}
    =
    \frac{(k-1)!(n-k)!}{(n-1)!}
    =
    \frac{1}{\binom{n-1}{k-1}}.
\end{equation*}

For $1\le k\le n-1$,
\begin{equation*}
    d_k = k g_k+(n-k)g_{k+1} = k g_k+k g_k = 2k g_k = \frac{2n}{\binom{n}{k}}.
\end{equation*}
At the boundaries,
\begin{equation*}
    d_0=d_n=n.
\end{equation*}
Using the double-counting identity
\begin{align*}
    \sum_{z:\,k(z)=k}U^*(x,z)
    &=
    \sum_{z:\,k(z)=k}\sum_{y\in\calC(z)}\base(y\mid x)\tau(x,y) \\
    &=
    \sum_{y\in\calY}\base(y\mid x)\tau(x,y)
    \sum_{z:\,k(z)=k}\mathbf{1}\mbra{y\in\calC(z)} =
    \binom{n}{k}Z(x),
\end{align*}
we can compute the unnormalized stationary weight layer by layer. For the root layer, $\sum_{z:\,k(z)=0}U^*(x,z)=Z(x)$ and $d_0=n$, so its total stationary weight is $nZ(x)$. For the leaf layer, $\sum_{z:\,k(z)=n}U^*(x,z)=Z(x)$ and $d_n=n$, so its total stationary weight is also $nZ(x)$. For each interior depth $1\le k\le n-1$, the total stationary weight is
\begin{equation*}
    d_k\sum_{z:\,k(z)=k}U^*(x,z)=\frac{2n}{\binom{n}{k}}\binom{n}{k}Z(x)=2nZ(x).
\end{equation*}
Hence the total weight over all depths is
\begin{equation*}
    nZ(x)+nZ(x)+(n-1)2nZ(x)=2n^2Z(x).
\end{equation*}
The leaf layer has weight $nZ(x)$, so
\begin{equation*}
    \mu_{\mathrm{BAL}}^*(\calY)=\frac{nZ(x)}{2n^2Z(x)}=\frac{1}{2n}.
\end{equation*}
\end{proof}

Thus, canonical balanced AOAR-VGB keeps the exact target law on leaves while increasing the stationary leaf mass from $2^{-n}$ to order $1/n$.

\subsection{Theoretical Framework and Inference Algorithms}
\label{app:subsec:aoar_theory_algorithms}

We now state the mixing framework for canonical balanced AOAR-VGB. The argument follows the same high-level principle as VGB on a prefix tree: stochastic backtracking defines a reversible walk whose leaf-conditioned stationary law is exact, and whose conductance can be controlled under a multiplicative value-approximation assumption.

\begin{assumption}[Uniform multiplicative verifier accuracy]
\label{ass:aoar_kappa_accuracy}
There exists $\kappa\ge 1$ such that for every reachable non-leaf state $z$,
\begin{equation*}
    \kappa^{-1}V^*(x,z)\le \widehat V(x,z)\le \kappa V^*(x,z).
\end{equation*}
At leaves, we use terminal anchoring:
\begin{equation*}
\label{eq:terminal_anchoring}
    \widehat V(x,y)=\tau(x,y), \qquad y\in\calY.
\end{equation*}
\end{assumption}

The canonical balanced AOAR graph can be viewed as an average of fixed-order VGB trees. Let $\mathsf{Perm}_n$ denote the set of coordinate orders, i.e., permutations of $[n]$. For $\omega=(\omega_1,\dots,\omega_n)\in\mathsf{Perm}_n$, define the fixed-order state set
\begin{equation*}
    \calZ_\omega := \mbra{ z\in\calZ \mid R(z)=\mbra{\omega_1,\dots,\omega_k} \text{ for some } k\in\mbra{0,\dots,n} }.
\end{equation*}
This is a tree whose forward move at depth $k$ reveals coordinate $\omega_{k+1}$. On this tree, define the edge weight
\begin{equation*}
    f_\omega(z,z^{\omega_{k+1}\leftarrow a}) := M_{\text{ref}}(x,z^{\omega_{k+1}\leftarrow a}) \widehat V(x,z^{\omega_{k+1}\leftarrow a}).
\end{equation*}

\begin{lemma}[Fixed-order tree conductance]
\label{lem:fixed_order_conductance}
We call
\begin{equation*}
    D_\omega(z):=\sum_{u\sim z}f_\omega(z,u), \qquad
    \mu_\omega(z):=\frac{D_\omega(z)}{\sum_{z'}D_\omega(z')}.
\end{equation*}
the weighted degree and the degree-normalized stationary law, respectively. For any nonempty set $A\subsetneq\calZ_\omega$, define its set conductance and the tree conductance by
\begin{equation*}
    \Phi_\omega(A):=\frac{\sum_{z\in A,\ u\notin A}f_\omega(z,u)}{\sum_{z\in A}D_\omega(z)}, \qquad
    \Phi_\omega:=\inf_{\substack{A\subseteq\calZ_\omega\\0<\mu_\omega(A)\le 1/2}}\Phi_\omega(A).
\end{equation*}
Assuming \cref{ass:aoar_kappa_accuracy}, the random walk on each fixed-order tree $\calZ_\omega$ satisfies
\begin{equation*}
    \Phi_\omega \ge \frac{1}{4\kappa^2 n}.
\end{equation*}
Furthermore, its stationary leaf mass satisfies
\begin{equation*}
    \mu_\omega(\calY) = \sum_{y\in\calY}\mu_\omega(y) \ge \frac{1}{4\kappa n}.
\end{equation*}
\end{lemma}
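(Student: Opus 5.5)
The plan is to adapt the Sinclair--Jerrum tree-conductance argument, as used by \citet{rohatgi2025taming}, to the fixed-order tree $\calZ_\omega$. It rests on one local estimate. For $z\in\calZ_\omega$ at depth $k$, the forward (child) edges reveal $\omega_{k+1}$, and by \cref{lem:aoar_partition} their exact masses satisfy $\sum_a U^*(x,z^{\omega_{k+1}\leftarrow a})=U^*(x,z)$. Since each child weight is $M_{\text{ref}}\widehat V$ of the child, \cref{ass:aoar_kappa_accuracy} (with exactness at leaves under terminal anchoring) gives total child weight in $[\kappa^{-1},\kappa]\,U^*(x,z)$. The single parent edge has weight $\widehat U(x,z)=M_{\text{ref}}(x,z)\widehat V(x,z)\in[\kappa^{-1},\kappa]\,U^*(x,z)$. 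Hence
\[
\kappa^{-1}U^*(x,z)\le D_\omega(z)\le 2\kappa\, U^*(x,z),
\]
with only one of the two terms present at the root and at leaves.

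The second ingredient is a subtree-mass bound. Let $T_z$ be the subtree below $z$. The descendants of $z$ at any fixed depth $\ell\ge k$ have completion sets partitioning $\calC(z)$, so their $U^*$ values sum to $U^*(x,z)$. Summing over the at most $n-k+1\le n+1$ depths gives
\[
\sum_{u\in T_z}D_\omega(u)\le 2\kappa(n+1)\,U^*(x,z)\le 2\kappa^2(n+1)\,\widehat U(x,z),
\]
where the last step uses $U^*(x,z)\le\kappa\widehat U(x,z)$. The right-hand side is $2\kappa^2(n+1)$ times the weight of the edge joining $z$ to its parent.

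For the conductance bound, fix $A$ with $0<\mu_\omega(A)\le1/2$. Because a ratio of sums is at least the minimum of the ratios, it suffices to treat each connected component $A'$ of $A$ separately; edges leaving $A'$ leave $A$, since distinct components are not adjacent. In the first case $A'$ avoids the root. Then $A'$ has a unique top vertex $z$ and lies inside $T_z$, and the edge from $z$ to its parent crosses the cut. The subtree bound therefore gives $\Phi_\omega(A')\ge 1/(2\kappa^2(n+1))$. In the second case $A'$ contains the root. Then $\calZ_\omega\setminus A'$ is a disjoint union of subtrees $T_{z_i}$ hanging off boundary edges $\{p(z_i),z_i\}$. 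By the subtree bound, the mass of the complement is at most $2\kappa^2(n+1)\sum_i\widehat U(x,z_i)$, which is $2\kappa^2(n+1)$ times the cut weight. Since $\mu_\omega(A')\le\mu_\omega(A)\le1/2$, the mass of $A'$ is at most that of its complement, so again $\Phi_\omega(A')\ge1/(2\kappa^2(n+1))\ge 1/(4\kappa^2n)$.

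The main care is needed at this reduction step. One must check that working with components does not lose the $\mu\le1/2$ condition in the root case, which holds because we only used $\mu_\omega(A')\le 1/2$ and that is inherited from $A$. One must also check that boundary effects at the root and leaves only cost the constant, which is absorbed by $(n+1)\le 2n$.

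For the leaf mass, each leaf $y$ has a single parent edge of weight $\base(y\mid x)\tau(x,y)$ by terminal anchoring. Summing over leaves gives $\sum_{y}D_\omega(y)=Z(x)$. For the total mass, the depth-$k$ layer of $\calZ_\omega$ is exactly $\{z: R(z)=\{\omega_1,\dots,\omega_k\}\}$, whose completion sets partition $\calY$, so $\sum_{\text{depth }k}U^*=Z(x)$. Combined with $D_\omega\le2\kappa U^*$, this bounds the total mass by $2\kappa(n+1)Z(x)$. Hence
\[
\mu_\omega(\calY)\ge\frac{1}{2\kappa(n+1)}\ge\frac{1}{4\kappa n}.
\]
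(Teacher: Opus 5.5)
Your proposal is correct and follows essentially the same route as the paper. Both arguments use the pointwise bound $D_\omega\le 2\kappa U^*$, the subtree-volume bound against the single parent edge, a connected-component decomposition, and the leaf-mass ratio $Z(x)/(2\kappa(n+1)Z(x))$. The only cosmetic difference is in the root-containing case. You decompose $A$ first and write the complement of its root component as a union of hanging subtrees. The paper instead passes to $A^c$ via $\Phi_\omega(A)\ge\Phi_\omega(A^c)$ and reuses the root-avoiding case.
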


\begin{proof}
Write
\begin{equation*}
    \widehat U(x,z) := M_{\text{ref}}(x,z)\widehat V(x,z).
\end{equation*}
For non-leaf states, \cref{ass:aoar_kappa_accuracy} gives
\begin{equation*}
    \kappa^{-1}U^*(x,z) \le \widehat U(x,z) \le \kappa U^*(x,z).
\end{equation*}
On leaves, terminal anchoring gives $\widehat U(x,y)=U^*(x,y)$.

For a fixed-order tree, the weighted degree $D_\omega(z)$ is the sum of the parent edge and the child edges incident to $z$. If $k=k(z)$, then
\begin{equation*}
    D_\omega(z)=\mathbf{1}\mbra{k>0}\widehat U(x,z)+\mathbf{1}\mbra{k<n}\sum_{a\in\calV}\widehat U(x,z^{\omega_{k+1}\leftarrow a}).
\end{equation*}
Using $\widehat U\le \kappa U^*$ and the partition identity for the next reveal coordinate,
\begin{align*}
\label{eq:fixed_order_pointwise_degree_bound}
    D_\omega(z)
    &\le \kappa \mathbf{1}\mbra{k>0}U^*(x,z)+\kappa \mathbf{1}\mbra{k<n}\sum_{a\in\calV}U^*(x,z^{\omega_{k+1}\leftarrow a}) \nonumber \\
    &= \kappa\bigl(\mathbf{1}\mbra{k>0}+\mathbf{1}\mbra{k<n}\bigr)U^*(x,z) \le 2\kappa U^*(x,z).
\end{align*}

Let $D_{\omega,\mathrm{tot}}:=\sum_{z\in\calZ_\omega}D_\omega(z)$ denote the total unnormalized stationary weight of the fixed-order tree. Then
\begin{equation*}
    D_{\omega,\mathrm{tot}}\le 2\kappa \sum_{k=0}^n \sum_{z:\,k(z)=k,\ z\in\calZ_\omega} U^*(x,z).
\end{equation*}
For each fixed depth in a fixed-order tree, the compatible states partition the leaves, so
\begin{equation*}
    \sum_{z:\,k(z)=k,\ z\in\calZ_\omega} U^*(x,z) = Z(x).
\end{equation*}
Hence $D_{\omega,\mathrm{tot}}\le 2\kappa(n+1)Z(x)$. Let
\begin{equation*}
    D_{\omega,\mathrm{leaf}}:=\sum_{y\in\calY}D_\omega(y)
\end{equation*}
denote the total degree of the leaf layer. In the fixed-order tree, each leaf $y$ has exactly one incident non-self edge, namely its parent edge, whose weight is $\widehat U(x,y)$. From the terminal anchoring in \cref{ass:aoar_kappa_accuracy},
\begin{equation*}
    D_{\omega,\mathrm{leaf}}=\sum_{y\in\calY}U^*(x,y)=Z(x).
\end{equation*}
Since the stationary distribution of the fixed-order weighted walk is degree-normalized,
\begin{equation*}
    \mu_\omega(\calY)=\frac{D_{\omega,\mathrm{leaf}}}{D_{\omega,\mathrm{tot}}}\ge \frac{Z(x)}{2\kappa(n+1)Z(x)}=\frac{1}{2\kappa(n+1)}\ge \frac{1}{4\kappa n},
\end{equation*}
where the last inequality uses $n\ge 1$.

We next lower bound the conductance. Let $A$ be a connected subset of the fixed-order tree that does not contain the root. Let $v\in A$ be the unique minimum-depth node in $A$. Since $A$ does not contain the root, $v$ has a parent $p(v)$. By the minimality of $k(v)$, we have $p(v)\notin A$, and hence the edge $\mbra{p(v),v}$ leaves $A$. By \cref{ass:aoar_kappa_accuracy}, its weight is
\begin{equation*}
    \widehat U(x,v) \ge \kappa^{-1}U^*(x,v).
\end{equation*}
For $\ell \in \mbra{0,\dots,n-k(v)}$, let
\begin{equation*}
    C_\omega^\ell(v):=\mbra{u\in\calZ_\omega \mid u \text{ is a descendant of } v,\ k(u)=k(v)+\ell}
\end{equation*}
be the set of descendants of $v$ at relative depth $\ell$. Define the total weighted degree of the descendant subtree of $v$ by
\begin{equation*}
    D_{\omega,\mathrm{sub}}(v):=\sum_{\ell=0}^{n-k(v)} \sum_{u\in C_\omega^\ell(v)} D_\omega(u).
\end{equation*}
Applying the pointwise degree bound above to the descendant subtree of $v$, we have
\begin{equation*}
    D_{\omega,\mathrm{sub}}(v)\le 2\kappa \sum_{\ell=0}^{n-k(v)} \sum_{u\in C_\omega^\ell(v)} U^*(x,u).
\end{equation*}
For every descendant level $\ell$,
\begin{align*}
    \sum_{u\in C_\omega^\ell(v)}U^*(x,u) &=
    \sum_{u\in C_\omega^\ell(v)}\sum_{y\in\calC(u)}\base(y\mid x)\tau(x,y) = \sum_{y\in\calC(v)}\base(y\mid x)\tau(x,y) = U^*(x,v),
\end{align*}
because the completion sets $\mbra{\calC(u):u\in C_\omega^\ell(v)}$ partition $\calC(v)$. Therefore
\begin{equation*}
    D_{\omega,\mathrm{sub}}(v)\le 2\kappa(n-k(v)+1)U^*(x,v)\le 2\kappa(n+1)U^*(x,v).
\end{equation*}
Since $A$ is connected in the tree and $v$ is its unique minimum-depth node, $A$ is contained in the descendant subtree of $v$. Hence $\sum_{z\in A}D_\omega(z)\le D_{\omega,\mathrm{sub}}(v)$. The cut numerator $\sum_{z\in A,\ u\notin A}f_\omega(z,u)$ in the tree conductance $\Phi_\omega(A)$ contains the single boundary edge from $v$ to its parent, because $v\in A$ and $p(v)\notin A$. Therefore
\begin{equation*}
    \sum_{z\in A,\ u\notin A}f_\omega(z,u)\ge f_\omega(v,p(v))=f_\omega(p(v),v)=\widehat U(x,v)\ge \kappa^{-1}U^*(x,v).
\end{equation*}
Combining this boundary lower bound with the descendant-subtree volume upper bound gives
\begin{equation*}
    \Phi_\omega(A):= \frac{\sum_{z\in A,\ u\notin A}f_\omega(z,u)}{\sum_{z\in A}D_\omega(z)} \ge \frac{\kappa^{-1}U^*(x,v)}{2\kappa(n+1)U^*(x,v)}=\frac{1}{2\kappa^2(n+1)}\ge \frac{1}{4\kappa^2 n}.
\end{equation*}
This proves the bound for connected sets that do not contain the root.

If $A$ is disconnected and does not contain the root, decompose it into connected components $A_1,\dots,A_m$. Let
\begin{equation*}
    B_r:=\sum_{z\in A_r,\ u\notin A_r}f_\omega(z,u), \qquad V_r:=\sum_{z\in A_r}D_\omega(z).
\end{equation*}
Since the fixed-order graph is a tree, there are no edges between distinct components of $A$, and therefore
\begin{equation*}
    \Phi_\omega(A)=\frac{\sum_{r=1}^m B_r}{\sum_{r=1}^m V_r}=\sum_{r=1}^m \frac{V_r}{\sum_{s=1}^m V_s}\Phi_\omega(A_r)\ge \min_{r\in\lbra{m}}\Phi_\omega(A_r).
\end{equation*}
If an admissible set $A$ contains the root, then $A^c$ does not. Since $\mu_\omega(A)\le 1/2$, we have $\sum_{z\in A}D_\omega(z)\le \sum_{z\in A^c}D_\omega(z)$, and therefore $\Phi_\omega(A)\ge \Phi_\omega(A^c)$. Applying the previous argument to the connected components of $A^c$ proves $\Phi_\omega\ge 1/(4\kappa^2 n)$.
\end{proof}

The next two statements show that conductance survives summing weighted graphs, and that canonical balanced AOAR is exactly such a sum over fixed coordinate orders.

\begin{lemma}[Conductance of a sum of weighted graphs]
\label{lem:sum_graph_conductance}
Let $G_1,\dots,G_m$ be weighted graphs on the same vertex set with nonnegative symmetric weights $f_{G_1},\dots,f_{G_m}$. Let $G$ be the weighted graph with edge weight
\begin{equation*}
    f_G(z,u) := \sum_{r=1}^m f_{G_r}(z,u).
\end{equation*}
Then for every set $A$,
\begin{equation*}
    \Phi_G(A) \ge \min_{r\in\lbra{m}}\Phi_{G_r}(A).
\end{equation*}
Consequently,
\begin{equation*}
    \Phi_G \ge \min_{r\in\lbra{m}}\Phi_{G_r}.
\end{equation*}
\end{lemma}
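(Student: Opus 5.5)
The plan is to apply the mediant inequality directly, since both the cut weight and the volume of $A$ in $G$ are sums over $r$ of the corresponding quantities in $G_r$. Fix a set $A$ and, for each $r\in\lbra{m}$, write
\begin{equation*}
    B_r(A):=\sum_{z\in A,\ u\notin A}f_{G_r}(z,u),\qquad V_r(A):=\sum_{z\in A}D_{G_r}(z),\qquad D_{G_r}(z):=\sum_{u}f_{G_r}(z,u).
\end{equation*}
By linearity of the edge weights, the cut weight of $A$ in $G$ is $\sum_r B_r(A)$. The weighted degree is $D_G(z)=\sum_r D_{G_r}(z)$, so the volume of $A$ in $G$ is $\sum_r V_r(A)$.

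Next I would handle the degenerate summands. Restrict attention to indices $r$ with $V_r(A)>0$. If $V_r(A)=0$, then every edge of $G_r$ incident to $A$ has zero weight, so $B_r(A)=0$ and that index contributes nothing to either sum. The minimum should then be read over the indices for which $\Phi_{G_r}(A)$ is defined; this convention should be stated explicitly.

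On the remaining indices I would write
\begin{equation*}
    \Phi_G(A)=\frac{\sum_r B_r(A)}{\sum_r V_r(A)}=\sum_r \frac{V_r(A)}{\sum_s V_s(A)}\,\Phi_{G_r}(A)\ge \min_r \Phi_{G_r}(A).
\end{equation*}
This is exactly the convex-combination (mediant) argument already used for disconnected components in the proof of \cref{lem:fixed_order_conductance}.

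For the global statement, I would take an admissible $A$ for $G$, meaning one with $0<\mu_G(A)\le 1/2$, and bound $\Phi_G(A)\ge\min_r\Phi_{G_r}(A)$. The main obstacle is then that $A$ need not be admissible for each $G_r$: $\mu_{G_r}(A)$ may exceed $1/2$, in which case $\Phi_{G_r}(A)$ is not directly controlled by $\Phi_{G_r}$.

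To handle this, I would use symmetry of the cut. The cut weight $B_r(A)$ equals $B_r(A^c)$, so if $V_r(A)\ge V_r(A^c)$ then
\begin{equation*}
    \Phi_{G_r}(A)=\frac{B_r(A^c)}{V_r(A)}\le\Phi_{G_r}(A^c),
\end{equation*}
which goes the wrong way. So I would instead bound directly:
\begin{equation*}
    B_r(A)\ge \Phi_{G_r}\min\{V_r(A),V_r(A^c)\}.
\end{equation*}
This holds in both cases, since whichever of $A,A^c$ has smaller $G_r$-volume is admissible for $G_r$.

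Summing over $r$ gives
\begin{equation*}
    B_G(A)\ge \min_r\Phi_{G_r}\sum_r\min\{V_r(A),V_r(A^c)\}.
\end{equation*}
This only lower bounds the cut by a sum of minima, not by $\min\{\sum_r V_r(A),\sum_r V_r(A^c)\}$. So the stated consequence holds cleanly when all $G_r$ share proportional stationary laws, or more generally up to the appropriate "min of volumes" definition of conductance. Using $\min\{V(A),V(A^c)\}$ in the definition, the sum-of-minima bound combined with $\sum_r\min\{a_r,b_r\}\le\min\{\sum a_r,\sum b_r\}$ fails in the needed direction.

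I therefore expect the honest route to be: prove the per-set inequality as above, and apply it in \cref{thm:canonical_geometric_balanced_mixing} via the fixed-order trees. There the per-set bounds of \cref{lem:fixed_order_conductance} (subtree argument on connected components, including complements) hold for every set not containing the root and for complements, which sidesteps the admissibility mismatch.
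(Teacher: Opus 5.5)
Your per-set argument is the paper's own: it also writes $f_G=\sum_r f_{G_r}$ and $D_G=\sum_r D_{G_r}$ and applies $\sum_r a_r/\sum_r b_r\ge\min_r a_r/b_r$. Your handling of indices with $V_r(A)=0$ is a small but real improvement. The paper's elementary fact assumes $b_r>0$, and in the application this fails whenever $A$ misses some tree $\calZ_\omega$.

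For the ``consequently'' part, the paper's entire argument is the sentence ``taking the infimum over admissible $A$ gives the conductance bound.'' That is exactly the step you flagged, since $\mu_G(A)\le 1/2$ does not give $\mu_{G_r}(A)\le 1/2$. You are right not to force it, and in fact the claim is false.

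Take vertices $1,2,3,4$ with edges $\{1,2\},\{2,3\},\{3,4\}$. Set $f_{G_1}(1,2)=1$ and $f_{G_1}(2,3)=f_{G_1}(3,4)=\epsilon$. Let $G_2$ be the mirror image: $f_{G_2}(3,4)=1$ and $f_{G_2}(1,2)=f_{G_2}(2,3)=\epsilon$, with $0<\epsilon<1$. Checking all admissible sets gives $\Phi_{G_1}=\Phi_{G_2}=1/3$, attained at $\{3,4\}$ and $\{1,2\}$ respectively. Yet in $G$ the set $A=\{1,2\}$ has $\mu_G(A)=1/2$ and $\Phi_G(A)=2\epsilon/(2+4\epsilon)\to 0$.

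Because $\inf_A B(A)/\min\{V(A),V(A^c)\}$ equals the admissible-set infimum, you should drop the hedge that the consequence holds ``up to the appropriate min-of-volumes definition.'' Of the conditions you list, only equality of all the $\mu_{G_r}$ rescues it.

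Your repair for the downstream use is correct once the complement step is made precise. The proof of \cref{lem:fixed_order_conductance} (not its statement) gives $\Phi_\omega(S)\ge 1/(4\kappa^2 n)$ for every root-free $S$, with no admissibility requirement. Now take a $G$-admissible $A$.
\begin{itemize}
\item If $\varnothing\notin A$, apply your per-set inequality directly.
\item If $\varnothing\in A$, use $\mathrm{vol}_G(A)\le\mathrm{vol}_G(A^c)$ to get $\Phi_G(A)\ge\Phi_G(A^c)$. Then apply the per-set inequality to the root-free set $A^c$.
\end{itemize}
The complement must be taken once, in the summed graph where admissibility is known. Taking it tree by tree (your ``including complements'') would reintroduce the mismatch.

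The same patch is needed in \cref{thm:canonical_geometric_balanced_mixing}. There the infimum of $(m/M)^\lambda\Phi_{\mathrm{BAL}}(A)$ is taken over $\mu_{\mathrm{BG}}$-admissible sets rather than $\mu_{\mathrm{BAL}}$-admissible ones.
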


\begin{proof}
Write
\begin{equation*}
    D_{G_r}(z):=\sum_{u\sim_{G_r} z} f_{G_r}(z,u), \qquad D_G(z):=\sum_{u\sim_G z} f_G(z,u)=\sum_{r=1}^m D_{G_r}(z).
\end{equation*}
For any set $A$,
\begin{equation*}
    \Phi_G(A)=\frac{\sum_{z\in A,\ u\notin A}f_G(z,u)}{\sum_{z\in A}D_G(z)}.
\end{equation*}
Using $f_G=\sum_{r=1}^m f_{G_r}$ and $D_G=\sum_{r=1}^m D_{G_r}$, this becomes
\begin{equation*}
    \Phi_G(A)=\frac{\sum_{r=1}^m\sum_{z\in A,\ u\notin A} f_{G_r}(z,u)}{\sum_{r=1}^m\sum_{z\in A}D_{G_r}(z)}.
\end{equation*}
We use the elementary fact that for $a_r\ge 0$ and $b_r>0$,
\begin{equation*}
    \frac{\sum_{r=1}^m a_r}{\sum_{r=1}^m b_r}\ge \min_{r\in\lbra{m}}\frac{a_r}{b_r}.
\end{equation*}
Therefore
\begin{equation*}
    \Phi_G(A)\ge \min_{r\in\lbra{m}}\frac{\sum_{z\in A,\ u\notin A} f_{G_r}(z,u)}{\sum_{z\in A}D_{G_r}(z)}=\min_{r\in\lbra{m}}\Phi_{G_r}(A).
\end{equation*}
Taking the infimum over admissible $A$ gives the conductance bound.
\end{proof}

\begin{proposition}[Canonical balanced AOAR as a sum of fixed-order VGB trees]
\label{prop:canonical_as_sum_of_trees}
Consider the canonical balanced AOAR-VGB weighted graph whose depth-$k$ parent-child edge coefficient is
\begin{equation*}
    g_{k+1}=\frac{1}{\binom{n-1}{k}}, \qquad 0\le k\le n-1.
\end{equation*}
Then, the sum of the fixed-order tree graphs over all $\omega\in\mathsf{Perm}_n$ is exactly $(n-1)!$ times this canonical balanced AOAR-VGB weighted graph.
\end{proposition}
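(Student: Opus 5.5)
The plan is to compare the two weighted graphs edge by edge. Both live on the reachable masked states. Every edge of either graph is a single-site AOAR edge $\{z, z^{j\leftarrow a}\}$ with $j\notin R(z)$, and for such an edge the canonical balanced weight is
\begin{equation*}
    f_{\mathrm{BAL}}(z,z^{j\leftarrow a}) = g_{k+1}\,\widehat U(x,z^{j\leftarrow a}), \qquad k=k(z), \qquad \widehat U(x,u):=M_{\text{ref}}(x,u)\widehat V(x,u),
\end{equation*}
with $g_{k+1}=\binom{n-1}{k}^{-1}$ by \cref{cor:canonical_balanced_aoar}. In the fixed-order tree $\calZ_\omega$, the same pair is an edge exactly when $R(z)=\{\omega_1,\dots,\omega_k\}$ and $j=\omega_{k+1}$. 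In that case its weight is $f_\omega(z,z^{j\leftarrow a})=\widehat U(x,z^{j\leftarrow a})$, and otherwise the weight is $0$. It therefore suffices to show, for each such edge,
\begin{equation*}
    \sum_{\omega\in\mathsf{Perm}_n} f_\omega(z,z^{j\leftarrow a}) = (n-1)!\, g_{k+1}\,\widehat U(x,z^{j\leftarrow a}).
\end{equation*}
We should also note that no other pairs carry weight in either graph. The tree edges are by definition single-site reveal edges, so this is immediate.

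The key step is a count. We need the number of permutations $\omega$ whose first $k$ entries form the set $R(z)$, in any order, and whose $(k+1)$-st entry is $j$. There are $k!$ ways to order $R(z)$ in the first $k$ positions, one choice for position $k+1$, and $(n-k-1)!$ ways to order the remaining coordinates. The count is therefore $k!(n-k-1)!$. Every such $\omega$ contributes the same weight $\widehat U(x,z^{j\leftarrow a})$, and this weight does not depend on $\omega$.

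The last step is the identity $k!(n-k-1)! = (n-1)!/\binom{n-1}{k}$, which is immediate from the definition of the binomial coefficient. Hence the summed weight equals $(n-1)!\,g_{k+1}\,\widehat U(x,z^{j\leftarrow a})$, as required. This holds for every $0\le k\le n-1$, including the root edges ($k=0$, count $(n-1)!$, $g_1=1$) and the leaf edges ($k=n-1$, count $(n-1)!$, $g_n=1/\binom{n-1}{n-1}$... indeed $g_n=1$).

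There is no real obstacle here. The only subtle point is the reachability convention. Tree edges should be taken over reachable states, matching $C_{\mathrm{AOAR}}$, so that unreachable children contribute zero on both sides. This is harmless because $\widehat U$ vanishes there whenever $M_{\text{ref}}=0$.
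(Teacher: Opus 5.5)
Your proposal is correct and follows the paper's proof essentially verbatim: you compare edge by edge, count the $k!(n-k-1)!$ orders $\omega$ whose first $k$ entries are $R(z)$ and whose $(k+1)$-st entry is $j$, and use $k!(n-k-1)! = (n-1)!/\binom{n-1}{k} = (n-1)!\,g_{k+1}$. Your extra remarks on the root and leaf edges and on the reachability convention are harmless details that the paper leaves implicit.
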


\begin{proof}
Consider an AOAR edge between a depth-$k$ state $z$ and a child $z^{j\leftarrow a}$, where $j\notin R(z)$. This edge appears in exactly those fixed orders $\omega$ for which the first $k$ coordinates are precisely $R(z)$, in any order, and the $(k+1)$-st coordinate is $j$. The number of such permutations is
\begin{equation*}
    k!(n-k-1)!.
\end{equation*}
Therefore, in the sum of fixed-order tree graphs, the coefficient multiplying $M_{\text{ref}}(x,z^{j\leftarrow a})\widehat V(x,z^{j\leftarrow a})$ is
\begin{equation*}
    k!(n-k-1)! = \frac{(n-1)!}{\binom{n-1}{k}}.
\end{equation*}
This is exactly $(n-1)!g_{k+1}$ for the canonical balanced coefficient in the proposition.
\end{proof}

Combining the fixed-order conductance bound with the sum-of-graphs representation gives the main canonical balanced mixing guarantee.

\begin{theorem}[Mixing guarantee for canonical balanced AOAR-VGB]
\label{thm:canonical_balanced_mixing}
Assume \cref{ass:aoar_kappa_accuracy}. Let $P_{\mathrm{BAL}}$ be the canonical balanced AOAR-VGB lazy kernel, and let $\mu_{\mathrm{BAL}}$ be its stationary distribution. For a set $A$, write
\begin{equation*}
    Q_{\mathrm{BAL}}(A,A^c):=\sum_{z\in A}\mu_{\mathrm{BAL}}(z)P_{\mathrm{BAL}}(z,A^c),
    \qquad
    \Phi(P_{\mathrm{BAL}})
    :=
    \min_{\substack{A\subseteq\calZ_{\mathrm{reach}}\\0<\mu_{\mathrm{BAL}}(A)\le1/2}}
    \frac{Q_{\mathrm{BAL}}(A,A^c)}{\mu_{\mathrm{BAL}}(A)}
\end{equation*}
for conductance. We write
\begin{equation*}
    \gamma(P_{\mathrm{BAL}}):=1-\max\{|\lambda|:\lambda\in \Lambda(P_{\mathrm{BAL}}),\ \lambda\ne1\}
\end{equation*}
for the spectral gap, where $\Lambda (P_{\mathrm{BAL}})$ denotes the set of eigenvalues of $P_{\mathrm{BAL}}$. Then
\begin{align*}
    \Phi(P_{\mathrm{BAL}})&=\Omega\sbra{\frac{1}{\kappa^2 n}},&
    \gamma(P_{\mathrm{BAL}})&=\Omega\sbra{\frac{1}{\kappa^4 n^2}},&
    \mu_{\mathrm{BAL}}(\calY)&=\Omega\sbra{\frac{1}{\kappa n}}.
\end{align*}
Moreover, the stationary law conditioned on the leaves is exactly
\begin{equation*}
    \mu_{\mathrm{BAL}}(\cdot\mid\calY) = \pi^*(\cdot\mid x).
\end{equation*}
Consequently, if $Z_T$ is the state after $T$ lazy balanced AOAR-VGB Markov chain steps initialized at $Z_0 \in \calZ$ with $\mu_{\mathrm{BAL}}(Z_0) > 0$, then for any $\delta>0$,
\begin{equation*}
    T
    =
    O\!\left(
        \kappa^4 n^2
        \log
        \frac{\kappa n}{\delta \mu_{\mathrm{BAL}}(Z_0)}
    \right)
\end{equation*}
suffices to guarantee
\begin{equation*}
    d_{\mathrm{TV}}
    \left(
        \mathcal L(Z_T\mid Z_T\in\calY),
        \pi^*(\cdot\mid x)
    \right)
    \le
    \delta \quad \text{and} \quad \mathbb{P}[Z_T \in \calY] =\Omega\sbra{\frac{1}{\kappa n}} \footnote{$\mathcal{L}(Z_T)$ denote the law of $Z_T$ and $d_{\mathrm{TV}}$ denote the total variation distance.}.
\end{equation*}
For example, if $Z_0 = \varnothing$, then \begin{equation*}
    T
    =
    O\!\left(
        \kappa^4 n^2
        \log
        \frac{\kappa n}{\delta}
    \right).
\end{equation*}

\end{theorem}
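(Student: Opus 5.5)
The plan is to obtain every claim from the decomposition in \cref{prop:canonical_as_sum_of_trees}. Write $G$ for the canonical balanced AOAR weighted graph and $G_\omega$ for the fixed-order tree on $\calZ_\omega$, extended by zero weights to all of $\calZ_{\mathrm{reach}}$. By \cref{prop:canonical_as_sum_of_trees} we have $(n-1)!\,f_G=\sum_{\omega}f_\omega$. Rescaling all edge weights by the constant $(n-1)!$ changes neither the walk nor its stationary law. Hence $D_G(z)\propto\sum_\omega D_\omega(z)$ for every $z$.

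\textbf{Stationary quantities.}
\begin{itemize}
\item The exact leaf law follows directly from \cref{thm:balanced_weighted_graph} with terminal anchoring.
\item For the leaf mass, note that $\mu_{\mathrm{BAL}}(\calY)=\sum_\omega D_{\omega,\mathrm{leaf}}\big/\sum_\omega D_{\omega,\mathrm{tot}}$.
\item By the proof of \cref{lem:fixed_order_conductance}, each $\omega$ has $D_{\omega,\mathrm{leaf}}=Z(x)$ and $D_{\omega,\mathrm{tot}}\le 2\kappa(n+1)Z(x)$.
\item A ratio of sums is at least the minimum of the ratios, which gives $\mu_{\mathrm{BAL}}(\calY)\ge 1/(4\kappa n)$.
\end{itemize}

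\textbf{Conductance.} First I would convert between the weighted-graph conductance and the chain conductance. For the lazy walk, $Q_{\mathrm{BAL}}(A,A^c)/\mu_{\mathrm{BAL}}(A)=\tfrac12\,\Phi_G(A)$, where $\Phi_G(A)$ is the edge-boundary over volume ratio.

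The delicate point is the admissibility constraint. \cref{lem:sum_graph_conductance} gives $\Phi_G(A)\ge\min_\omega\Phi_\omega(A)$ set by set. However, a set $A$ with $\mu_{\mathrm{BAL}}(A)\le 1/2$ need not satisfy $\mu_\omega(A)\le1/2$ for every $\omega$, and some $G_\omega$ may not even touch $A$. I would handle this in two steps.

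\begin{enumerate}
\item Treat the trees with zero volume on $A$ by dropping them from the sum, since they contribute $0/0$.
\item Avoid the constraint by using the symmetric quantity $\Psi(A):=\mathrm{cut}(A)/\min\{\mathrm{vol}(A),\mathrm{vol}(A^c)\}$. The proof of \cref{lem:fixed_order_conductance} controls every connected root-free set. Whichever of $A\cap\calZ_\omega$ or $\calZ_\omega\setminus A$ avoids the root gives
\[
\mathrm{cut}_\omega(A)\ge \frac{\min\{\mathrm{vol}_\omega(A),\mathrm{vol}_\omega(A^c)\}}{4\kappa^2 n}.
\]
\item Sum over $\omega$ and use $\sum_\omega\min\{a_\omega,b_\omega\}\le\min\{\sum a_\omega,\sum b_\omega\}$ the wrong way. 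Since that inequality points the wrong direction, I will instead bound per $\omega$ by the root-free side only.
\end{enumerate}

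Concretely, in tree $\omega$ let $S_\omega$ be the side of the partition $\{A\cap\calZ_\omega,\calZ_\omega\setminus A\}$ that misses the root. Then
\[
\mathrm{cut}_\omega\ge \frac{\mathrm{vol}_\omega(S_\omega)}{4\kappa^2 n}.
\]
The root is a single common vertex of every tree, so $S_\omega$ is the side of $A$ or $A^c$ not containing $\varnothing$, and this choice is the same side for all $\omega$. Summing over $\omega$ then gives
\[
\Phi_G(A)\ge\frac{\min\{\mathrm{vol}(A),\mathrm{vol}(A^c)\}}{4\kappa^2 n\,\mathrm{vol}(A)}.
\]
With $\mathrm{vol}(A)\le\mathrm{vol}(A^c)$, this yields $\Phi(P_{\mathrm{BAL}})\ge 1/(8\kappa^2 n)$. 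This uniform choice of side via the shared root is the main obstacle, and it is what makes the argument go through.

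\textbf{Spectral gap and mixing.} Cheeger's inequality gives $1-\lambda_2\ge\Phi^2/2$. Laziness makes all eigenvalues nonnegative, so $\gamma=\Omega(1/(\kappa^4n^2))$.

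The standard $L^2$ bound then gives
\[
\|\mathcal L(Z_T)-\mu_{\mathrm{BAL}}\|_{\mathrm{TV}}\le \frac{e^{-\gamma T}}{2\sqrt{\mu_{\mathrm{BAL}}(Z_0)}}.
\]
To pass to the leaf-conditioned law, write $\varepsilon$ for this TV distance. Conditioning on $\calY$ inflates the error by at most $2\varepsilon/\mu_{\mathrm{BAL}}(\calY)$. We need this to be at most $\delta$, and also need $\varepsilon\le\mu_{\mathrm{BAL}}(\calY)/2$ so that $\mathbb P[Z_T\in\calY]=\Omega(1/(\kappa n))$. Both hold once $T\gtrsim\gamma^{-1}\log\big(\kappa n/(\delta\mu_{\mathrm{BAL}}(Z_0))\big)$.

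For the root start, $\mu_{\mathrm{BAL}}(\varnothing)$ has numerator $D(\varnothing)$. Its edges satisfy $\sum\widehat U\ge\kappa^{-1}Z\cdot n\,g_1$, while the total volume is at most $O(\kappa n^2 Z)$ by the layer computation. Hence $\mu_{\mathrm{BAL}}(\varnothing)\ge 1/\mathrm{poly}(\kappa n)$, and the logarithm becomes $O(\log(\kappa n/\delta))$.
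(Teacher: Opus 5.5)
Your proof is correct. Outside one step it coincides with the paper's: the sum-of-trees decomposition, the ratio-of-sums bound $\mu_{\mathrm{BAL}}(\calY)\ge 1/(4\kappa n)$, Cheeger's inequality, the start-state $L^2$ bound, the passage to the leaf-conditioned law, and the root-mass bound are all the same. For the root mass you work directly on the AOAR graph rather than tree by tree, and your numbers give $\Omega(1/(\kappa^2 n))$, as in the paper.

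The real difference is how conductance is transferred from the trees to the AOAR graph. The paper cites \cref{lem:sum_graph_conductance}, i.e.\ $\Phi_G(A)\ge\min_\omega\Phi_\omega(A)$, and then takes the infimum. But each $\Phi_\omega$ is an infimum only over sets with $\mu_\omega(A)\le 1/2$. A set with $\mu_{\mathrm{BAL}}(A)\le 1/2$ may have $\mu_\omega(A)>1/2$, or zero $\omega$-volume, so that last step does not follow. In fact the lemma's ``consequently'' is false for general graphs: on the path $1$--$2$--$3$--$4$, edge weights $(N,1,1)$ and $(1,1,N)$ each give conductance $1/3$, while their sum has $\Phi(\{1,2\})=1/(N+2)$.

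Your argument is exactly what closes this hole. The additive bound $\mathrm{cut}_\omega(S)\ge\mathrm{vol}_\omega(S)/(4\kappa^2 n)$ holds for every root-free $S\subseteq\calZ_\omega$, with no volume constraint. Because $\varnothing$ lies in every tree, the same side $S\in\{A,A^c\}$ is root-free for all $\omega$. Summing over $\omega$ then gives $\mathrm{cut}_G(A)\ge \min\{\mathrm{vol}_G(A),\mathrm{vol}_G(A^c)\}/(4\kappa^2 n)$, hence $\Phi(P_{\mathrm{BAL}})\ge 1/(8\kappa^2 n)$. The paper's route is more modular but not valid as written at that step; yours relies specifically on the shared-root structure and is rigorous.

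When you write it up, drop the false start in your items 1--3. In additive form, trees with zero volume on $S$ contribute $0\ge 0$ and need no special treatment, and the symmetric quantity $\Psi$ plays no role.
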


\begin{proof}
By \cref{prop:canonical_as_sum_of_trees}, the canonical balanced AOAR weighted graph is a sum of fixed-order VGB tree graphs. By \cref{lem:fixed_order_conductance}, each fixed-order graph has conductance $\Omega(1/(\kappa^2 n))$. \cref{lem:sum_graph_conductance} therefore implies
\begin{equation*}
    \Phi(P_{\mathrm{BAL}}) = \Omega\sbra{\frac{1}{\kappa^2 n}}.
\end{equation*}
Since the chain is lazy and reversible, Cheeger's inequality \citep{levin2017markov} gives
\begin{equation*}
    \gamma(P_{\mathrm{BAL}})\ge \frac12 \Phi(P_{\mathrm{BAL}})^2=\Omega\sbra{\frac{1}{\kappa^4 n^2}}.
\end{equation*}

The leaf-mass lower bound follows by the same ratio-of-sums argument. Indeed, let
\begin{equation*}
    D_{\omega,\mathrm{leaf}}:=\sum_{y\in\calY}D_\omega(y), \qquad
    D_{\omega,\mathrm{tot}}:=\sum_{z\in\calZ_\omega}D_\omega(z).
\end{equation*}
Then $\mu_\omega(\calY)=D_{\omega,\mathrm{leaf}}/D_{\omega,\mathrm{tot}}$, or equivalently $D_{\omega,\mathrm{leaf}}=D_{\omega,\mathrm{tot}}\mu_\omega(\calY)$.
The global factor $(n-1)!$ in \cref{prop:canonical_as_sum_of_trees} cancels after normalization, so the leaf mass of the canonical balanced AOAR graph satisfies
\begin{align*}
    \mu_{\mathrm{BAL}}(\calY)
    &=\frac{\sum_{\omega\in\mathsf{Perm}_n}D_{\omega,\mathrm{leaf}}}
    {\sum_{\omega\in\mathsf{Perm}_n}D_{\omega,\mathrm{tot}}}
    =\frac{\sum_{\omega\in\mathsf{Perm}_n}D_{\omega,\mathrm{tot}}\,\mu_\omega(\calY)}
    {\sum_{\omega\in\mathsf{Perm}_n}D_{\omega,\mathrm{tot}}} \\
    &\ge \min_{\omega\in\mathsf{Perm}_n}\mu_\omega(\calY)
    \ge \frac{1}{4\kappa n}
    =\Omega\sbra{\frac{1}{\kappa n}}.
\end{align*}
The exact leaf law follows from \cref{thm:balanced_weighted_graph}, since terminal anchoring gives $\widehat V(x,y)=\tau(x,y)$ on leaves.

Finally, since $P_{\mathrm{BAL}}$ is lazy and reversible, the standard spectral estimate for reversible chains \citep{levin2017markov} gives the mixing time bound from $Z_0\in \calZ$ as follows:
\begin{equation*}
    \norm{P_{\mathrm{BAL}}^t(Z_0,\cdot)-\mu_{\mathrm{BAL}}}_{\mathrm{TV}}
    \le \frac12 e^{-t\gamma(P_{\mathrm{BAL}})}\mu_{\mathrm{BAL}}(Z_0)^{-1/2}.
\end{equation*}

In particular, the mixing time from $Z_0$ is,
\begin{equation}\label{eq:mixing time AOARVGB from z0}
    T_{\mathrm{mix}}^{(Z_0)}(\varepsilon)=O\sbra{\frac{1}{\gamma(P_{\mathrm{BAL}})} \log\frac{1}{\varepsilon\sqrt{\mu_{\mathrm{BAL}}(Z_0)}}}.
\end{equation}

Let $\nu_T:=P_{\mathrm{BAL}}^T(Z_0,\cdot)$. If
$d_{\mathrm{TV}}(\nu_T,\mu_{\mathrm{BAL}})\le \varepsilon$ and
$\varepsilon\le \mu_{\mathrm{BAL}}(\calY)/2$, then $\nu_T(\calY)\ge \mu_{\mathrm{BAL}}(\calY)/2 = \Omega\sbra{\frac{1}{\kappa n}}$; this follows from
$|\nu_T(\calY)-\mu_{\mathrm{BAL}}(\calY)|\le d_{\mathrm{TV}}(\nu_T,\mu_{\mathrm{BAL}})$.
Then, from the definition of total variation (TV) distance,
\begin{align*}
    d_{\mathrm{TV}}\sbra{\nu_T(\cdot\mid\calY),\mu_{\mathrm{BAL}}(\cdot\mid\calY)}
    &=\frac12\sum_{y\in\calY}
    \abs{\frac{\nu_T(y)}{\nu_T(\calY)}-\frac{\mu_{\mathrm{BAL}}(y)}{\mu_{\mathrm{BAL}}(\calY)}} \\
    &=\frac12\sum_{y\in\calY}
    \abs{\frac{\nu_T(y)-\mu_{\mathrm{BAL}}(y)}{\nu_T(\calY)}
    +\mu_{\mathrm{BAL}}(y)\sbra{\frac{1}{\nu_T(\calY)}-\frac{1}{\mu_{\mathrm{BAL}}(\calY)}}} \\
    &\le \frac{1}{2\nu_T(\calY)}\sum_{y\in\calY}\abs{\nu_T(y)-\mu_{\mathrm{BAL}}(y)}
    +\frac{\abs{\nu_T(\calY)-\mu_{\mathrm{BAL}}(\calY)}}{2\nu_T(\calY)} \\
    &\le \frac{2\varepsilon}{\mu_{\mathrm{BAL}}(\calY)}
    +\frac{\varepsilon}{\mu_{\mathrm{BAL}}(\calY)}
    \le \frac{3\varepsilon}{\mu_{\mathrm{BAL}}(\calY)}.
\end{align*}
Here the partial-sum bound used above is
\begin{equation*}
    \sum_{y\in\calY}\abs{\nu_T(y)-\mu_{\mathrm{BAL}}(y)}\le \sum_{z\in\calZ}\abs{\nu_T(z)-\mu_{\mathrm{BAL}}(z)}=2d_{\mathrm{TV}}(\nu_T,\mu_{\mathrm{BAL}})\le 2\varepsilon.
\end{equation*}
Thus, taking $\varepsilon:=\delta\,\mu_{\mathrm{BAL}}(\calY)/3$ ensures that the leaf-conditioned law is within $\delta$ of the stationary leaf-conditioned law.  
Plugging this choice into \eqref{eq:mixing time AOARVGB from z0}, and using
$\mu_{\mathrm{BAL}}(\calY)=\Omega(1/(\kappa n))$, gives
\begin{equation}\label{eq:mixing of balanced AOAR VGB general}
    T =O\sbra{\kappa^4 n^2
    \log\frac{1}{\varepsilon\sqrt{\mu_{\mathrm{BAL}}(Z_0)}}} =O\sbra{\kappa^4 n^2
    \log\frac{\kappa n}{\delta \mu_{\mathrm{BAL}}(Z_0)}}
\end{equation}
Since $\mu_{\mathrm{BAL}}(\cdot\mid\calY)=\pi^*(\cdot\mid x)$, the claim follows.

Next, consider the case $Z_0 = \varnothing.$ 
It remains to lower bound its stationary mass $\mu_{\mathrm{BAL}}(\varnothing)$. For a fixed order $\omega$, the root degree satisfies
\begin{align*}
    D_\omega(\varnothing)
    &=\sum_{a\in\calV}\widehat U(x,\varnothing^{\omega_1\leftarrow a})
    \ge \kappa^{-1}\sum_{a\in\calV}U^*(x,\varnothing^{\omega_1\leftarrow a}) =\kappa^{-1}\sum_{y\in\calY}\base(y\mid x)\tau(x,y)
    =\kappa^{-1}Z(x),
\end{align*}
where the middle equality uses that the first-reveal completion sets
$\{\calC(\varnothing^{\omega_1\leftarrow a})\}_{a\in\calV}$ partition $\calY$.
The total-degree bound in \cref{lem:fixed_order_conductance} gives $D_{\omega,\mathrm{tot}}\le 2\kappa(n+1)Z(x)$. Therefore $\mu_\omega(\varnothing)\ge 1/(2\kappa^2(n+1))$. Averaging over $\omega$ as above, with the global factor $(n-1)!$ canceling after normalization, yields
\begin{equation*}
    \mu_{\mathrm{BAL}}(\varnothing) = \Omega\sbra{\frac{1}{\kappa^2 n}}.
\end{equation*}
Plugging into \eqref{eq:mixing of balanced AOAR VGB general} finishes the proof.

\end{proof}

\subsection{Geometric Balanced AOAR-VGB}
\label{app:subsec:balanced_geometric_aoar_vgb}

\paragraph{Geometric balanced AOAR-VGB formulation.}
We take one further step beyond the depth-balanced AOAR-VGB local kernel in \cref{def:balanced_aoar_weights}. In that kernel, a backward move from a non-leaf state $z$ has weight proportional to $\widehat V(x,z)$ for every revealed coordinate $i\in R(z)$. Hence, once the chain decides to move backward, the index to re-mask is selected uniformly among revealed coordinates. This is a natural depth-balancing rule, but it ignores an extra degree of freedom that is absent in AR prefix trees: in AOAR, the sampler can choose which revealed coordinate should be erased.

This motivates a value-gated remasking rule. When considering a backward move $z\to z^{-i}$, we would like to favor coordinates whose remasked parent $z^{-i}$ remains promising under the verifier. We implement this through a clean geometric edge-weight construction: the child side carries the tilted completion mass $\widehat U(x,z^{j \leftarrow a})$, while the parent side contributes through $\widehat V(x,z^{-i})^\lambda$. This preserves the usual VGB forward ranking and makes the backward index choice depend on the verifier signal of the remasked parent.

\cref{fig:geometric_aoar_vgb_transition} illustrates the resulting local transition rule. The geometric gate leaves the forward candidate ranking unchanged up to the state-wise factor $\widehat V(x,z)^\lambda$, while the backward weights distinguish revealed coordinates through the verifier values of their remasked parents.

\begin{figure}[t]
    \centering
    \vspace{-0.5em}
    \includegraphics[width=0.95\linewidth]{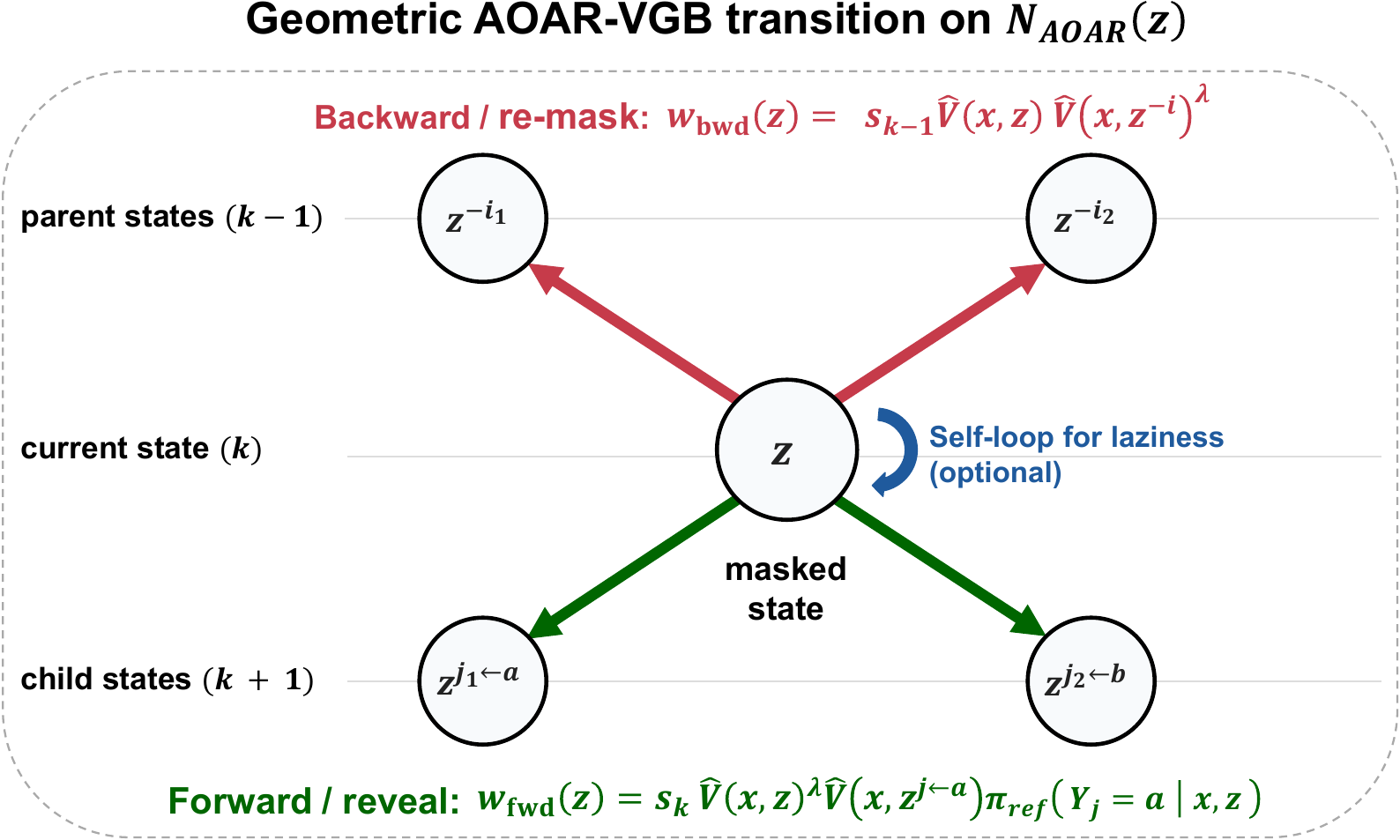}
    \caption{
        Illustration of one geometric balanced AOAR-VGB transition from a masked state $z$.
        Forward reveal moves use the reference conditional times the child verifier value, with the common gate $\widehat V(x,z)^\lambda$.
        Backward re-masking moves use the current verifier value and the parent gate $\widehat V(x,z^{-i})^\lambda$.
        Setting $\lambda=0$ removes the geometric gates and recovers balanced AOAR-VGB.
    }
    \label{fig:geometric_aoar_vgb_transition}
    \vspace{-0.5em}
\end{figure}

\begin{definition}[Geometric balanced AOAR edge weights]
\label{def:bg_aoar_edge_weights}
Fix $\lambda\ge0$ and positive depth coefficients
\begin{equation*}
    s_0,s_1,\dots,s_{n-1}>0.
\end{equation*}
For neighboring states $u \sim v$, let $p(u,v)$ denote the lower-depth endpoint and $c(u,v)$ denote the higher-depth endpoint. Thus $c(u,v)=p(u,v)^{j\leftarrow a}$ for a unique revealed coordinate and token.

If $c(u,v)\notin\calY$, define the non-leaf edge weight by
\begin{equation*}
    f_{\mathrm{BG}}(u,v)=f_{\mathrm{BG}}(v,u):=
    s_{k(p(u,v))}\widehat U(x,c(u,v))\widehat V(x,p(u,v))^\lambda.
\end{equation*}
If $c(u,v)=y\in\calY$, define the leaf-corrected edge weight by
\begin{equation*}
    f_{\mathrm{BG}}(u,v)=f_{\mathrm{BG}}(v,u):=
    s_{n-1}U^*(x,y)=s_{n-1}\base(y\mid x)\tau(x,y).
\end{equation*}
\end{definition}

\paragraph{Local implementation weights.}
Although \cref{def:bg_aoar_edge_weights} is written in terms of $\widehat U$ and $U^*$, the local weights below do not require evaluating these tilted masses directly. Let $z$ have depth $k=k(z)$.

For a non-leaf child $c=z^{j\leftarrow a}\notin\calY$, we have
\begin{equation*}
    \widehat U(x,c)=M_{\mathrm{ref}}(x,z)\base(Y_j=a\mid x,z)\widehat V(x,c).
\end{equation*}
Dividing all incident non-leaf edge weights at current state $z$ by the common factor $M_{\mathrm{ref}}(x,z)$ gives
\begin{equation*}
\label{eq:bg_aoar_clean_forward_weight}
    w_{\mathrm{BG,fwd}}(z\to c)
    =
    s_k\widehat V(x,z)^\lambda\base(Y_j=a\mid x,z)\widehat V(x,c).
\end{equation*}
The factor $\widehat V(x,z)^\lambda$ is common across all forward candidates from the same current state $z$. Hence the forward candidate ranking is the same as in ordinary VGB:
\begin{equation*}
    w_{\mathrm{BG,fwd}}(z\to c) \propto \base(Y_j=a\mid x,z)\widehat V(x,z^{j\leftarrow a}).
\end{equation*}

For a backward move from a non-leaf state $z$ to $u=z^{-i}$, the relevant edge has child endpoint $z$ and parent endpoint $u$. Thus
\begin{equation*}
    f_{\mathrm{BG}}(u,z)=s_{k-1}\widehat U(x,z)\widehat V(x,u)^\lambda=
    s_{k-1}M_{\mathrm{ref}}(x,z)\widehat V(x,z)\widehat V(x,u)^\lambda.
\end{equation*}
Dividing by the current-state common factor $M_{\mathrm{ref}}(x,z)$ gives
\begin{equation*}
\label{eq:bg_aoar_clean_backward_weight}
    w_{\mathrm{BG,bwd}}(z\to z^{-i})
    =
    s_{k-1}\widehat V(x,z)\widehat V(x,z^{-i})^\lambda.
\end{equation*}
Thus backward moves prefer revealed coordinates whose remasked parent has high verifier value.

At the leaf boundary, the leaf-corrected edge gives the local forward weight directly. If $y=z^{j\leftarrow a}\in\calY$, then
\begin{equation*}
    f_{\mathrm{BG}}(z,y)=s_{n-1}U^*(x,y)
    =s_{n-1}M_{\mathrm{ref}}(x,z)\base(Y_j=a\mid x,z)\tau(x,y),
\end{equation*}
and dividing by $M_{\mathrm{ref}}(x,z)$ gives
\begin{equation*}
\label{eq:bg_aoar_clean_leaf_forward}
    w_{\mathrm{BG,fwd}}(z\to y)=s_{n-1}\base(Y_j=a\mid x,z)\tau(x,y).
\end{equation*}
Similarly, from a leaf $y$ to a parent $y^{-i}$,
\begin{equation*}
\label{eq:bg_aoar_clean_leaf_backward}
    w_{\mathrm{BG,bwd}}(y\to y^{-i})=s_{n-1}\tau(x,y).
\end{equation*}

\begin{remark}[$\lambda=0$ recovers balanced AOAR-VGB]
When $\lambda=0$, the geometric gate disappears and the local weights reduce to the balanced AOAR-VGB weights with depth coefficients $s_k$.
\end{remark}

\begin{theorem}[Stationarity and exact leaf law of geometric balanced AOAR]
\label{thm:bg_aoar_stationarity}
Let $P_{\mathrm{BG}}$ be the random walk on the reachable AOAR graph with edge weights $f_{\mathrm{BG}}$ from \cref{def:bg_aoar_edge_weights}. That is,
\begin{equation*}
    P_{\mathrm{BG}}(z,u)=\frac{f_{\mathrm{BG}}(z,u)}{D_{\mathrm{BG}}(z)},\qquad
    D_{\mathrm{BG}}(z):=\sum_{v\sim z}f_{\mathrm{BG}}(z,v).
\end{equation*}
Then $P_{\mathrm{BG}}$ is reversible with stationary distribution
\begin{equation*}
    \mu_{\mathrm{BG}}(z)=\frac{D_{\mathrm{BG}}(z)}{\sum_{z'}D_{\mathrm{BG}}(z')}.
\end{equation*}
Moreover, under terminal anchoring, its leaf-conditioned stationary law is exact:
\begin{equation*}
    \mu_{\mathrm{BG}}(y\mid y\in\calY)=\pi^*(y\mid x).
\end{equation*}
\end{theorem}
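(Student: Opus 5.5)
The plan mirrors the proof of \cref{thm:balanced_weighted_graph}. There are three steps: show that $P_{\mathrm{BG}}$ is a weighted random walk with symmetric conductances, deduce detailed balance, and then compute the leaf degrees.

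\textbf{Reversibility.} First I check that $f_{\mathrm{BG}}$ is well defined and symmetric. For any unordered edge $\{u,v\}$ of the reachable AOAR graph, the lower- and higher-depth endpoints $p(u,v)$ and $c(u,v)$ are determined by the unordered pair. Hence the formula in \cref{def:bg_aoar_edge_weights} assigns a single value to the edge, and $f_{\mathrm{BG}}(u,v)=f_{\mathrm{BG}}(v,u)$ by construction.

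I also note that every incident weight at a state in the positive-support component is nonnegative and finite. The degree $D_{\mathrm{BG}}(z)$ is then positive for states we restrict to, namely those with some positive incident edge; states of zero degree are excluded as in the earlier theorems.

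Next I set $D_{\mathrm{tot}}=\sum_{z'}D_{\mathrm{BG}}(z')$ and $\mu_{\mathrm{BG}}=D_{\mathrm{BG}}/D_{\mathrm{tot}}$. For neighbors $z\sim u$,
\[
\mu_{\mathrm{BG}}(z)P_{\mathrm{BG}}(z,u)=\frac{f_{\mathrm{BG}}(z,u)}{D_{\mathrm{tot}}}=\mu_{\mathrm{BG}}(u)P_{\mathrm{BG}}(u,z).
\]
This is detailed balance, so $P_{\mathrm{BG}}$ is reversible and $\mu_{\mathrm{BG}}$ is stationary. The same calculation covers any $\gamma$-held version, since the self-loop terms satisfy detailed balance trivially.

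\textbf{Exact leaf law.} Fix a leaf $y\in\calY$. Its neighbors are exactly its $n$ parents $y^{-i}$, $i\in[n]$, and in each such edge the higher-depth endpoint is $y$ itself. The leaf-corrected case of \cref{def:bg_aoar_edge_weights} therefore applies to every edge incident to $y$, giving weight $s_{n-1}U^*(x,y)=s_{n-1}\base(y\mid x)\tau(x,y)$. This is exactly where terminal anchoring enters: it guarantees that the leaf edges carry the true tilted mass rather than $\widehat U$.

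Summing over the $n$ parents gives
\[
D_{\mathrm{BG}}(y)=n\,s_{n-1}\base(y\mid x)\tau(x,y).
\]
The factor $n s_{n-1}$ is independent of $y$, so it cancels when we condition on $\calY$. This yields $\mu_{\mathrm{BG}}(y\mid\calY)=\base(y\mid x)\tau(x,y)/Z(x)=\pi^*(y\mid x)$.

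\textbf{Main obstacle.} The only delicate point is that the geometric gate $\widehat V(x,p)^\lambda$ would break the cancellation at the leaves if it were applied to leaf edges. The leaf-corrected definition removes the gate exactly there, and I need to verify that every edge incident to a leaf falls into this case. This holds because a leaf is never the lower-depth endpoint of any edge.

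A secondary point is the restriction to reachable or positive-mass states, so that the normalization $D_{\mathrm{BG}}(z)>0$ is well defined. I will handle this as in \cref{thm:primitive_weighted_graph}, by restricting to the component where the weights are positive.
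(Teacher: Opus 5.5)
Your proposal is correct and follows the paper's proof essentially line for line: symmetry of $f_{\mathrm{BG}}$ gives detailed balance with $\mu_{\mathrm{BG}}\propto D_{\mathrm{BG}}$, and every edge at a leaf is leaf-corrected, so $D_{\mathrm{BG}}(y)=n\,s_{n-1}U^*(x,y)$ and the constant $n\,s_{n-1}$ cancels when conditioning on $\calY$. One small remark: \cref{def:bg_aoar_edge_weights} already puts $U^*(x,y)$ on leaf edges, so terminal anchoring is not what makes those edges carry the true tilted mass; it is what makes the leaf-corrected weight agree with $s_{n-1}\widehat U(x,y)$ and with the local implementation weights.
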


\begin{proof}
Since $f_{\mathrm{BG}}$ is a nonnegative symmetric edge weight, the random walk is a standard weighted-graph random walk. For neighboring distinct states $z\sim u$,
\begin{equation*}
    \mu_{\mathrm{BG}}(z)P_{\mathrm{BG}}(z,u)
    =
    \frac{f_{\mathrm{BG}}(z,u)}{\sum_{z'}D_{\mathrm{BG}}(z')}
    =
    \mu_{\mathrm{BG}}(u)P_{\mathrm{BG}}(u,z),
\end{equation*}
so detailed balance holds.

Now let $y\in\calY$ be a leaf. Its AOAR parents are $y^{-i}$ for $i\in\lbra{n}$. By the leaf correction,
\begin{equation*}
    f_{\mathrm{BG}}(y^{-i},y)=s_{n-1}U^*(x,y)
\end{equation*}
for every $i$. Hence
\begin{equation*}
    D_{\mathrm{BG}}(y)=\sum_{i=1}^n f_{\mathrm{BG}}(y^{-i},y)=n s_{n-1}U^*(x,y).
\end{equation*}
The factor $ns_{n-1}$ is constant over leaves, and therefore
\begin{equation*}
    \mu_{\mathrm{BG}}(y\mid y\in\calY)
    =
    \frac{U^*(x,y)}{\sum_{y'\in\calY}U^*(x,y')}
    =
    \pi^*(y\mid x).
\end{equation*}
\end{proof}

\paragraph{Mixing analysis of geometric balanced VGB.}

We next transfer the canonical balanced mixing guarantee to the geometric
kernel. The only care point is that zero target-mass states should not create
spurious components. We therefore work on the positive target-support component,
which is stable under all positive-weight moves.

\begin{lemma}[Positive target support is closed under AOAR moves]
\label{lem:bg_positive_support_closure}
Assume $Z(x)>0$ and \cref{ass:aoar_kappa_accuracy}. Define the positive
target support
\begin{equation*}
    \calZ_+(x)
    :=
    \bigl\{
        z\in\calZ_{\mathrm{reach}}
        :
        U^\star(x,z)>0
    \bigr\}.
\end{equation*}
Then $\varnothing\in\calZ_+(x)$. Moreover, $\calZ_+(x)$ is closed under
all positive-weight balanced and geometric AOAR moves. More precisely:

\begin{enumerate}
    \item If $z\in\calZ_+(x)$ and $i\in R(z)$, then
    $z^{-i}\in\calZ_+(x)$.

    \item If $z\in\calZ_+(x)$, $j\notin R(z)$, and
    $c=z^{j\leftarrow a}\notin\calZ_+(x)$, then the AOAR edge
    $\mbra{z,c}$ has zero balanced edge weight and zero geometric edge weight.
\end{enumerate}
\end{lemma}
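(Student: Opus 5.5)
The plan is to reduce everything to the monotonicity of compatible completion sets under revealing and re-masking, together with the relation $U^\star(x,z)=M_{\mathrm{ref}}(x,z)V^\star(x,z)$ and the $\kappa$-accuracy in \cref{ass:aoar_kappa_accuracy}.

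\emph{Root.} Since $\calC(\varnothing)=\calY$, we have $U^\star(x,\varnothing)=\sum_{y}\base(y\mid x)\tau(x,y)=Z(x)>0$. This also gives $M_{\mathrm{ref}}(x,\varnothing)\ge U^\star(x,\varnothing)/\max\tau>0$, or more simply $M_{\mathrm{ref}}(x,\varnothing)=1$. Hence $\varnothing\in\calZ_{\mathrm{reach}}$ and $\varnothing\in\calZ_+(x)$.

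\emph{Item 1 (re-masking).} Re-masking a coordinate only enlarges the completion set: $\calC(z)\subseteq\calC(z^{-i})$. Because the summands $\base(y\mid x)\tau(x,y)$ are nonnegative, $U^\star(x,z^{-i})\ge U^\star(x,z)>0$. In the same way $M_{\mathrm{ref}}(x,z^{-i})\ge M_{\mathrm{ref}}(x,z)>0$, so $z^{-i}$ is reachable and lies in $\calZ_+(x)$. One could instead invoke \cref{lem:aoar_partition} applied at $z^{-i}$ with coordinate $i$, which gives $U^\star(x,z^{-i})=\sum_a U^\star(x,(z^{-i})^{i\leftarrow a})\ge U^\star(x,z)$.

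\emph{Item 2 (revealing into zero target mass).} Let $c=z^{j\leftarrow a}\notin\calZ_+(x)$. There are two cases.

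\begin{itemize}
\item If $c$ is unreachable, i.e.\ $M_{\mathrm{ref}}(x,c)=0$, then $c\notin C_{\mathrm{AOAR}}(z)$, so the edge does not exist in the reachable graph. Equivalently, every edge weight carries the factor $M_{\mathrm{ref}}(x,c)$ (through $\widehat U$ or $U^\star$), so it is zero.
\item If $c$ is reachable but $U^\star(x,c)=0$, then $V^\star(x,c)=0$. When $c\in\calY$, the leaf-corrected weights are $g_nU^\star(x,c)$ in the balanced chain and $s_{n-1}U^\star(x,c)$ in the geometric chain, both zero; terminal anchoring makes $\widehat V(x,c)=\tau(x,c)$ consistent with this. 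When $c\notin\calY$, $\kappa$-accuracy gives $0\le\widehat V(x,c)\le\kappa V^\star(x,c)=0$, so $\widehat U(x,c)=0$. The balanced weight $g_{k+1}\widehat U(x,c)$ and the geometric weight $s_k\widehat U(x,c)\widehat V(x,z)^\lambda$ therefore both vanish. Here $\widehat V(x,z)^\lambda$ is finite, and for $\lambda=0$ we use the convention $0^0=1$, which multiplies $\widehat U(x,c)=0$ anyway.
\end{itemize}

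The main obstacle is bookkeeping rather than substance. One must make sure every edge-weight formula, including the boundary and leaf-corrected versions and the AOAR local weights, carries a factor that vanishes exactly when $U^\star(x,c)=0$. One must also note that the parent factor $\widehat V(x,z)^\lambda$ cannot rescue a zero child factor. I would finish by remarking that items 1 and 2 together imply every positive-probability move from $\calZ_+(x)$ stays in $\calZ_+(x)$.
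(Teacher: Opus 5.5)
Your proposal is correct and follows essentially the same route as the paper: the root via $U^\star(x,\varnothing)=Z(x)>0$, item 1 via $\calC(z)\subseteq\calC(z^{-i})$ and nonnegativity of the summands, and item 2 via $U^\star(x,c)=0\Rightarrow V^\star(x,c)=0\Rightarrow \widehat U(x,c)=0$, which kills every edge weight proportional to $\widehat U(x,c)$ or $U^\star(x,c)$. Two points the paper leaves implicit are handled explicitly in your version: you check reachability of $\varnothing$ and $z^{-i}$ through $M_{\mathrm{ref}}$, and you treat leaf children separately via terminal anchoring, whereas the paper invokes $\kappa$-accuracy for $c$ even though that assumption covers only non-leaf states.
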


\begin{proof}
Since $Z(x)>0$,
\begin{equation*}
    U^\star(x,\varnothing)=Z(x)>0,
\end{equation*}
so $\varnothing\in\calZ_+(x)$.

For the first claim, let $p=z^{-i}$. Since $p$ is obtained by re-masking one
revealed coordinate of $z$, we have
\begin{equation*}
    \calC(z)\subseteq \calC(p).
\end{equation*}
Thus
\begin{equation*}
    U^\star(x,p)
    =
    \sum_{y\in\calC(p)}\base(y\mid x)\tau(x,y)
    \ge
    \sum_{y\in\calC(z)}\base(y\mid x)\tau(x,y)
    =
    U^\star(x,z)
    >
    0.
\end{equation*}
Hence $p\in\calZ_+(x)$.

For the second claim, let $c=z^{j\leftarrow a}\notin\calZ_+(x)$. If $c$ is
not $\base$-reachable, then it is not an admissible child edge. Otherwise,
$M_{\rm ref}(x,c)>0$ and $U^\star(x,c)=0$, hence
$V^\star(x,c)=0$. By \cref{ass:aoar_kappa_accuracy},
\begin{equation*}
    \widehat V(x,c)=0,
    \qquad
    \widehat U(x,c):=M_{\rm ref}(x,c)\widehat V(x,c)=0.
\end{equation*}
Therefore the balanced edge weight, which is proportional to
$\widehat U(x,c)$ for non-leaf children and to $U^\star(x,c)$ for leaf
children, is zero. The same is true for the geometric edge weight: for non-leaf
children it is proportional to
\begin{equation*}
    \widehat U(x,c)\widehat V(x,z)^\lambda,
\end{equation*}
and for leaf children it is proportional to $U^\star(x,c)$. Hence no
positive-weight edge leaves $\calZ_+(x)$.
\end{proof}

On this component, the geometric graph differs from the canonical balanced
graph only through the parent value gate $\widehat V(x,p)^\lambda$ on
non-leaf edges. The next assumption prevents this gate from creating an
additional bottleneck.

\begin{assumption}[Bounded geometric verifiers]
\label{ass:bg_gate_bounds}
Consider the canonical geometric balanced AOAR-VGB graph with
\begin{equation*}
    s_k=\frac{1}{\binom{n-1}{k}},
    \qquad 0\le k\le n-1.
\end{equation*}
Fix $\lambda>0$. Assume there exist constants
\begin{equation*}
    0<m\le 1\le M<\infty
\end{equation*}
such that, for every non-leaf state $z\in\calZ_+(x)$,
\begin{equation*}
    m
    \le
    \widehat V(x,z)
    \le
    M .
\end{equation*}
\end{assumption}

\begin{theorem}[Mixing stability for geometric balanced AOAR-VGB]
\phantomsection
\label{thm:canonical_geometric_balanced_mixing}
Assume \cref{ass:aoar_kappa_accuracy} and \cref{ass:bg_gate_bounds}. Let $P_{\mathrm{BG}}$ be the lazy canonical geometric balanced AOAR-VGB kernel on $\calZ_+(x)$ with $Z(x)>0$, and let $\mu_{\mathrm{BG}}$ be its stationary distribution. Then
\begin{equation*}
    \mu_{\mathrm{BG}}(\cdot\mid\calY)=\pi^*(\cdot\mid x).
\end{equation*}
Moreover, its conductance satisfies
\begin{equation*}
    \Phi(P_{\mathrm{BG}})
    =
    \Omega\!\left(
        \left(\frac{m}{M}\right)^\lambda
        \frac{1}{\kappa^2 n}
    \right),
\end{equation*}
and hence, by Cheeger's inequality for lazy reversible chains, its spectral gap satisfies
\begin{equation*}
    \gamma(P_{\mathrm{BG}})
    =
    \Omega\!\left(
        \left(\frac{m}{M}\right)^{2\lambda}
        \frac{1}{\kappa^4 n^2}
    \right).
\end{equation*}
The stationary leaf and root masses satisfy
\begin{equation*}
    \mu_{\mathrm{BG}}(\calY)
    =
    \Omega\!\left(
        \frac{1}{M^\lambda\kappa n}
    \right),
    \qquad
    \mu_{\mathrm{BG}}(\varnothing)
    =
    \Omega\!\left(
        \left(\frac{m}{M}\right)^\lambda
        \frac{1}{\kappa^2 n}
    \right).
\end{equation*}
Consequently, if $Z_T$ is the state after $T$ lazy geometric AOAR-VGB transitions initialized at $Z_0\in\calZ_+(x)$ with $\mu_{\mathrm{BG}}(Z_0)>0$, then for any $\delta>0$,
\begin{equation*}
    T
    =
    O\!\left(
        \left(\frac{M}{m}\right)^{2\lambda}
        \kappa^4 n^2
        \log
        \frac{M^\lambda\kappa n}{\delta \mu_{\mathrm{BG}}(Z_0)}
    \right)
\end{equation*}
suffices to guarantee
\begin{equation*}
    d_{\mathrm{TV}}
    \left(
        \mathcal L(Z_T\mid Z_T\in\calY),
        \pi^*(\cdot\mid x)
    \right)
    \le
    \delta \quad \text{and} \quad \mathbb{P}[Z_T \in \calY] =\Omega\!\left(
        \frac{1}{M^\lambda\kappa n}
    \right).
\end{equation*}
For example, if $Z_0 = \varnothing$, then \begin{equation*}
    T
    =
    O\!\left(
        \left(\frac{M}{m}\right)^{2\lambda}
        \kappa^4 n^2
        \log
        \frac{(M/m)^\lambda\kappa n}{\delta}
    \right).
\end{equation*}
In particular, if $m,M$ are constants independent of $n$ and $\lambda$ is fixed, the geometric chain's mixing time is of the same order as canonical balanced AOAR-VGB.
\end{theorem}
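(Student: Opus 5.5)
The plan is to compare the geometric graph edge-by-edge with the canonical balanced graph on $\calZ_+(x)$, and then reuse \cref{thm:canonical_balanced_mixing}. The exact leaf law is immediate from \cref{thm:bg_aoar_stationarity}: laziness does not change the stationary distribution, and by \cref{lem:bg_positive_support_closure} we may restrict to $\calZ_+(x)$ without losing any positive-weight edges.

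\textbf{Edge comparison.} Let $f_{\mathrm{BAL}}$ denote the canonical balanced edge weight, with coefficient $s_k=g_{k+1}=1/\binom{n-1}{k}$. For a non-leaf edge with parent $p\in\calZ_+(x)$ we have
\[
f_{\mathrm{BG}}=f_{\mathrm{BAL}}\,\widehat V(x,p)^\lambda\in[m^\lambda,M^\lambda]\,f_{\mathrm{BAL}}.
\]
For a leaf edge, terminal anchoring gives $f_{\mathrm{BG}}=f_{\mathrm{BAL}}$. Since $m\le1\le M$, every edge then satisfies
\[
m^\lambda f_{\mathrm{BAL}}\le f_{\mathrm{BG}}\le M^\lambda f_{\mathrm{BAL}},
\]
and summing over incident edges gives the same sandwich for the degrees $D_{\mathrm{BG}}$ versus $D_{\mathrm{BAL}}$.

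\textbf{Conductance.} For any set $A$, I would bound the graph conductance directly by
\[
\Phi_{\mathrm{BG}}(A)\ge \frac{m^\lambda\,\mathrm{cut}_{\mathrm{BAL}}(A)}{M^\lambda\,\mathrm{vol}_{\mathrm{BAL}}(A)}=(m/M)^\lambda\,\Phi_{\mathrm{BAL}}(A).
\]
One subtlety is that the admissible sets differ: the constraint $\mu_{\mathrm{BG}}(A)\le1/2$ does not imply $\mu_{\mathrm{BAL}}(A)\le1/2$. I would avoid this by working with the symmetric form $\mathrm{cut}(A)/\min(\mathrm{vol}(A),\mathrm{vol}(A^c))$, which is unchanged up to a factor $2$ from the restricted infimum. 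Alternatively, I would go back to the per-tree argument of \cref{lem:fixed_order_conductance}, where the lower bound $\Phi_\omega(A)\ge 1/(2\kappa^2(n+1))$ holds for every set not containing the root. For that sharper route I would re-run the lemma with gated tree weights $\widehat U(c)\widehat V(p)^\lambda$: the boundary edge weight is at least $m^\lambda\kappa^{-1}U^*(v)$ and the subtree volume is at most $2M^\lambda\kappa(n+1)U^*(v)$. Combining with \cref{lem:sum_graph_conductance} and the root/complement swap yields $\Phi=\Omega((m/M)^\lambda/(\kappa^2 n))$, and Cheeger's inequality then gives the stated spectral gap.

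\textbf{Masses and mixing.} I expect this to be routine. In each gated tree, the leaf degrees total exactly $Z(x)$ and the total degree is at most $2M^\lambda\kappa(n+1)Z(x)$, so averaging over $\omega$ as in \cref{thm:canonical_balanced_mixing} gives $\mu_{\mathrm{BG}}(\calY)=\Omega(1/(M^\lambda\kappa n))$. For the root, each child edge $\omega_1\leftarrow a$ has gate $\widehat V(\varnothing)^\lambda\ge m^\lambda$, so $D_\omega(\varnothing)\ge m^\lambda\kappa^{-1}Z(x)$. Hence $\mu_\omega(\varnothing)=\Omega((m/M)^\lambda/(\kappa^2n))$, and averaging transfers this to $\mu_{\mathrm{BG}}(\varnothing)$. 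The mixing-time conclusion then repeats verbatim the TV-conditioning computation in \cref{thm:canonical_balanced_mixing}, taking $\varepsilon=\delta\mu_{\mathrm{BG}}(\calY)/3$ and using the spectral estimate with these new $\gamma$, $\mu(\calY)$ and $\mu(Z_0)$. The $Z_0=\varnothing$ case follows by plugging in the root mass.

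The main obstacle is the conductance transfer with its mismatched admissibility condition on $A$. I expect redoing the tree lemma with gated weights to be the cleanest fix, because that lemma's bound already holds for all root-free sets.
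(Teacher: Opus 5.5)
Your proposal is correct and follows the paper's proof. It uses the same edge sandwich $m^\lambda f_{\mathrm{BAL}}\le f_{\mathrm{BG}}\le M^\lambda f_{\mathrm{BAL}}$, the $(m/M)^\lambda$ conductance transfer plus Cheeger, the same leaf and root mass comparisons, and the identical TV-conditioning step for the mixing time.

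The admissibility point you raise is a genuine tightening: the paper takes the infimum over sets with $\mu_{\mathrm{BG}}(A)\le 1/2$ without checking $\mu_{\mathrm{BAL}}(A)\le 1/2$. Your symmetric-conductance fix closes this, since $\mathrm{vol}_{\mathrm{BG}}(A)\le \mathrm{vol}_{\mathrm{BG}}(A^c)$ gives $\mathrm{vol}_{\mathrm{BG}}(A)\le M^\lambda\min\{\mathrm{vol}_{\mathrm{BAL}}(A),\mathrm{vol}_{\mathrm{BAL}}(A^c)\}$. Your gated-tree alternative also works, provided you do the root/complement swap on the summed graph rather than tree by tree.
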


\begin{proof}
Let $f_{\mathrm{BAL}}$ be the canonical balanced AOAR edge weight and let $f_{\mathrm{BG}}$ be the canonical geometric balanced AOAR edge weight. For an edge $e=\mbra{p,c}$, let $p$ be the lower-depth endpoint and $c$ the higher-depth endpoint. If $c\notin\calY$, then by \cref{def:bg_aoar_edge_weights},
\begin{equation*}
    f_{\mathrm{BG}}(e)
    =
    \widehat V(x,p)^\lambda f_{\mathrm{BAL}}(e).
\end{equation*}
If $c\in\calY$, the leaf correction gives
\begin{equation*}
    f_{\mathrm{BG}}(e)=f_{\mathrm{BAL}}(e).
\end{equation*}
Therefore, using \cref{ass:bg_gate_bounds} and $m\le 1\le M$, every positive-weight edge in the component satisfies
\begin{equation*}
    m^\lambda f_{\mathrm{BAL}}(e)
    \le
    f_{\mathrm{BG}}(e)
    \le
    M^\lambda f_{\mathrm{BAL}}(e).
\end{equation*}
For any set $A$, write
\begin{equation*}
    Q_{\mathrm{BG}}(A,A^c):=\sum_{\substack{e=\mbra{u,v}\\u\in A,\ v\notin A}}f_{\mathrm{BG}}(e),
    \qquad
    \mathrm{vol}_{\mathrm{BG}}(A):=\sum_{u\in A}\sum_{v\sim u}f_{\mathrm{BG}}(u,v),
\end{equation*}
and define $Q_{\mathrm{BAL}}$ and $\mathrm{vol}_{\mathrm{BAL}}$ analogously.
The edge comparison gives
\begin{equation*}
    Q_{\mathrm{BG}}(A,A^c)\ge m^\lambda Q_{\mathrm{BAL}}(A,A^c),
    \qquad
    \mathrm{vol}_{\mathrm{BG}}(A)\le M^\lambda\mathrm{vol}_{\mathrm{BAL}}(A).
\end{equation*}
Thus, for any admissible $A$,
\begin{equation*}
    \Phi_{\mathrm{BG}}(A)
    =
    \frac{Q_{\mathrm{BG}}(A,A^c)}{\mathrm{vol}_{\mathrm{BG}}(A)}
    \ge
    \left(\frac{m}{M}\right)^\lambda
    \frac{Q_{\mathrm{BAL}}(A,A^c)}{\mathrm{vol}_{\mathrm{BAL}}(A)}
    =
    \left(\frac{m}{M}\right)^\lambda
    \Phi_{\mathrm{BAL}}(A).
\end{equation*}
Taking the infimum over admissible $A$ and applying \cref{thm:canonical_balanced_mixing} gives
\begin{equation*}
    \Phi(P_{\mathrm{BG}})
    =
    \Omega\!\left(
        \left(\frac{m}{M}\right)^\lambda
        \frac{1}{\kappa^2 n}
    \right).
\end{equation*}
Since the lazy geometric chain is reversible, Cheeger's inequality gives
\begin{equation*}
    \gamma(P_{\mathrm{BG}})
    =
    \Omega\!\left(
        \left(\frac{m}{M}\right)^{2\lambda}
        \frac{1}{\kappa^4 n^2}
    \right).
\end{equation*}

The exact leaf-conditioned law follows from \cref{thm:bg_aoar_stationarity}, since laziness does not change the stationary distribution. For the leaf mass, leaf-corrected edges have the same weight as in the canonical balanced graph. Writing $D_{\mathrm{BG}}$ and $D_{\mathrm{BAL}}$ for weighted degrees,
\begin{equation*}
    \mu_{\mathrm{BG}}(\calY)
    =
    \frac{\sum_{y\in\calY}D_{\mathrm{BAL}}(y)}
    {\sum_z D_{\mathrm{BG}}(z)}
    \ge
    \frac{\sum_{y\in\calY}D_{\mathrm{BAL}}(y)}
    {M^\lambda\sum_zD_{\mathrm{BAL}}(z)}
    =
    \frac{1}{M^\lambda}\mu_{\mathrm{BAL}}(\calY)
    =
    \Omega\!\left(
        \frac{1}{M^\lambda\kappa n}
    \right).
\end{equation*}
Similarly, the root degree is at least $m^\lambda$ times the balanced root degree, while the total degree is at most $M^\lambda$ times the balanced total degree, so
\begin{equation*}
    \mu_{\mathrm{BG}}(\varnothing)
    \ge
    \left(\frac{m}{M}\right)^\lambda\mu_{\mathrm{BAL}}(\varnothing)
    =
    \Omega\!\left(
        \left(\frac{m}{M}\right)^\lambda
        \frac{1}{\kappa^2 n}
    \right).
\end{equation*}

Finally, the standard start-state spectral bound for lazy reversible chains \citep{levin2017markov} gives
\begin{equation*}
    \left\|
        P_{\mathrm{BG}}^T(Z_0,\cdot)-\mu_{\mathrm{BG}}
    \right\|_{\mathrm{TV}}
    \le
    \frac12
    e^{-T\gamma(P_{\mathrm{BG}})}
    \mu_{\mathrm{BG}}(Z_0)^{-1/2}.
\end{equation*}
As in the proof of \cref{thm:canonical_balanced_mixing}, taking the full-chain TV error to be at most
\begin{equation*}
    \varepsilon:=\frac{\delta\,\mu_{\mathrm{BG}}(\calY)}{3}
\end{equation*}
implies
\begin{equation*}
    d_{\mathrm{TV}}
    \left(
        \mathcal L(Z_T\mid Z_T\in\calY),
        \mu_{\mathrm{BG}}(\cdot\mid\calY)
    \right)
    \le
    \delta \quad \text{and} \quad \mathbb{P}[Z_T \in \calY] \geq \mu_{\mathrm{BG}}(\calY)/2 = \Omega\!\left(
        \frac{1}{M^\lambda\kappa n}\right).
\end{equation*}
Substituting the bounds on \(\gamma(P_{\mathrm{BG}})\) and \(\mu_{\mathrm{BG}}(\calY)\), while retaining the start-state factor \(\mu_{\mathrm{BG}}(Z_0)\), yields the general mixing time. The lower bound on \(\mu_{\mathrm{BG}}(\varnothing)\) gives the root-start example. Since $\mu_{\mathrm{BG}}(\cdot\mid\calY)=\pi^*(\cdot\mid x)$, the claim follows.
\end{proof}

\paragraph{Forward/backward mass analysis.}

We now return from global mixing to the local behavior of the geometric
transition. The next proposition measures how the
forward-versus-backward move probabilities change under the exact verifier setup.

\begin{proposition}[Directional masses under the exact verifier]
\label{prop:bg_aoar_directional_masses}
Assume $\widehat V=V^*$ on all non-leaf states. Let $z$ be a non-leaf interior state of depth $1\le k\le n-2$ with $V^*(x,z)>0$, so that its one-step forward children are also non-leaf. Define the parent-value ratio
\begin{equation*}
    r_i^*(z)
    :=
    \frac{V^*(x,z^{-i})}{V^*(x,z)},
    \qquad i\in R(z).
\end{equation*}
Then the exact forward and backward masses are
\begin{equation*}
    F_\lambda^*(z)
    =
    s_k(n-k)V^*(x,z)^{1+\lambda},
\end{equation*}
and
\begin{equation*}
    B_\lambda^*(z)
    =
    s_{k-1}V^*(x,z)^{1+\lambda}
    \sum_{i\in R(z)} r_i^*(z)^\lambda.
\end{equation*}
Consequently,
\begin{equation*}
    \bbP(\mathrm{forward}\mid x,z,\mathrm{move})
    =
    \frac{s_k(n-k)}
    {s_k(n-k)+s_{k-1}\sum_{i\in R(z)}r_i^*(z)^\lambda},
\end{equation*}
and
\begin{equation*}
    \bbP(\mathrm{backward}\mid x,z,\mathrm{move})
    =
    \frac{s_{k-1}\sum_{i\in R(z)}r_i^*(z)^\lambda}
    {s_k(n-k)+s_{k-1}\sum_{i\in R(z)}r_i^*(z)^\lambda}.
\end{equation*}
\end{proposition}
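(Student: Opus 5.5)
The plan is to compute the two masses directly from the local implementation weights of the geometric balanced kernel and then normalize. Fix an interior state $z$ of depth $1\le k\le n-2$ with $V^*(x,z)>0$, and set $\widehat V=V^*$ on non-leaf states. Because $k\le n-2$, every one-step child $z^{j\leftarrow a}$ has depth $k+1\le n-1$, so it is a non-leaf. Every parent $z^{-i}$ has depth $k-1\ge0$, which is also a non-leaf. Since $z$ itself is not a leaf, the non-leaf forms of the local weights apply for all moves out of $z$:
\begin{equation*}
    w_{\mathrm{BG,fwd}}(z\to z^{j\leftarrow a})=s_k V^*(x,z)^\lambda\base(Y_j=a\mid x,z)V^*(x,z^{j\leftarrow a}),\qquad
    w_{\mathrm{BG,bwd}}(z\to z^{-i})=s_{k-1}V^*(x,z)V^*(x,z^{-i})^\lambda .
\end{equation*}
These are the forms derived right after \cref{def:bg_aoar_edge_weights}, so the leaf-corrected boundary cases need no attention here.

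\textbf{Forward mass.} The sum over children splits into an outer sum over the $n-k$ unrevealed coordinates $j\notin R(z)$ and an inner sum over tokens $a\in\calV$. The factor $s_kV^*(x,z)^\lambda$ is common to every term and comes out of both sums. For each fixed $j$, the AOAR Bellman identity (\cref{prop:aoar_bellman}) gives
\begin{equation*}
    \sum_{a\in\calV}\base(Y_j=a\mid x,z)V^*(x,z^{j\leftarrow a})=V^*(x,z).
\end{equation*}
Summing this over the $n-k$ coordinates yields $F^*_\lambda(z)=s_k(n-k)V^*(x,z)^{1+\lambda}$.

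One bookkeeping point needs care here. $C_{\mathrm{AOAR}}(z)$ includes only reachable children, while the Bellman sum runs over all $a\in\calV$. The two sums agree because any unreachable child has $\base(Y_j=a\mid x,z)=0$, so it contributes nothing to the Bellman sum. I expect this to be the only subtle step, and it is minor.

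\textbf{Backward mass.} There is one backward move for each $i\in R(z)$, so $k$ terms in total. I rewrite each parent value as $V^*(x,z^{-i})=r_i^*(z)V^*(x,z)$, which is valid since $V^*(x,z)>0$. Substituting into the backward weight and factoring out the common terms gives
\begin{equation*}
    B^*_\lambda(z)=s_{k-1}V^*(x,z)^{1+\lambda}\sum_{i\in R(z)}r_i^*(z)^\lambda .
\end{equation*}

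\textbf{Directional probabilities.} Conditioning on a non-self move removes the common holding factor, exactly as in \cref{prop:balanced_directional_masses}. The forward and backward probabilities are therefore $F^*_\lambda/(F^*_\lambda+B^*_\lambda)$ and $B^*_\lambda/(F^*_\lambda+B^*_\lambda)$. The common positive factor $V^*(x,z)^{1+\lambda}$ cancels from numerator and denominator, which leaves exactly the stated expressions.
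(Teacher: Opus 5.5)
Your proposal is correct and follows the paper's proof essentially line for line. For the forward mass you factor out $s_kV^*(x,z)^\lambda$ and apply the AOAR Bellman identity coordinate by coordinate; for the backward mass you rewrite $V^*(x,z^{-i})=r_i^*(z)V^*(x,z)$; then you normalize. Your extra remarks, that $k\le n-2$ rules out the leaf-corrected forward weight and that unreachable children contribute zero to the Bellman sum, are bookkeeping the paper leaves implicit.
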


\begin{proof}
For the forward mass, using the AOAR Bellman identity,
\begin{align*}
    F_\lambda^*(z)
    &=
    \sum_{j\notin R(z)}
    \sum_{a\in\calV}
    s_kV^*(x,z)^\lambda
    \base(Y_j=a\mid x,z)
    V^*(x,z^{j\leftarrow a})\\
    &=
    s_kV^*(x,z)^\lambda
    \sum_{j\notin R(z)}V^*(x,z)
    =
    s_k(n-k)V^*(x,z)^{1+\lambda}.
\end{align*}
For backward moves,
\begin{align*}
    B_\lambda^*(z)
    &=
    \sum_{i\in R(z)}
    s_{k-1}V^*(x,z)V^*(x,z^{-i})^\lambda\\
    &=
    s_{k-1}V^*(x,z)^{1+\lambda}
    \sum_{i\in R(z)}
    \left[
        \frac{V^*(x,z^{-i})}{V^*(x,z)}
    \right]^\lambda.
\end{align*}
Normalizing the forward and backward masses gives the probabilities in the proposition.
\end{proof}

\begin{remark}[Reference-target mismatch interpretation]
\label{rem:bg_reference_target_mismatch}
By the exact target conditional,
\begin{equation*}
    \pi^*(Y_i=z_i\mid x,z^{-i})
    =
    \base(Y_i=z_i\mid x,z^{-i})
    \frac{V^*(x,z)}{V^*(x,z^{-i})},
    \qquad
    r_i^*(z)
    =
    \frac{\base(Y_i=z_i\mid x,z^{-i})}
    {\pi^*(Y_i=z_i\mid x,z^{-i})}.
\end{equation*}
Thus $r_i^*(z)>1$ means $\base$ favors $z_i$ more than $\pi^*$ and increases
the remasking weight of coordinate $i$, while $r_i^*(z)<1$ means $\pi^*$ gives
stronger support and the remasking weight is reduced.

In other words, the chain preferentially erases tokens that look base-plausible
but target-weak, and keeps tokens that are target-supported.
\end{remark}

\begin{remark}[Penultimate-depth directional masses]
\label{rem:bg_leaf_boundary_directional_masses}
When $z$ is a penultimate-depth state with $k(z)=n-1$, any forward move reaches a leaf. The leaf correction resets the forward edge to $s_{n-1}U^*(x,y)$ rather than the non-leaf product edge. Hence, under the exact verifier,
\begin{equation*}
    F_\lambda^*(z)=s_{n-1}V^*(x,z),
\end{equation*}
whereas
\begin{equation*}
    B_\lambda^*(z)
    =
    s_{n-2}V^*(x,z)\sum_{i\in R(z)}V^*(x,z^{-i})^\lambda.
\end{equation*}
This boundary rule is intentional: it keeps the leaf degree proportional to $U^*(x,y)$, and therefore preserves the exact leaf-conditioned stationary law.
\end{remark}

\begin{remark}[Canonical depth anchor versus exact $1/2$ balance]
\label{rem:bg_canonical_anchor}
The linear balanced choice
\begin{equation*}
    s_k(n-k)=ks_{k-1}
\end{equation*}
does not give exact $1/2$--$1/2$ balance for $\lambda>0$ in the clean geometric chain. Under this choice, away from the leaf boundary,
\begin{equation*}
    \bbP(\mathrm{backward}\mid x,z,\mathrm{move})
    =
    \frac{\sum_{i\in R(z)}r_i^*(z)^\lambda}
    {k+\sum_{i\in R(z)}r_i^*(z)^\lambda},
    \qquad k=\abs{R(z)}.
\end{equation*}
Thus $s_k(n-k)=ks_{k-1}$ is best interpreted as the canonical \emph{depth-neutral anchor}: it recovers exact balance when $\lambda=0$, while the $\lambda>0$ correction changes the backward probability according to the verifier value of the remasked parent states.

More quantitatively, for small $\lambda$,
\begin{equation*}
    r_i^*(z)^\lambda
    =
    \exp\!\left(\lambda\log r_i^*(z)\right)
    =
    1+\lambda\log r_i^*(z)
    +O\!\left(\lambda^2\log^2 r_i^*(z)\right).
\end{equation*}
Define the first empirical log-moment of the parent-value ratios
\begin{equation*}
    \bar m_1(z)
    :=
    \frac{1}{k}\sum_{i\in R(z)}
    \log\frac{V^*(x,z^{-i})}{V^*(x,z)}.
\end{equation*}
Then
\begin{equation*}
    \sum_{i\in R(z)}r_i^*(z)^\lambda
    =
    k+\lambda k\bar m_1(z)+O\!\left(\lambda^2 k\bar m_2(z)\right),
\end{equation*}
where
\begin{equation*}
    \bar m_2(z)
    :=
    \frac{1}{k}\sum_{i\in R(z)}
    \log^2\frac{V^*(x,z^{-i})}{V^*(x,z)}.
\end{equation*}
Substituting this into the backward probability gives
\begin{equation*}
    \bbP(\mathrm{backward}\mid x,z,\mathrm{move})
    =
    \frac12
    +
    \frac{\lambda}{4}\bar m_1(z)
    +
    O\!\left(
        \lambda^2\bar m_2(z)
        +
        \lambda^2\bar m_1(z)^2
    \right).
\end{equation*}
Equivalently,
\begin{equation*}
    \bbP(\mathrm{forward}\mid x,z,\mathrm{move})
    =
    \frac12
    -
    \frac{\lambda}{4}\bar m_1(z)
    +
    O\!\left(
        \lambda^2\bar m_2(z)
        +
        \lambda^2\bar m_1(z)^2
    \right).
\end{equation*}
Therefore, the leading-order deviation from $1/2$ is not a combinatorial depth bias. It is the value-gated remasking signal induced by the geometric parent factor.
\end{remark}

\section{Momentum-Balanced AOAR-VGB}
\label{app:sec:momentum_balanced_aoar_vgb}

Balanced AOAR-VGB fixes the depth-allocation pathology of primitive AOAR-VGB by balancing forward and backward directions at each interior depth in the ideal exact-value case. However, the resulting walk can still be locally diffusive: after revealing a coordinate, the chain may immediately re-mask a coordinate, and after re-masking it may immediately reveal again. Such re-masking/revealing oscillations waste Markov steps and can be computationally inefficient, especially for long sequences where each transition may require evaluating many candidate child states.

This issue is closely related to the diffusive behavior of reversible random walks studied in the lifting literature. A lifting replaces each state by several copies and defines a larger Markov chain whose projection preserves the desired stationary law, while using non-reversible momentum to reduce unnecessary direction changes \citep{hayes2010liftings}. In the present AOAR setting, the two natural directions are downward reveal moves and upward re-masking moves. We therefore introduce a two-copy momentum lift of balanced AOAR-VGB. The construction is inspired by the momentum version of VGB in \citet{rohatgi2025taming} and by the flow-cancellation principle of \citet{hayes2010liftings}, but is adapted here to the any-order masked-state graph.

\subsection{Setup and Notation}
\label{app:subsec:momentum_balanced_setup}

Throughout this section, we use the balanced AOAR-VGB notation from \cref{app:subsec:balanced_aoar_vgb}. In particular, $K_{\mathrm{BAL}}$ denotes the self-loop-free balanced AOAR-VGB move kernel induced by the local weights in \cref{def:balanced_aoar_weights} with $\gamma=0$, and $\mu_{\mathrm{BAL}}$ denotes its stationary distribution from \cref{thm:balanced_weighted_graph}.

\paragraph{Directional neighborhoods and masses.}
Recall the AOAR child and parent neighborhoods
\begin{align*}
    C_{\mathrm{AOAR}}(z) &:= \mbra{z^{j\leftarrow a}: j\notin R(z),\ a\in\calV,\ M_{\mathrm{ref}}(x,z^{j\leftarrow a})>0},\\
    P_{\mathrm{AOAR}}(z) &:= \mbra{z^{-i}: i\in R(z)}.
\end{align*}
A move to $C_{\mathrm{AOAR}}(z)$ is called a \emph{downward} $(\downarrow)$ or \emph{forward} move, because it reveals one additional coordinate. A move to $P_{\mathrm{AOAR}}(z)$ is called an \emph{upward} $(\uparrow)$ or \emph{backward} move, because it re-masks one coordinate.

For the self-loop-free balanced kernel, define the directional masses
\begin{equation*}
    p_{\downarrow}(z) := \sum_{c\in C_{\mathrm{AOAR}}(z)}K_{\mathrm{BAL}}(z,c), \qquad
    p_{\uparrow}(z) := \sum_{p\in P_{\mathrm{AOAR}}(z)}K_{\mathrm{BAL}}(z,p).
\end{equation*}
Thus $p_{\downarrow}(z)+p_{\uparrow}(z)=1$. At the boundaries, these definitions give
\begin{equation*}
    p_{\downarrow}(\varnothing)=1,\qquad p_{\uparrow}(\varnothing)=0,\qquad
    p_{\downarrow}(y)=0,\qquad p_{\uparrow}(y)=1\quad (y\in\calY).
\end{equation*}
In the exact canonical balanced case, where $\beta_k=k$, $\alpha_k=n-k$ for $1\le k\le n-1$, and $\widehat V=V^*$, \cref{prop:balanced_directional_masses} gives
\begin{equation*}
    p_{\downarrow}(z)=p_{\uparrow}(z)=\frac12
\end{equation*}
for every interior reachable state. In practical plug-in samplers, however, $\widehat V$ may break this exact equality.

\paragraph{Lifted tuple state space.}
Momentum is represented by augmenting the masked state with a direction variable. We use tuple states
\begin{equation*}
    (z,\downarrow), \qquad (z,\uparrow).
\end{equation*}
As introduced above, the label $\downarrow$ means that the chain is currently in a downward, forward-revealing momentum mode, while $\uparrow$ means that it is currently in an upward, backtracking momentum mode.

Let $\calZ_{\mathrm{BAL},+}$ be a reachable component of $\calZ_{\mathrm{reach}}$ on which $W_{\mathrm{BAL}}(z)>0$ for every state $z$. We keep both direction labels for every state in this component:
\begin{equation*}
    \widetilde{\calZ}_{\mathrm{BAL}} := \calZ_{\mathrm{BAL},+}\times\mbra{\downarrow,\uparrow}.
\end{equation*}
Thus, states in this component, including its root and leaf boundary states, have two momentum copies. Some boundary copies are only used as switch states. For example, from $(\varnothing,\uparrow)$ the chain switches to $(\varnothing,\downarrow)$ before moving downward, while from $(y,\downarrow)$ a leaf switches to $(y,\uparrow)$ before moving upward.

We write
\begin{equation*}
    \textsf{proj}(z,d):=z,\qquad d\in\mbra{\downarrow,\uparrow},
\end{equation*}
for the projection from a lifted state to its current masked state. For a probability distribution $\widetilde\nu$ on $\widetilde{\calZ}_{\mathrm{BAL}}$, its current-state marginal is
\begin{equation*}
    \textsf{proj}_{\#}\widetilde\nu(z) = \widetilde\nu(z,\downarrow)+\widetilde\nu(z,\uparrow).
\end{equation*}

\paragraph{Stationary split.}
The goal of the momentum lift is to change path geometry without changing the stationary law after projection. Thus the lifted chain should have a stationary distribution on the two-copy space whose projection is exactly the balanced AOAR-VGB stationary law $\mu_{\mathrm{BAL}}$. We take the simplest such target: an equal split of the base stationary mass across the two momentum modes,
\begin{equation*}
    \widetilde\mu_{\mathrm{BAL}}(z,\downarrow):=\frac12\mu_{\mathrm{BAL}}(z), \qquad
    \widetilde\mu_{\mathrm{BAL}}(z,\uparrow):=\frac12\mu_{\mathrm{BAL}}(z).
\end{equation*}
This choice does not bias the chain toward either momentum direction at stationarity, and projecting out the momentum label immediately recovers the desired base stationary law:
\begin{equation*}
    \textsf{proj}_{\#}\widetilde\mu_{\mathrm{BAL}}(z)=\mu_{\mathrm{BAL}}(z).
\end{equation*}

\subsection{Momentum-Balanced AOAR-VGB}
\label{app:subsec:momentum_balanced_aoar_vgb}

We now define the flow-cancelled switch lift. When the chain is in downward mode, it keeps the child moves of the balanced AOAR kernel and replaces the total parent-move mass by a switch to upward mode. When the chain is in upward mode, it keeps the parent moves and replaces the total child-move mass by a switch to downward mode.

\begin{wrapfigure}[12]{r}{0.40\textwidth}
\vspace{-0.6em}
\centering
\resizebox{0.96\linewidth}{!}{
\begin{tikzpicture}[
    scale=0.78,
    every node/.style={font=\scriptsize},
    copy/.style={draw=black, fill=white, rounded corners=3pt, minimum width=1.85cm, minimum height=0.52cm},
    arr/.style={-{Stealth[length=1.8mm]}, thick, black}
]
    \node[copy] (down) at (0,0.55) {$(z,\downarrow)$};
    \node[copy] (up) at (0,-0.55) {$(z,\uparrow)$};
    \draw[arr] (down.east) to[out=0,in=0] node[right, font=\tiny] {$p_{\uparrow}-\chi m$} (up.east);
    \draw[arr] (up.west) to[out=180,in=180] node[left, font=\tiny] {$p_{\downarrow}-\chi m$} (down.west);
    \draw[arr] (down.north west) .. controls +(-0.1,0.38) and +(0.1,0.38) .. node[above, font=\tiny] {$\chi m$} (down.north east);
    \draw[arr] (up.south east) .. controls +(0.1,-0.28) and +(-0.1,-0.28) .. node[below, yshift=-0.4mm, font=\tiny] {$\chi m$} (up.south west);
    \node[align=center, font=\tiny] at (0,-2.00) {$m=\min\{p_{\downarrow},p_{\uparrow}\}$};
\end{tikzpicture}
}
\caption{Flow cancellation at state $z$.}
\label{fig:flow_cancelled_switch}
\end{wrapfigure}

For a fixed masked state $z$, the two lifted copies are $(z,\downarrow)$ and $(z,\uparrow)$. If both momentum modes assign positive probability to switching between these two copies, the smaller of the two switch probabilities is merely a symmetric back-and-forth exchange. We remove this common exchange and turn it into same-copy holding probability, leaving only the residual imbalance as an actual momentum switch.

This is the direct analogue of crossing-flow cancellation in lifted Markov chains: when there is crossing flow in both directions between two lifted copies, the common part can be cancelled without changing the projected stationary flow.

\begin{definition}[Flow-cancelled momentum-balanced AOAR-VGB kernel]
\label{def:flow_cancelled_momentum_kernel}
Fix a cancellation strength
\begin{equation*}
    \chi\in[0,1].
\end{equation*}
For each reachable state $z$, define
\begin{equation*}
    m(z):=\min\{p_{\downarrow}(z),p_{\uparrow}(z)\}.
\end{equation*}
The flow-cancelled lifted kernel $\widetilde K_{\mathrm{BAL}}^{\mathrm{fc},\chi}$ on $\widetilde{\calZ}_{\mathrm{BAL}}$ is defined as follows.

\paragraph{Downward copy.}
From $(z,\downarrow)$, the chain moves to a child while keeping downward momentum with probability
\begin{equation*}
    \widetilde K_{\mathrm{BAL}}^{\mathrm{fc},\chi}\bigl((z,\downarrow),(c,\downarrow)\bigr)
    := K_{\mathrm{BAL}}(z,c), \qquad c\in C_{\mathrm{AOAR}}(z).
    \tag{\textsc{Child Move}}
\end{equation*}
It switches to upward momentum with residual opposite-direction probability
\begin{equation*}
    \widetilde K_{\mathrm{BAL}}^{\mathrm{fc},\chi}\bigl((z,\downarrow),(z,\uparrow)\bigr)
    := p_{\uparrow}(z)-\chi m(z),
    \tag{\textsc{Switch Up}}
\end{equation*}
and it stays in the same lifted state with the cancelled crossing mass
\begin{equation*}
    \widetilde K_{\mathrm{BAL}}^{\mathrm{fc},\chi}\bigl((z,\downarrow),(z,\downarrow)\bigr)
    := \chi m(z).
    \tag{\textsc{Stay Down}}
\end{equation*}

\paragraph{Upward copy.}
From $(z,\uparrow)$, the chain moves to a parent while keeping upward momentum with probability
\begin{equation*}
    \widetilde K_{\mathrm{BAL}}^{\mathrm{fc},\chi}\bigl((z,\uparrow),(p,\uparrow)\bigr)
    := K_{\mathrm{BAL}}(z,p), \qquad p\in P_{\mathrm{AOAR}}(z).
    \tag{\textsc{Parent Move}}
\end{equation*}
It switches to downward momentum with residual opposite-direction probability
\begin{equation*}
    \widetilde K_{\mathrm{BAL}}^{\mathrm{fc},\chi}\bigl((z,\uparrow),(z,\downarrow)\bigr)
    := p_{\downarrow}(z)-\chi m(z),
    \tag{\textsc{Switch Down}}
\end{equation*}
and it stays in the same lifted state with the cancelled crossing mass
\begin{equation*}
    \widetilde K_{\mathrm{BAL}}^{\mathrm{fc},\chi}\bigl((z,\uparrow),(z,\uparrow)\bigr)
    := \chi m(z).
    \tag{\textsc{Stay Up}}
\end{equation*}
All other transition probabilities are zero.
\end{definition}

The parameter $\chi$ acts as a cancellation knob, analogous to a friction coefficient in underdamped Langevin dynamics: it controls how strongly the lifted chain suppresses direction changes and preserves momentum persistence. Larger $\chi$ removes more symmetric back-and-forth switching and keeps more probability in the current momentum copy. The case $\chi=0$ performs no cancellation. The case $\chi=1$ maximally cancels opposing switch flows. In particular, if $p_{\downarrow}(z)=p_{\uparrow}(z)=1/2$ as in the exact canonical balanced case with $\widehat V=V^*$, then the switch probabilities at $z$ become zero and the cancelled mass becomes a same-copy holding probability $1/2$.

\begin{remark}[Implementation as a categorical distribution]
\label{rem:flow_cancelled_categorical_implementation}
Recall that $w^{(k)}$ denotes the depth-rescaled balanced AOAR-VGB local weight from \cref{def:balanced_aoar_weights}. Let
\begin{equation*}
    F(z):=\sum_{c\in C_{\mathrm{AOAR}}(z)}w^{(k(z))}(z\to c), \qquad
    B(z):=\sum_{p\in P_{\mathrm{AOAR}}(z)}w^{(k(z))}(z\to p),
\end{equation*}
and
\begin{equation*}
    M(z):=\min\mbra{F(z),B(z)}.
\end{equation*}
Then
\begin{equation*}
    p_{\downarrow}(z)=\frac{F(z)}{F(z)+B(z)}, \qquad
    p_{\uparrow}(z)=\frac{B(z)}{F(z)+B(z)}, \qquad
    m(z)=\frac{M(z)}{F(z)+B(z)}.
\end{equation*}
Equivalently, the kernel can be implemented by unnormalized categorical weights. In downward mode, sample from
\begin{align*}
    \calA_{\downarrow}(z)
    :=
    &\ \mbra{c\in C_{\mathrm{AOAR}}(z):\ \text{weight }w^{(k(z))}(z\to c)}\\
    &\cup\ \mbra{\texttt{switch\_up}:\ \text{weight }B(z)-\chi M(z)}
     \cup\ \mbra{\texttt{stay\_down}:\ \text{weight }\chi M(z)}.
\end{align*}
If a child is sampled, the chain moves to $(c,\downarrow)$. If $\texttt{switch\_up}$ is sampled, it moves to $(z,\uparrow)$. If $\texttt{stay\_down}$ is sampled, it remains at $(z,\downarrow)$.

In upward mode, sample from
\begin{align*}
    \calA_{\uparrow}(z)
    :=
    &\ \mbra{p\in P_{\mathrm{AOAR}}(z):\ \text{weight }w^{(k(z))}(z\to p)}\\
    &\cup\ \mbra{\texttt{switch\_down}:\ \text{weight }F(z)-\chi M(z)}
     \cup\ \mbra{\texttt{stay\_up}:\ \text{weight }\chi M(z)}.
\end{align*}
If a parent is sampled, the chain moves to $(p,\uparrow)$. If $\texttt{switch\_down}$ is sampled, it moves to $(z,\downarrow)$. If $\texttt{stay\_up}$ is sampled, it remains at $(z,\uparrow)$.
\end{remark}

The sampler implementation is the singleton-block restriction of the momentum
MDM-VGB sampler in \cref{alg:momentum_mdm_sampler}: set $\calM=\mbra{1}$ and
use the AOAR local forward/backward weights. Thus each MDM child/parent block
is a single revealed or re-masked coordinate.

\begin{proposition}[Markov property]
\label{prop:flow_cancelled_markov}
For every $\chi\in[0,1]$, the kernel $\widetilde K_{\mathrm{BAL}}^{\mathrm{fc},\chi}$ is a Markov kernel on $\widetilde{\calZ}_{\mathrm{BAL}}$.
\end{proposition}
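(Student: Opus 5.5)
To prove \cref{prop:flow_cancelled_markov}, we check directly that $\widetilde K_{\mathrm{BAL}}^{\mathrm{fc},\chi}$ has nonnegative entries and that each row sums to one. We treat the two momentum copies separately; the argument is symmetric, so we write it out for $(z,\downarrow)$ and indicate the changes for $(z,\uparrow)$. Throughout, $z\in\calZ_{\mathrm{BAL},+}$, so $W_{\mathrm{BAL}}(z)>0$ and $K_{\mathrm{BAL}}(z,\cdot)$ is a well-defined probability distribution on $N_{\mathrm{AOAR}}(z)$.

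\emph{Nonnegativity.} The child-move entries $K_{\mathrm{BAL}}(z,c)$ are nonnegative because they are ratios of nonnegative weights to a positive normalizer. The stay entry $\chi m(z)$ is nonnegative since $\chi\ge 0$ and $m(z)\ge 0$. The switch entry $p_{\uparrow}(z)-\chi m(z)$ is the only one requiring an argument. Since $\chi\le 1$ and $m(z)=\min\{p_\downarrow(z),p_\uparrow(z)\}\le p_\uparrow(z)$, we get $p_\uparrow(z)-\chi m(z)\ge p_\uparrow(z)-m(z)\ge 0$. The same inequality with $p_\downarrow$ handles \textsc{Switch Down}.

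\emph{Row sums.} From $(z,\downarrow)$ the total probability is
\begin{equation*}
\sum_{c\in C_{\mathrm{AOAR}}(z)}K_{\mathrm{BAL}}(z,c)+\bigl(p_\uparrow(z)-\chi m(z)\bigr)+\chi m(z)=p_\downarrow(z)+p_\uparrow(z)=1.
\end{equation*}
This uses the definition of $p_\downarrow$ and the identity $p_\downarrow+p_\uparrow=1$, which holds because the self-loop-free kernel $K_{\mathrm{BAL}}(z,\cdot)$ is supported on the disjoint union $C_{\mathrm{AOAR}}(z)\sqcup P_{\mathrm{AOAR}}(z)$. The row for $(z,\uparrow)$ is identical with the roles of parents and children exchanged.

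\emph{Boundary states and well-definedness.} At the root, $p_\uparrow(\varnothing)=0$ and $m(\varnothing)=0$. Hence from $(\varnothing,\downarrow)$ all mass goes to child moves, and from $(\varnothing,\uparrow)$ the chain switches down with probability $1$. The leaves are symmetric: from $(y,\uparrow)$ all mass goes to parent moves, and from $(y,\downarrow)$ the chain switches up with probability $1$. We must also check that every target state lies in $\widetilde\calZ_{\mathrm{BAL}}$. The switch and stay moves keep $z$ fixed. For child and parent moves, only those with positive $K_{\mathrm{BAL}}$ mass matter, and these stay in the component $\calZ_{\mathrm{BAL},+}$ by its definition as a reachable component. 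We expect this membership bookkeeping to be the only subtle point; everything else reduces to the elementary inequality $\chi m\le m\le \min\{p_\downarrow,p_\uparrow\}$.
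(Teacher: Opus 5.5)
Your proposal is correct and follows essentially the same argument as the paper: nonnegativity comes from $m(z)\le\min\{p_\downarrow(z),p_\uparrow(z)\}$ together with $\chi\in[0,1]$, and each row sums to one via $p_\downarrow(z)+p_\uparrow(z)=1$. Your additional checks are sound details that the paper leaves implicit: the root and leaf copies, and the closure of $\calZ_{\mathrm{BAL},+}$ under positive-mass moves.
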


\begin{proof}
Since $m(z)\le p_{\downarrow}(z)$ and $m(z)\le p_{\uparrow}(z)$, all transition probabilities are nonnegative. For the downward copy,
\begin{align*}
    &\sum_{c\in C_{\mathrm{AOAR}}(z)}
    \widetilde K_{\mathrm{BAL}}^{\mathrm{fc},\chi}\bigl((z,\downarrow),(c,\downarrow)\bigr)
    +\widetilde K_{\mathrm{BAL}}^{\mathrm{fc},\chi}\bigl((z,\downarrow),(z,\uparrow)\bigr)
    +\widetilde K_{\mathrm{BAL}}^{\mathrm{fc},\chi}\bigl((z,\downarrow),(z,\downarrow)\bigr)\\
    &\qquad=
    p_{\downarrow}(z)+\bigl(p_{\uparrow}(z)-\chi m(z)\bigr)+\chi m(z)
    =
    p_{\downarrow}(z)+p_{\uparrow}(z)=1.
\end{align*}
The upward row is identical:
\begin{align*}
    &\sum_{p\in P_{\mathrm{AOAR}}(z)}
    \widetilde K_{\mathrm{BAL}}^{\mathrm{fc},\chi}\bigl((z,\uparrow),(p,\uparrow)\bigr)
    +\widetilde K_{\mathrm{BAL}}^{\mathrm{fc},\chi}\bigl((z,\uparrow),(z,\downarrow)\bigr)
    +\widetilde K_{\mathrm{BAL}}^{\mathrm{fc},\chi}\bigl((z,\uparrow),(z,\uparrow)\bigr)\\
    &\qquad=
    p_{\uparrow}(z)+\bigl(p_{\downarrow}(z)-\chi m(z)\bigr)+\chi m(z)=1.
\end{align*}
Thus every row sums to one.
\end{proof}

\begin{theorem}[Stationarity of flow-cancelled momentum-balanced AOAR-VGB]
\label{thm:flow_cancelled_stationary}
Assume terminal anchoring,
\begin{equation*}
    \widehat V(x,y)=\tau(x,y), \qquad y\in\calY.
\end{equation*}
Let $K_{\mathrm{BAL}}$ be the self-loop-free ($\gamma=0$) balanced AOAR-VGB move kernel from \cref{def:balanced_aoar_weights}, restricted to a reachable component where $W_{\mathrm{BAL}}(z)>0$ for every state $z$. Then, from \cref{thm:balanced_weighted_graph} $K_{\mathrm{BAL}}$ is reversible with respect to $\mu_{\mathrm{BAL}}$, and $\mu_{\mathrm{BAL}}(\cdot\mid\calY)=\pi^*(\cdot\mid x)$.

For every $\chi\in[0,1]$, the lifted measure
\begin{equation*}
    \widetilde\mu_{\mathrm{BAL}}(z,\downarrow)=\widetilde\mu_{\mathrm{BAL}}(z,\uparrow)=\frac12\mu_{\mathrm{BAL}}(z)
\end{equation*}
is stationary for $\widetilde K_{\mathrm{BAL}}^{\mathrm{fc},\chi}$. Moreover,
\begin{equation*}
    \textsf{proj}_{\#}\widetilde\mu_{\mathrm{BAL}}=\mu_{\mathrm{BAL}}.
\end{equation*}
Consequently, the flow-cancelled lifted chain has the exact tilted target as its leaf-conditioned current-state law:
\begin{equation*}
    \textsf{proj}_{\#}\widetilde\mu_{\mathrm{BAL}}(\cdot\mid\calY)=\pi^*(\cdot\mid x).
\end{equation*}
\end{theorem}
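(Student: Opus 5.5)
The plan is to verify the global balance equations $\widetilde\mu\widetilde K=\widetilde\mu$ directly at each lifted state, using only reversibility of $K_{\mathrm{BAL}}$ with respect to $\mu_{\mathrm{BAL}}$ from \cref{thm:balanced_weighted_graph}. Reversibility is available because $K_{\mathrm{BAL}}$ is the $\gamma=0$ weighted random walk with symmetric edge weights $f_{\mathrm{BAL}}$. The projection identity $\textsf{proj}_{\#}\widetilde\mu_{\mathrm{BAL}}=\mu_{\mathrm{BAL}}$ is immediate from the equal split. The leaf-conditioned statement then follows from $\mu_{\mathrm{BAL}}(\cdot\mid\calY)=\pi^*(\cdot\mid x)$, which \cref{thm:balanced_weighted_graph} gives under terminal anchoring. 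So the whole content is the stationarity check.

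\textbf{Balance at $(z,\downarrow)$.} The incoming mass has three sources. First, child moves from parents $p\in P_{\mathrm{AOAR}}(z)$ contribute $\sum_p \tfrac12\mu(p)K(p,z)$. Second, the switch from $(z,\uparrow)$ contributes $\tfrac12\mu(z)(p_\downarrow(z)-\chi m(z))$. Third, the holding term contributes $\tfrac12\mu(z)\chi m(z)$.

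By reversibility, $\mu(p)K(p,z)=\mu(z)K(z,p)$, so the first sum equals $\tfrac12\mu(z)p_\uparrow(z)$. The second and third terms add to $\tfrac12\mu(z)p_\downarrow(z)$, because the $\chi m(z)$ pieces cancel exactly. The total is therefore $\tfrac12\mu(z)(p_\uparrow+p_\downarrow)=\tfrac12\mu(z)=\widetilde\mu(z,\downarrow)$.

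The check at $(z,\uparrow)$ is symmetric. Incoming parent moves from children $c\in C_{\mathrm{AOAR}}(z)$ give $\tfrac12\mu(z)p_\downarrow(z)$ by reversibility. The switch from $(z,\downarrow)$ plus the holding term give $\tfrac12\mu(z)p_\uparrow(z)$.

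The only care point, and the main obstacle, is bookkeeping at the boundaries and on the support. At the root, $p_\uparrow(\varnothing)=0$ and $m=0$, so $(\varnothing,\uparrow)$ switches down with probability $1$. It receives only parent moves from depth-$1$ states, whose total is $\tfrac12\mu(\varnothing)p_\downarrow(\varnothing)=\tfrac12\mu(\varnothing)$ by reversibility. That agrees with the general formula. The leaves are handled the same way.

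One must also confirm that the sets of in-neighbors are exactly as claimed. Child moves into $(z,\downarrow)$ come only from $(p,\downarrow)$ with $z\in C_{\mathrm{AOAR}}(p)$, which is equivalent to $p\in P_{\mathrm{AOAR}}(z)$ within the reachable component. The component condition $W_{\mathrm{BAL}}>0$ ensures every $K$ is well defined. Finally, \cref{prop:flow_cancelled_markov} guarantees the rows are stochastic, so $\widetilde\mu$ is a genuine stationary probability measure, which completes the argument.
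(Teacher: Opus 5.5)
Your proposal is correct and follows the paper's proof essentially verbatim. Like the paper, you check global balance at each lifted copy $(z,\downarrow)$ and $(z,\uparrow)$ by splitting the incoming flow into move-flow, switch-flow, and self-loop flow. You then use reversibility of $K_{\mathrm{BAL}}$ to turn the move-flow into $\tfrac12\mu_{\mathrm{BAL}}(z)p_{\uparrow}(z)$ (resp.\ $\tfrac12\mu_{\mathrm{BAL}}(z)p_{\downarrow}(z)$), and read off the projection identity and the leaf law from \cref{thm:balanced_weighted_graph}. Your explicit root/leaf and in-neighbor bookkeeping is a harmless addition that the paper leaves implicit, since the general formula already covers those cases.
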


\begin{proof}
We verify stationarity by checking incoming flow into each lifted copy. First consider $(z,\downarrow)$. Incoming flow comes from three sources: parents of $z$ in downward mode, the switch from $(z,\uparrow)$, and the self-loop at $(z,\downarrow)$.

The incoming move-flow from parent states $u\in P_{\mathrm{AOAR}}(z)$ is
\begin{equation*}
    \frac12\sum_{u\in P_{\mathrm{AOAR}}(z)}\mu_{\mathrm{BAL}}(u)K_{\mathrm{BAL}}(u,z).
\end{equation*}
Using reversibility of $K_{\mathrm{BAL}}$,
\begin{align*}
    \frac12\sum_{u\in P_{\mathrm{AOAR}}(z)}\mu_{\mathrm{BAL}}(u)K_{\mathrm{BAL}}(u,z)
    &=
    \frac12\mu_{\mathrm{BAL}}(z)\sum_{u\in P_{\mathrm{AOAR}}(z)}K_{\mathrm{BAL}}(z,u)=
    \frac12\mu_{\mathrm{BAL}}(z)p_{\uparrow}(z).
\end{align*}
The incoming switch-flow from $(z,\uparrow)$ is
\begin{equation*}
    \frac12\mu_{\mathrm{BAL}}(z)\bigl(p_{\downarrow}(z)-\chi m(z)\bigr),
\end{equation*}
and the self-loop flow at $(z,\downarrow)$ is
\begin{equation*}
    \frac12\mu_{\mathrm{BAL}}(z)\chi m(z).
\end{equation*}
Therefore the total incoming flow into $(z,\downarrow)$ is
\begin{align*}
    &\frac12\mu_{\mathrm{BAL}}(z)p_{\uparrow}(z)
    +\frac12\mu_{\mathrm{BAL}}(z)\bigl(p_{\downarrow}(z)-\chi m(z)\bigr)
    +\frac12\mu_{\mathrm{BAL}}(z)\chi m(z)\\
    &\qquad=
    \frac12\mu_{\mathrm{BAL}}(z)\bigl(p_{\uparrow}(z)+p_{\downarrow}(z)\bigr)
    =
    \frac12\mu_{\mathrm{BAL}}(z)=\widetilde\mu_{\mathrm{BAL}}(z,\downarrow).
\end{align*}

The proof for $(z,\uparrow)$ is symmetric. Incoming move-flow comes from children $v\in C_{\mathrm{AOAR}}(z)$:
\begin{align*}
    \frac12\sum_{v\in C_{\mathrm{AOAR}}(z)}\mu_{\mathrm{BAL}}(v)K_{\mathrm{BAL}}(v,z)
    &=
    \frac12\mu_{\mathrm{BAL}}(z)\sum_{v\in C_{\mathrm{AOAR}}(z)}K_{\mathrm{BAL}}(z,v)=
    \frac12\mu_{\mathrm{BAL}}(z)p_{\downarrow}(z),
\end{align*}
where the first equality uses reversibility of $K_{\mathrm{BAL}}$. The incoming switch-flow from $(z,\downarrow)$ is
\begin{equation*}
    \frac12\mu_{\mathrm{BAL}}(z)\bigl(p_{\uparrow}(z)-\chi m(z)\bigr),
\end{equation*}
and the self-loop flow at $(z,\uparrow)$ is
\begin{equation*}
    \frac12\mu_{\mathrm{BAL}}(z)\chi m(z).
\end{equation*}
Thus the total incoming flow into $(z,\uparrow)$ is
\begin{equation*}
    \frac12\mu_{\mathrm{BAL}}(z)p_{\downarrow}(z)
    +\frac12\mu_{\mathrm{BAL}}(z)\bigl(p_{\uparrow}(z)-\chi m(z)\bigr)
    +\frac12\mu_{\mathrm{BAL}}(z)\chi m(z)
    =
    \frac12\mu_{\mathrm{BAL}}(z).
\end{equation*}
Therefore $\widetilde\mu_{\mathrm{BAL}}$ is stationary.

Finally,
\begin{equation*}
    \textsf{proj}_{\#}\widetilde\mu_{\mathrm{BAL}}(z)
    =
    \widetilde\mu_{\mathrm{BAL}}(z,\downarrow)+\widetilde\mu_{\mathrm{BAL}}(z,\uparrow)
    =
    \mu_{\mathrm{BAL}}(z).
\end{equation*}
By \cref{thm:balanced_weighted_graph}, terminal anchoring implies $\mu_{\mathrm{BAL}}(\cdot\mid\calY)=\pi^*(\cdot\mid x)$. Since the lifted current-state marginal is $\mu_{\mathrm{BAL}}$, the same leaf-conditioned law holds after projection.
\end{proof}

\subsection{Momentum Lift for Geometric Balanced AOAR-VGB}
\label{app:subsec:momentum_bg_aoar_vgb}

Let $K_{\mathrm{BG}}$ denote the non-lazy move kernel induced by \cref{def:bg_aoar_edge_weights}, and let $\mu_{\mathrm{BG}}$ be its stationary law from \cref{thm:bg_aoar_stationarity}. Define directional probabilities
\begin{equation*}
    p_{\downarrow}(z):=\sum_{c\in C_{\mathrm{AOAR}}(z)}K_{\mathrm{BG}}(z,c),
    \qquad
    p_{\uparrow}(z):=\sum_{p\in P_{\mathrm{AOAR}}(z)}K_{\mathrm{BG}}(z,p).
\end{equation*}
Then $p_{\downarrow}(z)+p_{\uparrow}(z)=1$ at non-isolated states. The same flow-cancelled lift used above applies to this geometric base chain.

\cref{fig:geometric_aoar_vgb_momentum_transition} illustrates the lifted transition. The downward copy keeps forward reveal moves, the upward copy keeps backward re-masking moves, and the residual opposite-direction mass switches the momentum label.

\begin{figure}[t]
    \centering
    \vspace{-0.5em}
    \includegraphics[width=0.95\linewidth]{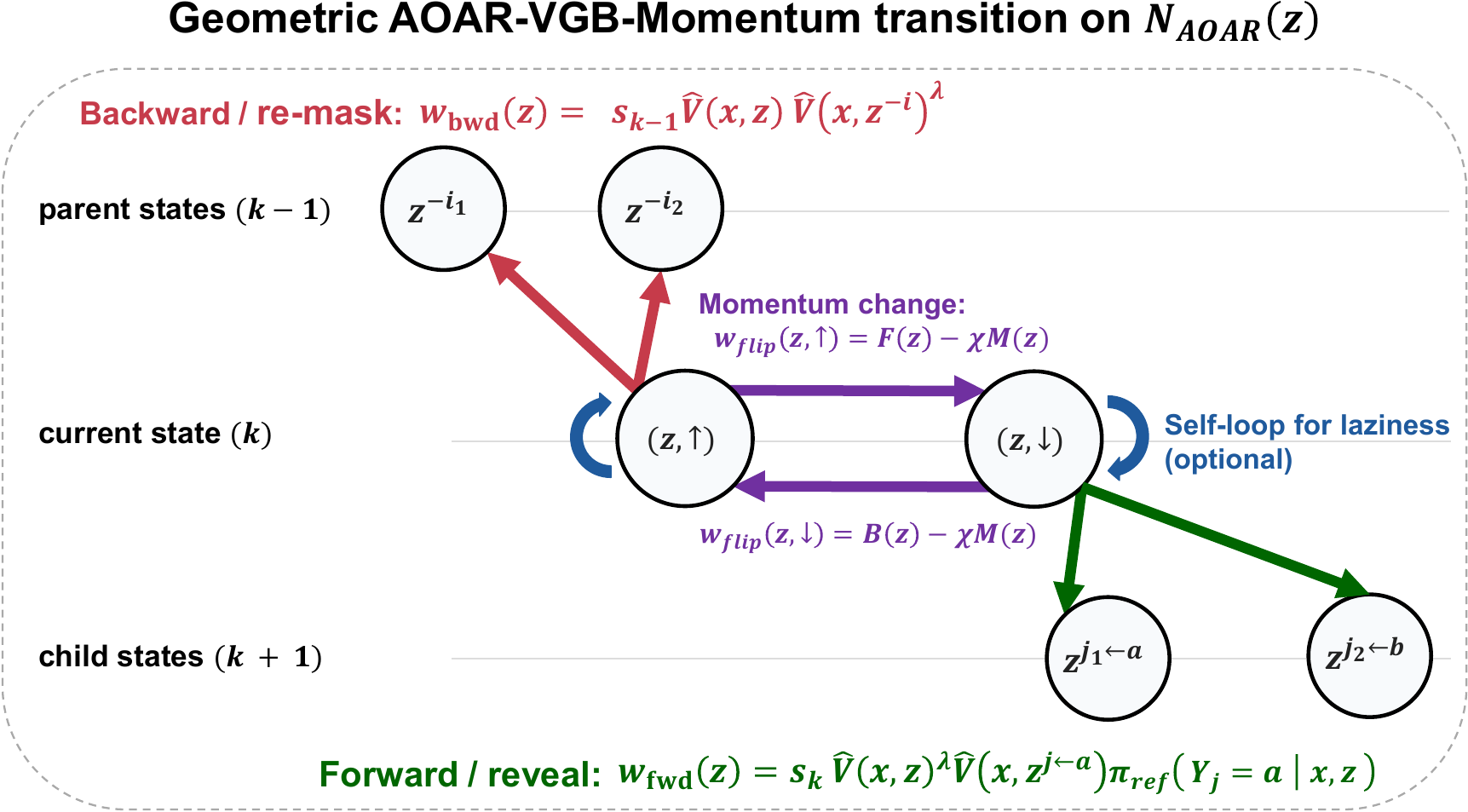}
    \caption{
        Illustration of the momentum lift for geometric balanced AOAR-VGB.
        Forward moves preserve downward momentum, backward moves preserve upward momentum, and residual opposite-direction mass switches between lifted copies; the cancelled common mass may be kept as an optional stay probability.
    }
    \label{fig:geometric_aoar_vgb_momentum_transition}
    \vspace{-0.5em}
\end{figure}

\begin{definition}[Flow-cancelled momentum geometric AOAR kernel]
\label{def:bg_aoar_momentum_kernel}
Fix $\chi\in[0,1]$ and set
\begin{equation*}
    m(z):=\min\mbra{p_{\downarrow}(z),p_{\uparrow}(z)}.
\end{equation*}
On the lifted state space
\begin{equation*}
    \widetilde{\calZ}_{\mathrm{BG}}:=\calZ_+(x)\times\mbra{\downarrow,\uparrow},
\end{equation*}
define the lifted kernel $\widetilde K_{\mathrm{BG}}^{\mathrm{fc},\chi}$ as follows.

\paragraph{Downward copy.}
From $(z,\downarrow)$, the chain moves to a child while keeping downward momentum with probability
\begin{equation*}
    \widetilde K_{\mathrm{BG}}^{\mathrm{fc},\chi}
    \bigl((z,\downarrow),(c,\downarrow)\bigr)
    :=
    K_{\mathrm{BG}}(z,c),\qquad c\in C_{\mathrm{AOAR}}(z).
    \tag{\textsc{Child Move}}
\end{equation*}
It switches to upward momentum with residual opposite-direction probability
\begin{equation*}
    \widetilde K_{\mathrm{BG}}^{\mathrm{fc},\chi}
    \bigl((z,\downarrow),(z,\uparrow)\bigr)
    :=
    p_{\uparrow}(z)-\chi m(z).
    \tag{\textsc{Switch Up}}
\end{equation*}
and it stays in the same lifted state with the cancelled crossing mass
\begin{equation*}
    \widetilde K_{\mathrm{BG}}^{\mathrm{fc},\chi}
    \bigl((z,\downarrow),(z,\downarrow)\bigr)
    :=
    \chi m(z).
    \tag{\textsc{Stay Down}}
\end{equation*}

\paragraph{Upward copy.}
From $(z,\uparrow)$, the chain moves to a parent while keeping upward momentum with probability
\begin{equation*}
    \widetilde K_{\mathrm{BG}}^{\mathrm{fc},\chi}
    \bigl((z,\uparrow),(p,\uparrow)\bigr)
    :=
    K_{\mathrm{BG}}(z,p),\qquad p\in P_{\mathrm{AOAR}}(z).
    \tag{\textsc{Parent Move}}
\end{equation*}
It switches to downward momentum with residual opposite-direction probability
\begin{equation*}
    \widetilde K_{\mathrm{BG}}^{\mathrm{fc},\chi}
    \bigl((z,\uparrow),(z,\downarrow)\bigr)
    :=
    p_{\downarrow}(z)-\chi m(z).
    \tag{\textsc{Switch Down}}
\end{equation*}
and it stays in the same lifted state with the cancelled crossing mass
\begin{equation*}
    \widetilde K_{\mathrm{BG}}^{\mathrm{fc},\chi}
    \bigl((z,\uparrow),(z,\uparrow)\bigr)
    :=
    \chi m(z).
    \tag{\textsc{Stay Up}}
\end{equation*}
All other transition probabilities are zero.
\end{definition}

\begin{theorem}[Momentum preserves the geometric AOAR leaf law]
\label{thm:bg_aoar_momentum_stationarity}
Define
\begin{equation*}
    \widetilde\mu_{\mathrm{BG}}(z,\downarrow)
    =
    \widetilde\mu_{\mathrm{BG}}(z,\uparrow)
    :=
    \frac12\mu_{\mathrm{BG}}(z).
\end{equation*}
Then $\widetilde\mu_{\mathrm{BG}}$ is stationary for $\widetilde K_{\mathrm{BG}}^{\mathrm{fc},\chi}$ for every $\chi\in[0,1]$. Moreover, its current-state projection is $\mu_{\mathrm{BG}}$, and therefore
\begin{equation*}
    \textsf{proj}_{\#}\widetilde\mu_{\mathrm{BG}}(\cdot\mid\calY)
    =
    \pi^*(\cdot\mid x).
\end{equation*}
\end{theorem}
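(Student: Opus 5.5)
We will follow the template of the proof of \cref{thm:flow_cancelled_stationary} almost verbatim, with the balanced base kernel replaced by the geometric base kernel $K_{\mathrm{BG}}$. The only property of the base chain used there is reversibility of the self-loop-free move kernel with respect to its stationary law. For $K_{\mathrm{BG}}$ this is supplied by \cref{thm:bg_aoar_stationarity}: $K_{\mathrm{BG}}$ is the weighted random walk on the symmetric edge weights $f_{\mathrm{BG}}$. Hence $\mu_{\mathrm{BG}}(z)K_{\mathrm{BG}}(z,u)=\mu_{\mathrm{BG}}(u)K_{\mathrm{BG}}(u,z)$ for all neighbors $z\sim u$. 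We restrict to $\calZ_+(x)$, which by \cref{lem:bg_positive_support_closure} is closed under positive-weight moves. On this set every state has $D_{\mathrm{BG}}(z)>0$, so $p_\downarrow+p_\uparrow=1$ holds there. A preliminary step is to note that $\widetilde K_{\mathrm{BG}}^{\mathrm{fc},\chi}$ is a Markov kernel, exactly as in \cref{prop:flow_cancelled_markov}: since $m(z)\le\min\{p_\downarrow(z),p_\uparrow(z)\}$, every entry is nonnegative and each row sums to $p_\downarrow+p_\uparrow=1$.

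Next we check the global balance equations for $\widetilde\mu_{\mathrm{BG}}$ at each lifted state. The lifted state $(z,\downarrow)$ receives flow from three sources.
\begin{enumerate}
    \item Downward moves from parents $u\in P_{\mathrm{AOAR}}(z)$ contribute $\frac12\sum_u \mu_{\mathrm{BG}}(u)K_{\mathrm{BG}}(u,z)$. By reversibility this equals $\frac12\mu_{\mathrm{BG}}(z)p_\uparrow(z)$.
    \item The switch from $(z,\uparrow)$ contributes $\frac12\mu_{\mathrm{BG}}(z)(p_\downarrow(z)-\chi m(z))$.
    \item The self-loop contributes $\frac12\mu_{\mathrm{BG}}(z)\chi m(z)$.
\end{enumerate}
These sum to $\frac12\mu_{\mathrm{BG}}(z)$. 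The state $(z,\uparrow)$ is symmetric: children $v\in C_{\mathrm{AOAR}}(z)$ contribute $\frac12\mu_{\mathrm{BG}}(z)p_\downarrow(z)$ via reversibility, and the switch and stay terms contribute the rest. One subtlety is worth checking: the parents of $z$ that have positive weight must lie in $\calZ_+(x)$, and zero-weight edges contribute nothing. Both facts follow from \cref{lem:bg_positive_support_closure}, so the sums over $\calZ_+(x)$ match the sums in the reversibility identity.

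Finally, the projection gives $\mathrm{proj}_\#\widetilde\mu_{\mathrm{BG}}(z)=\frac12\mu_{\mathrm{BG}}(z)+\frac12\mu_{\mathrm{BG}}(z)=\mu_{\mathrm{BG}}(z)$. Conditioning on $\calY$ and invoking the exact leaf law in \cref{thm:bg_aoar_stationarity}, which holds under terminal anchoring, yields $\pi^*(\cdot\mid x)$. The only place needing care is the boundary states. At the root, $p_\uparrow=0$ and $m=0$, so $(\varnothing,\uparrow)$ switches down with probability $1$. At leaves, $p_\downarrow=0$. The flow computation above still holds verbatim in both cases, because each term is just multiplied by the appropriate zero.
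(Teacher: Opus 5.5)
Your proposal is correct and follows essentially the same route as the paper's proof. Both verify global balance at each lifted copy by splitting the incoming flow into three parts: same-direction moves, which reversibility of $K_{\mathrm{BG}}$ turns into $\frac12\mu_{\mathrm{BG}}(z)p_\uparrow(z)$ or $\frac12\mu_{\mathrm{BG}}(z)p_\downarrow(z)$; the switch term; and the stay term. Both then project and invoke \cref{thm:bg_aoar_stationarity}. Your additional bookkeeping (closure of $\calZ_+(x)$ via \cref{lem:bg_positive_support_closure}, positivity of degrees there, the Markov-kernel check, and the root and leaf boundary cases) is not in the paper's proof, but it is correct and makes explicit details the paper leaves implicit.
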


\begin{proof}
We prove stationarity for the downward copy; the upward proof is symmetric. Incoming flow into $(z,\downarrow)$ has three parts. First, parent states $u\in P_{\mathrm{AOAR}}(z)$ in downward mode contribute
\begin{equation*}
    \frac12\sum_{u\in P_{\mathrm{AOAR}}(z)}
    \mu_{\mathrm{BG}}(u)K_{\mathrm{BG}}(u,z).
\end{equation*}
Since $K_{\mathrm{BG}}$ is reversible with respect to $\mu_{\mathrm{BG}}$, this equals
\begin{equation*}
    \frac12\mu_{\mathrm{BG}}(z)
    \sum_{u\in P_{\mathrm{AOAR}}(z)}K_{\mathrm{BG}}(z,u)
    =
    \frac12\mu_{\mathrm{BG}}(z)p_{\uparrow}(z).
\end{equation*}
Second, the switch from $(z,\uparrow)$ contributes
\begin{equation*}
    \frac12\mu_{\mathrm{BG}}(z)(p_{\downarrow}(z)-\chi m(z)).
\end{equation*}
Third, the self-loop at $(z,\downarrow)$ contributes
\begin{equation*}
    \frac12\mu_{\mathrm{BG}}(z)\chi m(z).
\end{equation*}
The total incoming flow is therefore
\begin{equation*}
    \frac12\mu_{\mathrm{BG}}(z)\mbra{p_{\uparrow}(z)+p_{\downarrow}(z)}
    =
    \frac12\mu_{\mathrm{BG}}(z)
    =
    \widetilde\mu_{\mathrm{BG}}(z,\downarrow).
\end{equation*}
The upward-copy calculation is identical with parents and children exchanged. Projection gives
\begin{equation*}
    \widetilde\mu_{\mathrm{BG}}(z,\downarrow)+
    \widetilde\mu_{\mathrm{BG}}(z,\uparrow)
    =
    \mu_{\mathrm{BG}}(z).
\end{equation*}
The leaf law follows from \cref{thm:bg_aoar_stationarity}.
\end{proof}

\subsection{Theoretical Analysis: Depth-Speed Gain with Momentum}
\label{app:subsec:momentum_balanced_mixing_speed_gain}

We now focus on the depth-wise behavior of Balanced AOAR-VGB (the $\lambda=0$, non-geometric chain) and its flow-cancelled momentum lift. We note that this theoretical analysis is restricted to this balanced chain, rather than the geometric variants above.

In AOAR-VGB without momentum, even when forward and backward moves are locally balanced, the reversible chain still behaves like a diffusive random walk along the depth axis, leading to a quadratic first-leaf hitting time. We show that, under a near-balance condition on the realized forward and backward masses, the no-momentum chain remains diffusive with first-leaf hitting time of order $n^2$, whereas the maximally flow-cancelled momentum lift can accelerate this depth traversal to order $n$ steps.

This is the quantity most aligned with practical VGB-style implementations, which often stop the Markov chain once it first reaches a leaf.

\begin{definition}[Depth process and hitting time]
\label{def:depth_process}
For a masked state $z$, let $k(z)=|R(z)|$. For the balanced chain with current masked state $Z_t^{\mathrm{BAL}}$, define
\begin{equation*}
    \tau_t^{\mathrm{BAL}}:=k(Z_t^{\mathrm{BAL}}), \qquad
    T_n^{\mathrm{BAL}}:=\inf\{t\ge 0:\tau_t^{\mathrm{BAL}}=n\}.
\end{equation*}
For a lifted momentum chain $\widetilde Z_t=(Z_t^{\mathrm{BAL,Mo}},D_t)$, define
\begin{equation*}
    \tau_t^{\mathrm{BAL,Mo}}:=k(Z_t^{\mathrm{BAL,Mo}}), \qquad
    T_n^{\mathrm{BAL,Mo}}:=\inf\{t\ge 0:\tau_t^{\mathrm{BAL,Mo}}=n\}.
\end{equation*}
\end{definition}

\begin{assumption}[Uniform directional near-balance]
\label{ass:directional_near_balance}
There exists $\varepsilon_n\in[0,1)$ such that for every reachable interior state $z$,
\begin{equation*}
    \abs{p_{\downarrow}(z)-p_{\uparrow}(z)}\le \varepsilon_n.
\end{equation*}
Equivalently, since $p_{\downarrow}(z)+p_{\uparrow}(z)=1$,
\begin{equation*}
    \frac{1-\varepsilon_n}{2}\le p_{\downarrow}(z),p_{\uparrow}(z)\le \frac{1+\varepsilon_n}{2}.
\end{equation*}
\end{assumption}

The exact canonical balanced case corresponds to $\varepsilon_n=0$. The near-balanced speedup regime is
\begin{equation*}
    \varepsilon_n=O(1/n).
\end{equation*}
Intuitively, this says that the plug-in verifier may break exact $1/2,1/2$ balance, but only by the same order as the inverse sequence length $1/n$.

\begin{lemma}[Biased reflecting walk hitting time]
\label{lem:biased_reflecting_walk_hitting}
Consider a birth-death chain on $\mbra{0,1,\dots,n}$ with absorbing state $n$, deterministic transition $0\to1$, and for $1\le k\le n-1$,
\begin{equation*}
    k\to k+1 \text{ with probability } p, \qquad
    k\to k-1 \text{ with probability } q:=1-p.
\end{equation*}
Let $(K_t)_{t\ge0}$ denote this birth-death chain, and define the hitting time of depth $n$ by
\begin{equation*}
    T_n^{(p)}:=\inf\mbra{t\ge0:K_t=n}.
\end{equation*}
Set
\begin{equation*}
    H_0^{(p)}:=\bbE\lbra{T_n^{(p)}\mid K_0=0}.
\end{equation*}
If $p\ne q$, then
\begin{equation*}
    H_0^{(p)}
    =
    \frac{n}{p-q}
    -
    \frac{2pq}{(p-q)^2}\sbra{1-\sbra{\frac{q}{p}}^n}.
\end{equation*}
If $p=q=1/2$, then
\begin{equation*}
    H_0^{(1/2)}=n^2.
\end{equation*}
Moreover, for every fixed $c>0$, if
\begin{equation*}
    \abs{p-\frac12}\le \frac{c}{2n},
\end{equation*}
then
\begin{equation*}
    H_0^{(p)}=\Theta_c(n^2).
\end{equation*}
\end{lemma}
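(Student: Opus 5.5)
We will compute $H_k:=\bbE[T_n^{(p)}\mid K_0=k]$ by first-step analysis. The hitting times satisfy
\begin{equation*}
    H_n=0,\qquad H_0=1+H_1,\qquad H_k=1+pH_{k+1}+qH_{k-1}\quad(1\le k\le n-1).
\end{equation*}
We pass to the increments $D_k:=H_k-H_{k+1}$ for $0\le k\le n-1$, which are the expected times to go from $k$ to $k+1$. The boundary gives $D_0=1$. The interior equation rearranges to $pD_k=1+qD_{k-1}$, i.e.\ $D_k=\frac1p+\frac qp D_{k-1}$. Writing $\rho:=q/p$, this first-order linear recursion has the explicit solution
\begin{equation*}
    D_k=\rho^k+\frac1p\sum_{i=0}^{k-1}\rho^i .
\end{equation*}
Then $H_0=\sum_{k=0}^{n-1}D_k$. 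For $p=q=1/2$ we get $D_k=1+2k$, and hence $H_0=\sum_{k=0}^{n-1}(2k+1)=n^2$.

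For $p\ne q$ we evaluate the geometric sums. Using $\sum_{i<k}\rho^i=\frac{1-\rho^k}{1-\rho}$ and $p(1-\rho)=p-q$, we obtain
\begin{equation*}
    D_k=\frac{1}{p-q}-\rho^k\Bigl(\frac1{p-q}-1\Bigr)=\frac1{p-q}-\frac{2q}{p-q}\rho^k .
\end{equation*}
Here we used $1-(p-q)=2q$. Summing over $k$ and using $\sum_{k<n}\rho^k=\frac{p(1-\rho^n)}{p-q}$ gives
\begin{equation*}
    H_0=\frac n{p-q}-\frac{2pq}{(p-q)^2}\bigl(1-\rho^n\bigr),
\end{equation*}
which is the claimed formula. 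We will double-check the algebra by verifying that it reduces to $H_0=1$ at $n=1$, since $\frac1{p-q}-\frac{2q}{p-q}=1$.

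For the $\Theta_c(n^2)$ claim we will avoid the closed form and bound $D_k$ directly. This is the main (mild) obstacle, because the closed form suffers cancellation as $p\to q$. Set $\delta:=p-\frac12$ with $|\delta|\le c/(2n)$. Then
\begin{equation*}
    \rho=\frac{1-2\delta}{1+2\delta}\in\bigl[e^{-C_c/n},\,e^{C_c/n}\bigr]
\end{equation*}
for $n$ large enough relative to $c$. Small $n$ is absorbed into the constants, since then $p$ is bounded away from $0$ and $1$ and $H_0$ is finite and positive. Consequently, for all $0\le i\le n$ we have $\rho^i\in[e^{-C_c},e^{C_c}]$, and also $1/p\in[1,3]$ for large $n$. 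Plugging these bounds into the explicit formula for $D_k$ gives
\begin{equation*}
    e^{-C_c}(1+k)\le D_k\le e^{C_c}(1+3k).
\end{equation*}
Summing over $0\le k\le n-1$ then yields $H_0=\Theta_c(n^2)$.
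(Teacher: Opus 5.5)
Your proposal is correct and follows essentially the paper's route: first-step analysis, a first-order linear recursion for the one-step increments (your $D_k$ is the paper's $-\Delta_{k+1}$), and, for the $\Theta_c(n^2)$ claim, bounding the two contributions $\sum_k\rho^k$ and $\frac1p\sum_k\sum_{i<k}\rho^i$ separately. The paper achieves the same separation by rewriting $n-S_n(r)=(1-r)\sum_{\ell}\sum_{j<\ell}r^j$; your version is slightly cleaner because it never divides by $p-q$.

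One small correction, which also applies to the paper's remark about ``finitely many cases'': for $n\le c$ the hypothesis $\abs{p-\frac12}\le \frac{c}{2n}$ allows $p=0$, where $H_0^{(p)}=\infty$. So your claim that $p$ is then bounded away from $0$ is false, and the $\Theta_c(n^2)$ statement must be read asymptotically; with $c$-dependent constants it does hold for all $n>c$.
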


\begin{proof}
Let
\begin{equation*}
    h_k:=\bbE\lbra{T_n^{(p)}\mid K_0=k}.
\end{equation*}
Then $h_n=0$, and the reflecting root gives $h_0=1+h_1$. For $1\le k\le n-1$,
\begin{equation*}
    h_k=1+p h_{k+1}+q h_{k-1}.
\end{equation*}
Define finite differences $\Delta_k:=h_k-h_{k-1}$ for $1\le k\le n$. The boundary equation gives $\Delta_1=-1$, and the interior recurrence is equivalent to
\begin{equation*}
    p\Delta_{k+1}-q\Delta_k=-1.
\end{equation*}
If $p\ne q$, set
\begin{equation*}
    r:=\frac{q}{p}, \qquad S_n(r):=\sum_{\ell=0}^{n-1}r^\ell.
\end{equation*}
Then the solution is
\begin{equation*}
    \Delta_k=\sbra{-1+\frac{1}{p-q}}r^{k-1}-\frac{1}{p-q}.
\end{equation*}
Since $h_n=0$,
\begin{equation*}
    -h_0=h_n-h_0=\sum_{k=1}^{n}\Delta_k
    =
    \sbra{-1+\frac{1}{p-q}}S_n(r)-\frac{n}{p-q}.
\end{equation*}
Rearranging and using $S_n(r)=\sbra{1-r^n}/\sbra{1-r}$ gives
\begin{equation*}
    h_0
    =
    S_n(r)+\frac{n-S_n(r)}{p-q}
    =
    \frac{n}{p-q}
    -
    \frac{2pq}{(p-q)^2}\sbra{1-\sbra{\frac{q}{p}}^n}.
\end{equation*}
If $p=q=1/2$, the recurrence becomes $h_k=1+\frac12h_{k+1}+\frac12h_{k-1}$, and the solution satisfying $h_n=0$ and $h_0=1+h_1$ is $h_k=n^2-k^2$. Thus $h_0=n^2$.

Finally suppose $\abs{p-1/2}\le c/(2n)$. Then $\abs{p-q}\le c/n$. If $p=q$, the claim follows from $h_0=n^2$. It remains to consider $p\ne q$. Recall that $r=q/p$; equivalently,
\begin{equation*}
    r=\frac{q}{p}=\frac{1-(p-q)}{1+(p-q)}
\end{equation*}
because $p+q=1$. For $n\ge 2c$, we have $\abs{p-q}\le 1/2$. The mean-value theorem then gives
\begin{equation*}
    \abs{\log r}
    =
    \abs{\log\sbra{1-(p-q)}-\log\sbra{1+(p-q)}}
    \le 4\abs{p-q}.
\end{equation*}
Therefore, for $0\le \ell\le n-1$,
\begin{equation*}
    e^{-4c}\le e^{-4\abs{p-q}n}\le r^\ell \le e^{4\abs{p-q}n}\le e^{4c}.
\end{equation*}
Summing these bounds gives
\begin{equation*}
    ne^{-4c}\le S_n(r)=\sum_{\ell=0}^{n-1}r^\ell\le ne^{4c},
\end{equation*}
so $S_n(r)=\Theta_c(n)$. The finitely many cases $n<2c$ are absorbed into the constants hidden by $\Theta_c$.
Moreover,
\begin{equation*}
    n-S_n(r)
    =
    \sum_{\ell=0}^{n-1}(1-r^\ell)
    =
    (1-r)\sum_{\ell=1}^{n-1}\sum_{j=0}^{\ell-1}r^j,
    \qquad
    \abs{1-r}=\Theta_c(\abs{p-q}).
\end{equation*}
The double sum is $\Theta_c(n^2)$, again because all powers $r^j$ stay within constant factors. Thus $\abs{n-S_n(r)}=\Theta_c(\abs{p-q}n^2)$, and $(n-S_n(r))/(p-q)=\Theta_c(n^2)$. Substituting into $h_0=S_n(r)+(n-S_n(r))/(p-q)$ gives $h_0=\Theta_c(n^2)$.
\end{proof}

\begin{proposition}[No-momentum remains diffusive under near-balance]
\label{prop:no_momentum_near_balance_quadratic}
Assume \cref{ass:directional_near_balance} with $\varepsilon_n\le c/n$ for a fixed constant $c>0$. Consider the self-loop-free no-momentum balanced AOAR-VGB depth process, started from depth $0$. Then
\begin{equation*}
    \bbE\lbra{T_n^{\mathrm{BAL}}\mid \tau_0^{\mathrm{BAL}}=0}=\Theta_c(n^2).
\end{equation*}
\end{proposition}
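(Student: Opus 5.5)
The plan is to reduce the depth process $\tau_t^{\mathrm{BAL}}$ to a birth-death chain with state-dependent probabilities. Then I would sandwich its expected hitting time between the hitting times of two homogeneous biased reflecting walks, each controlled by \cref{lem:biased_reflecting_walk_hitting}.

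\textbf{Step 1: the depth increments.} Every non-self move of the self-loop-free kernel $K_{\mathrm{BAL}}$ changes the depth by exactly $\pm1$. From state $z$, the increment is $+1$ with probability $p_{\downarrow}(z)$ and $-1$ with probability $p_{\uparrow}(z)$. At the root, $p_{\downarrow}(\varnothing)=1$, so the move $0\to1$ is deterministic. By \cref{ass:directional_near_balance}, every interior state satisfies $p_{\downarrow}(z)\in[p_-,p_+]$, where $p_\pm:=\frac{1\pm\varepsilon_n}{2}$ and hence $|p_\pm-\tfrac12|\le c/(2n)$. The depth process is not Markov on its own, since $p_{\downarrow}$ depends on the full state $z$. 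It is, however, a process adapted to the filtration of $Z_t^{\mathrm{BAL}}$, with $\pm1$ steps and conditional up-probability in $[p_-,p_+]$.

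\textbf{Step 2: comparison via a martingale/potential argument.} Let $h^{(p)}_k$ denote the expected hitting time of $n$ from $k$ for the homogeneous walk of \cref{lem:biased_reflecting_walk_hitting}. I would extend $h^{(p)}$ by linear interpolation if needed and use the following facts.
\begin{itemize}
\item $h^{(p)}$ is decreasing in $k$ for $p\ge 1/2-c/(2n)$. This holds because $\Delta_k<0$: the closed form gives $\Delta_k=-\sum_{j<k}\ldots$, a negative combination, so every difference is negative.
\item $h^{(p)}$ satisfies $h_k=1+ph_{k+1}+qh_{k-1}$.
\end{itemize}
For the upper bound, take $p=p_-$. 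Given the current state, with up-probability $p'\ge p_-$, the expected value of $h^{(p_-)}_{\tau_{t+1}}+1-h^{(p_-)}_{\tau_t}$ equals $(p'-p_-)(\Delta_{\tau+1})\cdot(\ldots)\le 0$, using monotonicity. Hence $h^{(p_-)}_{\tau_t}+t$ is a supermartingale. Optional stopping at $T_n\wedge t$, followed by monotone convergence, then gives $\mathbb E[T_n^{\mathrm{BAL}}]\le h^{(p_-)}_0$. Symmetrically, $h^{(p_+)}_{\tau_t}+t$ is a submartingale, which gives the lower bound $\mathbb E[T_n^{\mathrm{BAL}}]\ge h^{(p_+)}_0$ once finiteness of $T_n$ is known from the upper bound. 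The root boundary needs a small check: $h_0=1+h_1$ holds for both walks and matches the deterministic root move.

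\textbf{Step 3: conclude.} \cref{lem:biased_reflecting_walk_hitting} gives $h^{(p_\pm)}_0=\Theta_c(n^2)$, which proves the claim. The main obstacle is Step 2's sign check. The comparison requires that $h^{(p)}_{k+1}-h^{(p)}_k$ be negative for all $k$, so that a larger up-probability decreases the potential. I would verify this directly from the finite-difference recursion $p\Delta_{k+1}=q\Delta_k-1$ with $\Delta_1=-1$. Induction gives $\Delta_k<0$ immediately, and this holds uniformly in $p\in(0,1)$. I would also handle the stopping-time integrability by truncation before taking limits.
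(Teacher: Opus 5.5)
Your proof is correct, but it takes a different route from the paper.

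\textbf{The paper's route.} The paper sandwiches the depth process \emph{pathwise}. It drives the direction choice at each step by a common uniform, which couples $\tau_t^{\mathrm{BAL}}$ between a $p_+$-walk and a $p_-$-walk. This gives $T^{(p_+)}\le T_n^{\mathrm{BAL}}\le T^{(p_-)}$ almost surely. The paper then takes expectations and applies \cref{lem:biased_reflecting_walk_hitting} to $H_0^{(p_\pm)}$.

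\textbf{Your route.} You use the comparison walk's hitting-time function $h^{(p)}$ as a Lyapunov potential. At an interior depth $k$ with conditional up-probability $p'$, the one-step drift of $h_{\tau_t}+t$ is $(p'-p)(h_{k+1}-h_{k-1})=(p'-p)(\Delta_{k+1}+\Delta_k)$. Your placeholder expression should read exactly this. Its sign is fixed by $\Delta_k<0$, and your induction from $p\Delta_{k+1}=q\Delta_k-1$, $\Delta_1=-1$ establishes this for every $p\in(0,1)$. So the monotonicity restriction in your first bullet is unnecessary. The root step is drift-free because $h_0=1+h_1$. For the upper bound, optional stopping at $T\wedge t$ together with $h\ge0$ gives $\bbE[T\wedge t]\le h_0^{(p_-)}$. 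For the lower bound, $T_n^{\mathrm{BAL}}<\infty$ a.s. and boundedness of $h$ on $\{0,\dots,n\}$ give $\bbE[h_{\tau_{T\wedge t}}]\to h_n=0$ by bounded convergence, which finishes the argument.

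\textbf{Comparison.} The coupling buys more: it yields stochastic domination of the entire hitting-time law, not just its mean. However, the paper leaves implicit why the pathwise order survives. The $\pm1$ steps preserve the parity of the gap, so the processes can only cross after meeting. At a meeting point the common uniform keeps them ordered, and at the root all of them move up deterministically. Your supermartingale argument needs no coupling construction. It also handles the non-Markovian depth process directly through the filtration of $Z_t^{\mathrm{BAL}}$, which you correctly flag. The cost is that it yields only bounds in expectation, which is all the proposition claims.
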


\begin{proof}
At every interior state, the probability of increasing the depth is between
\begin{equation*}
    p_-:=\frac{1-\varepsilon_n}{2}\qquad\text{and}\qquad p_+:=\frac{1+\varepsilon_n}{2}.
\end{equation*}
Because the conditional probability of increasing depth at each interior step is always in \([p_-,p_+]\), the no-momentum depth process can be coupled between two reflecting birth-death chains using common uniform random variables at each step. The favorable chain increases depth with probability \(p_+\) and reaches depth \(n\) no later than the AOAR depth process, while the unfavorable chain increases depth with probability \(p_-\) and reaches depth \(n\) no earlier. Therefore,
\begin{equation*}
    H_0^{(p_+)}
    \le
    \bbE\lbra{T_n^{\mathrm{BAL}}\mid \tau_0^{\mathrm{BAL}}=0}
    \le
    H_0^{(p_-)}.
\end{equation*}
Since $|p_\pm-1/2|\le c/(2n)$, \cref{lem:biased_reflecting_walk_hitting} gives $H_0^{(p_+)}=\Theta_c(n^2)$ and $H_0^{(p_-)}=\Theta_c(n^2)$. Hence $\bbE[T_n^{\mathrm{BAL}}\mid \tau_0^{\mathrm{BAL}}=0]=\Theta_c(n^2)$.
\end{proof}

\begin{theorem}[First-leaf hitting for near-maximal flow cancellation]
\label{thm:robust_flow_cancelled_hitting}
Assume \cref{ass:directional_near_balance}. Run the flow-cancelled momentum
chain $\widetilde K_{\mathrm{BAL}}^{\mathrm{fc},\chi}$ from the initial
lifted state $(\varnothing,\downarrow)$, where $\chi\in[0,1]$. Then
\begin{equation*}
    \bbE\lbra{
        T_n^{\mathrm{BAL,Mo}}
        \mid
        \widetilde Z_0=(\varnothing,\downarrow)
    }
    \le
    \sbra{
        \frac{4n}{1-\varepsilon_n}+1
    }
    \sbra{
        \frac{2-\chi(1-\varepsilon_n)}
             {1-\varepsilon_n}
    }^n .
\end{equation*}
In particular, if $\varepsilon_n\le \min\mbra{c/n,1/2}$ and
$1-\chi\le c_\chi/n$ for fixed constants $c,c_\chi>0$, then
\begin{equation*}
    \bbE\lbra{
        T_n^{\mathrm{BAL,Mo}}
        \mid
        \widetilde Z_0=(\varnothing,\downarrow)
    }
    =
    O_{c,c_\chi}(n).
\end{equation*}
For $\chi=1$, this recovers the maximally flow-cancelled bound.
\end{theorem}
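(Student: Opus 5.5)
We reduce to an explicit excursion analysis of the lifted chain in which every depth change counts as one step. The key structural fact is that in downward mode the depth only increases, or the chain holds or switches, and in upward mode it only decreases. The state $(z,\downarrow)$ at interior depth leaves the current copy by a child move with probability $p_\downarrow(z)\ge (1-\varepsilon_n)/2$. It switches up with probability $p_\uparrow(z)-\chi m(z)$, where $m(z)=\min\{p_\downarrow,p_\uparrow\}\ge(1-\varepsilon_n)/2$, so this switch probability is at most $\frac{1+\varepsilon_n}{2}-\chi\frac{1-\varepsilon_n}{2}$. Conditioned on not holding, the probability of a child move rather than a switch is therefore at least
\[
\rho:=\frac{1-\varepsilon_n}{2-\chi(1-\varepsilon_n)}.
\]
This is exactly the reciprocal of the base of the exponential factor in the statement.

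The plan is a success-run argument. Call a \emph{downward run} a maximal stretch spent in $(\cdot,\downarrow)$. A run started at depth $k$ reaches depth $n$ without switching with probability at least $\rho^{\,n-k}\ge\rho^n$, since each of the at most $n$ depth increments beats a switch with conditional probability at least $\rho$, uniformly in $z$ by \cref{ass:directional_near_balance}. If a run fails, the chain enters upward mode. It then performs an upward run, which either switches back down or reaches the root. At the root the chain is forced into $(\varnothing,\downarrow)$ deterministically, because $p_\downarrow(\varnothing)=1$ and $p_\uparrow(\varnothing)=0$. So each \emph{attempt}, meaning a downward run together with the following upward run, ends in a fresh downward run from some depth $k\ge0$. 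Each attempt succeeds with probability at least $\rho^n$, so the number of attempts is stochastically dominated by a geometric random variable with mean at most $\rho^{-n}=\bigl(\frac{2-\chi(1-\varepsilon_n)}{1-\varepsilon_n}\bigr)^n$.

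Next I would bound the expected duration of one attempt by $\frac{4n}{1-\varepsilon_n}+1$. Within a downward run, each step either moves, with probability at least $(1-\varepsilon_n)/2$, or holds or switches. Holding has probability $\chi m\le \frac{1+\varepsilon_n}{2}$. So the time spent per depth level is geometric with mean at most $2/(1-\varepsilon_n)$, giving at most $2n/(1-\varepsilon_n)$ for the downward run, including the holding steps at the terminating level. The same holds for the upward run, using $p_\uparrow\ge(1-\varepsilon_n)/2$, and the $+1$ accounts for the switch or root reset. Wald's identity, applied to the i.i.d.-dominated attempt lengths, gives the displayed product bound. A little care is needed because attempt lengths and success are dependent. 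I would handle this by bounding the conditional expected length of every attempt given the past, uniformly, which suffices for the optional-stopping form of Wald.

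The asymptotic claim follows by substitution. With $\varepsilon_n\le c/n$ and $1-\chi\le c_\chi/n$, we have $\frac{2-\chi(1-\varepsilon_n)}{1-\varepsilon_n}=1+\frac{1-\chi}{1-\varepsilon_n}+\frac{\varepsilon_n}{1-\varepsilon_n}\cdot(1+\chi)\cdot\ldots = 1+O((c+c_\chi)/n)$, so its $n$-th power is $e^{O(c+c_\chi)}=O(1)$. The prefactor is $O(n)$ because $\varepsilon_n\le1/2$.

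The main obstacle is making the per-level success probability rigorous when holding steps are interleaved. This is done by conditioning on exit from the holding self-loop, using the ratio bound $\rho$ above. A second subtle point is the penultimate level $k=n-1$, where children are leaves. There I only need \cref{ass:directional_near_balance} to cover all interior states, including $k=n-1$, which it does.
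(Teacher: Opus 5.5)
Your proposal is correct and follows the paper's proof essentially step for step. It uses the same decisive-step bound $\rho=\frac{1-\varepsilon_n}{2-\chi(1-\varepsilon_n)}$ (the paper's $\alpha_{\chi,\varepsilon}$), the same decomposition into downward attempts each succeeding with probability at least $\rho^n$, the same per-attempt duration bound $2n/q_\varepsilon+1$ with $q_\varepsilon=(1-\varepsilon_n)/2$, and the same Wald-type combination via a bound, uniform over the past, on the conditional expected cycle length. The only difference is cosmetic: you let the upward run switch back down before reaching the root, so the next attempt starts at some depth $k\ge 0$ where $\rho^{n-k}\ge\rho^n$ still applies, whereas the paper bounds the upward phase by a conservative full reset to the root; both give the identical bound.
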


\begin{proof}
Let $q_{\varepsilon}:=(1-\varepsilon_n)/2$. Under
\cref{ass:directional_near_balance}, at every interior state $z$,
$p_{\downarrow}(z),p_{\uparrow}(z)\ge q_{\varepsilon}$. From
$(z,\downarrow)$, the flow-cancelled kernel has downward move probability
$p_{\downarrow}(z)$, switch-up probability
$p_{\uparrow}(z)-\chi m(z)$, and same-copy holding probability
$\chi m(z)$, where
$m(z):=\min\{p_{\downarrow}(z),p_{\uparrow}(z)\}$. Thus, while in downward
mode, the probability of a downward depth increment is at least
$q_{\varepsilon}$.

We decompose the trajectory into downward attempts. A downward attempt starts
in downward mode. It succeeds if the chain reaches depth $n$ while remaining
in downward mode, and it fails if the chain switches from downward mode to
upward mode before reaching depth $n$.

Same-copy holding steps do not change either the depth or the momentum label.
For the success-probability calculation, condition them out and look only at
decisive steps, namely steps on which either a downward move or an upward switch
occurs. Given the current lifted state $(z,\downarrow)$,
\begin{align*}
    \bbP(\text{downward move}\mid \text{decisive},z,\downarrow)
    &=
    \frac{p_{\downarrow}(z)}
    {p_{\downarrow}(z)+p_{\uparrow}(z)-\chi m(z)}\\
    &=
    \frac{p_{\downarrow}(z)}
    {1-\chi m(z)}
    \ge
    \frac{q_\varepsilon}
    {1-\chi q_\varepsilon}
    :=
    \alpha_{\chi,\varepsilon}
    =
    \frac{1-\varepsilon_n}{2-\chi(1-\varepsilon_n)}.
\end{align*}
To succeed from depth $k$, an attempt only needs the next $n-k$ decisive
outcomes to be downward moves. Therefore, with
$s_{\chi,\varepsilon,n}:=\alpha_{\chi,\varepsilon}^{\,n}$,
\begin{equation*}
    \bbP(\text{attempt succeeds from depth }k)
    \ge
    \alpha_{\chi,\varepsilon}^{\,n-k}
    \ge
    \alpha_{\chi,\varepsilon}^{\,n}.
\end{equation*}
Thus every downward attempt succeeds with probability at least
$s_{\chi,\varepsilon,n}$, uniformly over the past.

Next, we bound the expected duration of one attempt cycle, counting Markov
transitions including same-copy holding steps. During the downward part, every
transition in downward mode increases the depth with probability at least
$q_\varepsilon$. To upper-bound the time cost, ignore early switch failures
and wait until $n$ downward depth increments have occurred. If $G_r$ is the
waiting time for the $r$-th such increment after the $(r-1)$-st one, then
$\bbP(G_r>t)\le (1-q_\varepsilon)^t$ and
$\bbE[G_r]\le 1/q_\varepsilon$. Hence the expected duration of the downward
part is at most $n/q_\varepsilon$.

If the attempt fails, the chain has switched to upward mode before reaching
depth $n$. We upper-bound the cost of preparing the next attempt by a
conservative reset: move upward until the root and then take the boundary
switch back to downward mode. Returning to the root requires at most $n$
upward depth decrements, each occurring with probability at least
$q_\varepsilon$. Thus the reset duration is at most $n/q_\varepsilon+1$
in expectation, and one complete attempt cycle has expected duration at most
$L_\varepsilon:=2n/q_\varepsilon+1$.

Let $N_{\mathrm{att}}$ be the number of attempts until the first successful
one. Since each attempt succeeds with conditional probability at least
$s_{\chi,\varepsilon,n}$, we have
$\bbP(N_{\mathrm{att}}>m)\le(1-s_{\chi,\varepsilon,n})^m$ and therefore
$\bbE[N_{\mathrm{att}}]\le 1/s_{\chi,\varepsilon,n}$.

Let $C_j$ denote the duration of the $j$-th attempt cycle. The cycle-length
bound above is uniform over the past, so
$\bbE[C_j\mid \calH_{j-1}]\le L_\varepsilon$ whenever the $j$-th cycle
starts. Since
$T_n^{\mathrm{BAL,Mo}}\le
\sum_{j\ge1}\mathbf 1\{N_{\mathrm{att}}\ge j\}C_j$,
we obtain
\begin{equation*}
    \bbE[T_n^{\mathrm{BAL,Mo}}]
    \le
    L_\varepsilon\sum_{j\ge1}\bbP(N_{\mathrm{att}}\ge j)
    =
    L_\varepsilon\bbE[N_{\mathrm{att}}]
    \le
    \left(\frac{2n}{q_\varepsilon}+1\right)
    \alpha_{\chi,\varepsilon}^{-n}.
\end{equation*}
Substituting $q_\varepsilon=(1-\varepsilon_n)/2$ and
$\alpha_{\chi,\varepsilon}=(1-\varepsilon_n)/(2-\chi(1-\varepsilon_n))$
gives
\begin{equation*}
    \bbE[T_n^{\mathrm{BAL,Mo}}]
    \le
    \sbra{
        \frac{4n}{1-\varepsilon_n}+1
    }
    \sbra{
        \frac{2-\chi(1-\varepsilon_n)}
             {1-\varepsilon_n}
    }^n .
\end{equation*}

Finally, suppose $\varepsilon_n\le \min\mbra{c/n,1/2}$ and
$1-\chi\le c_\chi/n$. Then
\begin{align*}
    \frac{2-\chi(1-\varepsilon_n)}
         {1-\varepsilon_n}
    &=
    1+
    \frac{
        1-\chi+\varepsilon_n(1+\chi)
    }{
        1-\varepsilon_n
    }=
    1+O_{c,c_\chi}\!\left(\frac1n\right).
\end{align*}
Therefore its $n$-th power is $O_{c,c_\chi}(1)$, while
$(4n)/(1-\varepsilon_n)+1=O_c(n)$. Hence
\begin{equation*}
    \bbE\lbra{
        T_n^{\mathrm{BAL,Mo}}
        \mid
        \widetilde Z_0=(\varnothing,\downarrow)
    }
    =
    O_{c,c_\chi}(n).
\end{equation*}
\end{proof}

\begin{corollary}[Depth-axis speedup with near-maximal momentum]
\label{cor:robust_depth_axis_speedup}
Assume \cref{ass:directional_near_balance} with
\begin{equation*}
    \varepsilon_n\le \min\left\{\frac{c}{n},\frac12\right\}
\end{equation*}
for a fixed constant $c>0$. Let the flow-cancelled momentum parameter satisfy
\begin{equation*}
    1-\chi\le \frac{c_\chi}{n}
\end{equation*}
for a fixed constant $c_\chi>0$. Then the no-momentum balanced AOAR-VGB depth
process satisfies
\begin{equation*}
    \bbE\lbra{T_n^{\mathrm{BAL}}\mid \tau_0^{\mathrm{BAL}}=0}
    =
    \Theta_c(n^2),
\end{equation*}
while the flow-cancelled momentum lift satisfies
\begin{equation*}
    \bbE\lbra{
        T_n^{\mathrm{BAL,Mo}}
        \mid
        \widetilde Z_0=(\varnothing,\downarrow)
    }
    =
    O_{c,c_\chi}(n).
\end{equation*}
Thus, under the same near-balance condition and near-maximal cancellation,
\begin{equation*}
    \frac{
        \bbE[T_n^{\mathrm{BAL}}\mid \tau_0^{\mathrm{BAL}}=0]
    }{
        \bbE[T_n^{\mathrm{BAL,Mo}}\mid
        \widetilde Z_0=(\varnothing,\downarrow)]
    }
    =
    \Omega_{c,c_\chi}(n).
\end{equation*}
The maximally flow-cancelled case $\chi=1$ is included by taking
$c_\chi=0$.
\end{corollary}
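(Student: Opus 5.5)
This corollary is a direct combination of two earlier results, so the plan is to invoke each one under the stated hypotheses and then take the ratio.

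\emph{No-momentum bound.} The assumption $\varepsilon_n\le\min\{c/n,1/2\}$ implies in particular $\varepsilon_n\le c/n$. This is exactly the hypothesis of \cref{prop:no_momentum_near_balance_quadratic}, which gives
\[
\bbE[T_n^{\mathrm{BAL}}\mid\tau_0^{\mathrm{BAL}}=0]=\Theta_c(n^2).
\]
The underlying argument couples the depth process between two reflecting birth--death chains whose up-probabilities are $p_\pm=(1\pm\varepsilon_n)/2$. It then applies \cref{lem:biased_reflecting_walk_hitting}, using $|p_\pm-\tfrac12|\le c/(2n)$. Nothing further is needed for this half.

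\emph{Momentum bound.} Next I would apply \cref{thm:robust_flow_cancelled_hitting}, whose hypotheses match ours exactly: near-balance with $\varepsilon_n\le\min\{c/n,1/2\}$ and $1-\chi\le c_\chi/n$. Its explicit bound is a prefactor times a power. The prefactor $4n/(1-\varepsilon_n)+1$ is at most $8n+1=O(n)$, since $\varepsilon_n\le 1/2$. For the power, the base satisfies
\[
\frac{2-\chi(1-\varepsilon_n)}{1-\varepsilon_n}\le 1+2\,\frac{c_\chi+2c}{n}.
\]
Raising this to the $n$-th power gives at most $e^{2(c_\chi+2c)}=O(1)$. Hence $\bbE[T_n^{\mathrm{BAL,Mo}}]=O_{c,c_\chi}(n)$.

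\emph{Ratio.} Dividing, a numerator of order $\Theta_c(n^2)$ (in particular at least a constant times $n^2$) over a denominator that is $O_{c,c_\chi}(n)$ gives $\Omega_{c,c_\chi}(n)$.

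\emph{The case $\chi=1$.} This is included by taking $c_\chi=0$. The only possible subtlety is that "fixed constant $c_\chi>0$" is stated strictly, but the bound above is monotone in $c_\chi$, so the case $c_\chi=0$ is covered by any positive value.

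\emph{Main obstacle.} There is no real difficulty. The one point requiring care is making the constant in the exponential bound uniform in $n$, which needs both $\varepsilon_n\le 1/2$ (to bound the denominator $1-\varepsilon_n$) and $\varepsilon_n\le c/n$ (to control the numerator).
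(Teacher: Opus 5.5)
Your proposal is correct and matches the paper, whose proof simply combines \cref{prop:no_momentum_near_balance_quadratic} and \cref{thm:robust_flow_cancelled_hitting} and then takes the ratio. Your explicit check that the prefactor is at most $8n+1$ and the power term at most $e^{2(c_\chi+2c)}$ just re-derives the ``in particular'' clause already proved in \cref{thm:robust_flow_cancelled_hitting}, and your handling of $\chi=1$ is fine.
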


\begin{proof}
This follows immediately from \cref{prop:no_momentum_near_balance_quadratic} and \cref{thm:robust_flow_cancelled_hitting}.
\end{proof}

\begin{corollary}[Ratio form of the near-balance condition]
\label{cor:ratio_form_robust_speedup}
Suppose there exists $\rho_n\ge1$ such that for every reachable interior state $z$,
\begin{equation*}
    \frac{1}{\rho_n}\le \frac{p_{\downarrow}(z)}{p_{\uparrow}(z)}\le \rho_n.
\end{equation*}
Then
\begin{equation*}
    \abs{p_{\downarrow}(z)-p_{\uparrow}(z)}\le \frac{\rho_n-1}{\rho_n+1}.
\end{equation*}
Consequently, if there are fixed constants $c,c_\chi>0$ such that
\begin{equation*}
    \frac{\rho_n-1}{\rho_n+1}
    \le
    \min\left\{\frac{c}{n},\frac12\right\},
    \qquad
    1-\chi\le \frac{c_\chi}{n},
\end{equation*}
then the depth-axis speedup in
\cref{cor:robust_depth_axis_speedup} applies. In particular, the ratio condition
holds asymptotically when $\rho_n=1+O(1/n)$, provided the cancellation is
near-maximal.
\end{corollary}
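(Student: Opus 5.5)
The plan is to reduce the statement to a pointwise algebraic inequality and then apply \cref{cor:robust_depth_axis_speedup}.

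\textbf{Step 1 (ratio to difference).} Fix a reachable interior state $z$ and write $t:=p_{\downarrow}(z)/p_{\uparrow}(z)$, which lies in $[1/\rho_n,\rho_n]$ by hypothesis. Since $p_{\downarrow}(z)+p_{\uparrow}(z)=1$ at non-isolated states, we have
\begin{equation*}
    p_{\downarrow}(z)=\frac{t}{1+t},\qquad p_{\uparrow}(z)=\frac{1}{1+t}.
\end{equation*}
Hence
\begin{equation*}
    \abs{p_{\downarrow}(z)-p_{\uparrow}(z)}=\frac{\abs{t-1}}{t+1}.
\end{equation*}

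\textbf{Step 2 (extremize over the interval).} The map $t\mapsto (t-1)/(t+1)$ is increasing on $(0,\infty)$. It is also odd under $t\mapsto 1/t$, since $(1/t-1)/(1/t+1)=-(t-1)/(t+1)$. Therefore $\abs{t-1}/(t+1)$ is maximized over $[1/\rho_n,\rho_n]$ at either endpoint, and its maximum value is $(\rho_n-1)/(\rho_n+1)$. This gives the first claim. It also shows that \cref{ass:directional_near_balance} holds with $\varepsilon_n:=(\rho_n-1)/(\rho_n+1)$.

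\textbf{Step 3 (transfer the speedup).} Under the stated conditions, $\varepsilon_n\le\min\{c/n,1/2\}$ and $1-\chi\le c_\chi/n$. These are exactly the hypotheses of \cref{cor:robust_depth_axis_speedup}, which then gives both the $\Theta_c(n^2)$ and the $O_{c,c_\chi}(n)$ conclusions.

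\textbf{Step 4 (asymptotic remark).} Suppose $\rho_n=1+O(1/n)$, say $\rho_n\le 1+a/n$. Then
\begin{equation*}
    \frac{\rho_n-1}{\rho_n+1}\le\frac{a}{2n}.
\end{equation*}
This is at most $\min\{c/n,1/2\}$ with $c=a/2$ once $n\ge a$. The finitely many small $n$ are absorbed into the constants.

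No step is a genuine obstacle. The only points needing care are the monotonicity and symmetry argument in Step 2 and remembering that $p_{\downarrow}+p_{\uparrow}=1$ is used.
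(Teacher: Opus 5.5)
Your proposal is correct and follows essentially the same route as the paper, which uses $p_{\downarrow}+p_{\uparrow}=1$ to bound $p_{\downarrow}(z)$ directly between $1/(1+\rho_n)$ and $\rho_n/(1+\rho_n)$, the same computation as your parametrization by $t=p_{\downarrow}/p_{\uparrow}$ and the monotone map $t\mapsto(t-1)/(t+1)$. You then set $\varepsilon_n=(\rho_n-1)/(\rho_n+1)$ and invoke \cref{cor:robust_depth_axis_speedup} just as the paper does. Your handling of the case $\rho_n=1+O(1/n)$ matches the paper's ``for all sufficiently large $n$.''
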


\begin{proof}
Let $p=p_{\downarrow}(z)$ and $q=p_{\uparrow}(z)$. Since $p+q=1$ and $p/q\le\rho_n$, we have $p\le \rho_n q=\rho_n(1-p)$, so $p\le \rho_n/(1+\rho_n)$. Similarly, $q/p\le\rho_n$ gives $p\ge1/(1+\rho_n)$. Therefore,
\begin{equation*}
    \abs{p-\frac12}\le \frac{\rho_n-1}{2(\rho_n+1)}.
\end{equation*}
Since $|p-q|=|2p-1|$, we obtain $|p-q|\le(\rho_n-1)/(\rho_n+1)$. The displayed condition above gives \cref{ass:directional_near_balance} with $\varepsilon_n\le \min\mbra{c/n,1/2}$ and the near-maximal cancellation condition $1-\chi\le c_\chi/n$, so \cref{cor:robust_depth_axis_speedup} applies. If $\rho_n=1+O(1/n)$ and $1-\chi=O(1/n)$, then these conditions hold for all sufficiently large $n$.
\end{proof}

\begin{corollary}[Implication of multiplicative verifier accuracy]
\label{cor:kappa_for_linear_momentum}
Assume the canonical balanced coefficients $\beta_k=k$ and $\alpha_k=n-k$ for $1\le k\le n-1$. Suppose the plug-in verifier satisfies the multiplicative accuracy condition
\begin{equation*}
    \kappa^{-1}V^*(x,z)\le \widehat V(x,z)\le \kappa V^*(x,z)
\end{equation*}
on all relevant non-leaf states. Then, for every reachable interior state $z$,
\begin{equation*}
    \frac{1}{\kappa^2}\le \frac{p_{\downarrow}(z)}{p_{\uparrow}(z)}\le \kappa^2.
\end{equation*}
Therefore, by \cref{cor:ratio_form_robust_speedup}, the linear first-leaf
hitting guarantee is certified whenever, for fixed constants $c,c_\chi>0$,
\begin{equation*}
    \frac{\kappa^2-1}{\kappa^2+1}
    \le
    \min\left\{\frac{c}{n},\frac12\right\},
    \qquad
    1-\chi\le \frac{c_\chi}{n}.
\end{equation*}
Asymptotically, certifying this condition from $\kappa$ alone requires
$\kappa^2=1+O(1/n)$, equivalently $\kappa=1+O(1/n)$, together with
near-maximal cancellation.
\end{corollary}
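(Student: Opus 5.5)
The plan is to compute the two directional masses $F(z)$ and $B(z)$ of the canonical balanced kernel at an interior state $z$, and then compare each with its exact-verifier counterpart. Both masses scale like $V^*(x,z)$ up to a factor $\kappa$. Fix a reachable interior state $z$ of depth $k$, $1\le k\le n-1$, with $V^*(x,z)>0$; otherwise $z$ carries no positive-weight moves and is outside the component where $p_\downarrow,p_\uparrow$ are defined. By \cref{def:balanced_aoar_weights} with $\beta_k=k$ and $\alpha_k=n-k$,
\begin{equation*}
    F(z)=k\sum_{j\notin R(z)}\sum_{a\in\calV}\base(Y_j=a\mid x,z)\widehat V(x,z^{j\leftarrow a}),
    \qquad
    B(z)=k(n-k)\widehat V(x,z).
\end{equation*}
The directional ratio is then $p_\downarrow(z)/p_\uparrow(z)=F(z)/B(z)$.

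\textbf{Backward mass.} Since $z$ is a non-leaf, the accuracy assumption gives directly
\begin{equation*}
    \kappa^{-1}k(n-k)V^*(x,z)\le B(z)\le\kappa\,k(n-k)V^*(x,z).
\end{equation*}

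\textbf{Forward mass.} Each child $c=z^{j\leftarrow a}$ is either a non-leaf, where $\kappa^{-1}V^*\le\widehat V\le\kappa V^*$ holds, or a leaf (when $k=n-1$). At a leaf, terminal anchoring gives $\widehat V=\tau=V^*$, and since $\kappa\ge1$ the same two-sided bound holds trivially. So in every case $\kappa^{-1}V^*(x,c)\le\widehat V(x,c)\le\kappa V^*(x,c)$. Summing over $a$ and applying the Bellman identity (\cref{prop:aoar_bellman}) for each of the $n-k$ unrevealed coordinates yields
\begin{equation*}
    \kappa^{-1}k(n-k)V^*(x,z)\le F(z)\le\kappa\,k(n-k)V^*(x,z).
\end{equation*}

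\textbf{Conclusion.} Dividing the forward bounds by the backward bounds, the common factor $k(n-k)V^*(x,z)>0$ cancels and leaves
\begin{equation*}
    \kappa^{-2}\le\frac{p_\downarrow(z)}{p_\uparrow(z)}\le\kappa^2.
\end{equation*}
The remaining claims then follow by invoking \cref{cor:ratio_form_robust_speedup} with $\rho_n=\kappa^2$. The asymptotic remark uses $(\kappa^2-1)/(\kappa^2+1)\le\kappa^2-1=(\kappa-1)(\kappa+1)$, which is $O(1/n)$ exactly when $\kappa=1+O(1/n)$.

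The only delicate points are the boundary cases. The first is the leaf children at depth $n-1$, which the anchoring argument above handles. The second is making sure the states considered have $V^*(x,z)>0$, so that the ratio is well defined. Neither point requires more than a remark.
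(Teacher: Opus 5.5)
Your proposal is correct and follows essentially the paper's route: the paper bounds $F$ and $B$ within a factor $\kappa$ of $F^*$ and $B^*$ and then uses $F^*=B^*$ from \cref{prop:balanced_directional_masses}, which you simply inline as $F^*=B^*=k(n-k)V^*(x,z)$. You also make explicit two points the paper glosses over: terminal anchoring for leaf children at depth $n-1$, and the restriction to $V^*(x,z)>0$. One small point: your inequality $(\kappa^2-1)/(\kappa^2+1)\le\kappa^2-1$ only shows that $\kappa=1+O(1/n)$ \emph{suffices}, whereas ``requires'' needs the reverse direction, e.g.\ $(\kappa^2-1)/(\kappa^2+1)\le 1/2$ forces $\kappa^2\le 3$ and hence $\kappa^2-1\le 4(\kappa^2-1)/(\kappa^2+1)\le 4c/n$ (the paper's own proof likewise only argues sufficiency).
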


\begin{proof}
Let $F(z)$ and $B(z)$ denote the plug-in forward and backward total masses at $z$. Under the exact value $V^*$, the canonical balanced choice gives $F^*(z)=B^*(z)$ by \cref{prop:balanced_directional_masses}. Under the multiplicative verifier condition, the plug-in forward total satisfies
\begin{equation*}
    \kappa^{-1}F^*(z)\le F(z)\le \kappa F^*(z),
\end{equation*}
because each child value $\widehat V(x,c)$ is within a factor $\kappa$ of $V^*(x,c)$. Similarly,
\begin{equation*}
    \kappa^{-1}B^*(z)\le B(z)\le \kappa B^*(z).
\end{equation*}
Since $F^*(z)=B^*(z)$, it follows that
\begin{equation*}
    \frac{1}{\kappa^2}\le \frac{F(z)}{B(z)}\le \kappa^2.
\end{equation*}
But
\begin{equation*}
    \frac{p_{\downarrow}(z)}{p_{\uparrow}(z)}
    =
    \frac{F(z)/(F(z)+B(z))}{B(z)/(F(z)+B(z))}
    =
    \frac{F(z)}{B(z)}.
\end{equation*}
Applying \cref{cor:ratio_form_robust_speedup} with $\rho_n=\kappa^2$ gives the stated sufficient condition. In particular, to certify the linear first-leaf hitting guarantee from this multiplicative bound alone, it is enough asymptotically that $\kappa^2=1+O(1/n)$, equivalently $\kappa=1+O(1/n)$, together with $1-\chi=O(1/n)$.
\end{proof}

\section{Balanced Parallel VGBs for Masked Diffusion}
\label{app:sec:balanced_mdm_vgb}

The previous AOAR sections study single-site reveal and re-mask moves. In this section, we extend the same VGB principle to parallel masked-diffusion style updates, motivated by discrete and masked diffusion language models \citep{austin2021structured,sahoo2024simple}. The resulting MDM-VGB family allows a move to reveal or re-mask multiple coordinates at once.

The construction proceeds in three steps. First, we define a leaf-corrected balanced MDM-VGB, which is the direct block-update analogue of Balanced AOAR-VGB. Second, we introduce a clean geometric balanced MDM-VGB whose backward remasking rule is value-gated while avoiding any explicit $M_{\mathrm{ref}}$ correction in the local implementation. Third, we add a unified flow-cancelled momentum framework that applies to either base kernel once its local forward and backward weights have been specified.

In particular, AOAR-VGB is recovered by the singleton-block restriction
$\calM=\mbra{1}$. Under this restriction, the MDM algorithms below also serve
as the implementation templates for the AOAR samplers.

\subsection{Balanced MDM-VGB}
\label{app:subsec:balanced_mdm_vgb}

\paragraph{Setup and notation.}
\label{app:subsubsec:balanced_mdm_setup}
We reuse the shared notation from \cref{app:sec:shared_setup_verifier_training,app:sec:aoar_vgb}. In particular, $x$ is the conditioning context, $\calY=\calV^n$ is the set of fully revealed sequences, $\calZ=\sbra{\calV\cup\mbra{\mask}}^n$ is the masked state space, $R(z)$ is the set of revealed coordinates, $k(z)=\abs{R(z)}$ is the depth of $z$, and $\calC(z)$ is the compatible completion set. Recall that
\begin{equation*}
    U^*(x,z)=M_{\mathrm{ref}}(x,z)V^*(x,z)
\end{equation*}
is the tilted completion mass, and recall the reachable masked-state set
\begin{equation*}
    \calZ_{\mathrm{reach}}:=\mbra{z\in\calZ: M_{\mathrm{ref}}(x,z)>0}.
\end{equation*}

Let $\calM\subseteq\mbra{1,\dots,n}$ be a nonempty set of admissible update sizes. We assume $1\in\calM$, so every non-root state has at least one possible backward update and every non-leaf state has at least one possible forward update.

For a reachable state $z$ and a block size $r\in\calM$, define
\begin{align*}
    \calB_r^+(z)&:=\mbra{A\subseteq\lbra{n}\setminus R(z): \abs{A}=r},\qquad
    \calB_r^-(z):=\mbra{A\subseteq R(z): \abs{A}=r}.
\end{align*}
For a forward block $A\in\calB_r^+(z)$ and assignment $a_A\in\calV^A$, write $z^{A\leftarrow a_A}$ for the state obtained by filling all coordinates in $A$ with $a_A$. For a backward block $A\in\calB_r^-(z)$, write $z^{-A}$ for the state obtained by re-masking all coordinates in $A$.

The MDM oracle assumption is that, for every reachable $z$, every $A\subseteq\lbra{n}\setminus R(z)$, and every $a_A\in\calV^A$, we can evaluate the joint reference conditional
\begin{equation*}
    \base(Y_A=a_A\mid x,z):=\bbP_{\base}\lbra{Y_A=a_A\mid x,\ Y\in\calC(z)}.
\end{equation*}
Equivalently,
\begin{equation*}
    \base(Y_A=a_A\mid x,z)=\frac{M_{\mathrm{ref}}(x,z^{A\leftarrow a_A})}{M_{\mathrm{ref}}(x,z)}.
\end{equation*}

\begin{remark}[Practical joint conditionals]
In a practical masked diffusion model, the exact joint conditional $\base(Y_A=a_A\mid x,z)$ may not be directly available. One may replace it by a factorized approximation, for example
\begin{equation*}
    \widetilde\base(Y_A=a_A\mid x,z):=\prod_{i\in A}\base(Y_i=a_i\mid x,z).
\end{equation*}
This factorized plug-in is a practical approximation: the exact stationarity statements below use the exact joint conditional.
\end{remark}

\paragraph{Block identities.}

We first collect the block analogues of the single-coordinate partition and Bellman identities.

\begin{lemma}[Block partition identity]
\label{lem:balanced_mdm_partition}
Fix a reachable state $z$ and a block $A\subseteq\lbra{n}\setminus R(z)$. Then
\begin{equation*}
    \calC(z)=\bigsqcup_{a_A\in\calV^A}\calC(z^{A\leftarrow a_A}).
\end{equation*}
Consequently,
\begin{equation*}
    \sum_{a_A\in\calV^A}U^*(x,z^{A\leftarrow a_A})=U^*(x,z),\qquad
    \sum_{a_A\in\calV^A}M_{\mathrm{ref}}(x,z^{A\leftarrow a_A})=M_{\mathrm{ref}}(x,z).
\end{equation*}
\end{lemma}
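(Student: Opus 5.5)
The plan is to mirror the proof of \cref{lem:aoar_partition}, now with the block $A$ playing the role of the single coordinate $j$. First I establish the disjoint-union decomposition of $\calC(z)$ directly from the definitions. Then both mass identities follow by summing over this partition.

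For the set identity, I will check two inclusions and then disjointness. Take any $y\in\calC(z)$. It satisfies $y_i=z_i$ for all $i\in R(z)$. Its restriction $y_A\in\calV^A$ is a uniquely determined assignment. Since $A\cap R(z)=\varnothing$, the state $z^{A\leftarrow y_A}$ has revealed set $R(z)\sqcup A$, and $y$ agrees with it on every revealed coordinate. Hence $y\in\calC(z^{A\leftarrow y_A})$.

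Conversely, any $y\in\calC(z^{A\leftarrow a_A})$ agrees with $z$ on $R(z)$, so $y\in\calC(z)$. Disjointness holds because $y\in\calC(z^{A\leftarrow a_A})$ forces $y_A=a_A$, so no $y$ lies in two such sets with different assignments.

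For the mass identities, I sum the nonnegative weight $\base(y\mid x)\tau(x,y)$ over $\calC(z)$ and split the sum along the partition. This gives $\sum_{a_A}U^*(x,z^{A\leftarrow a_A})=U^*(x,z)$. Specializing to $\tau\equiv1$ gives the $M_{\mathrm{ref}}$ identity.

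One point needs care: some assignments $a_A$ may give unreachable states $z^{A\leftarrow a_A}$. Their completion sets can still be nonempty, but they carry zero $\base$-mass. So including them in the sum is harmless, and the identity holds over all of $\calV^A$ as stated.

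There is no real obstacle here. The only point worth stating explicitly is that $A$ is disjoint from $R(z)$, which is what makes $z^{A\leftarrow a_A}$ well defined with revealed set $R(z)\sqcup A$.
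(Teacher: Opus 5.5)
Your proposal is correct and follows essentially the same route as the paper: the unique restriction $y_A$ gives the disjoint partition of $\calC(z)$, and summing $\base(y\mid x)\tau(x,y)$ over it (then setting $\tau\equiv 1$) yields both identities. Your explicit check of both inclusions and your remark about unreachable $z^{A\leftarrow a_A}$ contributing zero mass are sound, though the latter is not strictly needed since $U^*$ and $M_{\mathrm{ref}}$ are defined as sums over completion sets for every state.
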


\begin{proof}
Every full sequence $y\in\calC(z)$ has a unique restriction $y_A\in\calV^A$. Hence $y$ belongs to exactly one completion set $\calC(z^{A\leftarrow a_A})$, namely the one with $a_A=y_A$. Summing $\base(y\mid x)\tau(x,y)$ over this disjoint partition gives the $U^*$ identity. Setting $\tau\equiv1$ gives the $M_{\mathrm{ref}}$ identity.
\end{proof}

\begin{proposition}[Block Bellman identity]
\label{prop:balanced_mdm_bellman}
For every reachable state $z$ and every block $A\subseteq\lbra{n}\setminus R(z)$,
\begin{equation*}
    V^*(x,z)=\sum_{a_A\in\calV^A}\base(Y_A=a_A\mid x,z)V^*(x,z^{A\leftarrow a_A}).
\end{equation*}
\end{proposition}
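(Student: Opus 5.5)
We will argue exactly as in the single-coordinate case (\cref{prop:aoar_bellman}), replacing the coordinate $j$ by the block $A$ and invoking the block partition identity \cref{lem:balanced_mdm_partition} in place of \cref{lem:aoar_partition}. The plan is to start from the definition
\begin{equation*}
    V^*(x,z)=\bbE_{\base}\lbra{\tau(x,Y)\mid x,\ Y\in\calC(z)},
\end{equation*}
which is well defined because $z$ is reachable, i.e.\ $M_{\mathrm{ref}}(x,z)>0$.

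The first step is to condition additionally on the restriction $Y_A$. By \cref{lem:balanced_mdm_partition}, the events $\mbra{Y\in\calC(z^{A\leftarrow a_A})}$, $a_A\in\calV^A$, partition $\mbra{Y\in\calC(z)}$. The law of total expectation then writes $V^*(x,z)$ as a sum over $a_A$ of
\begin{equation*}
    \bbP_{\base}(Y_A=a_A\mid x,\ Y\in\calC(z))\cdot\bbE_{\base}\lbra{\tau(x,Y)\mid x,\ Y\in\calC(z^{A\leftarrow a_A})}.
\end{equation*}
In the second step we identify the two factors. The first is $\base(Y_A=a_A\mid x,z)$ by the MDM oracle definition. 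The second is $V^*(x,z^{A\leftarrow a_A})$ by the definition of $V^*$.

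The only point needing care is that some $z^{A\leftarrow a_A}$ may be unreachable, so that $V^*$ there is a conditional expectation on a null event. We handle this by dropping those terms, since their weight $\base(Y_A=a_A\mid x,z)=M_{\mathrm{ref}}(x,z^{A\leftarrow a_A})/M_{\mathrm{ref}}(x,z)$ vanishes.

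An equivalent route, avoiding conditioning language, works with masses directly. Since $U^*=M_{\mathrm{ref}}V^*$, summing $M_{\mathrm{ref}}(x,z^{A\leftarrow a_A})V^*(x,z^{A\leftarrow a_A})=U^*(x,z^{A\leftarrow a_A})$ over $a_A$ gives $U^*(x,z)$ by the block partition identity. Dividing by $M_{\mathrm{ref}}(x,z)$ then yields the claim. No step is a real obstacle.
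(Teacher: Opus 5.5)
Your proposal is correct and follows the paper's proof essentially verbatim: apply the law of total expectation conditioning on $Y_A$, then identify the event $\{Y\in\calC(z),\,Y_A=a_A\}$ with $\{Y\in\calC(z^{A\leftarrow a_A})\}$ to obtain $V^*(x,z^{A\leftarrow a_A})$. Your remark about dropping unreachable children, whose weight $\base(Y_A=a_A\mid x,z)$ vanishes, is a small piece of care the paper leaves implicit, and your mass-based alternative via $U^*=M_{\mathrm{ref}}V^*$ and \cref{lem:balanced_mdm_partition} is also valid.
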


\begin{proof}
By the law of total expectation,
\begin{align*}
    V^*(x,z)
    &=\bbE_{\base}\lbra{\tau(x,Y)\mid x,\ Y\in\calC(z)}\\
    &=\sum_{a_A\in\calV^A}\bbP_{\base}\lbra{Y_A=a_A\mid x,z}\,
    \bbE_{\base}\lbra{\tau(x,Y)\mid x,\ Y\in\calC(z),\ Y_A=a_A}.
\end{align*}
The event $\mbra{Y\in\calC(z),Y_A=a_A}$ is exactly $\mbra{Y\in\calC(z^{A\leftarrow a_A})}$. Therefore the inner conditional expectation equals $V^*(x,z^{A\leftarrow a_A})$.
\end{proof}

\begin{proposition}[Exact block target conditional]
\label{prop:balanced_mdm_exact_conditional}
For every reachable state $z$ with $U^*(x,z)>0$, every block $A\subseteq\lbra{n}\setminus R(z)$, and every $a_A\in\calV^A$,
\begin{equation*}
    \pi^*(Y_A=a_A\mid x,z)
    =\frac{U^*(x,z^{A\leftarrow a_A})}{U^*(x,z)}
    =\base(Y_A=a_A\mid x,z)\frac{V^*(x,z^{A\leftarrow a_A})}{V^*(x,z)}.
\end{equation*}
\end{proposition}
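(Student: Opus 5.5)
The statement to prove is the block analogue of \cref{prop:aoar_exact_target_conditional}, and I plan to follow that proof closely, with \cref{lem:balanced_mdm_partition} and \cref{prop:balanced_mdm_bellman} playing the roles of the single-site identities. For the first equality I start from the definition of the tilted conditional. Conditioning $\pi^*$ on $Y\in\calC(z)$, the event $\{Y\in\calC(z),\,Y_A=a_A\}$ coincides with $\{Y\in\calC(z^{A\leftarrow a_A})\}$, exactly as in the proof of \cref{prop:balanced_mdm_bellman}. This gives
\begin{equation*}
    \pi^*(Y_A=a_A\mid x,z)=\frac{\bbP_{\pi^*}(Y\in\calC(z^{A\leftarrow a_A})\mid x)}{\bbP_{\pi^*}(Y\in\calC(z)\mid x)}.
\end{equation*}

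Next I expand both probabilities using $\pi^*(y\mid x)=\base(y\mid x)\tau(x,y)/Z(x)$. Each becomes $U^*(x,\cdot)/Z(x)$. The factor $Z(x)$ cancels, and the hypothesis $U^*(x,z)>0$ guarantees that the denominator is nonzero. This proves the first equality.

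For the second equality I substitute $U^*=M_{\mathrm{ref}}V^*$ in both the numerator and the denominator. The MDM oracle identity gives $M_{\mathrm{ref}}(x,z^{A\leftarrow a_A})=M_{\mathrm{ref}}(x,z)\,\base(Y_A=a_A\mid x,z)$. Once this is substituted, the common factor $M_{\mathrm{ref}}(x,z)>0$ cancels, since $z$ is reachable. What remains is $\base(Y_A=a_A\mid x,z)\,V^*(x,z^{A\leftarrow a_A})/V^*(x,z)$, and $V^*(x,z)>0$ because $U^*(x,z)>0$.

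There is one minor edge case. If $z^{A\leftarrow a_A}$ is unreachable, then $M_{\mathrm{ref}}(x,z^{A\leftarrow a_A})=0$ and $V^*(x,z^{A\leftarrow a_A})$ is undefined. In that case I read the right-hand side as $0$, with the convention $0\cdot(\text{anything})=0$, which matches the left-hand side since $U^*(x,z^{A\leftarrow a_A})=0$. I expect no real obstacle; the only step needing care is this set identity for events, which is already established.
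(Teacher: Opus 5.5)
Your proposal is correct and follows essentially the same route as the paper. The first equality comes from conditional probability under $\pi^*$, with $Z(x)$ cancelling, and the second from substituting $U^*=M_{\mathrm{ref}}V^*$ and the oracle identity $M_{\mathrm{ref}}(x,z^{A\leftarrow a_A})=M_{\mathrm{ref}}(x,z)\base(Y_A=a_A\mid x,z)$; your treatment of the unreachable-child edge case and the positivity of $V^*(x,z)$ is a small addition the paper leaves implicit.
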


\begin{proof}
The first equality follows from conditional probability under the tilted target:
\begin{equation*}
    \pi^*(Y_A=a_A\mid x,z)
    =\frac{\bbP_{\pi^*}(Y\in\calC(z^{A\leftarrow a_A})\mid x)}{\bbP_{\pi^*}(Y\in\calC(z)\mid x)}
    =\frac{U^*(x,z^{A\leftarrow a_A})}{U^*(x,z)}.
\end{equation*}
For the second equality, use $U^*(x,z)=M_{\mathrm{ref}}(x,z)V^*(x,z)$ and
\begin{equation*}
    M_{\mathrm{ref}}(x,z^{A\leftarrow a_A})=M_{\mathrm{ref}}(x,z)\base(Y_A=a_A\mid x,z).
\end{equation*}
\end{proof}

\paragraph{Leaf-corrected balanced MDM edge weights.}
For each update size $r\in\calM$, define the depth-size edge coefficient
\begin{equation*}
\label{eq:balanced_mdm_skr}
    s_{k,r}:=\frac{1}{\binom{n-r}{k}},\qquad 0\le k\le n-r.
\end{equation*}
Then
\begin{equation*}
\label{eq:balanced_mdm_skr_identity}
    \binom{n-k}{r}s_{k,r}=\binom{k}{r}s_{k-r,r}
\end{equation*}
whenever $r\le k\le n-r$. This is the multi-token analogue of the single-site balanced coefficient $s_k=1/\binom{n-1}{k}$.

For a reachable state $z$, define the MDM child, parent, and full neighborhoods by
\begin{align*}
    C_{\mathrm{MDM}}(z)&:=\mbra{z^{A\leftarrow a_A}: r\in\calM,\ A\in\calB_r^+(z),\ a_A\in\calV^A,\ z^{A\leftarrow a_A}\in\calZ_{\mathrm{reach}}},\\
    P_{\mathrm{MDM}}(z)&:=\mbra{z^{-A}: r\in\calM,\ A\in\calB_r^-(z)},\qquad
    N_{\mathrm{MDM}}(z):=C_{\mathrm{MDM}}(z)\cup P_{\mathrm{MDM}}(z).
\end{align*}

\begin{definition}[Leaf-corrected balanced MDM edge weights]
\label{def:balanced_mdm_edge_weights}
Consider a parent state $z$ of depth $k=k(z)$ and a child $u=z^{A\leftarrow a_A}$, where $A\in\calB_r^+(z)$ and $r=\abs{A}$. Thus $u$ has depth $k+r$. Define the symmetric edge weight
\begin{equation*}
    f_{\mathrm{MDM}}(z,u):=
    \begin{cases}
        s_{k,r}\,M_{\mathrm{ref}}(x,u)\widehat V(x,u), & u\notin\calY,\\[1mm]
        s_{n-r,r}\,M_{\mathrm{ref}}(x,u)\tau(x,u), & u\in\calY.
    \end{cases}
\end{equation*}
Equivalently, using $\widehat U(x,u):=M_{\mathrm{ref}}(x,u)\widehat V(x,u)$ and $U^*(x,y)=M_{\mathrm{ref}}(x,y)\tau(x,y)$ for leaves,
\begin{equation*}
    f_{\mathrm{MDM}}(z,u)=
    \begin{cases}
        s_{k,r}\widehat U(x,u), & u\notin\calY,\\
        s_{n-r,r}U^*(x,u), & u\in\calY.
    \end{cases}
\end{equation*}
The edge weight is understood symmetrically on the undirected block-update graph.
\end{definition}

\paragraph{Local implementation weights.}
The edge weights contain $M_{\mathrm{ref}}$, but the local implementation does not require estimating it. From a fixed current state $z$, all incident edge weights share the common factor $M_{\mathrm{ref}}(x,z)$, which cancels after normalization.

\begin{definition}[$\gamma$-held balanced MDM-VGB local kernel]
\label{def:balanced_mdm_local_weights}
Let $z$ be a reachable state of depth $k=k(z)$. We use terminal anchoring on leaves,
\begin{equation*}
    \widehat V(x,y)=\tau(x,y),\qquad y\in\calY.
\end{equation*}
Define the local implementation weights by
\begin{align*}
    \textsc{Forward:}\quad
    w_{\mathrm{MDM,fwd}}(z\to u)&:=s_{k,r}\base(Y_A=a_A\mid x,z)\widehat V(x,u),\\
    \textsc{Backward:}\quad
    w_{\mathrm{MDM,bwd}}(z\to u)&:=s_{k-r,r}\widehat V(x,z),
\end{align*}
where
\begin{equation*}
    \begin{aligned}
        &u=z^{A\leftarrow a_A},\quad A\in\calB_r^+(z) &&\text{for forward moves},\\
        &u=z^{-A},\quad A\in\calB_r^-(z) &&\text{for backward moves}.
    \end{aligned}
\end{equation*}
Let $w_{\mathrm{MDM}}(z\to u)$ denote the corresponding forward or backward weight, and set
\begin{equation*}
    W_{\mathrm{MDM}}(z):=
    \sum_{u\in C_{\mathrm{MDM}}(z)}w_{\mathrm{MDM,fwd}}(z\to u)
    +
    \sum_{u\in P_{\mathrm{MDM}}(z)}w_{\mathrm{MDM,bwd}}(z\to u).
\end{equation*}
The non-lazy move kernel is
\begin{equation*}
    K_{\mathrm{MDM}}(z,u):=\frac{w_{\mathrm{MDM}}(z\to u)}{W_{\mathrm{MDM}}(z)},\qquad u\in N_{\mathrm{MDM}}(z).
\end{equation*}
This formula is used only at states with \(W_{\mathrm{MDM}}(z)>0\).
The $\gamma$-held kernel is obtained by setting
\begin{equation*}
    P_{\mathrm{MDM}}^{(\gamma)}(z,z)=\frac{\gamma}{1+\gamma},\qquad
    P_{\mathrm{MDM}}^{(\gamma)}(z,u)=\frac{1}{1+\gamma}K_{\mathrm{MDM}}(z,u).
\end{equation*}
\end{definition}

\begin{theorem}[Weighted-graph representation and exact leaf law]
\label{thm:balanced_mdm_weighted_graph}
The local kernel in \cref{def:balanced_mdm_local_weights} is the weighted random walk on the graph with symmetric edge weights from \cref{def:balanced_mdm_edge_weights}. Therefore, its stationary distribution is
\begin{equation*}
    \mu_{\mathrm{MDM}}(z)=\frac{D_{\mathrm{MDM}}(z)}{\sum_{z'}D_{\mathrm{MDM}}(z')},\qquad
    D_{\mathrm{MDM}}(z):=\sum_{u\sim z}f_{\mathrm{MDM}}(z,u).
\end{equation*}
Moreover, the leaf-conditioned stationary law is exact:
\begin{equation*}
    \mu_{\mathrm{MDM}}(y\mid y\in\calY)=\pi^*(y\mid x).
\end{equation*}
\end{theorem}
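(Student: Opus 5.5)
The plan mirrors the proofs of \cref{thm:primitive_weighted_graph} and \cref{thm:balanced_weighted_graph}. We show that, from each current state $z$, the local weights of \cref{def:balanced_mdm_local_weights} equal the incident edge weights $f_{\mathrm{MDM}}(z,\cdot)$ divided by the common positive factor $M_{\mathrm{ref}}(x,z)$. Normalization then makes $K_{\mathrm{MDM}}(z,u)=f_{\mathrm{MDM}}(z,u)/D_{\mathrm{MDM}}(z)$, so the walk is the standard network random walk. Detailed balance with $\mu_{\mathrm{MDM}}\propto D_{\mathrm{MDM}}$ follows verbatim from the symmetry of $f_{\mathrm{MDM}}$, for every $\gamma\ge0$; the self-loop terms are trivially balanced.

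The forward check: for $u=z^{A\leftarrow a_A}\notin\calY$ with $|A|=r$, use $M_{\mathrm{ref}}(x,u)=M_{\mathrm{ref}}(x,z)\base(Y_A=a_A\mid x,z)$. Then $f_{\mathrm{MDM}}(z,u)/M_{\mathrm{ref}}(x,z)=s_{k,r}\base(Y_A=a_A\mid x,z)\widehat V(x,u)$, which is exactly $w_{\mathrm{MDM,fwd}}$. If $u\in\calY$, then $k+r=n$, so $s_{n-r,r}=s_{k,r}$, and terminal anchoring $\widehat V(x,u)=\tau(x,u)$ gives the same formula.

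The backward check: for $u=z^{-A}$, the edge has parent $u$ of depth $k-r$ and child $z$. Hence $f_{\mathrm{MDM}}(u,z)=s_{k-r,r}M_{\mathrm{ref}}(x,z)\widehat V(x,z)$ when $z\notin\calY$. When $z\in\calY$ we have $k=n$, so $s_{n-r,r}=s_{k-r,r}$ and $\tau(x,z)=\widehat V(x,z)$. Dividing by $M_{\mathrm{ref}}(x,z)$ gives $w_{\mathrm{MDM,bwd}}$.

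The main point of care is exactly this index bookkeeping. We must confirm that the leaf-corrected coefficient $s_{n-r,r}$ coincides with the generic one on both sides of a leaf edge, so that a single common factor $M_{\mathrm{ref}}(x,z)$ divides every incident weight at $z$. States with $M_{\mathrm{ref}}(x,z)=0$ are excluded because they are unreachable. States with $W_{\mathrm{MDM}}(z)=0$ must be handled by restricting to the positive-weight component, as in \cref{lem:bg_positive_support_closure}.

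For the leaf law, take $y\in\calY$. Its neighbors are the parents $y^{-A}$ for $A\subseteq[n]$ with $|A|=r\in\calM$. Each such edge has weight $s_{n-r,r}U^*(x,y)=s_{n-r,r}\base(y\mid x)\tau(x,y)$. Summing gives
\[
D_{\mathrm{MDM}}(y)=\Bigl(\sum_{r\in\calM}\binom{n}{r}s_{n-r,r}\Bigr)\base(y\mid x)\tau(x,y).
\]
The prefactor is independent of $y$, so it cancels on conditioning, and $\mu_{\mathrm{MDM}}(y\mid\calY)=\pi^*(y\mid x)$.
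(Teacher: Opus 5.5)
Your proposal is correct and follows essentially the same route as the paper's proof. Both identify each local weight at $z$ with the incident edge weight divided by the common factor $M_{\mathrm{ref}}(x,z)$, invoke the degree-proportional stationary law of a symmetric weighted walk, and compute each leaf degree as a $y$-independent multiple of $U^*(x,y)$. Your explicit check that $s_{n-r,r}$ coincides with $s_{k,r}$ (forward into a leaf) and with $s_{k-r,r}$ (backward from a leaf) spells out a step the paper leaves implicit.
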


\begin{proof}
Fix a state $z$ and a forward child $u=z^{A\leftarrow a_A}$ of size $r$. If $u\notin\calY$, then
\begin{align*}
    \frac{f_{\mathrm{MDM}}(z,u)}{M_{\mathrm{ref}}(x,z)}
    &=s_{k(z),r}\frac{M_{\mathrm{ref}}(x,u)}{M_{\mathrm{ref}}(x,z)}\widehat V(x,u)\\
    &=s_{k(z),r}\base(Y_A=a_A\mid x,z)\widehat V(x,u),
\end{align*}
which is the forward local weight. If $u\in\calY$, the same calculation gives
\begin{equation*}
    \frac{f_{\mathrm{MDM}}(z,u)}{M_{\mathrm{ref}}(x,z)}=s_{n-r,r}\base(Y_A=a_A\mid x,z)\tau(x,u).
\end{equation*}
For a backward move from $z$ to $u=z^{-A}$, the edge is the same edge with child endpoint $z$. Dividing its edge weight by $M_{\mathrm{ref}}(x,z)$ gives $s_{k(z)-r,r}\widehat V(x,z)$ if $z\notin\calY$, and $s_{n-r,r}\tau(x,z)$ if $z\in\calY$. Thus the local weights are proportional to incident edge weights by a state-dependent common factor, which cancels under normalization.

The stationary law of a random walk on a symmetric weighted graph is proportional to weighted degree. It remains to check the leaf law. For any leaf $y\in\calY$, every admissible backward block $A\subseteq\lbra{n}$ of size $r\in\calM$ gives a parent $y^{-A}$, and the corresponding leaf edge has weight
\begin{equation*}
    f_{\mathrm{MDM}}(y^{-A},y)=s_{n-r,r}U^*(x,y).
\end{equation*}
Therefore
\begin{equation*}
    D_{\mathrm{MDM}}(y)=U^*(x,y)\sum_{\substack{r\in\calM\\r\le n}}\binom{n}{r}s_{n-r,r}.
\end{equation*}
The multiplicative factor is constant over leaves. Since $U^*(x,y)=\base(y\mid x)\tau(x,y)$, conditioning on $\calY$ gives
\begin{equation*}
    \mu_{\mathrm{MDM}}(y\mid y\in\calY)=\frac{\base(y\mid x)\tau(x,y)}{\sum_{y'\in\calY}\base(y'\mid x)\tau(x,y')}=\pi^*(y\mid x).
\end{equation*}
\end{proof}

\begin{proposition}[Exact directional balance under exact values]
\label{prop:balanced_mdm_directional_balance}
Assume $\widehat V=V^*$ on all non-leaf states and terminal anchoring on leaves. Let $z$ be an interior state of depth $k$. For a fixed block size $r\in\calM$ with $r\le\min\mbra{k,n-k}$, the exact forward and backward masses of size $r$ satisfy
\begin{equation*}
    W_{\mathrm{fwd},r}^*(z)=W_{\mathrm{bwd},r}^*(z).
\end{equation*}
Consequently, if at an interior state only paired sizes
\begin{equation*}
    \calM_{\mathrm{pair}}(z):=\mbra{r\in\calM:r\le\min\mbra{k(z),n-k(z)}}
\end{equation*}
are used, then the exact move-conditioned forward and backward probabilities are both $1/2$.
\end{proposition}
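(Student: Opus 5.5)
The plan is to compute the size-$r$ forward and backward masses at $z$ directly from the local weights in \cref{def:balanced_mdm_local_weights}, specialised to $\widehat V=V^*$. Each mass should reduce to a combinatorial coefficient times $V^*(x,z)$. The two coefficients will then agree by the identity $\binom{n-k}{r}s_{k,r}=\binom{k}{r}s_{k-r,r}$, which holds for $r\le k\le n-r$.

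\textbf{Forward mass.} Fix $r\in\calM$ with $r\le\min\{k,n-k\}$, and group the forward moves by block $A\in\calB_r^+(z)$. There are $\binom{n-k}{r}$ such blocks.
\begin{itemize}
\item Unreachable assignments $a_A$ have $\base(Y_A=a_A\mid x,z)=0$, so they contribute nothing. The sum can therefore run over all of $\calV^A$.
\item If $k+r<n$, every child is a non-leaf with weight $s_{k,r}\base(Y_A=a_A\mid x,z)V^*(x,z^{A\leftarrow a_A})$. Summing over $a_A$ and applying the block Bellman identity (\cref{prop:balanced_mdm_bellman}) gives $s_{k,r}V^*(x,z)$ per block.
\item If $k+r=n$, the children are leaves with weight $s_{n-r,r}\base(Y_A=a_A\mid x,z)\tau(x,z^{A\leftarrow a_A})$. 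Here $s_{n-r,r}=s_{k,r}$ because $k=n-r$, and $\tau(x,y)=V^*(x,y)$ at any leaf. So the same Bellman computation applies unchanged.
\end{itemize}
In both cases
\[
W^*_{\mathrm{fwd},r}(z)=\binom{n-k}{r}s_{k,r}V^*(x,z).
\]

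\textbf{Backward mass.} Since $z$ is interior, it is not a leaf. Each of the $\binom{k}{r}$ blocks $A\in\calB_r^-(z)$ therefore carries weight $s_{k-r,r}\widehat V(x,z)=s_{k-r,r}V^*(x,z)$. Hence
\[
W^*_{\mathrm{bwd},r}(z)=\binom{k}{r}s_{k-r,r}V^*(x,z).
\]

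\textbf{Coefficient identity.} It remains to check the identity stated after the definition of $s_{k,r}$. Expanding factorials gives
\[
\frac{\binom{n-k}{r}}{\binom{n-r}{k}}=\frac{(n-k)!\,k!}{r!\,(n-r)!}=\frac{\binom{k}{r}}{\binom{n-r}{k-r}}.
\]
The range condition $r\le k\le n-r$ is exactly what makes both binomials well defined. This gives $W^*_{\mathrm{fwd},r}(z)=W^*_{\mathrm{bwd},r}(z)$.

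\textbf{Consequence.} Summing over $r\in\calM_{\mathrm{pair}}(z)$ gives equal total forward and backward masses. Provided $V^*(x,z)>0$, so that the normalizer is positive, normalizing yields move-conditioned probabilities of $1/2$ each.

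The only delicate step is the leaf boundary $k+r=n$. There the leaf-corrected weight must be matched to the generic forward formula, using $s_{n-r,r}=s_{k,r}$ together with terminal anchoring $\tau=V^*$ on leaves. Everything else is the Bellman identity plus bookkeeping.
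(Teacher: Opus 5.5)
Your proposal is correct and follows the paper's proof: it applies the block Bellman identity per forward block to get $\binom{n-k}{r}s_{k,r}V^*(x,z)$, counts $\binom{k}{r}$ backward blocks each of weight $s_{k-r,r}V^*(x,z)$, and closes with the coefficient identity $\binom{n-k}{r}s_{k,r}=\binom{k}{r}s_{k-r,r}$. You also make explicit three details the paper leaves implicit: the leaf boundary $k+r=n$ (via $s_{n-r,r}=s_{k,r}$ and $\tau=V^*$ on reachable leaves), the factorial verification of that identity, and the proviso $V^*(x,z)>0$ needed for the normalized probabilities to be defined.
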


\begin{proof}
Fix $r\le\min\mbra{k,n-k}$. Using \cref{prop:balanced_mdm_bellman}, for each $A\in\calB_r^+(z)$,
\begin{equation*}
    \sum_{a_A\in\calV^A}\base(Y_A=a_A\mid x,z)V^*(x,z^{A\leftarrow a_A})=V^*(x,z).
\end{equation*}
There are $\binom{n-k}{r}$ forward blocks of size $r$, so
\begin{equation*}
    W_{\mathrm{fwd},r}^*(z)=\binom{n-k}{r}s_{k,r}V^*(x,z).
\end{equation*}
For backward blocks of size $r$, there are $\binom{k}{r}$ choices, and each backward weight is $s_{k-r,r}V^*(x,z)$. Hence
\begin{equation*}
    W_{\mathrm{bwd},r}^*(z)=\binom{k}{r}s_{k-r,r}V^*(x,z).
\end{equation*}
The coefficient identity in \cref{eq:balanced_mdm_skr_identity} gives equality. Summing over paired sizes preserves equality.
\end{proof}

\subsection{Geometric Balanced MDM-VGB}
\label{app:subsec:balanced_geometric_mdm_vgb}

Balanced MDM-VGB gives the block-update analogue of the balanced AOAR walk. We now add the same value-gated remasking idea used in the geometric AOAR variant. The goal is to preserve the forward VGB ranking while making the backward block choice depend on the verifier value of the remasked parent.

\cref{fig:geometric_mdm_vgb_transition} shows the corresponding block-update transition. The only change from the AOAR case is that each reveal or re-mask candidate is now an admissible block $A\in\calM$, with joint reference conditionals used on forward block reveals.

\begin{figure}[t]
    \centering
    \vspace{-0.5em}
    \includegraphics[width=0.95\linewidth]{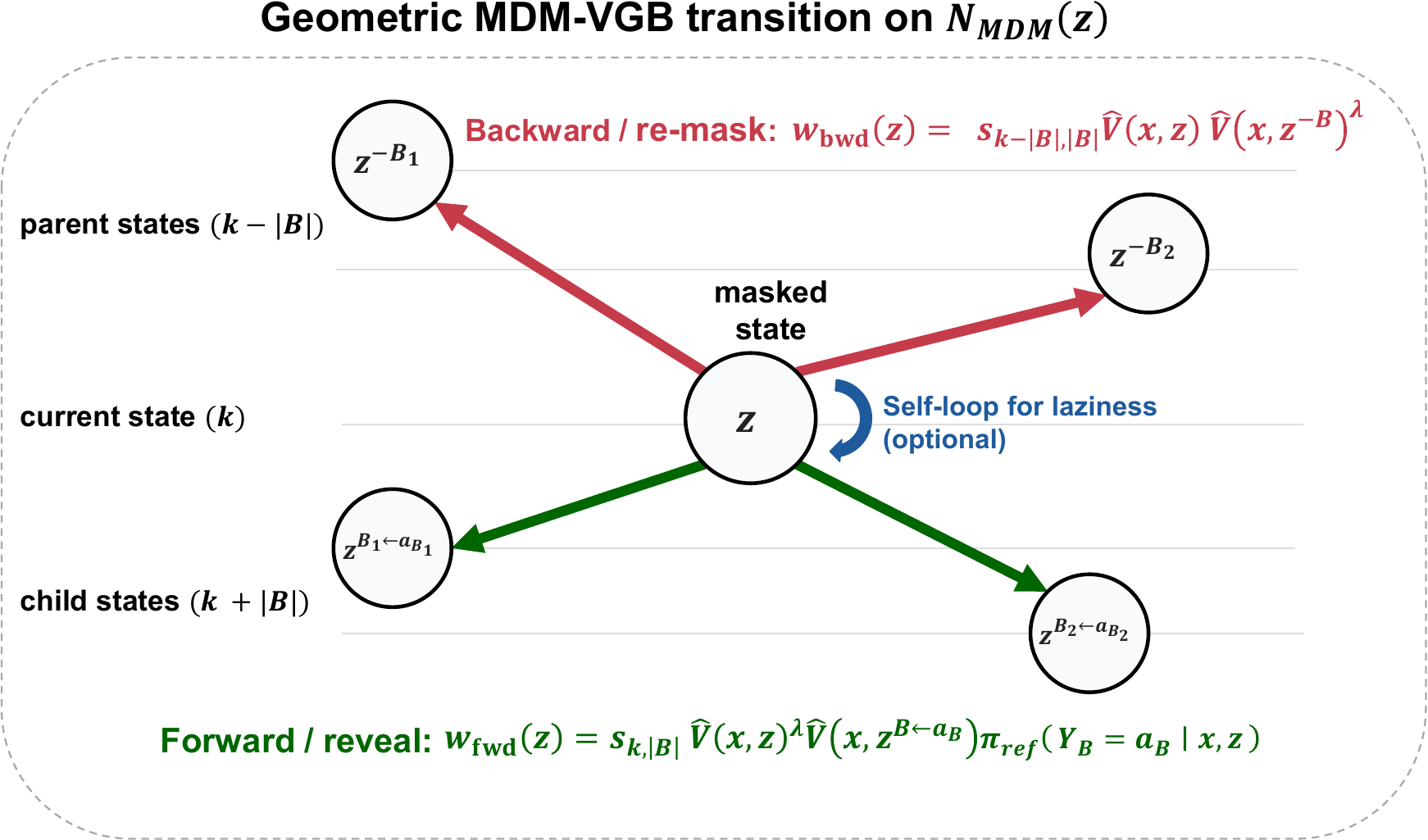}
    \caption{
        Illustration of one geometric balanced MDM-VGB transition.
        Forward moves reveal an admissible block using the joint reference conditional and the child verifier value, while backward moves re-mask an admissible block using the verifier value of the remasked parent.
        Setting $\lambda=0$ recovers balanced MDM-VGB.
    }
    \label{fig:geometric_mdm_vgb_transition}
    \vspace{-0.5em}
\end{figure}

\begin{definition}[Clean geometric balanced MDM edge weights]
\label{def:bg_mdm_edge_weights}
Fix $\lambda\ge0$. For every admissible size $r\in\calM$, let
\begin{equation*}
    s_{k,r}:=\binom{n-r}{k}^{-1},\qquad 0\le k\le n-r,
\end{equation*}
be the canonical balanced depth-size coefficient. Consider neighboring states $z,u$ with $\abs{k(u)-k(z)}=r$. Let $p(z,u)$ be the lower-depth endpoint and $c(z,u)$ be the higher-depth endpoint, so that
\begin{equation*}
    c(z,u)=p(z,u)^{A\leftarrow a_A}
\end{equation*}
for a unique block $A$ of size $r$ and assignment $a_A$.

If $c(z,u)\notin\calY$, define
\begin{equation*}
    f_{\mathrm{BGM}}(z,u)=f_{\mathrm{BGM}}(u,z):=
    s_{k(p(z,u)),r}\widehat U(x,c(z,u))\widehat V(x,p(z,u))^\lambda.
\end{equation*}
If $c(z,u)=y\in\calY$, define the leaf-corrected edge by
\begin{equation*}
    f_{\mathrm{BGM}}(z,u)=f_{\mathrm{BGM}}(u,z):=s_{n-r,r}U^*(x,y).
\end{equation*}
\end{definition}

\paragraph{Local implementation weights.}
Let $z$ have depth $k$. For a non-leaf forward child $u=z^{A\leftarrow a_A}\notin\calY$ with $\abs{A}=r$,
\begin{equation*}
\label{eq:bg_mdm_fwd_weight}
    w_{\mathrm{BGM,fwd}}(z\to u)=s_{k,r}\widehat V(x,z)^\lambda\base(Y_A=a_A\mid x,z)\widehat V(x,u).
\end{equation*}
The factor $\widehat V(x,z)^\lambda$ is common across all forward candidates from the same current state $z$, so the forward ranking remains
\begin{equation*}
    \base(Y_A=a_A\mid x,z)\widehat V(x,z^{A\leftarrow a_A}).
\end{equation*}
For a backward move from a non-leaf state $z$ to $u=z^{-A}$,
\begin{equation*}
\label{eq:bg_mdm_bwd_weight}
    w_{\mathrm{BGM,bwd}}(z\to z^{-A})=s_{k-r,r}\widehat V(x,z)\widehat V(x,z^{-A})^\lambda.
\end{equation*}
Thus backward moves prefer blocks whose remasked parent has high verifier value.

At the leaf boundary, if $y=z^{A\leftarrow a_A}\in\calY$, then
\begin{equation*}
\label{eq:bg_mdm_leaf_fwd_weight}
    w_{\mathrm{BGM,fwd}}^{\mathrm{leaf}}(z\to y)=s_{n-r,r}\base(Y_A=a_A\mid x,z)\tau(x,y),
\end{equation*}
and from a leaf $y$ to a parent $y^{-A}$,
\begin{equation*}
\label{eq:bg_mdm_leaf_bwd_weight}
    w_{\mathrm{BGM,bwd}}(y\to y^{-A})=s_{n-r,r}\tau(x,y).
\end{equation*}

\begin{remark}[$\lambda=0$ recovers balanced MDM-VGB]
When $\lambda=0$, the geometric gate disappears, and the local weights in \cref{eq:bg_mdm_fwd_weight,eq:bg_mdm_bwd_weight} reduce to the balanced MDM-VGB weights in \cref{def:balanced_mdm_local_weights}.
\end{remark}

\begin{theorem}[Stationarity and exact leaf law for clean geometric MDM]
\label{thm:bg_mdm_stationarity}
Let $P_{\mathrm{BGM}}$ be the weighted random walk on the block-update graph with edge weights from \cref{def:bg_mdm_edge_weights}. Then
\begin{equation*}
    \mu_{\mathrm{BGM}}(z)=\frac{D_{\mathrm{BGM}}(z)}{\sum_{z'}D_{\mathrm{BGM}}(z')},\qquad
    D_{\mathrm{BGM}}(z):=\sum_{u\sim z}f_{\mathrm{BGM}}(z,u),
\end{equation*}
is stationary. Moreover,
\begin{equation*}
    \mu_{\mathrm{BGM}}(y\mid y\in\calY)=\pi^*(y\mid x).
\end{equation*}
\end{theorem}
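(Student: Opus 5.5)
The argument mirrors \cref{thm:bg_aoar_stationarity} and \cref{thm:balanced_mdm_weighted_graph}. There are two steps: first the random walk is reversible with respect to its weighted degree, and then the leaf degrees come out proportional to $U^*(x,y)$.

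First, I note that $f_{\mathrm{BGM}}$ is a well-defined, nonnegative, symmetric function on unordered edges of the block-update graph. Each edge $\{z,u\}$ has a unique lower-depth endpoint $p$ and higher-depth endpoint $c=p^{A\leftarrow a_A}$. The depth difference $r=|A|$ is determined by the pair, and so is the block $A=R(c)\setminus R(p)$. The defining formula depends only on $(p,c)$, so it is symmetric by construction. The walk is $P_{\mathrm{BGM}}(z,u)=f_{\mathrm{BGM}}(z,u)/D_{\mathrm{BGM}}(z)$, defined at states with $D_{\mathrm{BGM}}(z)>0$ (for example, on $\calZ_+(x)$ as in \cref{lem:bg_positive_support_closure}). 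For distinct neighbors,
\[
\mu_{\mathrm{BGM}}(z)P_{\mathrm{BGM}}(z,u)=\frac{f_{\mathrm{BGM}}(z,u)}{\sum_{z'}D_{\mathrm{BGM}}(z')}=\mu_{\mathrm{BGM}}(u)P_{\mathrm{BGM}}(u,z).
\]
This is detailed balance, so $\mu_{\mathrm{BGM}}$ is stationary. Any holding or laziness parameter only adds diagonal terms, which satisfy detailed balance trivially.

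Second, I compute the degree of a leaf $y\in\calY$. A leaf has no children. Its neighbors are exactly the parents $y^{-A}$ with $A\subseteq[n]$ and $|A|=r\in\calM$, $r\le n$. For every such edge the higher-depth endpoint is $y$ itself, which is a leaf. So the leaf-corrected branch of \cref{def:bg_mdm_edge_weights} applies, and the edge weight is $s_{n-r,r}U^*(x,y)$, with no $\widehat V(x,p)^\lambda$ gate. Summing over blocks gives
\[
D_{\mathrm{BGM}}(y)=U^*(x,y)\sum_{r\in\calM,\ r\le n}\binom{n}{r}s_{n-r,r}.
\]
This factor is independent of $y$. Conditioning on $\calY$ and using $U^*(x,y)=\base(y\mid x)\tau(x,y)$ then yields $\pi^*(y\mid x)$.

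The only point needing care is the leaf computation. One has to check that no edge incident to a leaf falls into the non-leaf geometric branch, since that branch would introduce the $y$-dependent factor $\widehat V(x,y^{-A})^\lambda$ and break the exact leaf law. I would settle this by observing that in every edge touching a leaf, the leaf is the higher-depth endpoint. This holds because no state is deeper than depth $n$.
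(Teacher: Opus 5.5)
Your proposal is correct and follows the paper's proof essentially step for step: detailed balance for the symmetric weighted walk gives stationarity of the degree-normalized measure, and because every edge incident to a leaf $y$ falls in the leaf-corrected branch, $D_{\mathrm{BGM}}(y)=U^*(x,y)\sum_{r\in\calM}\binom{n}{r}s_{n-r,r}$ with a $y$-independent factor, which gives $\pi^*(\cdot\mid x)$ after conditioning on $\calY$. Your extra checks are harmless clarifications of points the paper leaves implicit; these are the well-definedness of the symmetric weight, the restriction to positive-degree states, and the fact that a leaf is always the higher-depth endpoint of its incident edges.
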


\begin{proof}
The stationarity proof is the standard detailed-balance proof for a symmetric weighted graph. For neighboring states $z\sim u$,
\begin{equation*}
    \mu_{\mathrm{BGM}}(z)P_{\mathrm{BGM}}(z,u)
    =
    \frac{D_{\mathrm{BGM}}(z)}{\sum_{z'}D_{\mathrm{BGM}}(z')}
    \frac{f_{\mathrm{BGM}}(z,u)}{D_{\mathrm{BGM}}(z)}
    =
    \frac{f_{\mathrm{BGM}}(z,u)}{\sum_{z'}D_{\mathrm{BGM}}(z')}.
\end{equation*}
This expression is symmetric in $z$ and $u$, hence detailed balance holds.

Now take a leaf $y\in\calY$. For every admissible block size $r\in\calM$ and every block $A\subseteq\lbra{n}$ with $\abs{A}=r$, the parent is $y^{-A}$ and the leaf edge has weight
\begin{equation*}
    f_{\mathrm{BGM}}(y^{-A},y)=s_{n-r,r}U^*(x,y).
\end{equation*}
Therefore
\begin{equation*}
    D_{\mathrm{BGM}}(y)=U^*(x,y)\sum_{\substack{r\in\calM\\r\le n}}\binom{n}{r}s_{n-r,r}.
\end{equation*}
The multiplier is independent of $y$, so conditioning on leaves yields
\begin{equation*}
    \mu_{\mathrm{BGM}}(y\mid y\in\calY)=\frac{U^*(x,y)}{\sum_{y'\in\calY}U^*(x,y')}=\pi^*(y\mid x).
\end{equation*}
\end{proof}

\begin{proposition}[Exact block directional masses]
\label{prop:bg_mdm_directional_masses}
Assume $\widehat V=V^*$ on non-leaf states. Let $z$ have depth $k$ with $V^*(x,z)>0$, and fix an admissible size $r$ with $r\le k$ and $k+r<n$, so that the size-$r$ forward children are non-leaf. For a backward block $A\subseteq R(z)$ with $\abs{A}=r$, define
\begin{equation*}
    r_A^*(z):=\frac{V^*(x,z^{-A})}{V^*(x,z)}.
\end{equation*}
Then the exact size-$r$ forward and backward masses are
\begin{equation*}
    F_{\lambda,r}^*(z)=s_{k,r}\binom{n-k}{r}V^*(x,z)^{1+\lambda},
\end{equation*}
and
\begin{equation*}
    B_{\lambda,r}^*(z)=s_{k-r,r}V^*(x,z)^{1+\lambda}
    \sum_{\substack{A\subseteq R(z)\\\abs{A}=r}} r_A^*(z)^\lambda.
\end{equation*}
\end{proposition}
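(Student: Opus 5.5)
The plan is to compute both masses directly from the local implementation weights of the geometric balanced MDM-VGB chain, with $\widehat V=V^*$ substituted. No appeal to the weighted-graph representation is needed. The argument parallels the proof of \cref{prop:bg_aoar_directional_masses}, with single coordinates replaced by blocks.

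\textbf{Forward mass.} Since $k+r<n$, every size-$r$ forward child $u=z^{A\leftarrow a_A}$ is a non-leaf. Its weight is therefore the non-leaf weight $s_{k,r}V^*(x,z)^\lambda\,\base(Y_A=a_A\mid x,z)\,V^*(x,u)$ from \cref{eq:bg_mdm_fwd_weight}; the leaf-corrected formula never applies. Children with $M_{\mathrm{ref}}(x,u)=0$ are excluded from $C_{\mathrm{MDM}}(z)$, but they carry zero reference conditional, so summing over all $a_A\in\calV^A$ changes nothing. I organize the sum as an outer sum over blocks $A\in\calB_r^+(z)$ and an inner sum over assignments $a_A$. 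For each fixed $A$, the block Bellman identity (\cref{prop:balanced_mdm_bellman}) collapses the inner sum to $V^*(x,z)$. There are $\binom{n-k}{r}$ such blocks, which gives $F^*_{\lambda,r}(z)=s_{k,r}\binom{n-k}{r}V^*(x,z)^{1+\lambda}$.

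\textbf{Backward mass.} Since $z$ is a non-leaf (because $k<n$), each backward move $z\to z^{-A}$ with $A\subseteq R(z)$ and $|A|=r$ has weight $s_{k-r,r}V^*(x,z)V^*(x,z^{-A})^\lambda$ by \cref{eq:bg_mdm_bwd_weight}. The condition $r\le k$ ensures that such blocks exist and that $s_{k-r,r}$ is defined. Each term factors as $s_{k-r,r}V^*(x,z)^{1+\lambda}\bigl(V^*(x,z^{-A})/V^*(x,z)\bigr)^\lambda$, and the division is legitimate because $V^*(x,z)>0$. Summing over the $\binom{k}{r}$ blocks gives the stated $B^*_{\lambda,r}(z)$.

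The only point needing care, and it is minor, is confirming the hypotheses: non-leaf status of the children, so the leaf-correction boundary case is avoided, and exclusion of unreachable children. Everything else is bookkeeping. As a sanity check, setting $\lambda=0$ and applying the identity $\binom{n-k}{r}s_{k,r}=\binom{k}{r}s_{k-r,r}$ recovers \cref{prop:balanced_mdm_directional_balance}.
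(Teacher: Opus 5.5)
Your proposal is correct and follows the paper's proof: for $F^*_{\lambda,r}$ you apply the block Bellman identity to each of the $\binom{n-k}{r}$ forward blocks, and for $B^*_{\lambda,r}$ you factor $V^*(x,z)^{1+\lambda}$ out of each backward weight. Your remarks on unreachable children, non-leaf status, and the $\lambda=0$ consistency check are points the paper leaves implicit, and none of them changes the argument.
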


\begin{proof}
For each forward block $A\in\calB_r^+(z)$, the block Bellman identity gives
\begin{equation*}
    \sum_{a_A\in\calV^A}\base(Y_A=a_A\mid x,z)V^*(x,z^{A\leftarrow a_A})=V^*(x,z).
\end{equation*}
There are $\binom{n-k}{r}$ forward blocks, hence
\begin{equation*}
    F_{\lambda,r}^*(z)=s_{k,r}V^*(x,z)^\lambda\binom{n-k}{r}V^*(x,z)=s_{k,r}\binom{n-k}{r}V^*(x,z)^{1+\lambda}.
\end{equation*}
For backward blocks,
\begin{align*}
    B_{\lambda,r}^*(z)
    &=\sum_{\substack{A\subseteq R(z)\\\abs{A}=r}}s_{k-r,r}V^*(x,z)V^*(x,z^{-A})^\lambda\\
    &=s_{k-r,r}V^*(x,z)^{1+\lambda}\sum_{\substack{A\subseteq R(z)\\\abs{A}=r}}
    \left[\frac{V^*(x,z^{-A})}{V^*(x,z)}\right]^\lambda.
\end{align*}
\end{proof}

\begin{remark}[Canonical MDM depth anchor]
\label{rem:bg_mdm_canonical_anchor}
For the balanced MDM chain, the canonical coefficient
\begin{equation*}
    s_{k,r}^{\mathrm{bal}}=\frac{1}{\binom{n-r}{k}}
\end{equation*}
satisfies
\begin{equation*}
    \binom{n-k}{r}s_{k,r}^{\mathrm{bal}}=\binom{k}{r}s_{k-r,r}^{\mathrm{bal}}.
\end{equation*}
For the clean geometric MDM chain with $\lambda>0$, this coefficient no longer gives exact $1/2$--$1/2$ balance in general. Instead, it is the canonical depth-neutral anchor: it recovers exact balance at $\lambda=0$, while the $\lambda>0$ correction changes backward probabilities according to the verifier values of remasked parent states.
\end{remark}

The sampler in \cref{alg:mdm_vgb_sampler} specializes to the AOAR sampler
when $\calM=\mbra{1}$, since each admissible block is a singleton coordinate.

\subsection{Momentum Framework for Balanced and Geometric MDM-VGB}
\label{app:subsec:momentum_bg_mdm_vgb}
\label{app:subsec:momentum_balanced_mdm_vgb}
\label{app:subsubsec:balanced_mdm_momentum}

We now add momentum after both base MDM kernels have been defined. This separation is useful because the same lifted transition only needs the local forward and backward weights of the chosen base kernel.

\cref{fig:geometric_mdm_vgb_momentum_transition} illustrates the momentum construction for the geometric MDM base kernel. The same diagram also applies to balanced MDM-VGB after setting $\lambda=0$ in the local block weights.

\begin{figure}[t]
    \centering
    \vspace{-0.5em}
    \includegraphics[width=0.95\linewidth]{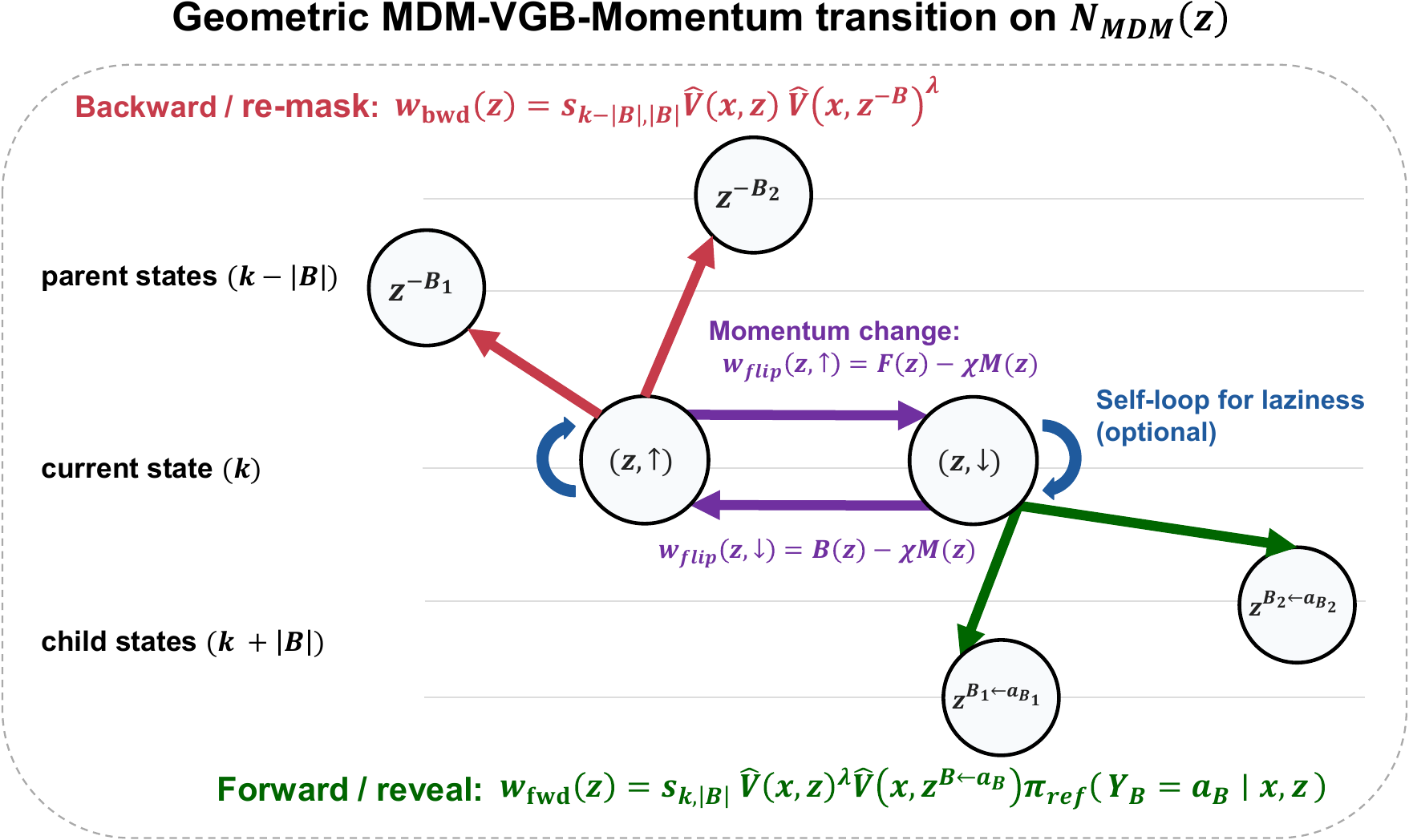}
    \caption{
        Illustration of the flow-cancelled momentum lift for geometric MDM-VGB.
        The downward copy keeps block reveal moves, the upward copy keeps block re-mask moves, and residual opposite-direction mass switches momentum; the cancelled common mass may be retained as an optional stay probability.
    }
    \label{fig:geometric_mdm_vgb_momentum_transition}
    \vspace{-0.5em}
\end{figure}

Let
\begin{equation*}
    \mathsf B\in\{\mathrm{MDM},\mathrm{BGM}\}
\end{equation*}
denote either the balanced MDM-VGB base kernel or the clean geometric balanced MDM-VGB base kernel. Let
\begin{equation*}
    w_{\mathsf B,\mathrm{fwd}}(z\to c),\qquad
    w_{\mathsf B,\mathrm{bwd}}(z\to p)
\end{equation*}
be its local forward and backward weights, as defined in \cref{def:balanced_mdm_local_weights} for $\mathsf B=\mathrm{MDM}$ and in \cref{eq:bg_mdm_fwd_weight,eq:bg_mdm_bwd_weight,eq:bg_mdm_leaf_fwd_weight,eq:bg_mdm_leaf_bwd_weight} for $\mathsf B=\mathrm{BGM}$.

Define the forward and backward total masses
\begin{equation*}
    F_{\mathsf B}(z):=\sum_{c\in C_{\mathrm{MDM}}(z)}w_{\mathsf B,\mathrm{fwd}}(z\to c),\qquad
    B_{\mathsf B}(z):=\sum_{p\in P_{\mathrm{MDM}}(z)}w_{\mathsf B,\mathrm{bwd}}(z\to p).
\end{equation*}
The non-lazy base move kernel is
\begin{equation*}
    K_{\mathsf B}(z,u):=\frac{w_{\mathsf B}(z\to u)}{F_{\mathsf B}(z)+B_{\mathsf B}(z)},\qquad u\in N_{\mathrm{MDM}}(z),
\end{equation*}
where $w_{\mathsf B}$ denotes the appropriate forward or backward local weight. Let $\mu_{\mathsf B}$ be its stationary law. By \cref{thm:balanced_mdm_weighted_graph} for $\mathsf B=\mathrm{MDM}$ and \cref{thm:bg_mdm_stationarity} for $\mathsf B=\mathrm{BGM}$, the projected leaf-conditioned law is exact:
\begin{equation*}
    \mu_{\mathsf B}(\cdot\mid\calY)=\pi^*(\cdot\mid x).
\end{equation*}
Let \(\calZ_{\mathsf B,+}\) be the positive-weight component of \(K_{\mathsf B}\), i.e., the component on which \(F_{\mathsf B}(z)+B_{\mathsf B}(z)>0\).

For \(z\in\calZ_{\mathsf B,+}\), define directional probabilities
\begin{equation*}
    p_{\downarrow}^{\mathsf B}(z):=\sum_{c\in C_{\mathrm{MDM}}(z)}K_{\mathsf B}(z,c)=\frac{F_{\mathsf B}(z)}{F_{\mathsf B}(z)+B_{\mathsf B}(z)},
\end{equation*}
and
\begin{equation*}
    p_{\uparrow}^{\mathsf B}(z):=\sum_{p\in P_{\mathrm{MDM}}(z)}K_{\mathsf B}(z,p)=\frac{B_{\mathsf B}(z)}{F_{\mathsf B}(z)+B_{\mathsf B}(z)}.
\end{equation*}
Thus $p_{\downarrow}^{\mathsf B}(z)+p_{\uparrow}^{\mathsf B}(z)=1$ whenever $F_{\mathsf B}(z)+B_{\mathsf B}(z)>0$.

\begin{definition}[Flow-cancelled momentum MDM kernel]
\label{def:generic_mdm_flow_cancelled_kernel}
\label{def:balanced_mdm_flow_cancelled_kernel}
Fix a base kernel $\mathsf B\in\{\mathrm{MDM},\mathrm{BGM}\}$ and a cancellation strength $\chi\in[0,1]$. Set
\begin{equation*}
    m_{\mathsf B}(z):=\min\mbra{p_{\downarrow}^{\mathsf B}(z),p_{\uparrow}^{\mathsf B}(z)}.
\end{equation*}
On the lifted state space
\begin{equation*}
    \widetilde{\calZ}_{\mathsf B}:=\calZ_{\mathsf B,+}\times\mbra{\downarrow,\uparrow},
\end{equation*}
define $\widetilde K_{\mathsf B}^{\mathrm{fc},\chi}$ as follows.

\paragraph{Downward copy.}
From $(z,\downarrow)$, the chain moves to a child while keeping downward momentum with probability
\begin{equation*}
    \widetilde K_{\mathsf B}^{\mathrm{fc},\chi}\bigl((z,\downarrow),(c,\downarrow)\bigr):=K_{\mathsf B}(z,c),\qquad c\in C_{\mathrm{MDM}}(z).
    \tag{\textsc{Child Move}}
\end{equation*}
It switches to upward momentum with residual opposite-direction probability
\begin{equation*}
    \widetilde K_{\mathsf B}^{\mathrm{fc},\chi}\bigl((z,\downarrow),(z,\uparrow)\bigr):=p_{\uparrow}^{\mathsf B}(z)-\chi m_{\mathsf B}(z),
    \tag{\textsc{Switch Up}}
\end{equation*}
and stays in the same lifted state with cancelled crossing mass
\begin{equation*}
    \widetilde K_{\mathsf B}^{\mathrm{fc},\chi}\bigl((z,\downarrow),(z,\downarrow)\bigr):=\chi m_{\mathsf B}(z).
    \tag{\textsc{Stay Down, optional}}
\end{equation*}

\paragraph{Upward copy.}
From $(z,\uparrow)$, the chain moves to a parent while keeping upward momentum with probability
\begin{equation*}
    \widetilde K_{\mathsf B}^{\mathrm{fc},\chi}\bigl((z,\uparrow),(p,\uparrow)\bigr):=K_{\mathsf B}(z,p),\qquad p\in P_{\mathrm{MDM}}(z).
    \tag{\textsc{Parent Move}}
\end{equation*}
It switches to downward momentum with residual opposite-direction probability
\begin{equation*}
    \widetilde K_{\mathsf B}^{\mathrm{fc},\chi}\bigl((z,\uparrow),(z,\downarrow)\bigr):=p_{\downarrow}^{\mathsf B}(z)-\chi m_{\mathsf B}(z),
    \tag{\textsc{Switch Down}}
\end{equation*}
and stays in the same lifted state with cancelled crossing mass
\begin{equation*}
    \widetilde K_{\mathsf B}^{\mathrm{fc},\chi}\bigl((z,\uparrow),(z,\uparrow)\bigr):=\chi m_{\mathsf B}(z).
    \tag{\textsc{Stay Up, optional}}
\end{equation*}
All other transition probabilities are zero.
\end{definition}

\begin{proposition}[Markov property]
\label{prop:generic_mdm_momentum_markov}
\label{prop:balanced_mdm_momentum_markov}
For every $\chi\in[0,1]$ and every $\mathsf B\in\{\mathrm{MDM},\mathrm{BGM}\}$, $\widetilde K_{\mathsf B}^{\mathrm{fc},\chi}$ is a Markov kernel.
\end{proposition}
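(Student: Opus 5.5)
The plan is to mirror the proof of \cref{prop:flow_cancelled_markov}: verify that every prescribed entry of $\widetilde K_{\mathsf B}^{\mathrm{fc},\chi}$ is nonnegative and that each row sums to one. Throughout we work on $\calZ_{\mathsf B,+}$, where $F_{\mathsf B}(z)+B_{\mathsf B}(z)>0$. This ensures that $K_{\mathsf B}(z,\cdot)$, $p_{\downarrow}^{\mathsf B}(z)$, $p_{\uparrow}^{\mathsf B}(z)$ and $m_{\mathsf B}(z)$ are all well defined, and that $p_{\downarrow}^{\mathsf B}(z)+p_{\uparrow}^{\mathsf B}(z)=1$.

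\textbf{Nonnegativity.} The local weights $w_{\mathsf B,\mathrm{fwd}}$ and $w_{\mathsf B,\mathrm{bwd}}$ are products of nonnegative quantities: the coefficients $s_{k,r}$, the conditionals $\base(Y_A=a_A\mid x,z)$, the values $\widehat V$ and the rewards $\tau$. Hence $K_{\mathsf B}(z,u)\ge0$. The stay entries $\chi m_{\mathsf B}(z)$ are nonnegative because $\chi\ge0$ and $m_{\mathsf B}(z)\ge0$. For the switch entries, $\chi\le1$ together with $m_{\mathsf B}(z)\le\min\{p_{\downarrow}^{\mathsf B}(z),p_{\uparrow}^{\mathsf B}(z)\}$ gives $p_{\uparrow}^{\mathsf B}(z)-\chi m_{\mathsf B}(z)\ge0$ and $p_{\downarrow}^{\mathsf B}(z)-\chi m_{\mathsf B}(z)\ge0$.

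\textbf{Row sums.} From $(z,\downarrow)$, the child moves contribute $\sum_{c\in C_{\mathrm{MDM}}(z)}K_{\mathsf B}(z,c)=p_{\downarrow}^{\mathsf B}(z)$. The switch-up and stay-down entries together contribute $p_{\uparrow}^{\mathsf B}(z)$. The total is therefore $1$, and the upward row is symmetric. We need to confirm that every move target lies in $\calZ_{\mathsf B,+}$, so that the kernel is supported on the lifted space. The plan is to argue as in \cref{lem:bg_positive_support_closure}: any child or parent reached with positive probability is joined to $z$ by a positive-weight edge, so it has positive degree and stays in the same component. Zero-probability targets can simply be discarded.

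\textbf{Main obstacle.} The only delicate points are the boundary states. At the root, $B_{\mathsf B}=0$, so $m_{\mathsf B}=0$; at a leaf, $F_{\mathsf B}=0$, so again $m_{\mathsf B}=0$. The formulas therefore reduce to deterministic switches, such as $(\varnothing,\uparrow)\to(\varnothing,\downarrow)$, and the rows remain valid. With these cases checked, the argument is the same computation as in the AOAR case, and the conclusion holds for both choices of $\mathsf B$.
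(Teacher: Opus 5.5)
Your proposal is correct and follows the paper's proof. Nonnegativity of the switch entries comes from $m_{\mathsf B}(z)=\min\{p_{\downarrow}^{\mathsf B}(z),p_{\uparrow}^{\mathsf B}(z)\}$ and $\chi\le 1$, and each row sums to $p_{\downarrow}^{\mathsf B}+(p_{\uparrow}^{\mathsf B}-\chi m_{\mathsf B})+\chi m_{\mathsf B}=1$. Your extra checks are absent from the paper's proof but correct, and they make the argument slightly more complete: that every positive-probability target stays in $\calZ_{\mathsf B,+}$ (via symmetry of the edge weights), and the root and leaf boundary rows.
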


\begin{proof}
All transition probabilities are nonnegative because $m_{\mathsf B}(z)\le p_{\downarrow}^{\mathsf B}(z)$ and $m_{\mathsf B}(z)\le p_{\uparrow}^{\mathsf B}(z)$. For the downward copy,
\begin{align*}
    &\sum_{c\in C_{\mathrm{MDM}}(z)}\widetilde K_{\mathsf B}^{\mathrm{fc},\chi}\bigl((z,\downarrow),(c,\downarrow)\bigr)
    +\widetilde K_{\mathsf B}^{\mathrm{fc},\chi}\bigl((z,\downarrow),(z,\uparrow)\bigr)
    +\widetilde K_{\mathsf B}^{\mathrm{fc},\chi}\bigl((z,\downarrow),(z,\downarrow)\bigr)\\
    &\qquad =p_{\downarrow}^{\mathsf B}(z)+\sbra{p_{\uparrow}^{\mathsf B}(z)-\chi m_{\mathsf B}(z)}+\chi m_{\mathsf B}(z)=1.
\end{align*}
The upward row is identical.
\end{proof}

\begin{theorem}[Momentum preserves the MDM leaf law]
\label{thm:generic_mdm_momentum_stationarity}
\label{thm:balanced_mdm_momentum_stationarity}
Fix $\mathsf B\in\{\mathrm{MDM},\mathrm{BGM}\}$ and $\chi\in[0,1]$. Define
\begin{equation*}
    \widetilde\mu_{\mathsf B}(z,\downarrow)=\widetilde\mu_{\mathsf B}(z,\uparrow):=\frac12\mu_{\mathsf B}(z).
\end{equation*}
Then $\widetilde\mu_{\mathsf B}$ is stationary for $\widetilde K_{\mathsf B}^{\mathrm{fc},\chi}$. Moreover,
\begin{equation*}
    \textsf{proj}_{\#}\widetilde\mu_{\mathsf B}=\mu_{\mathsf B},
\end{equation*}
and hence
\begin{equation*}
    \textsf{proj}_{\#}\widetilde\mu_{\mathsf B}(\cdot\mid\calY)=\pi^*(\cdot\mid x).
\end{equation*}
\end{theorem}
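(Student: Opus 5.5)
We verify stationarity by a direct incoming-flow computation at each lifted copy, exactly as in the proofs of \cref{thm:flow_cancelled_stationary} and \cref{thm:bg_aoar_momentum_stationarity}. The only input needed from the base chain is reversibility of $K_{\mathsf B}$ with respect to $\mu_{\mathsf B}$. This holds because, by \cref{thm:balanced_mdm_weighted_graph} (for $\mathsf B=\mathrm{MDM}$) and \cref{thm:bg_mdm_stationarity} (for $\mathsf B=\mathrm{BGM}$), $K_{\mathsf B}$ is the random walk on a symmetric weighted block-update graph with $\mu_{\mathsf B}$ proportional to weighted degree. Restricting to $\calZ_{\mathsf B,+}$ is harmless, since zero-weight edges carry no flow.

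Fix $z\in\calZ_{\mathsf B,+}$ and consider $(z,\downarrow)$. Incoming flow has three sources.

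\emph{Parent moves.} Every state $u$ with $z\in C_{\mathrm{MDM}}(u)$ is a block parent of $z$, so $u\in P_{\mathrm{MDM}}(z)$, and conversely. Hence the parent-move flow is
\begin{equation*}
\tfrac12\sum_{u\in P_{\mathrm{MDM}}(z)}\mu_{\mathsf B}(u)K_{\mathsf B}(u,z).
\end{equation*}
By reversibility this equals
\begin{equation*}
\tfrac12\mu_{\mathsf B}(z)\sum_{u\in P_{\mathrm{MDM}}(z)}K_{\mathsf B}(z,u)=\tfrac12\mu_{\mathsf B}(z)p_{\uparrow}^{\mathsf B}(z).
\end{equation*}

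\emph{Switch from $(z,\uparrow)$.} This contributes $\tfrac12\mu_{\mathsf B}(z)\bigl(p_{\downarrow}^{\mathsf B}(z)-\chi m_{\mathsf B}(z)\bigr)$.

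\emph{Stay-down self-loop.} This contributes $\tfrac12\mu_{\mathsf B}(z)\chi m_{\mathsf B}(z)$.

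Summing the three and using $p_{\downarrow}^{\mathsf B}+p_{\uparrow}^{\mathsf B}=1$ gives $\tfrac12\mu_{\mathsf B}(z)=\widetilde\mu_{\mathsf B}(z,\downarrow)$. The computation for $(z,\uparrow)$ is identical with children and parents exchanged: child-move flow gives $\tfrac12\mu_{\mathsf B}(z)p_{\downarrow}^{\mathsf B}(z)$, and the switch and stay terms complete the sum to $\tfrac12\mu_{\mathsf B}(z)$.

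At the boundaries we have $p_{\uparrow}^{\mathsf B}(\varnothing)=0$ and $p_{\downarrow}^{\mathsf B}(y)=0$. Then $m_{\mathsf B}=0$, the switch probabilities become the full boundary resets, and the same algebra goes through unchanged. Row-stochasticity is already given by \cref{prop:generic_mdm_momentum_markov}.

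Finally, $\textsf{proj}_{\#}\widetilde\mu_{\mathsf B}(z)=\tfrac12\mu_{\mathsf B}(z)+\tfrac12\mu_{\mathsf B}(z)=\mu_{\mathsf B}(z)$. Conditioning on $\calY$ and invoking the exact leaf law $\mu_{\mathsf B}(\cdot\mid\calY)=\pi^*(\cdot\mid x)$ from the two cited theorems gives the claim.

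The only mildly delicate step is the reversibility identity used in the parent-move and child-move terms. It requires the parent and child relations of the block graph to be mutual inverses, including for leaf-corrected edges. This follows because both $f_{\mathrm{MDM}}$ and $f_{\mathrm{BGM}}$ are defined as symmetric edge weights.
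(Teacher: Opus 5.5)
Your proposal is correct and takes the same approach as the paper's proof. Like the paper, you check incoming flow at $(z,\downarrow)$, use reversibility of $K_{\mathsf B}$ to rewrite the parent-move flow as $\tfrac12\mu_{\mathsf B}(z)p_{\uparrow}^{\mathsf B}(z)$, add the switch and stay terms, treat $(z,\uparrow)$ symmetrically, and then project and invoke the base-chain leaf law; your extra remarks on the root and leaf boundary states and on restricting to $\calZ_{\mathsf B,+}$ are sound details that the paper leaves implicit.
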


\begin{proof}
We verify stationarity for the downward copy; the upward copy is symmetric. Incoming flow into $(z,\downarrow)$ comes from three sources: parents of $z$ in downward mode, the switch from $(z,\uparrow)$, and the self-loop at $(z,\downarrow)$.

The incoming move-flow from parents is
\begin{align*}
    \frac12\sum_{u\in P_{\mathrm{MDM}}(z)}\mu_{\mathsf B}(u)K_{\mathsf B}(u,z)
    &=\frac12\mu_{\mathsf B}(z)\sum_{u\in P_{\mathrm{MDM}}(z)}K_{\mathsf B}(z,u)\\
    &=\frac12\mu_{\mathsf B}(z)p_{\uparrow}^{\mathsf B}(z),
\end{align*}
where reversibility of $K_{\mathsf B}$ was used. The switch-flow from $(z,\uparrow)$ is
\begin{equation*}
    \frac12\mu_{\mathsf B}(z)\sbra{p_{\downarrow}^{\mathsf B}(z)-\chi m_{\mathsf B}(z)},
\end{equation*}
and the self-loop flow is
\begin{equation*}
    \frac12\mu_{\mathsf B}(z)\chi m_{\mathsf B}(z).
\end{equation*}
The total incoming flow is therefore
\begin{equation*}
    \frac12\mu_{\mathsf B}(z)\sbra{p_{\uparrow}^{\mathsf B}(z)+p_{\downarrow}^{\mathsf B}(z)}
    =\frac12\mu_{\mathsf B}(z)=\widetilde\mu_{\mathsf B}(z,\downarrow).
\end{equation*}
The upward-copy argument is identical after exchanging parents and children. Projection gives
\begin{equation*}
    \widetilde\mu_{\mathsf B}(z,\downarrow)+\widetilde\mu_{\mathsf B}(z,\uparrow)=\mu_{\mathsf B}(z).
\end{equation*}
The leaf law follows from the corresponding base-chain leaf law.
\end{proof}

The sampler in \cref{alg:momentum_mdm_sampler} also specializes to the
momentum AOAR sampler when $\calM=\mbra{1}$.

\section{Experimental Setting and Supplementary Results}
\label{app:sec:experiment_settings_and_supplementary_results}
\label{app:exp_details}

This section gives the full task definitions, data sources, backbone models, reward functions, root-start and leaf-start protocols, and hyperparameter grids used in our experiments. Across tasks, the reference model is denoted by $\base$ and the masked-state verifier by $\widehat V(x,z)$, following the notation in \cref{app:sec:key_notation_and_remarks}. For hard-constraint tasks, the terminal reward $\tau(x,y)$ is binary. For QM9 and DNA, we first define a real-valued terminal score $R(y)$ and then use the corresponding Pass@95 indicator as the terminal reward when a binary reward is needed. Protein reports Success@1\AA{} and motif RMSD, while its learned verifier uses the smoothed training target defined below. Unless otherwise stated, VGB uses terminal anchoring: at a complete leaf $y$, $\widehat V(x,y)$ is replaced by the corresponding exact terminal reward or score. Experiments were run in an Ubuntu PyTorch environment on NVIDIA RTX PRO 6000 Blackwell 96GB and NVIDIA H200 141GB GPUs.

\paragraph{Common evaluation protocol.}
For root-start sampling, every method starts from the fully masked state
$\varnothing$ and produces a completed sample under a fixed step-limit
budget. For leaf-start repair, every method starts from the same set of low-reward or invalid completed leaves $y_0$. All comparisons use the same reference model $\base$ and the same task-specific terminal score/reward convention. In matched-budget root-start comparisons, \textsc{BoN} uses $N$ independent full rollouts, while MDM-VGB variants use up to an $n\times N$ step budget before forward-only completion.

\paragraph{Adjusted NFE.}
Raw NFE counts realized reference-model forward calls. For tasks with learned verifiers, we additionally report FLOP-adjusted NFE to account for verifier scoring cost. Let $N_{\rm base}$ be the number of reference-model forward calls, $N_{\rm verif}$ the number of verifier evaluations, and $C_{\rm base}$ and $C_{\rm verif}$ the FLOPs of one base-model and verifier forward pass. We define
\begin{equation*}
    \mathrm{NFE}_{\rm adj}
    =
    N_{\rm base}
    +
    N_{\rm verif}\frac{C_{\rm verif}}{C_{\rm base}}.
\end{equation*}
Thus each verifier query is charged as its equivalent fraction of a base-model forward call. For task-provided exact checkers, this adjustment is not applied unless otherwise stated.

\paragraph{Pass@$\alpha$ normalization.}
For tasks with continuous rewards, let $R(y)$ denote the target task-specific score. To make success rates comparable across tasks, we convert the continuous score into a rare-event indicator using a fixed threshold from an independent \textsc{Base} calibration set. Specifically, for $\alpha\in(0,1)$, let $q_\alpha^{\rm Base}(R)$ be the $\alpha$-quantile of $R(y)$ on this calibration set, where $y\sim\base$. We define
\begin{equation*}
    \mathrm{Pass@\alpha}(y)
:=
    \begin{cases}
        \mathbf 1\{R(y)\ge q_{\alpha}^{\rm Base}(R)\},
        & \text{if higher is better},\\
        \mathbf 1\{R(y)\le q_{1-\alpha}^{\rm Base}(R)\},
        & \text{if lower is better}.
    \end{cases}
\end{equation*}
The reported $\mathrm{Pass@95}$ metric corresponds to $\alpha=0.95$.

\subsection{Task Descriptions}
\label{app:task_descriptions}

\paragraph{Dyck suffix-edit repair.}
The Dyck language is the canonical language of well-balanced brackets studied in formal-language theory \citep{chomsky1963algebraic}. It is simple enough to serve as a controlled toy problem, but it still requires hierarchical dependency tracking: for example, \texttt{[()]} is valid, whereas \texttt{([)]} has the right local symbols in the wrong nesting order. We use this structure to illustrate the difference between prefix-tree backtracking and any-order re-masking.

The terminal reward is binary: reward $1$ means that all opening brackets are properly paired with closing brackets of the same type and in the correct nested order; reward $0$ means that this well-balanced-bracket condition fails.

Each repair example is initialized from the fixed corrupted leaf template
\begin{equation*}
    \texttt{B((([(((([(((}\textcolor{red!70!black}{\texttt{XXXXXXXX}}\texttt{)))]))))])))E}.
\end{equation*}
Here \texttt{B} and \texttt{E} are special begin/end tokens. The prefix before the marked block is fixed and used as the prompt, while the
\textcolor{red!70!black}{$\mathsf{X}$}'s denote the initially corrupted span of
length eight. We generate 10,000 corrupted leaves by sampling random bracket strings for this span and retaining only examples with reward $0$. Thus every example starts from an invalid fully revealed leaf.

In the main suffix-edit protocol, all bracket positions from the first corrupted token through the token immediately before \texttt{E} are editable. The suffix is not manually restored: if a sampler re-masks or changes a suffix token, it must also repair that token before the exact Dyck checker accepts the sequence. Equivalently, the reward is the exact bracket-validity checker implemented as membership in the accepted terminal set. The base model is a 12.9M-parameter BERT-style masked language model, and the process verifier is a 3.24M-parameter token-state Transformer.

\paragraph{Letter avoidance.}
Letter is a simple lexical positive-control task. The fixed task prompt is
\begin{center}
\begin{tikzpicture}
\node[
    draw=black,
    rounded corners=3pt,
    fill=black!4,
    inner sep=6pt,
    text width=0.90\linewidth,
    align=left
] {
\textbf{Task prompt.}\\[0.25em]
\small\ttfamily
Generate a one-sentence story without using the letter `e':
};
\end{tikzpicture}
\end{center}
This is a lipogram-style constraint, where a text is deliberately written while avoiding a specified letter. The terminal reward is exact:
\begin{equation*}
    \tau(x,y)
    =
    \begin{cases}
        1.0, & y\text{ is non-empty and contains no `e'},\\
        0.0, & \text{otherwise}.
    \end{cases}
\end{equation*}
All Letter experiments use the Qwen3-0.6B-diffusion-mdlm-v0.1 reference model \citep{qwen3_06b_diffusion_mdlm}. For partial configurations, we use a heuristic process verifier that checks whether the revealed tokens currently contain the forbidden character. The primary metric is the avoidance rate. We additionally report a Qwen3-32B \citep{qwen2025qwen3} judge score on outputs satisfying the lexical constraint. The judge prompt below is used with the task prompt above and each candidate answer; the reported Ovl. score is the returned \texttt{overall} field.
\begin{center}
\begin{tikzpicture}
\node[
    draw=black,
    rounded corners=3pt,
    fill=black!4,
    inner sep=7pt,
    text width=0.94\linewidth,
    align=left
] {
\textbf{Qwen3-32B overall-score prompt:}\\[0.25em]
\rule{\linewidth}{0.35pt}\\[0.35em]
\small\ttfamily
You are evaluating a candidate answer to a writing task.\\
Score the candidate on four dimensions from 1 to 10, where 10 is best.\\
Dimensions:\\
1. naturalness: Does the text read like fluent natural English?\\
2. coherence: Is it internally coherent as a one-sentence story?\\
3. semantic\_plausibility: Does it make semantic sense rather than feeling\\
\hspace*{1.2em}like token salad or broken text?\\
4. overall: Overall quality as an answer to the task prompt.\\
Use the task prompt as context when judging overall quality.\\
Return ONLY a JSON object with integer fields naturalness, coherence,\\
semantic\_plausibility, overall.
};
\end{tikzpicture}
\end{center}

\paragraph{Sudoku.}
Sudoku is a structured constraint-satisfaction benchmark. Each puzzle is a flattened $9\times 9$ grid with fixed clues and editable empty cells. The output is a completed board. The terminal checker is exact: a board solves the puzzle if every row, column, and $3\times3$ box is a permutation of $\mbra{1,\ldots,9}$. The primary metric is exact solve rate. We also report the normalized mean violation count, shown as a percentage:
\begin{equation*}
    \mathrm{Viol}(y)
=
    \frac{1}{27}
    \sum_{U\in\mathcal U}
    \left(9-\left|\mathrm{unique}(y_U)\right|\right),
\end{equation*}
where $\mathcal U$ is the set of the $27$ Sudoku units: $9$ rows, $9$ columns, and $9$ boxes. Thus, for a fully filled board with fixed clues preserved,
\begin{equation*}
    \mathrm{Solve}(y)=1
    \quad\Longleftrightarrow\quad
    \mathrm{Viol}(y)=0.
\end{equation*}
We use the Sudoku data provided by \citet{kim2025fine}, with 48,000 training puzzles and 2,000 test puzzles. Our base model is a 28.6M-parameter DiT-MDM, i.e., a masked discrete diffusion model \citep{sahoo2024simple} with a Diffusion Transformer backbone \citep{peebles2023scalable}, trained on this data. For partial configurations, we use a heuristic process verifier that returns the indicator that no Sudoku constraint has yet been violated.

\paragraph{QM9 molecule generation.}
QM9 is a small-molecule generation benchmark over organic molecules with up to nine heavy atoms \citep{ramakrishnan2014quantum}. We use a split with 127,190 training molecules and 6,695 test molecules, and train a QM9 MDLM 92.4M base model with the MDLM objective using the UDLM-QM9 architecture. The terminal score \(R_{\rm QM9}\) uses QED \cite{bickerton2012quantifying}, a quantitative estimate of drug-likeness:
\begin{equation*}
R_{\rm QM9}(y) =
    \begin{cases}
    \mathrm{QED}(y), & y\text{ is a valid molecule},\\
    0, & y\text{ is invalid}.
    \end{cases}
\end{equation*}
Since higher QED is better, the terminal reward used for rare-success guidance and evaluation is $\mathrm{Pass@95}$ from the common definition above with $R(y)=R_{\rm QM9}(y)$. We also report mean QED over valid molecules, validity, and unique Pass@95 where appropriate.

\paragraph{DNA enhancer design.}
DNA is a regulatory-sequence design task. The reference model is D3LM \citep{yang2026d3lmdiscretednadiffusion}, a DNA diffusion language model initialized from Nucleotide Transformer \citep{dallatorre2025nucleotide}. The terminal scorer is DeepSTARR \citep{dealmeida2022deepstarr}, which predicts developmental and housekeeping enhancer activity from a $249$ bp DNA sequence. The terminal score uses the developmental enhancer activity:
\begin{equation*}
R_{\rm DNA}(y)=s_{\rm Dev}(y).
\end{equation*}
For verifier training, we use a smoothed target rather than the sparse Pass@95 indicator. Let
\begin{equation*}
    z_{\rm Dev}(y)
=
    \mathrm{clip}\left(
    \frac{s_{\rm Dev}(y)-\mu_{\rm Dev}}{\sigma_{\rm Dev}},
    -3,3
    \right),
    \qquad
    \tilde r_{\rm train}(y)=\exp(\eta z_{\rm Dev}(y)).
\end{equation*}
Here \(\mu_{\rm Dev}\) and \(\sigma_{\rm Dev}\) are computed from base-model calibration samples, and we use \(\eta=1\). This target is used only for process-verifier regression. Since higher developmental enhancer activity is better, the evaluation terminal reward is the Pass@95 indicator \(\mathbf 1\{s_{\rm Dev}(y)\ge q_{0.95}^{\rm Base}(s_{\rm Dev})\}\).

\paragraph{Protein motif scaffolding.}
Protein is a motif-scaffolding sequence-design task. The reference model is EvoDiff OADM 38M \citep{alamdari2023protein}. We use official EvoDiff motif-scaffolding task metadata whenever available: PDB code, motif ranges, fixed motif residues, scaffold length range, and generated motif positions. In our task, all reported Protein experiments use the 1YCR motif-scaffolding target. The generated sequence is folded with OmegaFold \citep{wu2022omegafold}. We compute motif \(C_\alpha\) root-mean-square deviation (RMSD) against the reference motif:
\begin{equation*}
r_{\rm motif}(y) =
    \mathrm{RMSD}_{C_\alpha}
    \left(
        \mathrm{Fold}(y)_{\mathcal M},
        \mathrm{PDB}_{\mathcal M}
    \right).
\end{equation*}
Here $\mathcal M$ denotes the motif residue positions, $\mathrm{Fold}(y)$ is the OmegaFold-predicted structure, $\mathrm{PDB}_{\mathcal M}$ contains the reference motif $C_\alpha$ coordinates, and RMSD is computed after optimal rigid alignment.
The hard success metric is
\begin{equation*}
    \text{Success@1\AA{}}(y)
=
    \mathbf 1\{r_{\rm motif}(y)\le 1.0\text{\AA}\}.
\end{equation*}
For verifier training, we use a smoothed RMSD-derived target
\begin{equation*}
    \tau_{\rm train}(y)
    =
    \exp\left[
    -\left(
    \frac{r_{\rm motif}(y)}{4.0\text{\AA}}
    \right)^2
    \right],
\end{equation*}
where \(r_{\rm motif}(y)\) is the motif \(C_\alpha\) RMSD. In implementation, the verifier regresses \(\log \tau_{\rm train}(y)\) for numerical stability, and we exponentiate its output at inference time to obtain the positive value used by MDM-VGB. This verifier-training target is distinct from the reported Protein metrics: Success@1\AA{} and mean motif RMSD.

\subsection{Leaf-start repair protocol}
\label{app:root_leaf_protocols}

For leaf-start repair, we choose bad but repairable leaves. This avoids turning the repair experiment into full regeneration. For the $\alpha$-sweep in \cref{fig:leaf_start_repair}, $\mathrm{Pass@\alpha}$ is defined by the same base-calibration rule as in the common evaluation protocol: higher-is-better scores use \(q_\alpha^{\rm Base}\), while lower-is-better scores use \(q_{1-\alpha}^{\rm Base}\). Each starting group contains completed samples with $\mathrm{Pass@95}(y_0)=0$ but $\mathrm{Pass@\alpha}(y_0)=1$. Thus larger $\alpha$ selects easier near-miss leaves that are closer to the rare-success boundary. The main-text leaf-start table entries average over \(\alpha\in\{60,65,\ldots,90\}\). Unless otherwise stated, the bad-leaf pools are summarized in \cref{tab:leaf_start_bad_leaf_pools}.
\begin{table}[h!]
\centering
\caption{Leaf-start bad-leaf pools for the $\alpha$-sweep.}
\label{tab:leaf_start_bad_leaf_pools}
\vspace{5pt}
\small
\setlength{\tabcolsep}{5pt}
\renewcommand{\arraystretch}{1.08}
\begin{tabular}{@{}ll@{}}
\toprule
Task & Bad-leaf pool \\
\midrule
QM9 & $y_0$ valid, $q_{\alpha}^{\rm Base}(\mathrm{QED})\le \mathrm{QED}(y_0)<q_{0.95}^{\rm Base}(\mathrm{QED})$ \\
DNA & $q_{\alpha}^{\rm Base}(R_{\rm DNA})\le R_{\rm DNA}(y_0)<q_{0.95}^{\rm Base}(R_{\rm DNA})$ \\
Protein & $q_{0.05}^{\rm Base}(r_{\rm motif})< r_{\rm motif}(y_0)\le q_{1-\alpha}^{\rm Base}(r_{\rm motif})$ \\
\bottomrule
\end{tabular}
\end{table}

\subsection{Verifier Architecture}
\label{app:verifier_training_details}

We use task-specific process verifiers. Letter and Sudoku use heuristic process verifiers with no learned parameters, while Dyck, QM9, DNA, and Protein use learned lightweight Transformer verifiers. The learned verifiers are trained by regression to the task-specific targets described in the task descriptions above.
\begin{itemize}[leftmargin=15pt]
    \item Dyck: a learned token-state Transformer process verifier ($3.24$M parameters).
    \item Letter: a heuristic process verifier that checks whether the revealed tokens currently contain the forbidden letter; it has no learned parameters.
    \item Sudoku: a heuristic process verifier that checks whether the revealed entries already violate row, column, box, or fixed-clue constraints; it has no learned parameters.
    \item QM9: a learned token Transformer process verifier over SMILES tokens ($3.24$M parameters).
    \item DNA: a learned Transformer process verifier over D3LM tokens ($0.97$M parameters).
    \item Protein: a learned Transformer process verifier over amino-acid tokens with motif/editable/mask features ($1.94$M parameters).
\end{itemize}

We summarize the main inference hyperparameters in \cref{tab:hyperparameter_grid}.

\begin{table}[h!]
\centering
\setlength{\tabcolsep}{3.0pt}
\renewcommand{\arraystretch}{1.12}
\begin{tabular}{l c c c c}
\toprule
\textbf{Task}
& \textbf{Length}
& $\boldsymbol{\abs{\mathcal V}}$
& $\boldsymbol{L_f,L_b}$
& $\boldsymbol{K}$ \\
\midrule
Dyck & $34$ & $8$ & $(8,8)$ & $8$ \\
Letter & $32$ & $151{,}670$ & $(8,8)$ & $8$ \\
Sudoku & $89$ & $11$ & $(8,8)$ & $8$ \\
QM9 & $32$ & $35$ & $(8,32)$ & $8$ \\
DNA & $48$ & $4107$ & $(8,8)$ & $32$ \\
Protein (1YCR) & $73$ & $20$ & $(8,8)$ & $20$ \\
\bottomrule
\end{tabular}
\vspace{5pt}
\caption{
Main inference hyperparameters. $L_f$ and $L_b$ are the numbers of shortlisted forward and backward coordinates or blocks. $K$ is the number of token candidates per forward coordinate.
}
\label{tab:hyperparameter_grid}
\end{table}

\subsection{Additional Ablations}
\label{app:ablation_details}

The main text focuses on geometric re-masking and verifier amortization. Here we collect additional ablations on shortlisting and the block size.

\paragraph{Shortlisting budget.}
We ablate the local shortlisting budget for \textsc{MDM-VGB-Momentum} by increasing $L_f=L_b$ while holding the task-specific token budget $K$ fixed. On QM9, increasing the shortlist from $1$ to $4$ substantially improves Pass@95 while also reducing realized search cost, and larger budgets give smaller additional gains. On DNA, Pass@95 peaks around $L_f=L_b=4$, whereas $L_f=L_b=8$ substantially increases adjusted NFE without improving quality. Thus the gains are not explained by simply evaluating an arbitrarily large local neighborhood; a moderate shortlist is sufficient for the verifier to expose useful moves.

\begin{figure}[h!]
\centering
\includegraphics[width=0.72\linewidth]{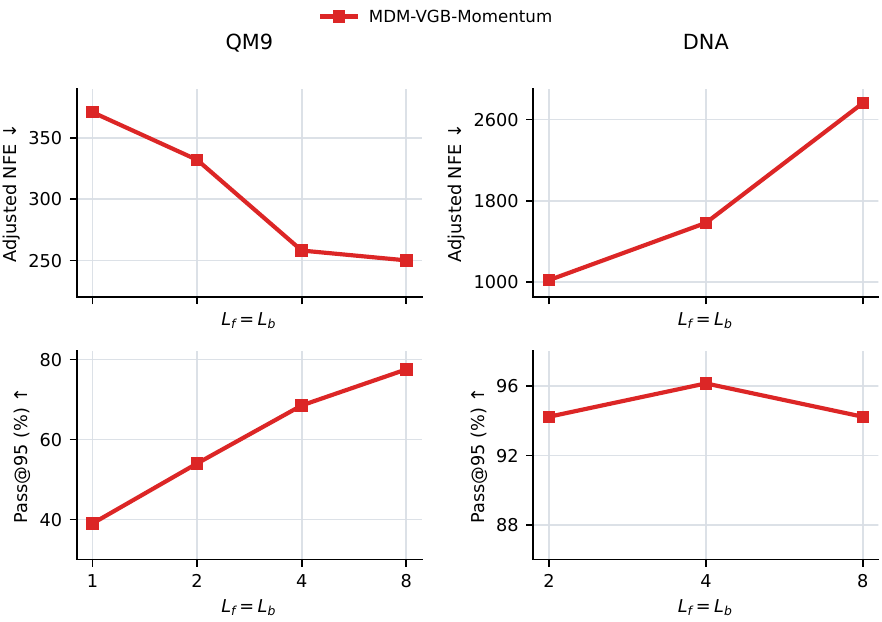}
\caption{
Shortlisting-budget ablation for \textsc{MDM-VGB-Momentum} on QM9 and DNA. Increasing $L_f=L_b$ helps up to a moderate budget, after which quality saturates while adjusted NFE can continue to grow.
}
\label{fig:shortlisting_budget_qm9_dna}
\end{figure}

\paragraph{Block size / MDM precision.}
We next study how MDM block size affects the finite-budget reward--compute tradeoff for \textsc{MDM-VGB-Momentum}. Singleton AOAR updates give the finest local control, while larger MDM blocks can reduce the number of sequential decisions by revealing or re-masking multiple coordinates at once. This creates a precision--parallelism tradeoff: moderate block sizes may amortize verifier and proposal costs, but overly coarse blocks can make the local edit decisions too blunt.

\begin{figure}[h!]
\centering
\includegraphics[width=0.62\linewidth]{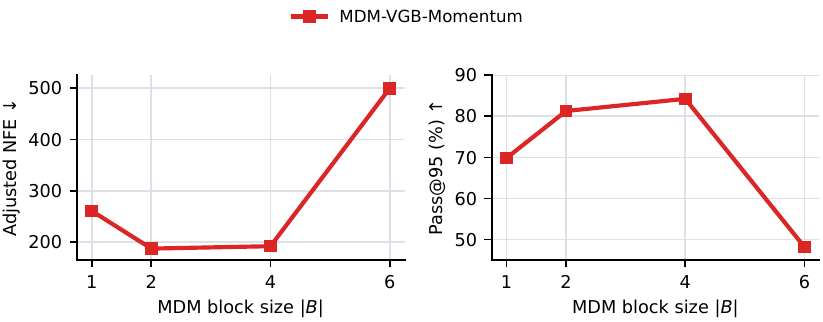}
\caption{
QM9 root-start \textsc{MDM-VGB-Momentum} block-size ablation under a maximum budget corresponding to $N=16$, with $L_f=L_b=4$, $K=4$, and $\lambda=4$. The left panel reports adjusted NFE, including verifier FLOPs, and the right panel reports Pass@95. Moderate block sizes improve the compute--quality tradeoff, whereas overly large blocks increase cost and degrade reward satisfaction.
}
\label{fig:qm9_mdm_block_sweep}
\end{figure}

\end{document}